\documentclass{article} % For LaTeX2e
\usepackage{iclr2026_conference,times}
% \usepackage[utf8]{inputenc}
% Optional math commands from https://github.com/goodfeli/dlbook_notation.
%%%%% NEW MATH DEFINITIONS %%%%%

\usepackage{amsmath,amsfonts,bm}

% Mark sections of captions for referring to divisions of figures

% Highlight a newly defined term

% Figure reference, lower-case.

% Figure reference, capital. For start of sentence

% Section reference, lower-case.

% Section reference, capital.

% Reference to two sections.

% Reference to three sections.

% Reference to an equation, lower-case.
\def\eqref#1{equation~\ref{#1}}
% Reference to an equation, upper case

% A raw reference to an equation---avoid using if possible

% Reference to a chapter, lower-case.

% Reference to an equation, upper case.

% Reference to a range of chapters

% Reference to an algorithm, lower-case.

% Reference to an algorithm, upper case.

% Reference to a part, lower case

% Reference to a part, upper case

\def\1{\bm{1}}

% Random variables

% rm is already a command, just don't name any random variables m

% Random vectors

% Elements of random vectors

% Random matrices

% Elements of random matrices

% Vectors

% Elements of vectors

% Matrix

% Tensor
\DeclareMathAlphabet{\mathsfit}{\encodingdefault}{\sfdefault}{m}{sl}
\SetMathAlphabet{\mathsfit}{bold}{\encodingdefault}{\sfdefault}{bx}{n}

% Graph

% Sets

% Don't use a set called E, because this would be the same as our symbol
% for expectation.

% Entries of a matrix

% entries of a tensor
% Same font as tensor, without \bm wrapper

% The true underlying data generating distribution

% The empirical distribution defined by the training set

% The model distribution

% Stochastic autoencoder distributions

 % Laplace distribution

% Wolfram Mathworld says $L^2$ is for function spaces and $\ell^2$ is for vectors
% But then they seem to use $L^2$ for vectors throughout the site, and so does
% wikipedia.

 % See usage in notation.tex. Chosen to match Daphne's book.

\usepackage{hyperref}
\usepackage{url}
% For LaTeX2e
\usepackage{graphicx}
\usepackage{subcaption}
\usepackage{caption}
\usepackage{amsmath}    % 数学公式
\usepackage{amssymb}    % 数学符号
\usepackage{bm}         % 粗体向量 (可选，若需 \bm 替代 \mathbf)
\usepackage{enumitem}   % For custom enumerate labels
\usepackage{algorithmic}
\usepackage{algorithm}
\usepackage{multirow}
\usepackage{tabularx}
\usepackage{tabu}
\usepackage{makecell}
\usepackage{threeparttable}
\usepackage{booktabs}
\usepackage{booktabs}
\usepackage{amsmath}
\usepackage{ragged2e} % for \RaggedRight
\usepackage{array}
\usepackage{wrapfig}
% Theorem environments
\newtheorem{theorem}{Theorem}
\newtheorem{lemma}{Lemma}

\newtheorem{corollary}{Corollary}
\newtheorem{assumption}{Assumption}
\numberwithin{theorem}{section}
\newtheorem{proof}{Proof}

% \usepackage{algpseudocode}
  % 标准化输入/输出

% 设置子图标签样式为 (a), (b), (c)...
\captionsetup[subfigure]{labelformat=simple, labelsep=space, font={bf, scriptsize}}

\title{Task-free Adaptive Meta Black-box Optimization}

% Authors must not appear in the submitted version. They should be hidden
% as long as the \iclrfinalcopy macro remains commented out below.
% Non-anonymous submissions will be rejected without review.

\author{
  Chao Wang$^{1}$, Licheng Jiao$^{1}$, Lingling Li$^{1,*}$, Jiaxuan Zhao$^{2}$, Guanchun Wang$^{1}$, Fang Liu$^{1}$ \\
\textbf{Shuyuan Yang}$^{1}$ \\
  $^{1}$Xidian University \hspace{0.2cm} $^{2}$Tianjin Research Institute for Water Transport Engineering, M.O.T. \\
  \texttt{xiaofengxd@126.com, lchjiao@mail.xidian.edu.cn,} \\
  \texttt{\{llli,wangguanchun,syyang\}@xidian.edu.cn,} \\
  \texttt{jiaxuanzhao@stu.xidian.edu.cn, f63liu@163.com}
}

% \author{Chao Wang, Licheng Jiao, Lingling Li* \\
% School of Artificial Intelligence\\
% Xidian University\\
% Xi’an, 710071, China \\
% \texttt{xiaofengxd@126.com, lchjiao@mail.xidian.edu.cn, llli@xidian.edu.cn} \\
% \textbf{Jiaxuan Zhao, Guanchun Wang}\\
% School of Artificial Intelligence\\
% Xidian University\\
% Xi’an, 710071, China \\
% \texttt{jiaxuanzhao@stu.xidian.edu.cn, wangguanchun@xidian.edu.cn} \\
% \textbf{Fang Liu, Shuyuan Yang} \\
% School of Artificial Intelligence\\
% Xidian University\\
% Xi’an, 710071, China \\
% \texttt{f63liu@163.com, syyang@xidian.edu.cn}
% }

% The \author macro works with any number of authors. There are two commands
% used to separate the names and addresses of multiple authors: \And and \AND.
%
% Using \And between authors leaves it to \LaTeX{} to determine where to break
% the lines. Using \AND forces a linebreak at that point. So, if \LaTeX{}
% puts 3 of 4 authors names on the first line, and the last on the second
% line, try using \AND instead of \And before the third author name.

\iclrfinalcopy % Uncomment for camera-ready version, but NOT for submission.
\begin{document}

\maketitle

\begin{abstract}
    Handcrafted optimizers become prohibitively inefficient for complex black-box optimization (BBO) tasks. MetaBBO addresses this challenge by meta-learning to automatically configure optimizers for low-level BBO tasks, thereby eliminating heuristic dependencies. However, existing methods typically require extensive handcrafted training tasks to learn meta-strategies that generalize to target tasks, which poses a critical limitation for realistic applications with unknown task distributions. To overcome the issue, we propose the Adaptive meta Black-box Optimization Model (ABOM), which performs online parameter adaptation using solely optimization data from the target task, obviating the need for predefined task distributions. Unlike conventional metaBBO frameworks that decouple meta-training and optimization phases, ABOM introduces a closed-loop adaptive parameter learning mechanism, where parameterized evolutionary operators continuously self-update by leveraging generated populations during optimization. This paradigm shift enables zero-shot optimization: ABOM achieves competitive performance on synthetic BBO benchmarks and realistic unmanned aerial vehicle path planning problems without any handcrafted training tasks. Visualization studies reveal that parameterized evolutionary operators exhibit statistically significant search patterns, including natural selection and genetic recombination.

\end{abstract}

\section{Introduction}

Black-box optimization (BBO) problems arise in diverse machine learning applications such as neuroevolution \citet{6,doi:10.1126/science.adp7478}, hyperparameter tuning \citet{bai2024generalized}, neural architecture search \citet{wang2023bi,salmani2025systematic}, and prompt engineering \citet{9,doi:10.34133/research.0646}. In these scenarios, the objective function is accessible solely through expensive evaluations $f(x)$, with derivative information like gradients or Hessians inherently unavailable. Evolutionary algorithms (EAs) \citet{10,1} address this challenge by iteratively updating populations through derivative-free heuristic operators, including selection, crossover, and mutation, to explore complex fitness landscapes. Recent advances in computational infrastructure have enabled EAs to generate robust solutions for increasingly complex BBO problems \citet{5}.

The "No Free Lunch" (NFL) theorem \citet{wolpert2002no} establishes that no optimization algorithm universally outperforms others across all problem domains. To enhance cross-domain applicability, numerous adaptive mechanisms have been designed \citet{back1993overview,9504782,li2013adaptive,18,TAO2021457} that leverage optimization data generated during the search process to dynamically select operators or adjust parameters. Although these adaptive methods achieve strong performance on standard benchmarks, they require specialized expertise in optimization theory and problem characteristics \citet{10.1145/3638529.3653996}. Meta Black-Box Optimization (MetaBBO) addresses this limitation by automating meta-level strategies \citet{10993463}, such as algorithm selection \citet{9187549,guo2024deep}, algorithm configuration \citet{lange2023discovering,10.1145/3583131.3590496,10.1145/3712256.3726309}, solution manipulation \citet{li2024pretrained,Li_Wu_Zhang_Wang_2025}, and generative design \citet{ICLR2024_b742b708,ICLR2024_3339f19c}, through meta-learning (Fig. \ref{fig1}, Left). \textcolor{black}{Yet existing MetaBBO methods require training on handcrafted task distributions $\mathcal{F}$ or prior knowledge for generalization to new domains.} Since such distributions are often inaccessible in practical scenarios (e.g., when the target task is unique or data-scarce), this dependency severely limits real-world deployment.
%Adaptive optimization methods use heuristic rules for operator control but require expert tuning; (Middle)

\begin{figure}[htbp]
\centering
\includegraphics[width=\textwidth]{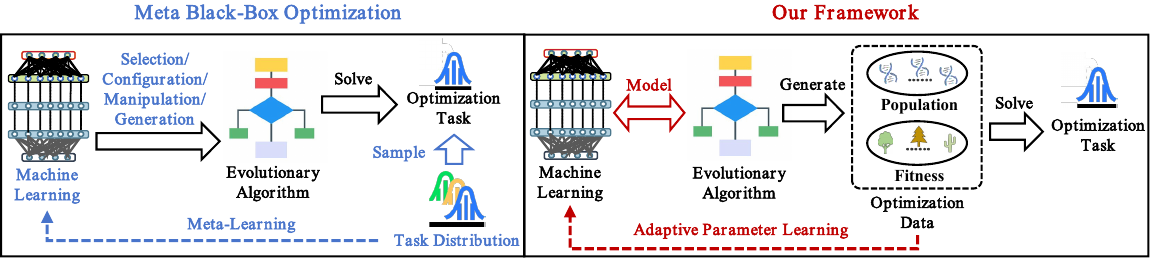}
\caption{Conceptual comparison: (Left) MetaBBO methods learn meta-strategies from task distributions but depend on handcrafted training tasks; (Right) Our framework performs adaptive parameter learning using self-generated optimization data, eliminating task distribution dependency.}\label{fig1}
\end{figure}

To address this limitation, we propose the Adaptive meta Black-box Optimization Model (ABOM), a task-free meta-optimizer that adaptively learns parameters using only self-generated data (Fig. \ref{fig1}, Right). ABOM's distinguishing feature is an end-to-end differentiable framework that parameterizes evolutionary operators as learnable functions (Fig. \ref{fig2}). Inspired by EA dynamics, it employs attention mechanisms to separately model relationships among individuals, fitness landscapes, and genetic components, thereby replicating selection, crossover, and mutation as differentiable operations. Crucially, ABOM updates its parameters during optimization by aligning the generated offspring population with an elite archive of high-quality solutions, bypassing the need for meta-training on task distributions. This design yields two key contributions:

%\footnote{The code of ABOM can be accessed at \url{https://anonymous.4open.science/r/ABOM-A343/}.}

\begin{itemize}
    \item \textbf{Task-free adaptation}: The parameters of ABOM are updated via adaptive learning using optimization data from the target task, eliminating the reliance on handcrafted training tasks or heuristic rules. Theoretically, ABOM guarantees convergence to the global optimum.
    % \item \textbf{GPU-native architecture}: 
    \item \textbf{Intrinsic interpretability}: Attention matrices provide quantifiable insights into search patterns, such as selection bias toward high-fitness individuals and consistent genetic interaction patterns during mutation. Moreover, ABOM supports GPU acceleration out of the box, without requiring changes to standard EA infrastructure.
\end{itemize}

% We validate ABOM on the MetaBox benchmark \citet{ma2023metabox,ma2025metabox}, comprising diverse synthetic and realistic BBO tasks. Extensive comparisons against classical EAs, state-of-the-art adaptive variants, and metaBBO methods demonstrate ABOM’s superior optimization performance. Ablation studies and attention visualizations further confirm the effectiveness of its parameterized operators and online adaptive learning mechanism.

\section{Related Works}

\textbf{Evolutionary Algorithms}. EAs, such as genetic algorithms (GA) \citet{2}, evolution strategies (ES)\citet{rechenberg1984evolution}, particle swarm optimization (PSO) \citet{488968}, and differential evolution (DE) \citet{10.1023/A:1008202821328}, are widely adopted for BBO tasks due to their derivative-free nature. These methods manipulate populations via heuristic operators but often suffer from inefficiency and fragility when applied to new tasks, as they require labor-intensive manual parameter tuning. While ABOM draws inspiration from EA dynamics, it eliminates manual tuning by enabling adaptive parameter learning directly from optimization data.

\textbf{Adaptive Optimization}. To improve cross-domain generalization, adaptive EA variants employ dynamic operator selection or parameter adjustment such as CMAES \citet{18,3}, SAHLPSO  \citet{TAO2021457}, JDE21 \citet{9504782}). These methods achieve state-of-the-art results on standard BBO benchmarks but demand deep expertise in optimization theory and often require problem-specific GPU acceleration for scalability. In contrast, ABOM adheres to a unified deep learning architecture, replacing heuristic rules with adaptive parameter learning and reducing deployment barriers.

\textbf{Meta Black-Box Optimization}. MetaBBO techniques leverage meta-learning to automate meta-level strategies for solving lower-level BBO tasks \citet{10993463,wang2025instance,yun2025posterior}, thereby reducing the need for expert intervention. Common paradigms include algorithm selection \citet{9187549,guo2024deep}, which chooses from a predefined pool of operators; algorithm configuration \citet{lange2023discovering,10.1145/3583131.3590496,10.1145/3712256.3726309}, which tunes hyperparameters via meta-strategies; solution manipulation \citet{li2024pretrained,Li_Wu_Zhang_Wang_2025}, which integrates meta-strategies directly into the optimization process; and algorithm generation \citet{ICLR2024_b742b708,ICLR2024_3339f19c}, which synthesizes entire optimization workflows. Despite their promise, these methods critically depend on manually designed components, such as discrete algorithm search spaces $\mathcal{A}$, state feature spaces, meta-objectives, and training task distributions $\mathcal{F}$. The dependency on handcrafted $\mathcal{F}$ hinders real-world applicability when task distributions are unavailable. ABOM addresses this limitation by unifying evolutionary operators into a continuous, differentiable parameter space, enabling adaptive parameter learning without requiring $\mathcal{F}$ or discrete algorithm search spaces.

% ABOM resolves this by unifying evolutionary operators into a continuous, differentiable parameter space, enabling adaptive parameter learning without $\mathcal{F}$ or discrete search spaces.

\section{Adaptive Meta Black-Box Optimization Model\label{abomalg}}

\subsection{Problem Definition}

A target BBO task is defined as:
\begin{equation}
\min_{\mathbf{x} \in \mathbb{R}^d} f_T(\mathbf{x}),
\end{equation}
where $\mathbf{x}$ is the solution vector in a $d$-dimensional search space. MetaBBO methods formalize the automated design of optimizers as a triplet $\mathcal{T} := (\mathcal{A}, \mathcal{R}, \mathcal{F})$, with the discrete algorithm search space $\mathcal{A}$ , the performance metric  $\mathcal{R}$, and the training task distribution $\mathcal{F}$. The meta-optimization objective maximizes expected performance \citet{10993463}:
\begin{equation}
J(\boldsymbol{\theta}) = \max_{\boldsymbol{\theta} \in \boldsymbol{\Theta}} \mathbb{E}_{f \sim \mathcal{F}} \left[ \mathcal{R}\big(\mathcal{A}, \pi_{\boldsymbol{\theta}}, f\big) \right],\label{meta-objective}
\end{equation}
where the meta-strategy $\pi_{\boldsymbol{\theta}}$ selects the algorithm (or configuration) $a \in \mathcal{A}$ for each task $f$. The Eq. (\ref{meta-objective}) needs to be designed manually $\mathcal{F}$. To mitigate the need for $\mathcal{F}$, we define adaptive MetaBBO as $\mathcal{T}_{\text{adaptive}} := (\mathcal{A}, \mathcal{R}, f_T)$, operating directly on the target task $f_T$. \textcolor{black}{Using cumulative optimization knowledge} $\mathcal{M}^{(t)} = (\mathcal{X}^{(t)}, \mathcal{Y}^{(t)})$, where $\mathcal{X}^{(t)} = \{\mathbf{x}^{(1)}, \dots, \mathbf{x}^{(t)}\}$ (solutions) and $\mathcal{Y}^{(t)} = \{f(\mathbf{x}^{(1)}), \dots, f(\mathbf{x}^{(t)})\}$ (evaluations) during optimization, the Eq. (\ref{meta-objective}) becomes the following:
\begin{equation}
J(\boldsymbol{\theta}) = \max_{\boldsymbol{\theta} \in \boldsymbol{\Theta}} \left[ \mathcal{R}\big(\mathcal{A}, \pi_{\boldsymbol{\theta}}, \mathcal{M}^{(t)}\big) \right],
\end{equation}
with $\boldsymbol{\theta}$ updated online using $\mathcal{M}^{(t)}$. However, $\mathcal{A}$ and $\pi_{\boldsymbol{\theta}}$ still require expert-crafted components.

To address this limitation, ABOM replaces the discrete meta-optimization framework $(\mathcal{A}, \pi_{\boldsymbol{\theta}})$ with a single, differentiable optimizer $\pi_{\boldsymbol{\theta}}$ parameterized by $\boldsymbol{\theta}$. The final objective is:
\begin{equation}
J(\boldsymbol{\theta}) = \max_{\boldsymbol{\theta} \in \boldsymbol{\Theta}} \left[ \mathcal{R}\big(\pi_{\boldsymbol{\theta}}, \mathcal{M}^{(t)}\big) \right],\label{meta-objective-abom}
\end{equation}
 where $\boldsymbol{\theta}$ is updated \textit{only} using $\mathcal{M}^{(t)}$ from $f_T$, thereby eliminating the need for manual design of $\mathcal{F}$, discrete search spaces $\mathcal{A}$, and expert-dependent feature engineering. The Eq. (\ref{meta-objective-abom}) establishes an end-to-end differentiable framework where adaptive parameter learning occurs through continuous feedback from $\mathcal{M}^{(t)}$.

\subsection{Meta-Strategy Architecture}

\begin{figure}[htbp]
\centering
\includegraphics[width=\textwidth]{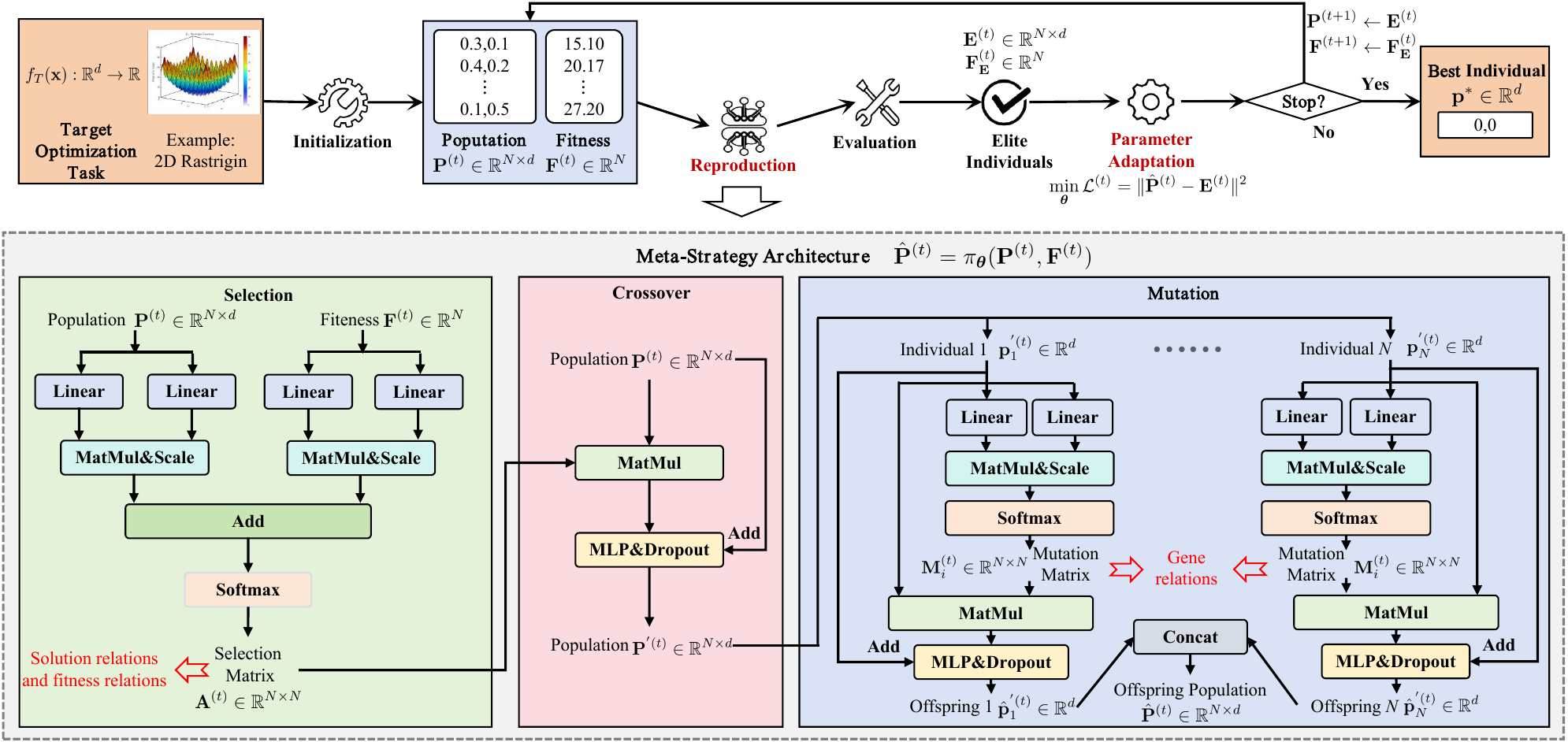}
\caption{Workflow of ABOM: (Top) Adaptive optimization loop: Initialization, reproduction, evaluation, elitism, and parameter adaptation; (Bottom) Meta-strategies for reproduction: Attention-based evolutionary operators, including selection, crossover, and mutation.}\label{fig2}
\end{figure}

ABOM implements a differentiable meta-strategy $\hat{\mathbf{P}}^{(t)} = \pi_{\boldsymbol{\theta}}(\mathbf{P}^{(t)}, \mathbf{F}^{(t)})$ (Fig. \ref{fig2}, Bottom) that learns evolutionary operators via attention mechanisms \citet{NIPS2017_3f5ee243}. At generation $t$, the population $\mathbf{P}^{(t)} = \left[ \mathbf{p}_1^{(t)\top};\; \dots;\; \mathbf{p}_N^{(t)\top} \right] \in \mathbb{R}^{N \times d}$ represents $N$ candidate solutions in the search space, where the individual $\mathbf{p}_i^{(t)} \in \mathbb{R}^{d}$ is a solution vector. The fitness values $\mathbf{F}^{(t)}=\left[ f_T(\mathbf{p}_1^{(t)});...;f_T(\mathbf{p}_N^{(t)}) \right] \in \mathbb{R}^{N}$ are scalar evaluations $f_T(\mathbf{p}_i^{(t)})$ obtained via black-box queries to the target objective $f_T(\cdot)$, with lower values indicating better solutions. Given the population-fitness pair, ABOM generates offspring through three unified modules:

\noindent\textbf{Selection.} The selection matrix $\mathbf{A}^{(t)} \in \mathbb{R}^{N \times N}$ is computed to jointly model relationships in the solution space and among fitness values via attention:
\begin{equation}
\mathbf{A}^{(t)} = \mathrm{softmax}\left(\frac{(\mathbf{P}^{(t)}\mathbf{W}^{Q\!P})(\mathbf{P}^{(t)}\mathbf{W}^{K\!P})^\top + (\mathbf{F}^{(t)}\mathbf{W}^{Q\!F})(\mathbf{F}^{(t)}\mathbf{W}^{K\!F})^\top}{\sqrt{d_A}}\right),
\label{eq5}
\end{equation}
where $\mathbf{W}^{Q\!P}, \mathbf{W}^{K\!P} \in \mathbb{R}^{d \times d_A}$ project solution features, and $\mathbf{W}^{Q\!F}, \mathbf{W}^{K\!F} \in \mathbb{R}^{1 \times d_A}$ process fitness values. The first term captures spatial relationships in the solution space, while the second term encodes fitness-driven selection pressure. This dual-path design ensures that recombination prioritizes solutions based on both their search-space positioning and fitness ranking, rather than fitness alone.

\noindent\textbf{Crossover.} The intermediate population $\mathbf{P}'^{(t)}$ is generated by:  
\begin{equation}  
\mathbf{P}'^{(t)} = \mathbf{P}^{(t)} + \mathrm{MLP}_{\theta_c}\left( \mathbf{A}^{(t)}\mathbf{P}^{(t)} \right),  
\label{eq6}
\end{equation}  
%$\mathbf{W}^{V} \in \mathbb{R}^{d \times d_A}$, and
where  $\mathrm{MLP}_{\theta_c}(\mathbf{z}) =  \tanh(\mathbf{z}\mathbf{W}_1 + \mathbf{b}_1)\mathbf{W}_2 + \mathbf{b}_2$ with $\mathbf{W}_1 \in \mathbb{R}^{d \times d_M}$, $\mathbf{b}_1 \in \mathbb{R}^{d_M}$, $\mathbf{W}_2 \in \mathbb{R}^{d_M \times d}$, $\mathbf{b}_2 \in \mathbb{R}^{d}$. Dropout with rate $p_C$ is applied to the hidden layer during both adaptive parameter learning and inference. The mechanism ensures persistent exploration through controlled randomness and is consistently maintained across all stochastic operations in ABOM. The term $\mathbf{A}^{(t)}\mathbf{P}^{(t)}$ computes an adaptive recombination pool: each row $\sum_{j=1}^N \mathbf{A}^{(t)}_{i,j} \mathbf{p}_j^{(t)}$ represents a context-aware blend of parent solutions, where weights $\mathbf{A}^{(t)}_{i,j}$ dynamically balance proximity in solution space and fitness-driven selection pressure.

\noindent\textbf{Mutation.} For each individual $\mathbf{p}_i^{'(t)} \in \mathbb{R}^{d}$ in $\mathbf{P}'^{(t)}$, offspring $\hat{\mathbf{p}}_i^{(t)}$ is generated via:  
\begin{equation}  
\hat{\mathbf{p}}_i^{(t)} = \mathbf{p}_i^{'(t)} + \mathrm{MLP}_{\theta_m}\left( \mathbf{M}_i^{(t)}\mathbf{p}_i^{'(t)}\right), \quad  
\mathbf{M}_i^{(t)} = \mathrm{softmax}\left(\frac{(\mathbf{p}_i^{'(t)}\mathbf{W}^{Q\!M})(\mathbf{p}_i^{'(t)}\mathbf{W}^{K\!M})^\top}{\sqrt{d_A}}\right),  
\label{eq7}
\end{equation}  

%, \mathbf{W}^{V\!M} \in \mathbb{R}^{1 \times 1}
where $\mathbf{W}^{Q\!M}, \mathbf{W}^{K\!M} \in \mathbb{R}^{1 \times d_A}$, and $\mathrm{MLP}_{\theta_m}(\mathbf{z}) =  \tanh(\mathbf{z}\mathbf{W}_3 + \mathbf{b}_3)\mathbf{W}_4 + \mathbf{b}_4$ with $\mathbf{W}_3 \in \mathbb{R}^{1 \times d_M}$, $\mathbf{b}_3 \in \mathbb{R}^{d_M}$, $\mathbf{W}_4 \in \mathbb{R}^{d_M}$, $\mathbf{b}_4 \in \mathbb{R}$. Following the same exploration principle as crossover, dropout with rate $p_M$ is applied during inference to maintain persistent exploration. The mutation matrix $\mathbf{M}^{(t)} \in \mathbb{R}^{d \times d}$ dynamically models gene-wise dependencies: each entry $\mathbf{M}^{(t)}_{j,k}$ quantifies the interaction strength between the $j$-th and $k$-th dimensions, enabling context-aware perturbations. Finally, offspring are concatenated as:  
\begin{equation}  
\hat{\mathbf{P}}^{(t)} = \left[ \hat{\mathbf{p}}_1^{(t)\top};\; \dots;\; \hat{\mathbf{p}}_N^{(t)\top} \right] \in \mathbb{R}^{N \times d}. 
\label{eq8}
\end{equation}  

% Then, the entire meta-strategy $\hat{\mathbf{P}}^{(t)} = \pi_{\boldsymbol{\theta}}(\mathbf{P}^{(t)}, \mathbf{F}^{(t)})$ can be expressed as:

% \begin{equation}  
% \hat{\mathbf{P}}^{(t)} = \|_{i=1}^N \left(   \right)
% \end{equation}  

The set $\boldsymbol{\theta}$ containing all parameters is:  
\begin{equation}  
\boldsymbol{\theta} = \left\{ \mathbf{W}^{Q\!P}, \mathbf{W}^{K\!P}, \mathbf{W}^{Q\!F}, \mathbf{W}^{K\!F}, \mathbf{W}^{Q\!M}, \mathbf{W}^{K\!M} \right\} \cup \theta_c \cup \theta_m,  
\end{equation}  
with $\theta_c = \{\mathbf{W}_1, \mathbf{b}_1, \mathbf{W}_2, \mathbf{b}_2\}$, $\theta_m = \{\mathbf{W}_3, \mathbf{b}_3, \mathbf{W}_4, \mathbf{b}_4\}$. Note that $p_C$ and $p_M$ are hyperparameters that govern the intensity of exploration. All modules share attention dimension $d_A$ and MLP hidden dimension $d_M$. The parameterization transforms evolutionary operators into stochastic yet differentiable functions, where structured randomness maintains exploration without compromising gradient-based adaptation.

\subsection{Adaptive Parameter Learning}

As shown in Fig.~\ref{fig2} (Top), ABOM's optimization loop comprises:
1) \textbf{Initialization}:  The initial population $\mathbf{P}^{(0)}$ is randomly generated by Latin hypercube sampling;
2) \textbf{Reproduction}: Offspring $\hat{\mathbf{P}}^{(t)}$ are generated via $\hat{\mathbf{P}}^{(t)} = \pi_{\boldsymbol{\theta}}(\mathbf{P}^{(t)}, \mathbf{F}^{(t)})$;
3) \textbf{Evaluation}: Fitness values $\hat{\mathbf{F}}^{(t)}$ are computed for $\hat{\mathbf{P}}^{(t)}$;
4) \textbf{Elitism} \citet{996017}: The elite archive $\mathbf{E}^{(t)} \in \mathbb{R}^{N \times d}$, formed by the top $N$ individuals from $\mathbf{P}^{(t)} \cup \hat{\mathbf{P}}^{(t)}$, and their fitness values $\mathbf{F}_\mathbf{E}^{(t)} \in \mathbb{R}^{N}$, are carried over to the next generation;
5) \textbf{Parameter adaptation}: $\boldsymbol{\theta}$ is updated via adaptive parameter learning.
The pseudocode of ABOM can be found in the Appendix \ref{sectionalg} (Alg. \ref{alg:abom}). Crucially, ABOM performs adaptive parameter learning by minimizing the distance between offspring and the elite archive:
\begin{equation}
    \min_{\boldsymbol{\theta}} \mathcal{L}^{(t)} = \| \hat{\mathbf{P}}^{(t)} - \mathbf{E}^{(t)} \|^2,
    \label{eq:adaptation_objective}
\end{equation}
where, $\mathbf{E}^{(t)}$ denotes the elite archive. The objective refines evolutionary operators using task-specific knowledge from $\mathcal{M}^{(t)}$. From a learning perspective, adaptive parameter learning operates in a supervised paradigm. Gradients of $\mathcal{L}$ with respect to $\boldsymbol{\theta}$ are computed, and $\boldsymbol{\theta}  \leftarrow \boldsymbol{\theta} - \eta \nabla_{\theta} \mathcal{L}^{(t)}$ is updated via a gradient-based optimizer (e.g., AdamW \citet{loshchilov2018decoupled}). The process ensures continuous adaptation to the target task without handcrafted training tasks.

\noindent\textbf{Discussion.} ABOM introduces three algorithmic properties that enhance its suitability for BBO: (1) \textbf{Learnable operators}: evolutionary mechanisms are parameterized and adapted online via gradient-based learning, reducing reliance on hand-designed heuristics; (2) \textbf{GPU-parallelizable design}: neural computation enables efficient batched execution on GPU, reducing wall-clock time per iteration; and (3) \textbf{Interpretable dynamics}: learned selection and mutation matrices reveal structured patterns in solution-fitness interactions and dimensional dependencies.

\subsection{Computational Complexity and Convergence Analysis}
\label{app:complexity}

The computational cost of ABOM is primarily dominated by the selection, crossover, and mutation. The selection matrix (Eq.~\ref{eq5}) incurs complexity $O(N d d_A + N^2 d_A)$, where $N$ is the population size, $d$ the search space dimension, and $d_A$ the attention dimension. The MLP of the crossover (Eq.~\ref{eq6}) contributes $O(N d_A d_M + N d_M d)$, with $d_M$ the hidden dimension of the MLP. The mutation (Eq.~\ref{eq7}) contributes $O(d^2 d_A + d d_A d_M)$. Summing these, the total complexity is:

\begin{equation}
    O(N d d_A + N^2 d_A + N d_A d_M + N d_M d + d^2 d_A + d d_A d_M).\label{cca}
\end{equation}

Assuming $d_A = d_M = d$ for simplicity, the formulation (\ref{cca}) reduces to $O(N d^2 + N^2 d + d^3)$. In typical high-dimensional optimization ($N \ll d$), the leading term is $O(d^3)$, indicating that computational cost is primarily governed by the problem dimension. Note that $d_A$ and $d_M$ can be adjusted in practice to balance expressivity and efficiency. Next, we establish that ABOM achieves global convergence under the following assumption:

\begin{assumption}\label{as:main}
The search space $\mathcal{X} \subseteq \mathbb{R}^d$ is compact, the objective $f_T$ is continuous with global minimizer $\mathbf{x}^*$ in the interior of $\mathcal{X}$, and ABOM uses $\tanh$-activated MLPs ($d_M \geq 1$) with dropout rates ($0 < p_C, p_M < 1$) during inference (operator execution).
\end{assumption}

Let $f_t^* = \min_{\mathbf{x} \in \mathbf{E}^{(t)}} f_T(\mathbf{x})$ denote the best objective value in the elite archive. The filtration $\mathcal{F}_t = \sigma(\mathbf{P}^{(0)}, \dots, \mathbf{P}^{(t)}, \theta^{(0)}, \dots, \theta^{(t)})$ captures all algorithmic history up to generation $t$. ABOM preserves a non-vanishing probability of generating offspring $\hat{\mathbf{p}}_i^{(t)}$ near the global optimum:

\begin{corollary}[Exploration Guarantee]\label{cor:exploration}
For any $\delta > 0$, $\exists\, \gamma > 0$ such that $\forall t \geq 0$,
\begin{equation}
\mathbb{P}\big(\exists\, i : \|\hat{\mathbf{p}}_i^{(t)} - \mathbf{x}^*\| < \delta \mid \mathcal{F}_t\big) \geq 1 - (1-\gamma)^N > 0.
\end{equation}
\end{corollary}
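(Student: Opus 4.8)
The plan is to reduce the statement to a single per-offspring hitting bound and then recover the $1-(1-\gamma)^N$ form from conditional independence of the dropout noise. Fix $\delta>0$ and condition on $\mathcal{F}_t$. Since $\mathcal{F}_t = \sigma(\mathbf{P}^{(0)},\dots,\mathbf{P}^{(t)},\theta^{(0)},\dots,\theta^{(t)})$ records both the current population and the current parameters, conditioning fixes $\mathbf{P}^{(t)}$, the fitnesses $\mathbf{F}^{(t)} = [f_T(\mathbf{p}_i^{(t)})]_i$, the parameters $\theta^{(t)}$, hence the deterministic selection matrix $\mathbf{A}^{(t)}$ of Eq.~(\ref{eq5}) and the deterministic crossover input $\mathbf{A}^{(t)}\mathbf{P}^{(t)}$. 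The \emph{only} remaining randomness in $\hat{\mathbf{P}}^{(t)}$ is the inference-time dropout on the hidden layers of $\mathrm{MLP}_{\theta_c}$ and $\mathrm{MLP}_{\theta_m}$. Because these masks are drawn independently across individuals, the offspring $\hat{\mathbf{p}}_1^{(t)},\dots,\hat{\mathbf{p}}_N^{(t)}$ are conditionally independent given $\mathcal{F}_t$.

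Next I would set $\gamma_i^{(t)} := \mathbb{P}\big(\|\hat{\mathbf{p}}_i^{(t)}-\mathbf{x}^*\|<\delta \mid \mathcal{F}_t\big)$ and $\gamma := \inf_{t,i}\gamma_i^{(t)}$. Conditional independence gives $\mathbb{P}\big(\forall i:\|\hat{\mathbf{p}}_i^{(t)}-\mathbf{x}^*\|\ge\delta \mid \mathcal{F}_t\big)=\prod_{i=1}^N(1-\gamma_i^{(t)})\le (1-\gamma)^N$, so the complementary event has probability at least $1-(1-\gamma)^N$, which is strictly positive precisely when $\gamma>0$. Thus the whole corollary collapses to establishing the uniform lower bound $\gamma>0$ (with $\gamma=\gamma(\delta)$), and because this bound is taken over all $t$, it automatically yields the $\forall t\ge 0$ quantifier in the statement.

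To obtain $\gamma>0$ I would analyze the offspring map $\hat{\mathbf{p}}_i^{(t)} = \mathbf{p}_i^{'(t)} + \mathrm{MLP}_{\theta_m}(\mathbf{M}_i^{(t)}\mathbf{p}_i^{'(t)})$ with $\mathbf{p}_i^{'(t)} = \mathbf{p}_i^{(t)} + \mathrm{MLP}_{\theta_c}(\cdot)$ of Eqs.~(\ref{eq6})--(\ref{eq7}), and exhibit a family of dropout configurations, each occurring with probability bounded away from $0$ because $0<p_C,p_M<1$ forces every mask to have positive probability, under which the offspring is steered into $B(\mathbf{x}^*,\delta)$. The continuity of $\tanh$, the softmax, and the affine maps, combined with compactness of $\mathcal{X}$ (so $\mathbf{P}^{(t)}$ ranges over a compact set) and of the admissible parameter set $\boldsymbol{\Theta}$, would let me pass from a pointwise positive probability to an infimum over $(\mathbf{P}^{(t)},\theta^{(t)})$ that stays strictly positive; since the resulting bound depends on $t$ only through these compactly constrained quantities, it holds uniformly, giving $\gamma>0$.

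The hard part will be the reachability step hidden in the previous paragraph: proving that the dropout-perturbed operators assign positive, uniformly-bounded-below mass to the arbitrarily small ball $B(\mathbf{x}^*,\delta)$ for \emph{every} admissible population and parameter configuration. Since the $\tanh$-bounded MLP increments are a priori of bounded magnitude, I must argue that the injected stochasticity is rich enough to cover a neighborhood of the interior minimizer $\mathbf{x}^*$; this is exactly where the hypotheses that $\mathbf{x}^*$ lies in the interior of the compact $\mathcal{X}$ and that $0<p_C,p_M<1$ become indispensable, and it is the step I would scrutinize most carefully, as a naive bounded-perturbation estimate could fail when the current population is clustered far from $\mathbf{x}^*$.
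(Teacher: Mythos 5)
Your reduction to a single per-offspring bound --- conditional independence of the dropout masks across individuals, $\gamma := \inf_{t,i}\gamma_i^{(t)}$, and the product bound giving $1-(1-\gamma)^N$ --- matches the closing step of the paper's proof of Lemma~\ref{lem:exploration} exactly. The divergence, and the gap, is in how $\gamma>0$ is established. You propose to get it from the dropout randomness alone, with $\theta^{(t)}$ held fixed by the conditioning on $\mathcal{F}_t$. But conditioned on $\mathcal{F}_t$ the only stochasticity left is a pair of Bernoulli masks of dimension $d_M$ per individual, so each $\hat{\mathbf{p}}_i^{(t)}$ is supported on at most $2^{2d_M}$ points determined by $(\mathbf{P}^{(t)},\theta^{(t)})$. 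For an arbitrarily small $\delta$ there is no reason any of these finitely many points lies in $B(\mathbf{x}^*,\delta)$, and no compactness or continuity argument over $(\mathbf{P}^{(t)},\theta^{(t)})$ can repair this: the infimum you take can be, and generically is, zero. You flagged this reachability step as the one you would scrutinize most carefully, and it is precisely the step that fails; the proposal is therefore incomplete at the only point where the real content of the corollary lives.

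The paper takes a different route through that step. It constructs an ideal parameter configuration $\theta_c^*$ (zero weights, $\mathbf{b}_2 = \mathbf{x}^*-\mathbf{p}_i^{(t)}$) under which the crossover maps the parent exactly to $\mathbf{x}^*$, uses continuity of the MLP in its parameters to obtain a whole neighborhood $\mathcal{N}_\epsilon(\theta_c^*)$ of parameters landing within $\delta/2$, and then lower-bounds $\mathbb{P}(\theta_c^{(t)}\in\mathcal{N}_\epsilon(\theta_c^*)\mid\mathcal{F}_t)$ by a product of the minimal dropout-mask probability and a constant $c_t>0$ coming from an asserted positive density of the parameter iterates; the mutation operator is handled symmetrically and $\gamma=\mu_c\mu_m$. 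In other words, the paper places the randomness in the \emph{parameters} rather than in the dropout perturbation of a fixed network. (Whether that is consistent with conditioning on a filtration that already contains $\theta^{(t)}$, and whether the gradient-plus-dropout update really gives the iterates a density covering $\mathcal{N}_\epsilon(\theta_c^*)$ uniformly in $t$, are questions the paper does not resolve --- but that is the argument it offers, and your proposal does not contain an equivalent of it.) To complete your proof you would need either to import this ideal-parameter argument or to supply a genuinely new mechanism by which bounded, finitely-supported dropout noise reaches an arbitrary $\delta$-ball around an interior minimizer.
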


Let $f^* = f_T(\mathbf{x}^*)$ be the global optimum value. \textcolor{black}{Corollary \ref{cor:drift} establishes a positive drift condition: when $f_t^* > f^* + \epsilon$, the expected improvement is strictly positive.}

\begin{corollary}[Progress Guarantee]\label{cor:drift}
For any $\epsilon > 0$, $\exists\, \eta(\epsilon) > 0$ such that $\forall t \geq 0$,
\begin{equation}
\mathbb{E}\big[f_t^* - f_{t+1}^* \mid \mathcal{F}_t,\, f_t^* > f^* + \epsilon\big] \geq \eta(\epsilon).
\end{equation}
\end{corollary}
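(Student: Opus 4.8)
The plan is to combine the exploration guarantee of Corollary~\ref{cor:exploration} with the monotonicity enforced by elitism and the continuity of $f_T$, so as to exhibit a strictly positive lower bound on the expected one-step improvement of the elite archive whenever the current best value is still $\epsilon$-suboptimal. The overall shape is the classical drift argument for stochastic optimizers: a pathwise ``never worse'' guarantee from elitism, plus a ``strictly better with non-vanishing probability'' guarantee from reachability near $\mathbf{x}^*$.

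First I would record the monotonicity that elitism provides. Since the elite archive at the next generation is formed from the top $N$ individuals of the union of the carried-over archive and the freshly generated offspring, the best-so-far value can never increase: $f_{t+1}^* \le f_t^*$ almost surely, hence the drift $f_t^* - f_{t+1}^* \ge 0$ pathwise. Consequently it suffices to produce a positive contribution to the drift on an event of probability bounded away from zero, while the complementary event can simply be discarded using the pathwise bound $\ge 0$.

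Next I would convert spatial proximity into fitness proximity via continuity. Because $f_T$ is continuous with $f_T(\mathbf{x}^*) = f^*$, for the fixed $\epsilon$ there is a radius $\delta = \delta(\epsilon) > 0$ such that $\|\mathbf{x} - \mathbf{x}^*\| < \delta \Rightarrow f_T(\mathbf{x}) < f^* + \tfrac{\epsilon}{2}$. Invoking Corollary~\ref{cor:exploration} with this $\delta$ yields a constant $\gamma = \gamma(\delta) > 0$, uniform in $t$, so that conditionally on $\mathcal{F}_t$ the event $G_t = \{\exists\, i : \|\hat{\mathbf{p}}_i^{(t)} - \mathbf{x}^*\| < \delta\}$ has probability at least $q := 1 - (1-\gamma)^N > 0$. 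On $G_t$ the matching offspring has fitness below $f^* + \tfrac{\epsilon}{2}$, and since elitism absorbs it into the new archive, $f_{t+1}^* < f^* + \tfrac{\epsilon}{2}$; together with the conditioning hypothesis $f_t^* > f^* + \epsilon$ this forces $f_t^* - f_{t+1}^* > \tfrac{\epsilon}{2}$ on $G_t$. Splitting the conditional expectation over $G_t$ and its complement then gives
\begin{equation}
\mathbb{E}\big[f_t^* - f_{t+1}^* \mid \mathcal{F}_t,\, f_t^* > f^* + \epsilon\big] \;\ge\; q\cdot\frac{\epsilon}{2} \;=\; \big(1-(1-\gamma)^N\big)\frac{\epsilon}{2} \;=:\; \eta(\epsilon) > 0,
\end{equation}
which is the claimed positive drift.

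The step I expect to require the most care is the conditioning, rather than any single estimate. I must fix the indexing convention so that $f_t^*$ is the archive value entering generation $t$ (hence $\mathcal{F}_t$-measurable) and the offspring $\hat{\mathbf{p}}_i^{(t)}$ to which Corollary~\ref{cor:exploration} is applied are exactly those producing $f_{t+1}^*$; under that reading the event $\{f_t^* > f^* + \epsilon\}$ lies in $\mathcal{F}_t$, so further conditioning on it does not degrade the $\mathcal{F}_t$-conditional exploration bound. I also need to confirm that $\gamma(\delta)$, and therefore $\eta(\epsilon)$, is genuinely uniform over $t$ and does not become silently $t$-dependent through the adapted parameters $\theta^{(t)}$ or the dropout-induced randomness; the uniformity asserted in Corollary~\ref{cor:exploration} is precisely what makes $\eta(\epsilon)$ a legitimate $t$-free drift constant.
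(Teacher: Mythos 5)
Your proposal is correct and follows essentially the same route as the paper's proof of this drift condition: choose $\delta(\epsilon)$ by continuity so that the $\delta$-ball around $\mathbf{x}^*$ has fitness below $f^*+\epsilon/2$, invoke the exploration guarantee to bound the probability of hitting that ball by $1-(1-\gamma)^N$, use elitism for the pathwise bound $f_{t+1}^*\le f_t^*$ on the complement, and conclude with $\eta(\epsilon)=(\epsilon/2)\bigl(1-(1-\gamma)^N\bigr)$. Your added attention to the measurability of $\{f_t^*>f^*+\epsilon\}$ with respect to $\mathcal{F}_t$ and to the $t$-uniformity of $\gamma$ is a sound refinement of the same argument, not a departure from it.
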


Combining these properties, we have:

\begin{theorem}[Global Convergence]\label{thm:main_convergence}
Under Assumption~\ref{as:main}, ABOM converges to the global optimum almost surely:
\begin{equation}
    f_t^* \xrightarrow{\text{a.s.}} f^* \quad \text{as} \quad t \to \infty.
\end{equation}
All proofs are provided in Appendix~\ref{appendix:convergence}.
\end{theorem}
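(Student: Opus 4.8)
The plan is to combine the elitism-induced monotonicity of the incumbent best with the positive drift of Corollary~\ref{cor:drift}, and then close the argument with a telescoping/Borel--Cantelli step. Define the non-negative gap $Y_t := f_t^* - f^* \ge 0$. First I would record that elitism makes $\{f_t^*\}$ monotone non-increasing: since the elite archive $\mathbf{E}^{(t)}$ is carried into generation $t+1$, the next selection set satisfies $\mathbf{P}^{(t+1)} \cup \hat{\mathbf{P}}^{(t+1)} \supseteq \mathbf{E}^{(t)}$, and $\mathbf{E}^{(t+1)}$ is its top-$N$ subset; the element attaining the minimum fitness of the union is always retained, so $f_{t+1}^* \le f_t^*$. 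By Assumption~\ref{as:main}, $\mathcal{X}$ is compact and $f_T$ continuous, so $f_T$ attains a finite minimum $f^*$ and a finite maximum; hence $Y_t$ is uniformly bounded and $\mathbb{E}[Y_0] < \infty$. A bounded, pathwise monotone non-increasing sequence converges, so $Y_t \downarrow Y_\infty$ almost surely for some $Y_\infty \ge 0$, and it remains only to show $Y_\infty = 0$ a.s.

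Second, I would convert the drift into a summability statement. Because monotonicity gives $Y_t - Y_{t+1} \ge 0$ pointwise, Corollary~\ref{cor:drift} implies, for every $\epsilon > 0$,
\[
\mathbb{E}\big[Y_t - Y_{t+1} \mid \mathcal{F}_t\big] \;\ge\; \eta(\epsilon)\,\mathbf{1}\{Y_t > \epsilon\}.
\]
Taking expectations and telescoping over $t = 0, \dots, T-1$ yields $\mathbb{E}[Y_0] \ge \mathbb{E}[Y_0 - Y_T] \ge \eta(\epsilon)\sum_{t=0}^{T-1}\mathbb{P}(Y_t > \epsilon)$. Letting $T \to \infty$ and using $\mathbb{E}[Y_0] < \infty$ together with $\eta(\epsilon) > 0$ gives $\sum_{t=0}^\infty \mathbb{P}(Y_t > \epsilon) < \infty$.

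Third, I would apply the Borel--Cantelli lemma to get $\mathbb{P}(Y_t > \epsilon \text{ i.o.}) = 0$. By the pathwise monotonicity of $Y_t$, the event $\{Y_\infty \ge 2\epsilon\}$ forces $Y_t \ge 2\epsilon > \epsilon$ for all $t$, hence $\{Y_t > \epsilon \text{ i.o.}\}$; therefore $\mathbb{P}(Y_\infty \ge 2\epsilon) = 0$. Since $\epsilon > 0$ is arbitrary, $Y_\infty = 0$ almost surely, i.e. $f_t^* \xrightarrow{\text{a.s.}} f^*$, which is the claim.

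The genuine work lies not in this assembly but in securing the \emph{uniform} drift of Corollary~\ref{cor:drift}, which is where Corollary~\ref{cor:exploration} enters. The delicate points are (i) that the per-generation improvement bound $\eta(\epsilon)$ is strictly positive and, crucially, independent of $t$ and of the algorithmic history $\mathcal{F}_t$ (including the adapted parameters $\theta^{(t)}$), and (ii) that the exploration probability $\gamma$ of Corollary~\ref{cor:exploration} is likewise bounded below uniformly in $t$. This uniformity is exactly what the dropout-driven ($0 < p_C, p_M < 1$) $\tanh$-MLP operators of Assumption~\ref{as:main} must supply, placing non-vanishing probability mass in every neighborhood of the interior minimizer $\mathbf{x}^*$ regardless of how $\theta$ has evolved; converting such a positive-probability hit near $\mathbf{x}^*$ into a quantitative expected decrease of $f_t^*$ via continuity of $f_T$ at $\mathbf{x}^*$ is the main obstacle, and I would discharge it inside the proofs of the two corollaries rather than in the convergence theorem itself.
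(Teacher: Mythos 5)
Your proof is correct, and it reaches the conclusion by a genuinely different closing argument than the paper's. The paper (Theorem~\ref{thm:convergence_app}) first invokes the martingale convergence theorem on the lower-bounded supermartingale $\{f_t^*\}$ to obtain an a.s.\ limit $f^*_\infty \ge f^*$, and then argues by contradiction: if $f^*_\infty > f^*$ with positive probability, the drift of Lemma~\ref{lem:drift_app} forces the stopping time $\tau_k = \inf\{t \ge k : V_t \le \epsilon\}$ to have finite expectation, hence to be a.s.\ finite, contradicting the assumption. You instead exploit that elitism makes $Y_t = f_t^* - f^*$ \emph{pathwise} monotone (so no martingale theorem is needed for the existence of the limit), telescope the drift inequality to get $\sum_t \mathbb{P}(Y_t > \epsilon) \le \mathbb{E}[Y_0]/\eta(\epsilon) < \infty$, and finish with Borel--Cantelli plus monotonicity. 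Both routes rest on exactly the same two ingredients (elitism monotonicity and the uniform-in-$t$ drift of Corollary~\ref{cor:drift}, itself fed by the exploration bound of Corollary~\ref{cor:exploration}), and you correctly identify that the real content lies in those corollaries rather than in the assembly. Your version buys something concrete: it works with a fixed deterministic $\epsilon$ throughout and yields the quantitative bound $\sum_t \mathbb{P}(Y_t > \epsilon) \le \mathbb{E}[Y_0]/\eta(\epsilon)$, thereby sidestepping the slightly delicate point in the paper's contradiction step where the threshold $\epsilon$ and the index $k$ beyond which $V_t > \epsilon$ are path-dependent and must be fixed by a countable exhaustion before the stopping-time bound applies; given monotonicity, even Borel--Cantelli is more than you need, since $\mathbb{P}(Y_\infty > \epsilon) \le \mathbb{P}(Y_t > \epsilon) \to 0$ already closes the argument. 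The only caveat, shared with the paper, is the measurability bookkeeping: $f_t^*$ is defined from the elite archive $\mathbf{E}^{(t)}$, which is constructed from $\hat{\mathbf{P}}^{(t)}$ and is therefore most naturally $\mathcal{F}_{t+1}$-measurable under the stated filtration, so the indicator form $\eta(\epsilon)\,\mathbf{1}\{Y_t > \epsilon\}$ of the drift strictly requires either a shift of index or integrating over the event $\{Y_t>\epsilon\}$ directly (which your nonnegativity of $Y_t - Y_{t+1}$ makes harmless).
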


\section{Experiments}

In this section, we address the following research questions: RQ1 (Performance Comparison): How does ABOM compare against classical and state-of-the-art BBO baselines on both synthetic and real-world benchmarks? RQ2 (Visualization Study): What statistical patterns emerge in ABOM's selection and mutation matrices? RQ3 (Ablation Study): Are all components of ABOM necessary for achieving competitive performance? RQ4 (Parameter Analysis): How sensitive is ABOM’s performance to its key hyperparameters? We first describe the experimental setup and then systematically address RQ1–RQ4. 

% This section rigorously evaluates the proposed Adaptive Black-Box Optimization Model (ABOM) through four targeted research questions designed to assess its theoretical properties and practical efficacy. RQ1 (Comparative Performance Analysis): To what extent does ABOM achieve state-of-the-art performance across diverse synthetic and realistic black-box optimization tasks compared to established baselines? RQ2 (Operator Behavior Characterization): What structural properties and adaptive patterns characterize ABOM's attention-based evolutionary operators across varying optimization landscapes? RQ3 (Component Efficacy Validation): How do individual architectural components of ABOM contribute to its overall optimization performance, and which elements are indispensable for its effectiveness? RQ4 (Hyperparameter Sensitivity Assessment): What is the relationship between ABOM's hyperparameter configurations and its convergence behavior across problem domains with distinct characteristics? Following a comprehensive description of our experimental framework, we systematically address each research question with empirical evidence and statistical validation.

\subsection{Experimental Setup}

\textbf{BBO Tasks}. We evaluate ABOM on the advanced MetaBox Benchmark \citet{ma2023metabox,ma2025metabox}, comprising both the synthetic black-box optimization benchmark (BBOB) \citet{hansen2021coco} and the realistic unmanned aerial vehicle (UAV) path planning benchmark \citet{ALISHEHADEH2025128645}. The BBOB benchmark suite, widely adopted for evaluating black-box optimizers, comprises 24 continuous functions that exhibit diverse global optimization characteristics, including unimodal, multimodal, rotated, and shifted structures, with varying properties of Lipschitz continuity and second-order differentiability. We set the search space to $[-100, 100]^d$ with $d=30/100/500$. The UAV benchmark provides 56 terrain-based problem instances for path planning in realistic landscapes with cylindrical threats. The objective is to select a specified number of path nodes in 3D space to minimize the total flight path length while ensuring collision-free navigation. The maximum function evaluations for BBOB and UAV are set to 20,000 and 2,500, respectively. All experiments are conducted on a Linux platform with an NVIDIA RTX 2080 Ti GPU (12 GB memory, CUDA 11.3). \textcolor{black}{Detailed task configurations and other experimental results are provided in Appendices \ref{section1}, \ref{sectionbw}, and \ref{app:stop_analysis}.}

\textbf{Baselines}. We compare ABOM against three categories of baselines: (1) \textbf{Traditional BBO methods}: Random Search (RS) \citet{JMLR:v13:bergstra12a}, PSO \citet{488968}, and DE \citet{10.1023/A:1008202821328}; (2) \textbf{Adaptive optimization variants}: SAHLPSO (advanced adaptive PSO variant) \citet{TAO2021457}, JDE21 (advanced adaptive DE variant) \citet{9504782}, and CMAES (state-of-the-art adaptive ES variant) \citet{18,3}; (3) \textbf{MetaBBO methods}: GLEET (advanced MetaBBO for PSO) \citet{10.1145/3638529.3653996}, RLDEAFL (advanced MetaBBO for DE) \citet{10.1145/3712256.3726309}, LES (advanced MetaBBO for ES) \citet{lange2023discovering}, and GLHF (advanced MetaBBO for solution manipulation) \citet{li2024pretrained}. All baselines follow the configurations outlined in the original papers. For all MetaBBO methods, we train them in the same problem distribution as RLDEAFL \citet{10.1145/3712256.3726309} using the recommended settings. For BBOB, 8 out of the 24 problem instances are used as the training set, and the remaining 16 instances ($f_4,f_6\sim f_{14},f_{18}\sim f_{20},f_{22}\sim f_{24}$) serve as the test set. For UAV, the 56 problem instances are evenly divided into training and test sets, with a partition of 50\% / 50\%. All parameter configurations are provided in the Appendix \ref{section2}.

%For UAV, we employ zero-shot versions of MetaBBO baselines to ensure fairness. These are trained on synthetic problems from the same distribution as BBOB, without utilizing UAV benchmark evaluations, thus avoiding prior knowledge of the target task.

\subsection{Performance Comparison (RQ1)}

\textbf{Results on BBOB}. We evaluate ABOM against the baselines on the BBOB suite with $d=30/100/500$. Tabless \ref{tab:bbob500d_results}, \ref{tab:bbob30d_results}, and \ref{tab:bbob100d_results} (See Appendix \ref{er}) show the mean and standard deviation over 30 runs for each baseline. Convergence curves of average normalized cost across all cases are provided in the Appendix \ref{er}. ABOM matches or outperforms all baselines, achieving state-of-the-art performance, which validates the effectiveness of the proposed method. Both ABOM and adaptive optimization methods adjust the parameters online using optimization data. ABOM's parameterized operators offer greater flexibility than the fixed adaptation rules in the variant, leading to stronger performance. Compared to existing metaBBO algorithms, ABOM’s improvements highlight the importance of parameter adaptation in enabling effective meta-optimization across diverse problem instances. The results suggest that adaptive mechanisms can support competitive performance without relying on handcrafted training tasks.

\begin{table}[t!]
\centering
\caption{The comparison results of the baselines on the BBOB suite with $d=500$. All results are reported as the mean and standard deviation (mean $\pm$ std) over 30 independent runs. Symbols ``$-$", ``$\approx$", and ``$+$" imply that the corresponding baseline is significantly worse, similar, and better than ABOM on the Wilcoxon rank-sum test with 95\% confidence level, respectively. The best results are indicated in \textbf{bold}, and the suboptimal results are \underline{underlined}.}
\label{tab:bbob500d_results}
\tiny
\setlength{\tabcolsep}{0pt}
\begin{tabular*}{\textwidth}{@{\extracolsep{\fill}} c *{11}{c} @{}}
\toprule
& \multicolumn{3}{c}{\textbf{Traditional BBO}} & \multicolumn{3}{c}{\textbf{Adaptive Variants}} & \multicolumn{4}{c}{\textbf{MetaBBO}} & \textbf{Ours} \\
\cmidrule(lr){2-4} \cmidrule(lr){5-7} \cmidrule(lr){8-11} \cmidrule(lr){12-12}
ID & RS & PSO & DE & SAHLPSO & JDE21 & CMAES & GLEET & RLDEAFL & LES & GLHF & ABOM \\
\midrule

$f_4$ &
\makecell{3.700e+5 \\ $\pm$1.192e+4} &
\makecell{8.863e+4 \\ $\pm$1.525e+4} &
\makecell{3.166e+5 \\ $\pm$3.506e+4} &
\makecell{2.947e+5 \\ $\pm$2.418e+4} &
\makecell{7.876e+4 \\ $\pm$2.086e+4} &
\underline{\makecell{1.447e+4 \\ $\pm$7.83e+2}} &
\makecell{2.605e+5 \\ $\pm$2.173e+4} &
\makecell{\makecell{4.573e+4 \\ $\pm$1.079e+4}} &
\makecell{2.363e+5 \\ $\pm$4.075e+3} &
\makecell{2.324e+5 \\ $\pm$7.512e+3} &
\textbf{\makecell{1.215e+4 \\ $\pm$5.389e+2}} \\

$f_6$ &
\makecell{1.529e+7 \\ $\pm$6.327e+5} &
\makecell{5.266e+6 \\ $\pm$5.262e+5} &
\makecell{1.206e+7 \\ $\pm$1.390e+6} &
\makecell{1.026e+7 \\ $\pm$3.e+5} &
\makecell{4.399e+6 \\ $\pm$9.051e+5} &
\underline{\makecell{2.164e+4 \\ $\pm$6.814e+3}} &
\makecell{9.870e+6 \\ $\pm$2.409e+5} &
\makecell{\makecell{1.875e+6 \\ $\pm$2.855e+5}} &
\makecell{9.253e+6 \\ $\pm$9.110e+4} &
\makecell{9.194e+6 \\ $\pm$2.104e+5} &
\textbf{\makecell{6.201e+3 \\ $\pm$6.326e+2}} \\

$f_7$ &
\makecell{3.92e+4 \\ $\pm$1.094e+3} &
\makecell{2.811e+4 \\ $\pm$2.309e+3} &
\makecell{3.366e+4 \\ $\pm$2.93e+3} &
\makecell{2.774e+4 \\ $\pm$2.246e+3} &
\makecell{1.856e+4 \\ $\pm$2.522e+3} &
\makecell{1.289e+5 \\ $\pm$5.883e+2} &
\makecell{2.525e+4 \\ $\pm$1.623e+3} &
\underline{\makecell{1.481e+4 \\ $\pm$1.657e+3}} &
\makecell{2.285e+4 \\ $\pm$2.143e+2} &
\makecell{2.073e+4 \\ $\pm$3.669e+2} &
\textbf{\makecell{2.432e+3 \\ $\pm$2.250e+2}} \\

$f_8$ &
\makecell{6.052e+8 \\ $\pm$2.354e+7} &
\makecell{4.083e+8 \\ $\pm$3.847e+7} &
\makecell{4.153e+8 \\ $\pm$4.952e+7} &
\makecell{2.194e+8 \\ $\pm$2.342e+7} &
\makecell{1.332e+8 \\ $\pm$2.237e+7} &
\underline{\makecell{2.827e+5 \\ $\pm$6.292e+4}} &
\makecell{1.183e+8 \\ $\pm$1.271e+7} &
\makecell{5.807e+7 \\ $\pm$1.28e+7} &
\makecell{5.068e+7 \\ $\pm$2.664e+5} &
\makecell{\makecell{5.055e+7 \\ $\pm$5.094e+5}} &
\textbf{\makecell{8.886e+4 \\ $\pm$1.267e+5}} \\

$f_9$ &
\makecell{4.159e+8 \\ $\pm$1.368e+7} &
\makecell{1.719e+8 \\ $\pm$2.766e+7} &
\makecell{2.002e+8 \\ $\pm$3.504e+7} &
\makecell{5.151e+7 \\ $\pm$1.06e+7} &
\makecell{3.326e+7 \\ $\pm$1.238e+7} &
\makecell{2.533e+5 \\ $\pm$5.445e+4} &
\makecell{1.473e+7 \\ $\pm$3.238e+6} &
\makecell{1.145e+7 \\ $\pm$3.538e+6} &
\underline{\makecell{4.548e+3 \\ $\pm$3.713e+0}} &
\textbf{\makecell{3.243e+3 \\ $\pm$5.560e-2}} &
\makecell{1.792e+5 \\ $\pm$5.876e+4} \\

$f_{10}$ &
\makecell{2.832e+8 \\ $\pm$1.366e+7} &
\makecell{8.026e+7 \\ $\pm$9.438e+6} &
\makecell{2.440e+8 \\ $\pm$3.217e+7} &
\makecell{2.229e+8 \\ $\pm$2.573e+7} &
\makecell{5.459e+7 \\ $\pm$1.667e+7} &
\underline{\makecell{1.580e+7 \\ $\pm$2.835e+6}} &
\makecell{2.097e+8 \\ $\pm$2.18e+7} &
\makecell{\makecell{2.232e+7 \\ $\pm$3.291e+6}} &
\makecell{2.085e+8 \\ $\pm$4.324e+6} &
\makecell{2.002e+8 \\ $\pm$9.454e+6} &
\textbf{\makecell{5.958e+6 \\ $\pm$5.916e+5}} \\

$f_{11}$ &
\makecell{5.881e+3 \\ $\pm$1.591e+2} &
\makecell{6.187e+3 \\ $\pm$7.026e+2} &
\makecell{4.999e+3 \\ $\pm$5.091e+2} &
\makecell{4.835e+3 \\ $\pm$8.117e+2} &
\makecell{4.371e+3 \\ $\pm$5.691e+2} &
\makecell{1.248e+4 \\ $\pm$2.251e+2} &
\makecell{3.722e+3 \\ $\pm$3.575e+2} &
\underline{\makecell{3.259e+3 \\ $\pm$2.521e+2}} &
\makecell{5.107e+3 \\ $\pm$8.481e+1} &
\textbf{\makecell{2.529e+3 \\ $\pm$3.248e+1}} &
\makecell{5.392e+3 \\ $\pm$3.159e+2} \\

$f_{12}$ &
\makecell{3.015e+10 \\ $\pm$2.064e+9} &
\makecell{1.414e+10 \\ $\pm$1.401e+9} &
\makecell{2.055e+10 \\ $\pm$4.629e+9} &
\makecell{1.675e+10 \\ $\pm$3.634e+9} &
\makecell{4.800e+9 \\ $\pm$9.841e+8} &
\underline{\makecell{1.32e+8 \\ $\pm$2.254e+7}} &
\makecell{1.235e+10 \\ $\pm$2.046e+9} &
\makecell{\makecell{2.819e+9 \\ $\pm$3.899e+8}} &
\makecell{1.084e+10 \\ $\pm$5.738e+8} &
\makecell{9.757e+9 \\ $\pm$6.844e+8} &
\textbf{\makecell{2.733e+7 \\ $\pm$4.903e+7}} \\

$f_{13}$ &
\makecell{1.444e+4 \\ $\pm$1.618e+2} &
\makecell{1.319e+4 \\ $\pm$2.75e+2} &
\makecell{1.324e+4 \\ $\pm$4.159e+2} &
\makecell{1.197e+4 \\ $\pm$2.857e+2} &
\makecell{8.994e+3 \\ $\pm$6.370e+2} &
\underline{\makecell{2.363e+3 \\ $\pm$1.43e+2}} &
\makecell{1.116e+4 \\ $\pm$3.338e+2} &
\makecell{\makecell{7.073e+3 \\ $\pm$3.424e+2}} &
\makecell{1.255e+4 \\ $\pm$5.007e+1} &
\makecell{1.024e+4 \\ $\pm$5.299e+1} &
\textbf{\makecell{1.221e+3 \\ $\pm$3.010e+2}} \\

$f_{14}$ &
\makecell{6.634e+2 \\ $\pm$2.241e+1} &
\makecell{4.824e+2 \\ $\pm$4.296e+1} &
\makecell{5.081e+2 \\ $\pm$7.419e+1} &
\makecell{4.208e+2 \\ $\pm$4.277e+1} &
\makecell{1.842e+2 \\ $\pm$2.998e+1} &
\underline{\makecell{2.494e+1 \\ $\pm$3.554e+0}} &
\makecell{3.231e+2 \\ $\pm$2.764e+1} &
\makecell{\makecell{1.157e+2 \\ $\pm$1.216e+1}} &
\makecell{1.458e+3 \\ $\pm$6.108e+0} &
\makecell{2.542e+2 \\ $\pm$8.222e+0} &
\textbf{\makecell{1.487e+1 \\ $\pm$2.291e+0}} \\

$f_{18}$ &
\makecell{1.428e+2 \\ $\pm$5.025e+0} &
\makecell{1.017e+2 \\ $\pm$9.614e+0} &
\makecell{1.093e+2 \\ $\pm$8.297e+0} &
\makecell{9.814e+1 \\ $\pm$7.383e+0} &
\makecell{7.074e+1 \\ $\pm$8.102e+0} &
\makecell{1.100e+3 \\ $\pm$1.069e+1} &
\makecell{8.539e+1 \\ $\pm$8.386e+0} &
\underline{\makecell{5.495e+1 \\ $\pm$4.191e+0}} &
\makecell{3.704e+2 \\ $\pm$6.062e-1} &
\makecell{6.875e+1 \\ $\pm$1.422e+0} &
\textbf{\makecell{3.792e+1 \\ $\pm$4.075e+0}} \\

$f_{19}$ &
\makecell{2.09e+3 \\ $\pm$6.041e+1} &
\makecell{9.012e+2 \\ $\pm$1.148e+2} &
\makecell{9.606e+2 \\ $\pm$1.704e+2} &
\makecell{2.764e+2 \\ $\pm$4.896e+1} &
\makecell{1.772e+2 \\ $\pm$5.848e+1} &
\makecell{1.374e+1 \\ $\pm$5.336e-1} &
\makecell{8.479e+1 \\ $\pm$1.668e+1} &
\makecell{\makecell{8.767e+1 \\ $\pm$2.029e+1}} &
\makecell{2.502e+3 \\ $\pm$8.225e-1} &
\textbf{\makecell{2.504e-1 \\ $\pm$3.113e-6}} &
\underline{\makecell{1.813e+1 \\ $\pm$1.603e+0}} \\

$f_{20}$ &
\makecell{3.233e+6 \\ $\pm$1.022e+5} &
\makecell{2.374e+6 \\ $\pm$2.320e+5} &
\makecell{2.432e+6 \\ $\pm$2.800e+5} &
\makecell{1.471e+6 \\ $\pm$1.945e+5} &
\makecell{5.884e+5 \\ $\pm$1.564e+5} &
\underline{\makecell{3.802e+3 \\ $\pm$1.702e+3}} &
\makecell{9.466e+5 \\ $\pm$1.069e+5} &
\makecell{2.136e+5 \\ $\pm$6.086e+4} &
\makecell{3.772e+5 \\ $\pm$6.088e+3} &
\makecell{3.753e+5 \\ $\pm$6.417e+3} &
\textbf{\makecell{2.565e+2 \\ $\pm$1.351e+3}} \\

$f_{22}$ &
\makecell{8.636e+1 \\ $\pm$1.942e-2} &
\makecell{8.609e+1 \\ $\pm$9.432e-2} &
\makecell{8.61e+1 \\ $\pm$1.371e-1} &
\makecell{8.542e+1 \\ $\pm$2.480e-1} &
\makecell{8.003e+1 \\ $\pm$1.838e+0} &
\underline{\makecell{2.851e+1 \\ $\pm$3.852e-1}} &
\makecell{8.478e+1 \\ $\pm$2.648e-1} &
\makecell{\makecell{7.159e+1 \\ $\pm$2.18e+0}} &
\makecell{1.184e+3 \\ $\pm$4.394e-2} &
\makecell{8.356e+1 \\ $\pm$7.540e-2} &
\textbf{\makecell{4.971e+0 \\ $\pm$6.520e+0}} \\

$f_{23}$ &
\makecell{1.652e+0 \\ $\pm$3.551e-2} &
\makecell{1.641e+0 \\ $\pm$4.291e-2} &
\makecell{1.659e+0 \\ $\pm$2.456e-2} &
\makecell{1.658e+0 \\ $\pm$5.790e-2} &
\makecell{1.586e+0 \\ $\pm$6.784e-2} &
\textbf{\makecell{3.959e-1 \\ $\pm$3.54e-2}} &
\makecell{1.659e+0 \\ $\pm$4.429e-2} &
\underline{\makecell{1.559e+0 \\ $\pm$1.525e-1}} &
\makecell{\makecell{1.202e+3 \\ $\pm$3.532e-2}} &
\makecell{1.663e+0 \\ $\pm$3.539e-2} &
\makecell{1.656e+0 \\ $\pm$3.434e-2} \\

$f_{24}$ &
\makecell{2.089e+4 \\ $\pm$2.555e+2} &
\makecell{1.604e+4 \\ $\pm$5.679e+2} &
\makecell{1.610e+4 \\ $\pm$9.073e+2} &
\makecell{1.222e+4 \\ $\pm$4.744e+2} &
\makecell{1.198e+4 \\ $\pm$9.584e+2} &
\makecell{4.986e+4 \\ $\pm$3.337e+1} &
\makecell{1.010e+4 \\ $\pm$4.16e+2} &
\makecell{9.9e+3 \\ $\pm$6.456e+2} &
\makecell{8.822e+3 \\ $\pm$5.392e+1} &
\textbf{\makecell{7.437e+3 \\ $\pm$7.572e+1}} &
\underline{\makecell{8.09e+3 \\ $\pm$3.268e+2}} \\

\midrule

$-$/$\approx$/$+$ & 15/1/0 & 15/1/0 & 14/1/1 & 14/1/1 & 14/1/1 & 15/0/1 & 14/1/1 & 14/1/1 & 14/0/2 & 11/1/4 & - \\
\bottomrule
\end{tabular*}
\end{table}

\textbf{Results on UAV}. We evaluate ABOM on 28 UAV benchmarks to validate its practical effectiveness. Fig.~\ref{fig:uav_results} shows the convergence of the normalized cost and runtime. ABOM converges fastest under limited evaluations and achieves the lowest normalized cost. Unlike metaBBO-based methods (e.g., GLHF, GLEET, RLDEAFL, LES), ABOM and adaptive optimization methods eliminate the need for training on hand-crafted tasks and associated overhead. Through GPU-accelerated evolution and adaptive parameter learning, ABOM achieves significantly faster runtime than most baselines. 

\begin{figure}[t]
    % 左图：收敛曲线
    \begin{subfigure}[t]{0.49\textwidth}
        \centering
        \includegraphics[width=\textwidth]{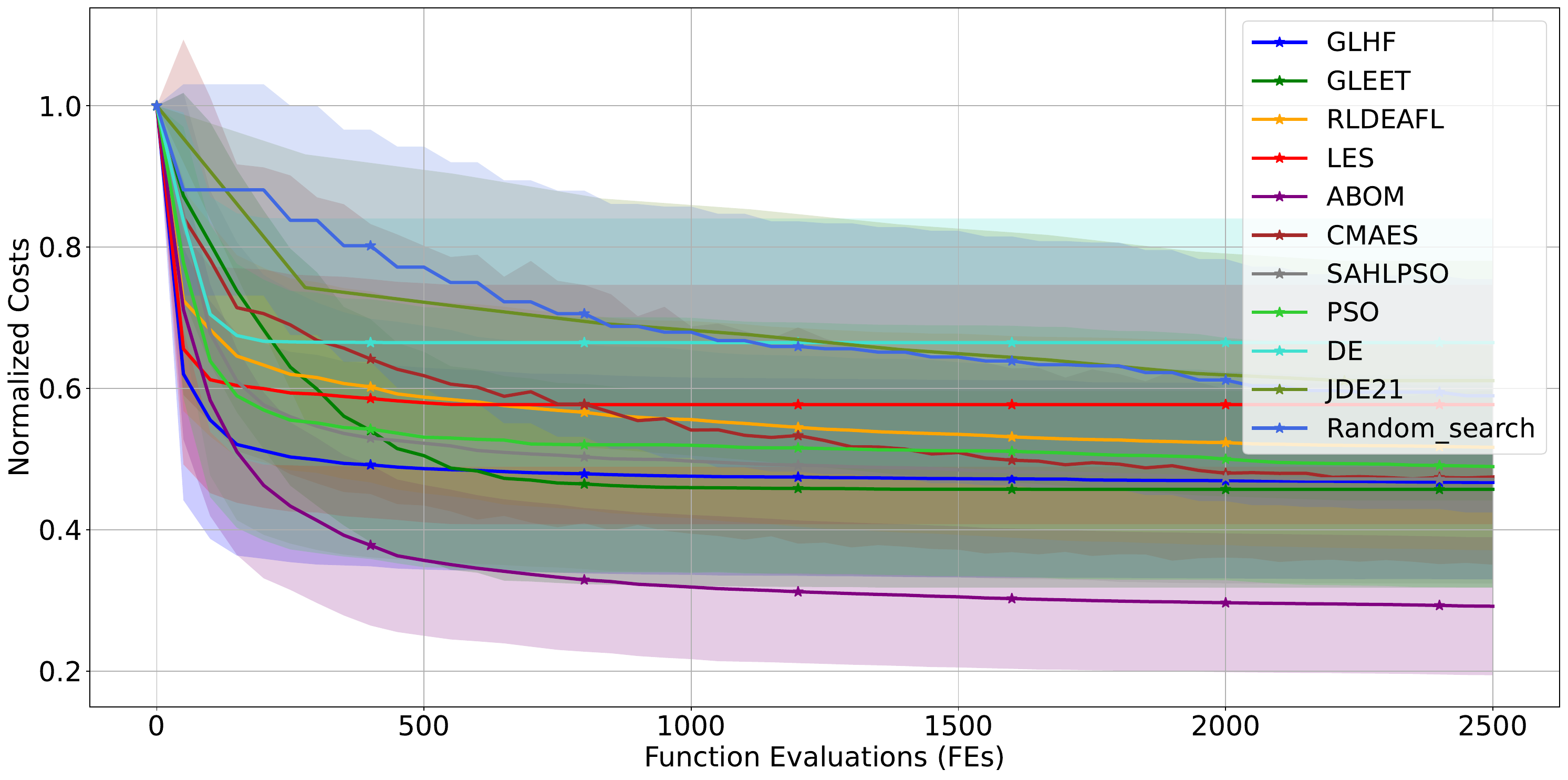}
        \caption{Convergence curve}
        \label{fig:uav_convergence}
    \end{subfigure}
    \hfill
    % 右图：运行时间
    \begin{subfigure}[t]{0.49\textwidth}
        \centering
        \includegraphics[width=\textwidth]{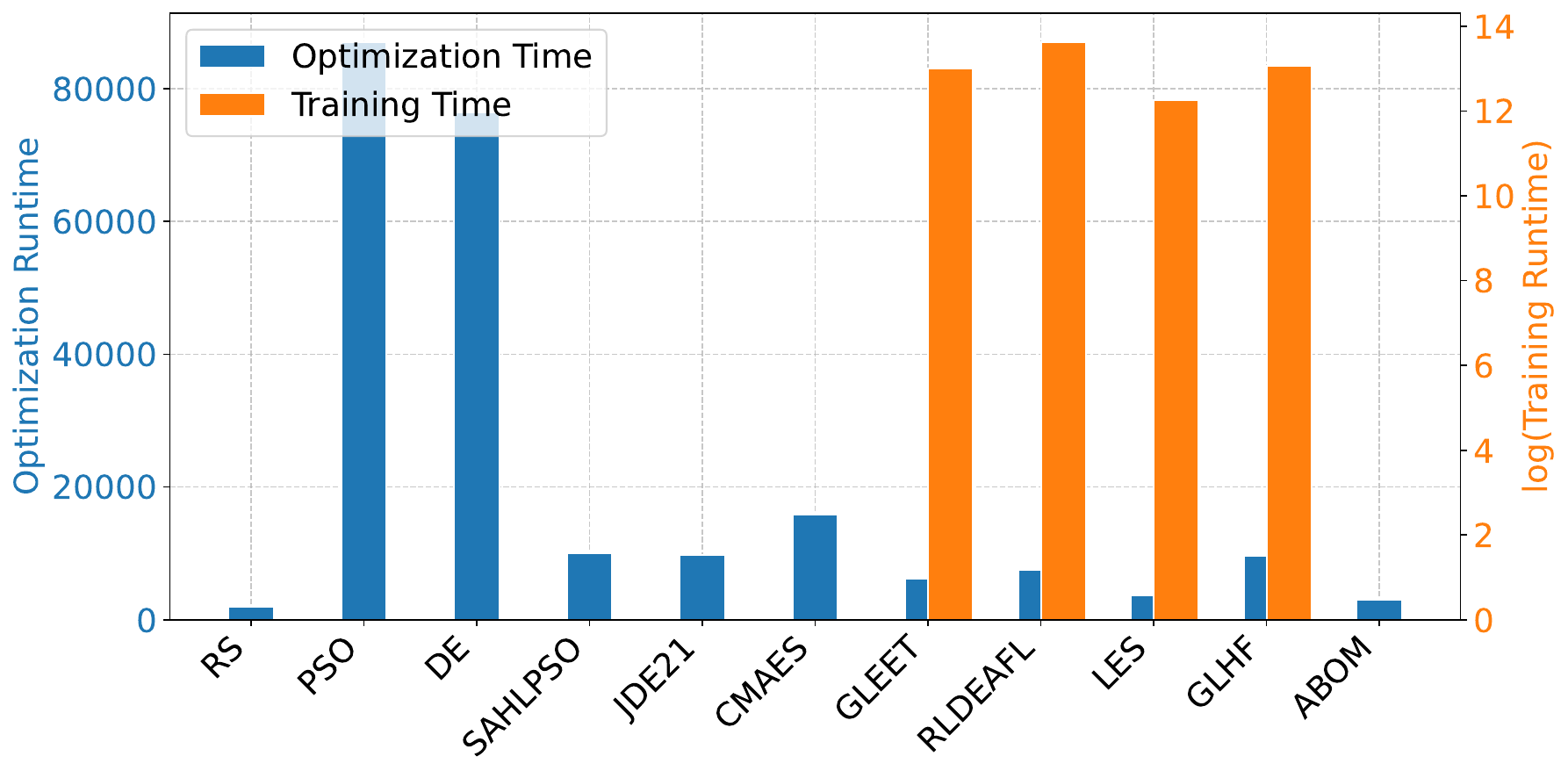}
        \caption{Runtime comparison}
        \label{fig:uav_runtime}
    \end{subfigure}

    \centering
    \caption{
        Performance on 28 UAV problems: (Left) Convergence curve of average normalized cost across all problems. Costs (lower is better) are min-max normalized for each case. Detailed results are shown in the Appendix \ref{er}; (Right) Average runtime (GPU seconds) over 30 independent runs.
    }
    \label{fig:uav_results}
\end{figure}

\subsection{Visualization Study (RQ2) and Ablation Study (RQ3)}% 

We visualize the learned selection and mutation matrices of ABOM on three BBOB functions ($f_4$, $f_{11}$, $f_{24}$, $d=30$) in Fig.~\ref{fig:learned_matrices}. In all cases, the matrices develop structured statistical patterns as optimization proceeds. The selection matrix shows row similarity, indicating that ABOM learns to generate offspring from a small subset of individuals. This behavior resembles the difference vector mechanism in DE \citet{10.1023/A:1008202821328}, which reflects the strong expressive capacity of the learnable operator. Individuals with higher fitness are preferentially selected, in line with the principle of survival of the fittest. The best individual is not always selected, which may help preserve population diversity. The mutation matrix evolves from random initialization to an ordered structure, suggesting that mutation follows consistent patterns adapted to the problem. These results demonstrate that ABOM provides greater interpretability than the metaBBO methods, which directly map neural networks to solution manipulation \citet{li2024pretrained,Li_Wu_Zhang_Wang_2025}.

\begin{figure}[t]
\centering
\includegraphics[width=1.0\linewidth]{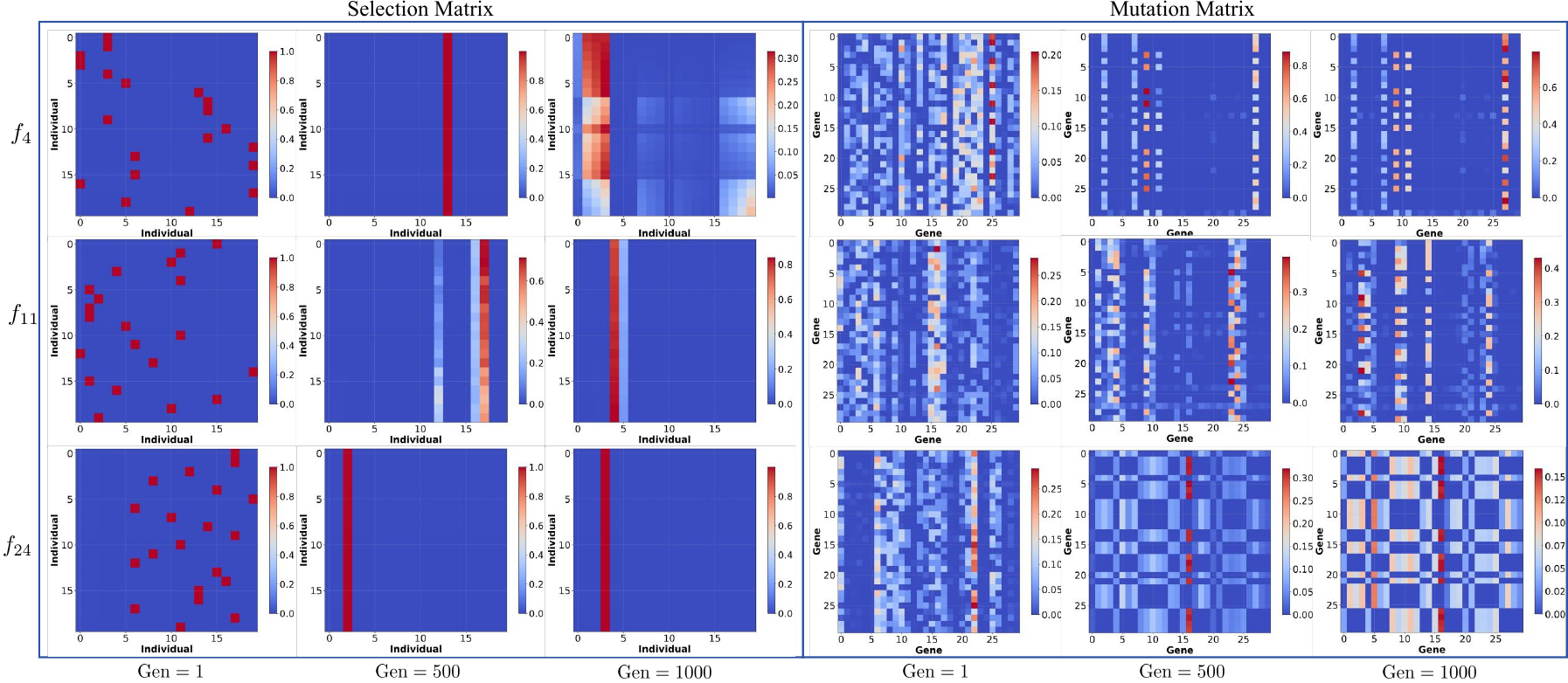}
\caption{
Learned selection and mutation matrices of ABOM on BBOB functions $f_4$, $f_{11}$, and $f_{24}$ ($d=30$) at Generation 1, 500, and 1000. For the selection matrix, axes represent individuals ranked by their fitness values (0 is the best). For the mutation matrix, axes represent gene (variable) indices.
}
\label{fig:learned_matrices}
\end{figure}

\begin{table}[htbp]
\centering
\tiny
\caption{Ablation study of ABOM's key components on the BBOB suite with $d=30$.}
\label{tab:ablation}
\begin{tabular}{ccccc}
\toprule
ID & No Crossover & No Mutation & No Parameter Adaptation & \textbf{ABOM} \\
         & (mean $\pm$ std) &  (mean $\pm$ std) &  (mean $\pm$ std) &  (mean $\pm$ std) \\
\midrule
$f_4$  & 4.23e+03 $\pm$ 3.02e+03 & 1.01e+03 $\pm$ 5.44e+02 & 2.58e+04 $\pm$ 1.67e+04 & \textbf{5.45e+02 $\pm$ 2.95e+02} \\
$f_6$  & 1.62e+04 $\pm$ 1.55e+04 & 1.10e+04 $\pm$ 1.95e+04 & 4.54e+04 $\pm$ 3.61e+04 & \textbf{2.60e+02 $\pm$ 2.64e+02} \\
$f_7$  & 6.39e+03 $\pm$ 6.91e+03 & 1.18e+03 $\pm$ 6.86e+02 & 1.67e+04 $\pm$ 1.11e+04 & \textbf{5.58e+02 $\pm$ 2.77e+02} \\
$f_8$  & 1.52e+03 $\pm$ 2.91e+03 & 1.94e+03 $\pm$ 3.62e+03 & 1.03e+08 $\pm$ 2.68e+08 & \textbf{1.15e+02 $\pm$ 1.56e+02} \\
$f_9$ & 1.96e+04 $\pm$ 7.02e+04 & 1.13e+03 $\pm$ 3.15e+03 & 2.49e+06 $\pm$ 5.94e+06 & \textbf{2.35e+03 $\pm$ 5.30e+03} \\
$f_{10}$ & 1.16e+07 $\pm$ 7.11e+06 & 1.07e+07 $\pm$ 3.48e+06 & 3.65e+07 $\pm$ 1.64e+07 & \textbf{9.72e+05 $\pm$ 7.38e+05} \\
$f_{11}$ & 1.05e+05 $\pm$ 3.19e+04 & 1.00e+05 $\pm$ 2.68e+04 & 8.00e+04 $\pm$ 2.20e+04 & \textbf{2.61e+04 $\pm$ 1.01e+04} \\
$f_{12}$ & 1.12e+09 $\pm$ 3.12e+09 & 2.30e+07 $\pm$ 5.74e+07 & 1.63e+10 $\pm$ 1.61e+10 & \textbf{5.28e+07 $\pm$ 1.45e+08} \\
$f_{13}$& 7.54e+01 $\pm$ 4.06e+01 & 7.71e+01 $\pm$ 3.93e+01 & 8.71e+03 $\pm$ 3.16e+03 & \textbf{7.28e+01 $\pm$ 3.07e+01} \\
$f_{14}$ & 8.43e+01 $\pm$ 7.23e+01 & 9.29e+00 $\pm$ 2.49e+01 & 9.28e+02 $\pm$ 5.96e+02 & \textbf{3.46e-02 $\pm$ 3.39e-02} \\
$f_{18}$ & 1.18e+03 $\pm$ 2.21e+03 & 7.02e+02 $\pm$ 2.46e+02 & 4.94e+02 $\pm$ 1.66e+02 & \textbf{5.12e+02 $\pm$ 1.37e+02} \\
$f_{19}$ & 2.35e+01 $\pm$ 3.45e+01 & 1.23e+01 $\pm$ 1.23e+01 & 1.64e+02 $\pm$ 3.27e+02 & \textbf{2.48e-01 $\pm$ 1.11e-03} \\
$f_{20}$ & -6.54e+01 $\pm$ 4.95e+00 & -6.58e+01 $\pm$ 3.53e+00 & -5.82e+01 $\pm$ 4.19e+00 & \textbf{-6.57e+01 $\pm$ 3.80e+00} \\
$f_{22}$ & 8.66e+01 $\pm$ 0.00e+00 & 8.66e+01 $\pm$ 0.00e+00 & 8.66e+01 $\pm$ 0.00e+00 & \textbf{8.66e+01 $\pm$ 0.00e+00} \\
$f_{23}$ & 3.03e+00 $\pm$ 5.33e-01 & 3.12e+00 $\pm$ 4.54e-01 & 3.20e+00 $\pm$ 4.25e-01 & \textbf{3.01e-01 $\pm$ 2.19e-01} \\
$f_{24}$ & 2.92e+02 $\pm$ 2.46e+02 & 2.30e+02 $\pm$ 5.31e+01 & 4.94e+03 $\pm$ 3.41e+03 & \textbf{2.44e+02 $\pm$ 2.37e+01} \\
\midrule
$-$/ $\approx$/$+$ & 13/3/0 & 10/3/3 & 14/1/1 & -- \\
\bottomrule
\end{tabular}
\end{table}

We conduct an ablation study comparing the proposed ABOM with variants that disable specific mechanisms, including no crossover, no mutation, and no parameter adaptation. Table \ref{tab:ablation} presents the mean and standard deviation over 30 runs on the BBOB suite with $d=30$. Table \ref{tab:ablation} illustrates that both crossover and mutation are crucial components, as their removal individually causes significant performance deterioration. Furthermore, the variant without parameter adaptation performs significantly worse than ABOM, underscoring the critical importance of the adaptive mechanism for achieving robust and high-quality optimization.

\subsection{Parameter Analysis (RQ4)}

Fig.~\ref{fig:hyperparameter_sensitivity} illustrates ABOM’s hyperparameter sensitivity on the BBOB suite ($d=30$). A population size of 20 proves sufficient for robust performance across most functions within 20,000 evaluations. Similarly, a hidden dimension $d_M$ smaller than $d$ (e.g., $d_M=16$) often achieves competitive results. In practice, $d_M$ should be carefully configured to balance computational efficiency and optimization quality effectively. \textcolor{black}{The parameters $p_C$ and $p_M$ exhibit optimal performance at 0.95, indicating that higher values increase stochasticity and exploration. Nevertheless, setting either parameter to 1 eliminates beneficial randomness, degrading performance. Thus, controlled stochasticity is crucial for maintaining the balance between exploitation and diversity.}

\begin{figure}[ht] % [t!] 建议放置在顶部
        \centering
    % 第一行：种群大小N 和 隐藏维度d_M (假设它们图片大小一样)
    \begin{subfigure}[t]{0.38\textwidth}
        \centering
        \includegraphics[width=\textwidth]{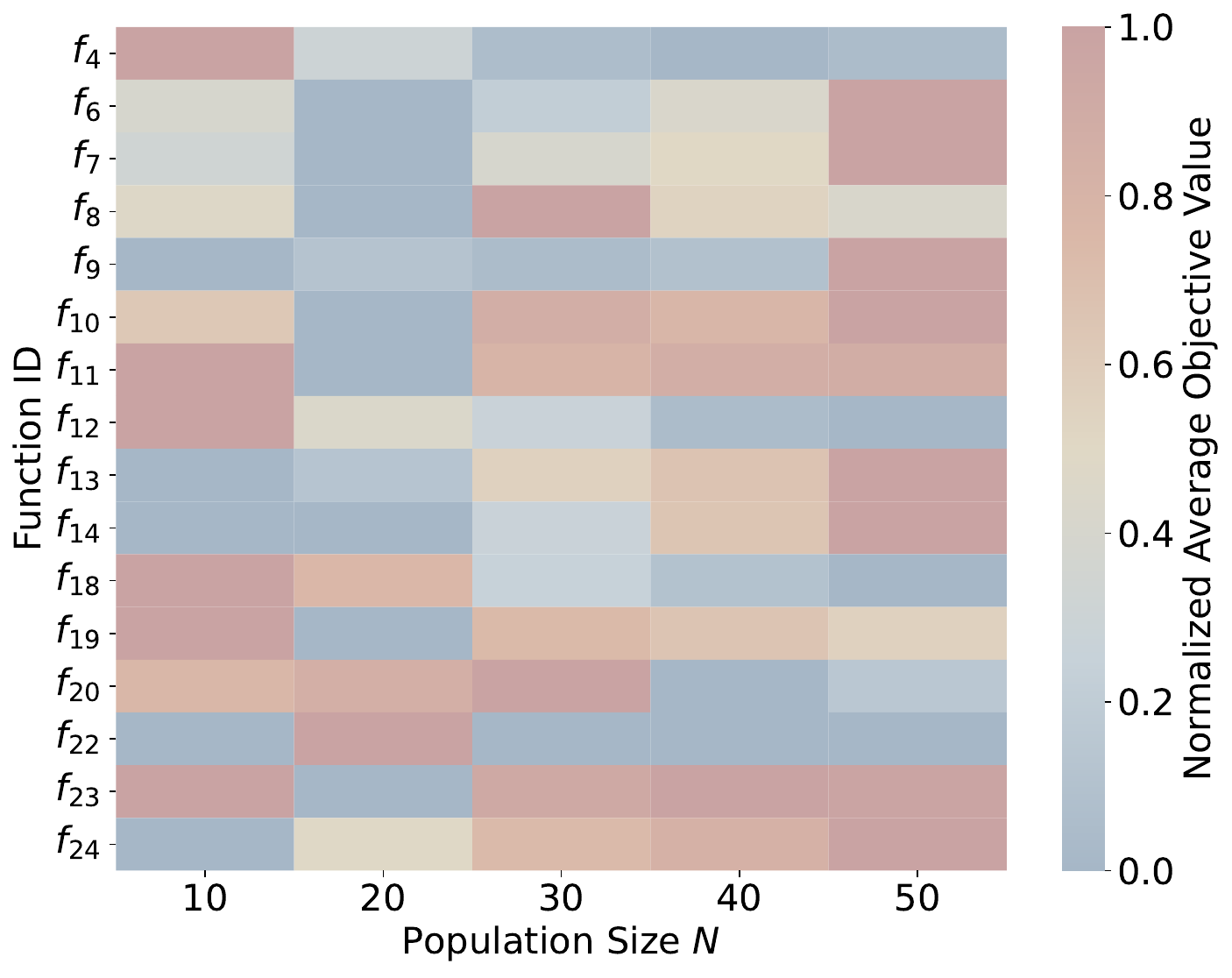} % 替换为你的种群大小热力图文件名
        \caption{$N$}
        \label{fig:sens_pop_size}
    \end{subfigure}
    % \hfill % 在两个子图之间填充水平空间
    \begin{subfigure}[t]{0.38\textwidth}
        \centering
        \includegraphics[width=\textwidth]{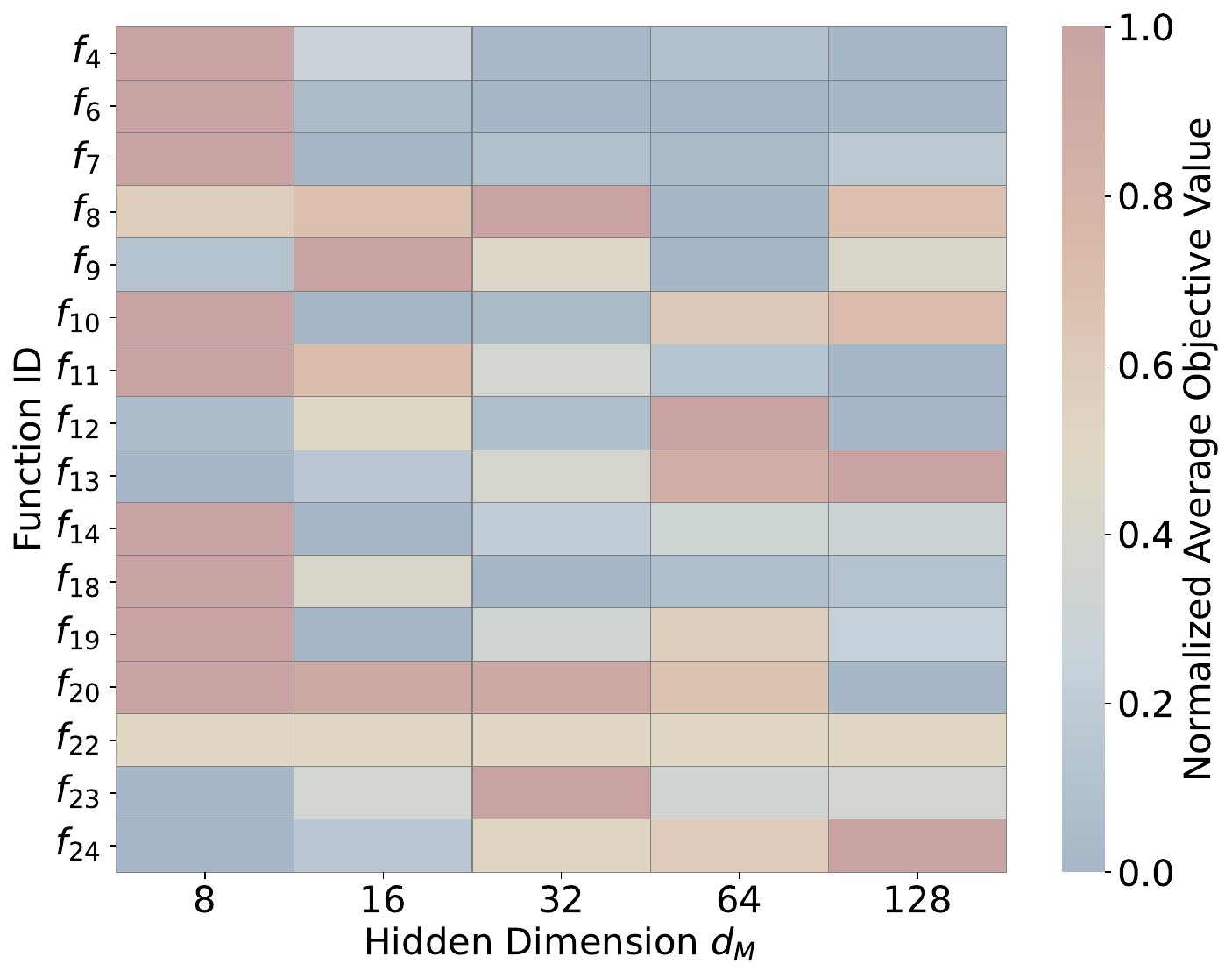} % 替换为你的隐藏维度热力图文件名
        \caption{$d_M$}
        \label{fig:sens_hidden_dim}
    \end{subfigure}
    
    % \vspace{1em} % 在行之间添加一些垂直空间

    % 第二行：交叉dropout率p_C 和 变异dropout率p_M (假设它们图片大小一样)
    \begin{subfigure}[t]{0.48\textwidth}
        \centering
        \includegraphics[width=\textwidth]{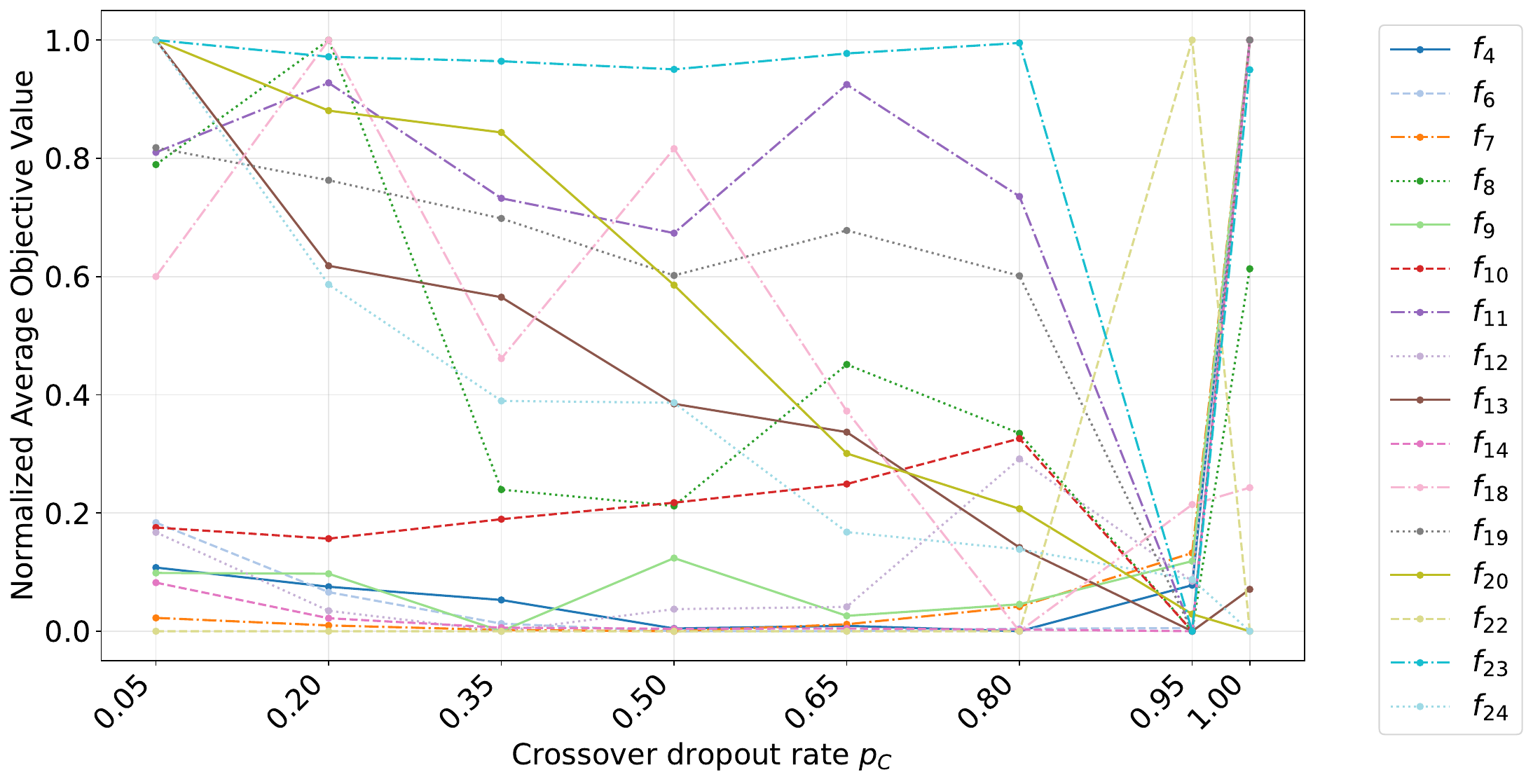} % 替换为你的交叉概率折线图文件名
        \caption{$p_C$}
        \label{fig:sens_crossover}
    \end{subfigure}
    % \hfill % 在两个子图之间填充水平空间
    \begin{subfigure}[t]{0.48\textwidth}
        \centering
        \includegraphics[width=\textwidth]{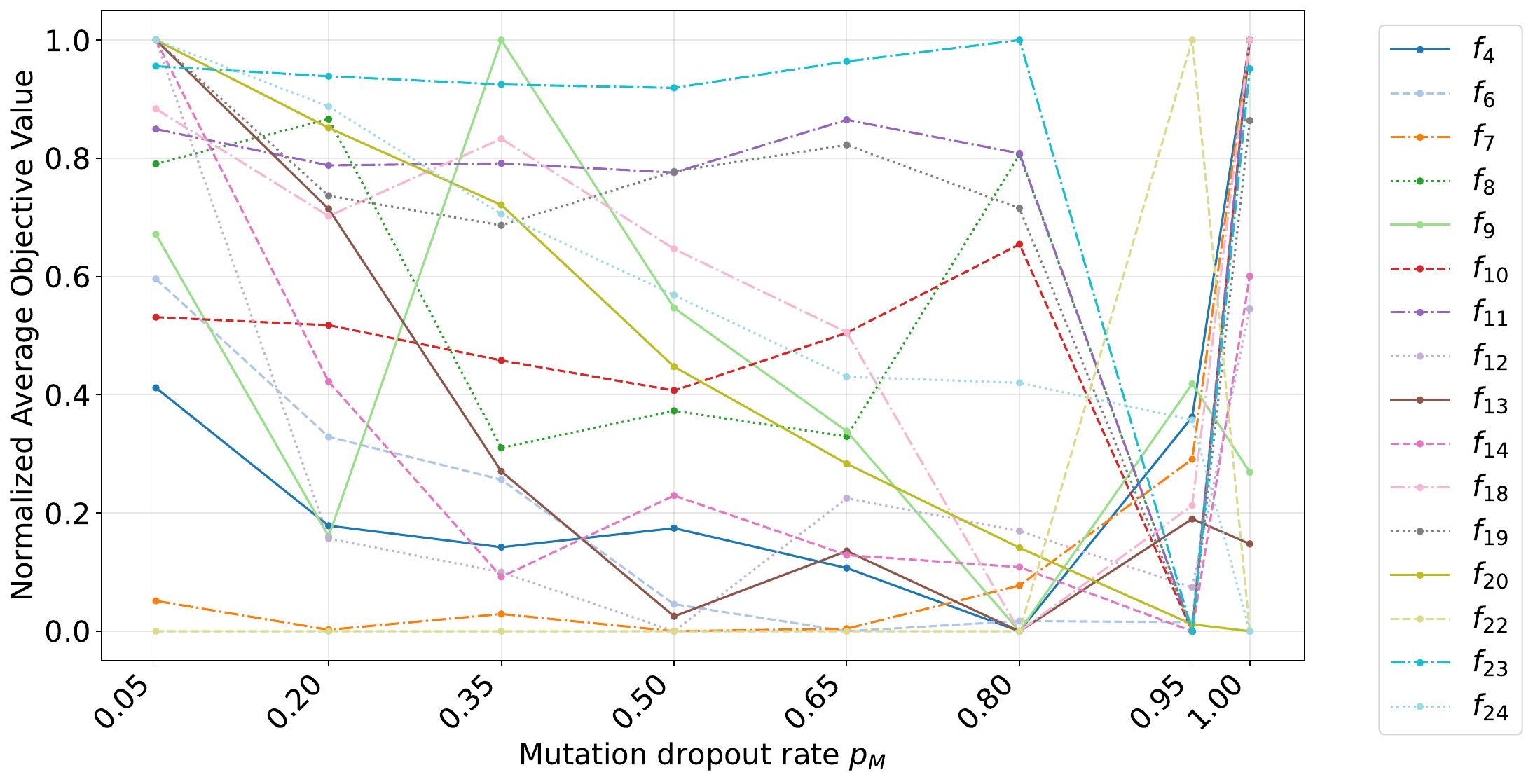} % 替换为你的变异概率折线图文件名
        \caption{$p_M$}
        \label{fig:sens_mutation}
    \end{subfigure}

    \caption{Sensitivity analysis of key hyperparameters on the BBOB suite with $d=30$: Algorithm performance across different settings for population size ($N$), hidden dimension ($d_M$), crossover dropout rate ($p_C$), and mutation dropout rate ($p_M$). \textcolor{black}{The learning rate analysis is in Appendix \ref{sectionsalr}.} }
    \label{fig:hyperparameter_sensitivity}
\end{figure}

\section{Conclusion and Discussion}
\textbf{Summary}. We present a task-free adaptive metaBBO method, ABOM, which eliminates dependency on handcrafted training tasks by performing online parameter adaptation using only optimization data from the target task. Unlike conventional MetaBBO methods that require offline meta-training, ABOM integrates parameter learning directly into the evolutionary loop, enabling zero-shot generalization. Our framework parameterizes evolutionary operators as differentiable modules, updated via gradient descent to align offspring with elites, thereby eliminating the need for pretraining or heuristic design. Empirical results in synthetic and realistic benchmarks demonstrate that ABOM matches or outperforms advanced baselines without prior task knowledge. Attention visualization reveals interpretable search behaviors with consistent structural patterns. Thus, ABOM establishes a task-free paradigm for metaBBO, where learning and search co-evolve in real time. 

\textbf{Limitations and Future Work}. Current limitations motivate several promising directions: (1) Addressing the cubic computational bottleneck ($O(d^3)$) through sparse or low-rank attention mechanisms to reduce ABOM’s complexity; (2) \textcolor{black}{Dynamically adapting population size and model capacity during optimization;} (3) \textcolor{black}{Conducting a convergence rate analysis grounded in the theoretical examination of adaptive parameter learning in ABOM}; and (4) \textcolor{black}{Exploring hybrid training paradigms that integrate pretraining on prior knowledge with online adaptation, thereby enhancing optimization efficiency and bridging the gap between task-agnostic adaptation and cross-task generalization.}

\subsubsection*{Acknowledgments}
This work was supported in part by the National Natural Science Foundation of China(No.62576264), Project supported by the National Science and Technology Major Project of the Ministry of Science and Technology of China (No.2025ZD0551500, No.2025ZD0551502) , the Key Project of National Natural Science Foundation of China (62431020,62231027), the Joint Fund Project of National Natural Science Foundation of China (No.U22B2054), the Fund for Foreign Scholars in University Research and Teaching Programs (the 111 Project) (No.B07048), the Postdoctoral Fellowship Program of China Postdoctoral Science Foundation (CPSF) (No.GZC20232033), the Natural Science Basic Research Program of Shaanxi (Program No. 2025JC-YBQN-795), the China Postdoctoral Science Foundation (Certificate Number: 2025T180431 and 2025M771550), the Program for Cheung Kong Scholars and Innovative Research Team in University (No.IRT 15R53), the Key Scientific Technological Innovation Research Project by Ministry of Education and the National Key Laboratory of Human-Machine Hybrid Augmented Intelligence, Xi’an Jiaotong University (No.HMHAI-202404, No. HMHAI-202405).

\bibliography{iclr2026_conference}
\bibliographystyle{iclr2026_conference}

\newpage

\appendix

\section{Reproducibility Statement}

To ensure the reproducibility, we have taken the following measures:

\begin{itemize}
    \item \textbf{Source Code}: We provide a complete implementation of the Adaptive meta Black-box Optimization Model (ABOM) at the following repository: \url{https://github.com/xiaofangxd/ABOM}.
    
    \item \textbf{Algorithm Specification}: ABOM is fully detailed in Section \ref{abomalg}, including the mathematical formulations for the selection, crossover, and mutation operators. The complete pseudocode for the optimization loop is provided in Algorithm 1 (Appendix \ref{sectionalg}).
    
    \item \textbf{Experimental Setup}: All baselines are implemented and evaluated using the state-of-the-art MetaBox benchmark platform \citet{ma2023metabox,ma2025metabox}. This ensures a standardized, fair, and reproducible comparison. Appendix \ref{section1} describes the BBOB and UAV benchmarks, including their characteristics, search space dimensions, and evaluation budgets. Appendix \ref{section2} details the hyperparameter settings for all baselines as specified in the original papers and implemented in MetaBox.
\end{itemize}

By providing the code, algorithmic descriptions, and experimental configurations on the unified MetaBox platform, we aim to enable other researchers to fully reproduce and build upon our results.

\section{Use of Large Language Models}

Large Language Models (LLMs) were used solely as a general-purpose writing assistance tool. Their role was limited to language polishing, grammatical refinement, and improving the clarity and fluency of the paper. LLMs did not contribute to the conception of research ideas, experimental design, data analysis, or interpretation of results. All intellectual contributions, including the formulation of the problem, methodology, and conclusions, were made entirely by the human authors.

\section{Pseudocode of ABOM \label{sectionalg}}

\begin{algorithm}[htbp]
\caption{Adaptive meta Black-box Optimization Model (ABOM)}
\label{alg:abom}
\begin{algorithmic}[1]
\REQUIRE Target black-box optimization task $f_T$, population size $N$, max generations $T$, crossover dropout rate $p_C$, mutation dropout rate $p_M$, learning rate $\eta$, attention dimension $d_A$, and MLP hidden dimension $d_M$.
\STATE Initialize $\mathbf{P}^{(0)}$ via Latin hypercube sampling; \STATE Evaluate: $\mathbf{F}^{(0)} \gets f_T(\mathbf{P}^{(0)})$;
\FOR{$t = 0$ \TO $T-1$}
    \STATE Generate offspring: $\hat{\mathbf{P}}^{(t)} \gets \pi_{\boldsymbol{\theta}}(\mathbf{P}^{(t)}, \mathbf{F}^{(t)}; d_A, d_M, p_C, p_M)$ ;
    \STATE Evaluate: $\hat{\mathbf{F}}^{(t)} \gets f_T(\hat{\mathbf{P}}^{(t)})$;
    \STATE Form elite archive: $\mathbf{E}^{(t)},\mathbf{F}_\mathbf{E}^{(t)} \gets \text{top}_N\left( \mathbf{P}^{(t)} \cup \hat{\mathbf{P}}^{(t)} \right),\text{top}_N\left( \mathbf{F}^{(t)} \cup \hat{\mathbf{F}}^{(t)} \right)$;
    \STATE Update parameters by AdamW: $\boldsymbol{\theta} \gets \boldsymbol{\theta} - \eta \nabla_{\boldsymbol{\theta}} \left\| \hat{\mathbf{P}}^{(t)} - \mathbf{E}^{(t)} \right\|^2$; 
    \STATE Elitism: $\mathbf{P}^{(t+1)} \gets \mathbf{E}^{(t)}$, $\mathbf{F}^{(t+1)} \gets \mathbf{F}_\mathbf{E}^{(t)}$; 
\ENDFOR
\ENSURE Optimal individual (solution) $\mathbf{p}^* = \arg\min_{\mathbf{p} \in  \mathbf{P}^{(t)}} f_T(\mathbf{p}).$
\end{algorithmic}
\end{algorithm}

\section{Convergence Analysis of ABOM}\label{appendix:convergence}

This section presents a convergence analysis of ABOM under some assumptions. We rigorously prove that ABOM converges with probability 1 to the global optimum of the objective function. Let $\mathcal{X} \subseteq \mathbb{R}^d$ be a compact search space and $f_T: \mathcal{X} \to \mathbb{R}$ be a continuous objective function with global minimum $f^* = f_T(\mathbf{x}^*)$. ABOM maintains a population $\mathbf{P}^{(t)} \in \mathbb{R}^{N \times d}$ at generation $t$, with corresponding fitness values $\mathbf{F}^{(t)}$. For the convergence analysis, we make the following assumptions:

\begin{assumption}\label{as:app}
The following conditions hold:
\begin{enumerate}[label=(\roman*)]
    \item The global optimum $\mathbf{x}^*$ lies in the interior of $\mathcal{X}$.
    \item Dropout rates satisfy $0 < p_C, p_M < 1$.
    \item MLP hidden dimension $d_M \geq 1$ with tanh activation.
\end{enumerate}
\end{assumption}

Define $f^*_t = \min_{\mathbf{x} \in \mathbf{E}^{(t)}} f_T(\mathbf{x})$, where $\mathbf{E}^{(t)}$ is the elite archive containing the top $N$ individuals from $\mathbf{P}^{(t)} \cup \hat{\mathbf{P}}^{(t)}$. Let $\mathcal{F}_t = \sigma(\mathbf{P}^{(0)}, \dots, \mathbf{P}^{(t)}, \theta^{(0)}, \dots, \theta^{(t)})$ be the filtration representing all information up to generation $t$. The elitism mechanism ensures $f^*_{t+1} \leq f^*_t$ almost surely, implying $\mathbb{E}[f^*_{t+1} \mid \mathcal{F}_t] \leq f^*_t$. Since $f_T$ is bounded on the compact set $\mathcal{X}$, the sequence $\{f^*_t, \mathcal{F}_t\}$ forms a lower-bounded supermartingale. By the martingale convergence theorem \citet{hall2014martingale}, $f^*_t$ converges with probability 1 to the random variable $f^*_\infty \geq f^*$. To establish global convergence, we need to prove $f^*_\infty = f^*$ with probability 1.

\begin{lemma}\label{lem:exploration}
Under Assumption \ref{as:app}, for any $\delta > 0$, there exists $\gamma > 0$ such that:
\begin{equation}\label{eq:exploration}
\mathbb{P}(\exists i: \|\hat{\mathbf{p}}_i^{(t)} - \mathbf{x}^*\| < \delta \mid \mathcal{F}_t) \geq 1 - (1-\gamma)^N > 0.
\end{equation}
\end{lemma}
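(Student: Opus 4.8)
The plan is to reduce the population-level event to a per-individual event and then exploit the conditional independence of the dropout masks across the $N$ offspring. Let $B_\delta = \{\mathbf{x} : \|\mathbf{x} - \mathbf{x}^*\| < \delta\}$; since $\mathbf{x}^*$ lies in the interior of $\mathcal{X}$ by Assumption \ref{as:app}(i), we may shrink $\delta$ so that $B_\delta \subseteq \mathcal{X}$. I would first establish a constant $\gamma > 0$, independent of $t$, such that for each fixed $i$ we have $\mathbb{P}(\hat{\mathbf{p}}_i^{(t)} \in B_\delta \mid \mathcal{F}_t) \geq \gamma$. Because the crossover and mutation dropout masks are drawn independently per row (per individual), while $\mathbf{A}^{(t)}$, $\mathbf{M}_i^{(t)}$, $\mathbf{P}^{(t)}$ and $\theta^{(t)}$ are all fixed given $\mathcal{F}_t$, the events $\{\hat{\mathbf{p}}_i^{(t)} \in B_\delta\}$ are conditionally independent. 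Hence $\mathbb{P}(\forall i:\ \hat{\mathbf{p}}_i^{(t)} \notin B_\delta \mid \mathcal{F}_t) \leq (1-\gamma)^N$, and taking complements yields the claimed bound $1-(1-\gamma)^N$, which is strictly positive since $\gamma>0$.

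The core of the argument is therefore the per-individual lower bound. Conditioned on $\mathcal{F}_t$, the only randomness entering $\hat{\mathbf{p}}_i^{(t)} = \mathbf{p}_i'^{(t)} + \mathrm{MLP}_{\theta_m}(\mathbf{M}_i^{(t)}\mathbf{p}_i'^{(t)})$ (with $\mathbf{p}_i'^{(t)}$ itself a dropout-perturbed crossover output) comes from the Bernoulli masks of the two tanh-MLPs. I would view the composite operator as a mixture over mask configurations, each occurring with probability at least $\min(p_C,1-p_C)^{d_M}\cdot\min(p_M,1-p_M)^{d_M}$, which is strictly positive by Assumption \ref{as:app}(ii) and bounded below by a constant depending only on $(p_C,p_M,d_M)$, never on $t$. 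Using the smoothness and boundedness of the tanh-MLP together with Assumption \ref{as:app}(iii), I would argue that the active/inactive unit combinations generate a nondegenerate reachable set of offspring, and that this set covers $B_\delta$ with probability at least $\gamma$ uniformly over admissible parameters.

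The main obstacle is making this per-individual bound genuinely uniform in $t$, since the parameters $\theta^{(t)}$ are adapted online and may drift as optimization proceeds; I must rule out that the learned operator collapses the offspring law away from $\mathbf{x}^*$. Isolating the dropout-induced stochasticity is the key leverage here, because its mask probabilities never decay with $t$, so combined with the compactness of $\mathcal{X}$ this should yield a coverage probability bounded away from zero. The genuinely delicate point, which I expect to be where the real work lies, is that for fixed $\mathcal{F}_t$ the Bernoulli masks induce only finitely many offspring values, so the literal event $\{\hat{\mathbf{p}}_i^{(t)}\in B_\delta\}$ cannot be controlled by discreteness alone; resolving this requires either endowing the operator with an additional nondegenerate continuous perturbation or recasting $B_\delta$-coverage through the reachable support accumulated along the adaptive trajectory. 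I would expect the resulting $\gamma$ to depend explicitly on $\delta$, $d_M$, and the dropout rates $p_C,p_M$.
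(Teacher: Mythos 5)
Your outer skeleton matches the paper's: a per-individual lower bound $\gamma>0$ followed by conditional independence of the $N$ offspring to obtain $1-(1-\gamma)^N$. The gap is that you never actually establish the per-individual bound, and you say so yourself: conditioned on $\mathcal{F}_t$ (which fixes $\mathbf{P}^{(t)}$ and $\theta^{(t)}$), the only remaining randomness is the Bernoulli dropout masks, which produce a finite set of candidate offspring, and nothing forces any of those finitely many points into $B_\delta$. That observation is correct, and it concerns precisely the step on which the whole lemma rests; flagging it as ``where the real work lies'' and offering two unexecuted escape routes (an extra continuous perturbation, or a reformulation via the reachable support along the trajectory) leaves the proof incomplete. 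The quantity $\min(p_C,1-p_C)^{d_M}\min(p_M,1-p_M)^{d_M}$ lower-bounds the probability of each mask pattern but gives no control whatsoever over where the corresponding offspring lands, so it cannot by itself yield a coverage probability for $B_\delta$.

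For comparison, the paper closes this step by a different device: it exhibits an explicit parameter configuration $\theta_c^*$ with $\mathbf{W}_1=\mathbf{W}_2=\mathbf{0}$, $\mathbf{b}_1=\mathbf{0}$, and $\mathbf{b}_2=\mathbf{x}^*-\mathbf{p}_i^{(t)}$ for which the crossover output is exactly $\mathbf{x}^*$ (and an analogous $\theta_m^*$ for mutation), invokes continuity of the MLP in its parameters to obtain a neighborhood $\mathcal{N}_\epsilon(\theta_c^*)$ whose members map the parent to within $\delta/2$ of $\mathbf{x}^*$, and then asserts that $\theta_c^{(t)}$ falls in that neighborhood with probability $\mu_c>0$, attributing the needed parameter randomness to the dropout-perturbed updates. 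In other words, the paper localizes the randomness in the \emph{parameters} rather than in the masks. This is the move you would need to reproduce or replace: without some source of randomness whose support reaches a neighborhood of that ``teleporting'' configuration, the finite-support objection you raise is fatal to a mask-only argument. (Whether the paper's own assertion that $\theta_c^{(t)}$ has positive density conditional on $\mathcal{F}_t$ is consistent with $\mathcal{F}_t$ being defined to contain $\theta^{(t)}$ is a separate concern, but it is at least an attempt to supply the continuous randomness that your sketch acknowledges it lacks.)
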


\begin{proof}
For the crossover operation, consider any parent $\mathbf{p}_i^{(t)} \in \mathcal{X}$ and let $\mathbf{v} = \mathbf{x}^* - \mathbf{p}_i^{(t)}$. Define the MLP configuration with $\mathbf{W}_1 = \mathbf{0}$, $\mathbf{b}_1 = \mathbf{0}$, $\mathbf{W}_2 = \mathbf{0}$, and $\mathbf{b}_2 = \mathbf{v}$. Then:
\begin{equation}\label{eq:mlp_config_crossover}
\mathrm{MLP}_{\theta_c^*}(\mathbf{A}^{(t)}\mathbf{p}_i^{(t)}) =  \tanh(\mathbf{A}^{(t)}\mathbf{p}_i^{(t)}\mathbf{W}_1 + \mathbf{b}_1)\mathbf{W}_2 + \mathbf{b}_2 = \mathbf{v}.
\end{equation}
Consequently:
\begin{equation}\label{eq:exact_solution_crossover}
\mathbf{p}_i^{(t)} + \mathrm{MLP}_{\theta_c^*}(\mathbf{A}^{(t)}\mathbf{p}_i^{(t)}) = \mathbf{x}^*.
\end{equation}

By continuity of the MLP (as a composition of continuous functions), there exists $\epsilon > 0$ such that for all $\theta_c \in \mathcal{N}_\epsilon(\theta_c^*)$:
\begin{equation}\label{eq:continuity_bound_crossover}
\|\mathbf{p}_i^{(t)} + \mathrm{MLP}_{\theta_c}(\mathbf{A}^{(t)}\mathbf{p}_i^{(t)}) - \mathbf{x}^*\| < \delta/2.
\end{equation}

Define $\mu_c = \mathbb{P}(\theta_c^{(t)} \in \mathcal{N}_\epsilon(\theta_c^*) \mid \mathcal{F}_t)$. Given the parameter update $\theta_c^{(t+1)} = \theta_c^{(t)} - \eta \nabla \mathcal{L}(\theta_c^{(t)}) + \xi^{(t)}$ with stochastic perturbations $\xi^{(t)}$ from dropout patterns $\mathbf{D}^{(t)} \sim \mathrm{Bernoulli}(1-p_C)^{d_M}$, which have minimum probability mass:
\begin{equation}\label{eq:dropout_mass_crossover}
\min_{\mathbf{D}} \mathbb{P}(\mathbf{D}^{(t)} = \mathbf{D}) = (\min\{p_C,1-p_C\})^{d_M} > 0,
\end{equation}
and since the conditional distribution of $\theta_c^{(t)}$ has positive density, there exists $c_t > 0$ such that:
\begin{equation}\label{eq:crossover_prob}
\mu_c \geq (\min\{p_C,1-p_C\})^{d_M} \cdot c_t > 0.
\end{equation}

For the mutation operation, consider any intermediate solution $\mathbf{p}_i^{'(t)}$ and let $\mathbf{w} = \mathbf{x}^* - \mathbf{p}_i^{'(t)}$. Define the MLP configuration with $\mathbf{W}_3 = \mathbf{0}$, $\mathbf{b}_3 = \mathbf{0}$, $\mathbf{W}_4 = \mathbf{0}$, and $\mathbf{b}_4 = \mathbf{w}$. Then:
\begin{equation}\label{eq:mlp_config_mutation}
\mathrm{MLP}_{\theta_m^*}(\mathbf{M}_i^{(t)}\mathbf{p}_i^{'(t)}) =  \tanh(\mathbf{M}_i^{(t)}\mathbf{p}_i^{'(t)}\mathbf{W}_3 + \mathbf{b}_3)\mathbf{W}_4 + \mathbf{b}_4 = \mathbf{w}.
\end{equation}
Consequently:
\begin{equation}\label{eq:exact_solution_mutation}
\mathbf{p}_i^{'(t)} + \mathrm{MLP}_{\theta_m^*}(\mathbf{M}_i^{(t)}\mathbf{p}_i^{'(t)}) = \mathbf{x}^*.
\end{equation}

By identical continuity properties, there exists $\epsilon > 0$ such that for all $\theta_m \in \mathcal{N}_\epsilon(\theta_m^*)$:
\begin{equation}\label{eq:continuity_bound_mutation}
\|\mathbf{p}_i^{'(t)} + \mathrm{MLP}_{\theta_m}(\mathbf{M}_i^{(t)}\mathbf{p}_i^{'(t)}) - \mathbf{x}^*\| < \delta/2.
\end{equation}

Define $\mu_m = \mathbb{P}(\theta_m^{(t)} \in \mathcal{N}_\epsilon(\theta_m^*) \mid \mathcal{F}_t)$. By analogous reasoning to the crossover operation:
\begin{equation}\label{eq:mutation_prob}
\mu_m \geq (\min\{p_M,1-p_M\})^{d_M} \cdot c_t' > 0,
\end{equation}
where $c_t' > 0$ is determined by mutation parameters.

The probability that a single offspring $\hat{\mathbf{p}}_i^{(t)}$ falls within $\delta$ of $\mathbf{x}^*$ satisfies:
\begin{equation}\label{eq:single_offspring}
\mathbb{P}(\|\hat{\mathbf{p}}_i^{(t)} - \mathbf{x}^*\| < \delta \mid \mathcal{F}_t) \geq \mu_c \mu_m.
\end{equation}
Let $\gamma = \mu_c \mu_m > 0$, which is a positive constant independent of $t$ and depends only on hyperparameters $p_C$, $p_M$, $d_M$, and the adaptive parameter learning. Finally, for $N$ independent offspring:
\begin{equation}\label{eq:combined_prob_final}
\mathbb{P}(\exists i: \|\hat{\mathbf{p}}_i^{(t)} - \mathbf{x}^*\| < \delta \mid \mathcal{F}_t) = 1 - \prod_{i=1}^N (1 - \mathbb{P}(\|\hat{\mathbf{p}}_i^{(t)} - \mathbf{x}^*\| < \delta \mid \mathcal{F}_t)) \geq 1 - (1 - \gamma)^N > 0. 
\end{equation}
\end{proof}

Define the distance function $V_t = f^*_t - f^* \geq 0$. Using Lemma \ref{lem:exploration}, we establish the following drift condition \citet{he2001drift,zhou2019evolutionary}:

\begin{lemma}\label{lem:drift_app}
Under Assumption \ref{as:app}, for any $\epsilon > 0$, there exists $\eta(\epsilon) > 0$ such that:
\begin{equation}\label{eq:drift_condition}
\mathbb{E}[V_t - V_{t+1} \mid \mathcal{F}_t, V_t > \epsilon] \geq \eta(\epsilon) > 0.
\end{equation}
\end{lemma}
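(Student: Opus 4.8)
The plan is to convert the spatial exploration guarantee of Lemma~\ref{lem:exploration} into a guaranteed decrease of the objective gap $V_t = f_t^* - f^*$, using continuity of $f_T$ at the interior minimizer $\mathbf{x}^*$ together with the monotonicity enforced by elitism. The conceptual point is that ``getting spatially close to $\mathbf{x}^*$'' (which Lemma~\ref{lem:exploration} delivers with probability bounded away from zero) translates into ``getting close to $f^*$ in value'' precisely because $f_T$ is continuous, and that once a good offspring is produced the elite archive can never discard the improvement.

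First I would fix $\epsilon > 0$ and invoke continuity of $f_T$ at $\mathbf{x}^*$ to choose a radius $\delta = \delta(\epsilon) > 0$ such that $\|\mathbf{x} - \mathbf{x}^*\| < \delta$ implies $f_T(\mathbf{x}) < f^* + \epsilon/2$; since $\mathbf{x}^*$ lies in the interior of $\mathcal{X}$ by Assumption~\ref{as:app}(i), this entire ball is feasible. Feeding this $\delta$ into Lemma~\ref{lem:exploration} produces a constant $\gamma = \gamma(\delta) > 0$ and the good event $G_t = \{\exists\, i : \|\hat{\mathbf{p}}_i^{(t)} - \mathbf{x}^*\| < \delta\}$ satisfying $\mathbb{P}(G_t \mid \mathcal{F}_t) \geq 1 - (1-\gamma)^N > 0$. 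On $G_t$ the qualifying offspring obeys $f_T(\hat{\mathbf{p}}_i^{(t)}) < f^* + \epsilon/2$.

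Next I would bound the one-step improvement on and off $G_t$. Because the elite archive retains the top $N$ individuals of the combined parent-offspring pool, the running best can only decrease, so $f_{t+1}^* \leq \min\{f_t^*,\, \min_i f_T(\hat{\mathbf{p}}_i^{(t)})\}$. On $G_t$ this gives $f_{t+1}^* < f^* + \epsilon/2$, i.e. $V_{t+1} < \epsilon/2$; combined with the conditioning $V_t > \epsilon$ this yields $V_t - V_{t+1} > \epsilon/2$. Off $G_t$, pure monotonicity of elitism gives $V_t - V_{t+1} \geq 0$. Applying the law of total expectation under $\{\mathcal{F}_t, V_t > \epsilon\}$ then gives $\mathbb{E}[V_t - V_{t+1} \mid \mathcal{F}_t, V_t > \epsilon] \geq (\epsilon/2)\,\mathbb{P}(G_t \mid \mathcal{F}_t) \geq (\epsilon/2)\,(1 - (1-\gamma)^N)$, so I would set $\eta(\epsilon) = (\epsilon/2)(1 - (1-\gamma)^N) > 0$.

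The main obstacle will be ensuring that $\eta(\epsilon)$ is genuinely independent of $t$, which the supermartingale argument closing Theorem~\ref{thm:main_convergence} requires: if the per-step hitting probability were allowed to decay as the learned parameters $\theta^{(t)}$ adapt, the accumulated drift could stay finite and the contradiction would fail. This uniformity rests entirely on the claim in Lemma~\ref{lem:exploration} that $\gamma$ is a positive constant independent of the generation, so I would make that dependence explicit when invoking the lemma. A secondary point requiring care is the index bookkeeping between the filtration $\mathcal{F}_t$, the dropout randomness in the offspring $\hat{\mathbf{P}}^{(t)}$, and the archive $\mathbf{E}^{(t)}$ that defines $f_t^*$: I would verify that the offspring whose exploration is guaranteed are exactly those feeding the elite archive used to define the next-generation value, and that conditioning on the $\mathcal{F}_t$-determined event $\{V_t > \epsilon\}$ leaves the threshold comparison $f^* + \epsilon < f_t^*$ valid pointwise, so that admission of the good offspring into the archive carries no measurability subtlety.
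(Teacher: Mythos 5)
Your proposal is correct and follows essentially the same route as the paper's proof: the same continuity-based choice of $\delta$ so that the $\delta$-ball around $\mathbf{x}^*$ maps into $[f^*, f^*+\epsilon/2)$, the same good event from Lemma~\ref{lem:exploration}, the same elitism-based case split, and the identical constant $\eta(\epsilon) = (\epsilon/2)(1-(1-\gamma)^N)$. Your added caution about $\gamma$ being uniform in $t$ is well placed — the paper's argument likewise leans entirely on Lemma~\ref{lem:exploration}'s claim that $\gamma$ is generation-independent — but it does not change the substance of the proof.
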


\begin{proof}
Define the event $A_t = \{\exists i: \|\hat{\mathbf{p}}_i^{(t)} - \mathbf{x}^*\| < \delta\}$, where $\delta > 0$ is chosen such that for all $\mathbf{x} \in \mathcal{X}$ with $\|\mathbf{x} - \mathbf{x}^*\| < \delta$, we have $f_T(\mathbf{x}) < f^* + \epsilon/2$ (which exists by the continuity of $f_T$ and Assumption \ref{as:app}(i)). By Lemma \ref{lem:exploration}, there exists $\gamma > 0$ such that:
\begin{equation}\label{eq:prob_A}
\mathbb{P}(A_t \mid \mathcal{F}_t, V_t > \epsilon) \geq 1 - (1-\gamma)^N > 0.
\end{equation}

When $A_t$ occurs, the elite archive at generation $t+1$ contains at least one solution with fitness value less than $f^* + \epsilon/2$, so:
\begin{equation}\label{eq:V_after_A}
V_{t+1} = f^*_{t+1} - f^* < \epsilon/2.
\end{equation}

When $A_t$ does not occur, the elitism mechanism ensures $V_{t+1} \leq V_t$ (since the elite archive preserves the best solutions). Therefore, the expected drift can be decomposed as:
\begin{align}
\mathbb{E}[V_t - V_{t+1} \mid \mathcal{F}_t, V_t > \epsilon] &= \mathbb{E}[V_t - V_{t+1} \mid \mathcal{F}_t, V_t > \epsilon, A_t] \cdot \mathbb{P}(A_t \mid \mathcal{F}_t, V_t > \epsilon) \\
&\quad + \mathbb{E}[V_t - V_{t+1} \mid \mathcal{F}_t, V_t > \epsilon, A_t^c] \cdot \mathbb{P}(A_t^c \mid \mathcal{F}_t, V_t > \epsilon).
\end{align}

For the first term, using Eq. (\ref{eq:V_after_A}) and the condition $V_t > \epsilon$:
\begin{align}
\mathbb{E}[V_t - V_{t+1} \mid \mathcal{F}_t, V_t > \epsilon, A_t] &\geq V_t - \epsilon/2 \\
&> \epsilon - \epsilon/2 = \epsilon/2.
\end{align}

For the second term, since $V_{t+1} \leq V_t$ by the elitism mechanism:
\begin{equation}
\mathbb{E}[V_t - V_{t+1} \mid \mathcal{F}_t, V_t > \epsilon, A_t^c] \geq 0.
\end{equation}

Combining these results with Eq. (\ref{eq:prob_A}):
\begin{align}
\mathbb{E}[V_t - V_{t+1} \mid \mathcal{F}_t, V_t > \epsilon] &\geq (\epsilon/2) \cdot (1 - (1-\gamma)^N) + 0 \cdot \mathbb{P}(A_t^c \mid \mathcal{F}_t, V_t > \epsilon) \\
&\geq (\epsilon/2) \cdot (1 - (1-\gamma)^N).
\end{align}

Setting $\eta(\epsilon) = (\epsilon/2) \cdot (1 - (1-\gamma)^N) > 0$ completes the proof.
\end{proof}

With the positive drift condition established, we can prove global convergence.

\begin{theorem}\label{thm:convergence_app}
Under Assumption \ref{as:app}, ABOM converges with probability 1 to the global optimum:
\begin{equation}\label{eq:global_convergence}
f^*_t \xrightarrow{\text{a.s.}} f^* \quad \text{as} \quad t \to \infty.
\end{equation}
\end{theorem}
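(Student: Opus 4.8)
The plan is to show that the almost-sure limit $V_\infty = f^*_\infty - f^*$ of the distance process $V_t = f^*_t - f^*$ is identically zero. The existence of this limit is already in hand: elitism forces $f^*_{t+1} \le f^*_t$ almost surely, so $\{V_t\}$ is a non-negative, monotone non-increasing supermartingale, and the martingale convergence theorem gives $V_t \to V_\infty \ge 0$ almost surely. It therefore remains only to upgrade ``$V_\infty \ge 0$'' to ``$V_\infty = 0$ a.s.,'' and for this I would feed the strictly positive drift of Lemma~\ref{lem:drift_app} into a summability argument.

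First I would convert the conditional drift into a bound on unconditional expectations. Since the increment $V_t - V_{t+1}$ is non-negative pointwise (elitism) and $\{V_t > \epsilon\}$ lies in the conditioning $\sigma$-field, Lemma~\ref{lem:drift_app} yields
\begin{equation}
\mathbb{E}\big[(V_t - V_{t+1})\,\mathbf{1}_{\{V_t > \epsilon\}}\big] = \mathbb{E}\big[\mathbf{1}_{\{V_t > \epsilon\}}\,\mathbb{E}[V_t - V_{t+1}\mid \mathcal{F}_t]\big] \ge \eta(\epsilon)\,\mathbb{P}(V_t > \epsilon),
\end{equation}
and discarding the non-negative contribution of the complementary event gives $\mathbb{E}[V_t - V_{t+1}] \ge \eta(\epsilon)\,\mathbb{P}(V_t > \epsilon)$. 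Telescoping over $t = 0,\dots,T-1$ and using $V_T \ge 0$ produces
\begin{equation}
\eta(\epsilon)\sum_{t=0}^{T-1}\mathbb{P}(V_t > \epsilon) \le \mathbb{E}[V_0] - \mathbb{E}[V_T] \le \mathbb{E}[V_0].
\end{equation}
Because $f_T$ is continuous on the compact set $\mathcal{X}$ it is bounded, so $\mathbb{E}[V_0] < \infty$; letting $T \to \infty$ then gives the summability bound $\sum_{t \ge 0}\mathbb{P}(V_t > \epsilon) < \infty$ for every fixed $\epsilon > 0$.

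The final step is Borel--Cantelli together with a union over a sequence $\epsilon \downarrow 0$. Summability forces $\mathbb{P}(V_t > \epsilon\ \text{i.o.}) = 0$; since $V_t \to V_\infty$ almost surely, the event $\{V_\infty > \epsilon\}$ is contained in $\{V_t > \epsilon\ \text{i.o.}\}$, whence $\mathbb{P}(V_\infty > \epsilon) = 0$. Taking $\epsilon = 1/k$ and a countable union over $k$ gives $\mathbb{P}(V_\infty > 0) = 0$, i.e. $V_\infty = 0$ and $f^*_t \xrightarrow{\text{a.s.}} f^*$. I expect the main obstacle to be the measurability bookkeeping that makes the drift usable: confirming that $f^*_t$ (hence $V_t$) is adapted so that $\{V_t > \epsilon\}$ can be pulled out of the conditional expectation, and checking that the elitism-induced monotonicity $V_t - V_{t+1} \ge 0$ genuinely lets me drop the $A_t^c$ branch without a sign error. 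Once the passage to $\mathbb{E}[V_t - V_{t+1}] \ge \eta(\epsilon)\,\mathbb{P}(V_t > \epsilon)$ is secured, the rest is the routine summability and Borel--Cantelli machinery.
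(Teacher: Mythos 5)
Your proof is correct, and it reaches the conclusion by a genuinely different final step than the paper's. Both arguments share the same skeleton up to the drift condition: elitism makes $V_t = f^*_t - f^*$ a non-negative, non-increasing sequence (the paper phrases this as a bounded supermartingale), so $V_t \to V_\infty \ge 0$ a.s., and Lemma~\ref{lem:drift_app} supplies the strictly positive conditional drift on $\{V_t > \epsilon\}$. From there the paper argues by contradiction: it supposes $V_\infty > f^* $ with positive probability, introduces the stopping time $\tau_k = \inf\{t \ge k : V_t \le \epsilon\}$, telescopes the drift over the random interval $[k,\tau_k)$ to obtain the hitting-time bound $\mathbb{E}[\tau_k - k] \le \mathbb{E}[V_k]/\eta(\epsilon) < \infty$, and concludes $\tau_k < \infty$ a.s., contradicting the hypothesis. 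You instead integrate the drift against $\mathbf{1}_{\{V_t>\epsilon\}}$, use the pointwise monotonicity $V_{t+1}\le V_t$ to pass to $\mathbb{E}[V_t - V_{t+1}] \ge \eta(\epsilon)\,\mathbb{P}(V_t>\epsilon)$, telescope over all $t$ to get $\sum_t \mathbb{P}(V_t>\epsilon) \le \mathbb{E}[V_0]/\eta(\epsilon) < \infty$, and finish with Borel--Cantelli plus a countable union over $\epsilon = 1/k$. Your route is arguably cleaner: it avoids the contradiction setup, in which the $\epsilon$ satisfying $V_t>\epsilon$ for all large $t$ is really sample-point dependent and the paper handles this informally, and it avoids optional-stopping bookkeeping over $[k,\tau_k)$; indeed, by monotonicity you do not even need Borel--Cantelli in full strength, since $\mathbb{P}(V_t>\epsilon)\ge\mathbb{P}(V_\infty>\epsilon)$ for every $t$, which is already incompatible with summability unless $\mathbb{P}(V_\infty>\epsilon)=0$. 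What the paper's version buys in exchange is a quantitative expected-hitting-time bound, the standard currency of drift analysis for evolutionary algorithms. One shared caveat that you rightly flag as bookkeeping: $f^*_t$ is defined through the elite archive $\mathbf{E}^{(t)}$, which depends on the offspring $\hat{\mathbf{P}}^{(t)}$ and is therefore $\mathcal{F}_{t+1}$-measurable rather than $\mathcal{F}_t$-measurable under the paper's filtration; this indexing slip also affects the paper's supermartingale claim, and is harmless in your argument because the tower-property step only requires the drift to hold conditionally on a $\sigma$-field containing $\{V_t>\epsilon\}$.
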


\begin{proof}
By the martingale convergence theorem \citet{hall2014martingale}, $f^*_t$ converges with probability 1 to some random variable $f^*_\infty \geq f^*$. Assume for contradiction that $f^*_\infty > f^*$ with positive probability. Then there exists $\epsilon > 0$ such that $V_t > \epsilon$ for all sufficiently large $t$. Define the stopping time $\tau_k = \inf\{t \geq k: V_t \leq \epsilon\}$.

Consider the value function change from time $k$ to $\tau_k$:
\begin{equation}\label{eq:value_change}
V_k - V_{\tau_k} = \sum_{t=k}^{\tau_k-1} (V_t - V_{t+1}).
\end{equation}

Taking expectations and applying the law of iterated expectations:
\begin{equation}\label{eq:iterated_expectation}
\mathbb{E}[V_k - V_{\tau_k}] = \mathbb{E}\left[\sum_{t=k}^{\tau_k-1} \mathbb{E}[V_t - V_{t+1} \mid \mathcal{F}_t]\right].
\end{equation}

For $t < \tau_k$, we have $V_t > \epsilon$, so by Lemma \ref{lem:drift_app}:
\begin{equation}\label{eq:drift_application}
\mathbb{E}\left[\sum_{t=k}^{\tau_k-1} \mathbb{E}[V_t - V_{t+1} \mid \mathcal{F}_t]\right] \geq \eta(\epsilon) \cdot \mathbb{E}[\tau_k - k].
\end{equation}

Thus:
\begin{equation}\label{eq:rearranged_inequality}
\mathbb{E}[V_k] - \mathbb{E}[V_{\tau_k}] \geq \eta(\epsilon) \cdot \mathbb{E}[\tau_k - k].
\end{equation}

Since $V_{\tau_k} \leq \epsilon$, we have:
\begin{equation}\label{eq:stopping_time_bound}
\mathbb{E}[\tau_k - k] \leq \frac{\mathbb{E}[V_k]}{\eta(\epsilon)} < \infty.
\end{equation}

This implies $\mathbb{P}(\tau_k < \infty) = 1$, contradicting the assumption that $V_t > \epsilon$ for all sufficiently large $t$. Therefore, $f^*_\infty = f^*$ with probability 1.
\end{proof}

Theorem \ref{thm:convergence_app} establishes that ABOM converges with probability 1 to the global optimum under Assumption \ref{as:app}. The persistent application of dropout during inference, coupled with the adaptive parameter learning mechanism, ensures that there exists a positive probability of generating offspring within an arbitrarily small neighborhood of the optimum at each iteration.

\textcolor{black}{\textbf{Theoretical Limitation.} Theorem \ref{thm:convergence_app} establishes asymptotic convergence but not polynomial-time convergence. Convergence rate analysis (expected hitting time) for specific problems is one of the important research directions for the future. It is worth noting that our convergence analysis does not directly apply in cases where the global optimum lies on the boundary or where constraint handling results in a discontinuous feasible region.}

\textcolor{black}{\textbf{Theoretical Contributions.} Our work establishes two theoretical contributions for meta black-box optimization: 1) We prove a novel exploration guarantee (Lemma \ref{lem:exploration}) showing that attention-based MLP parameterization with dropout maintains persistent exploration capability. 2) We prove global convergence of ABOM with adaptive parameter learning (Theorem \ref{thm:convergence_app}). Crucially, this demonstrates that self-supervised parameter adaptation does not compromise convergence guarantees. This stands in contrast to existing neural network-parameterized methods such as GLHF \citet{li2024pretrained} and B2Opt \citet{Li_Wu_Zhang_Wang_2025}, which lack rigorous convergence analysis despite empirical success. These theoretical foundations provide ABOM's reliability while preserving the flexibility of adaptive optimization.}

\section{\textcolor{black}{Convergence Analysis of Parameter Adaptation}}

\begin{figure}[ht]
\centering
\includegraphics[width=\textwidth]{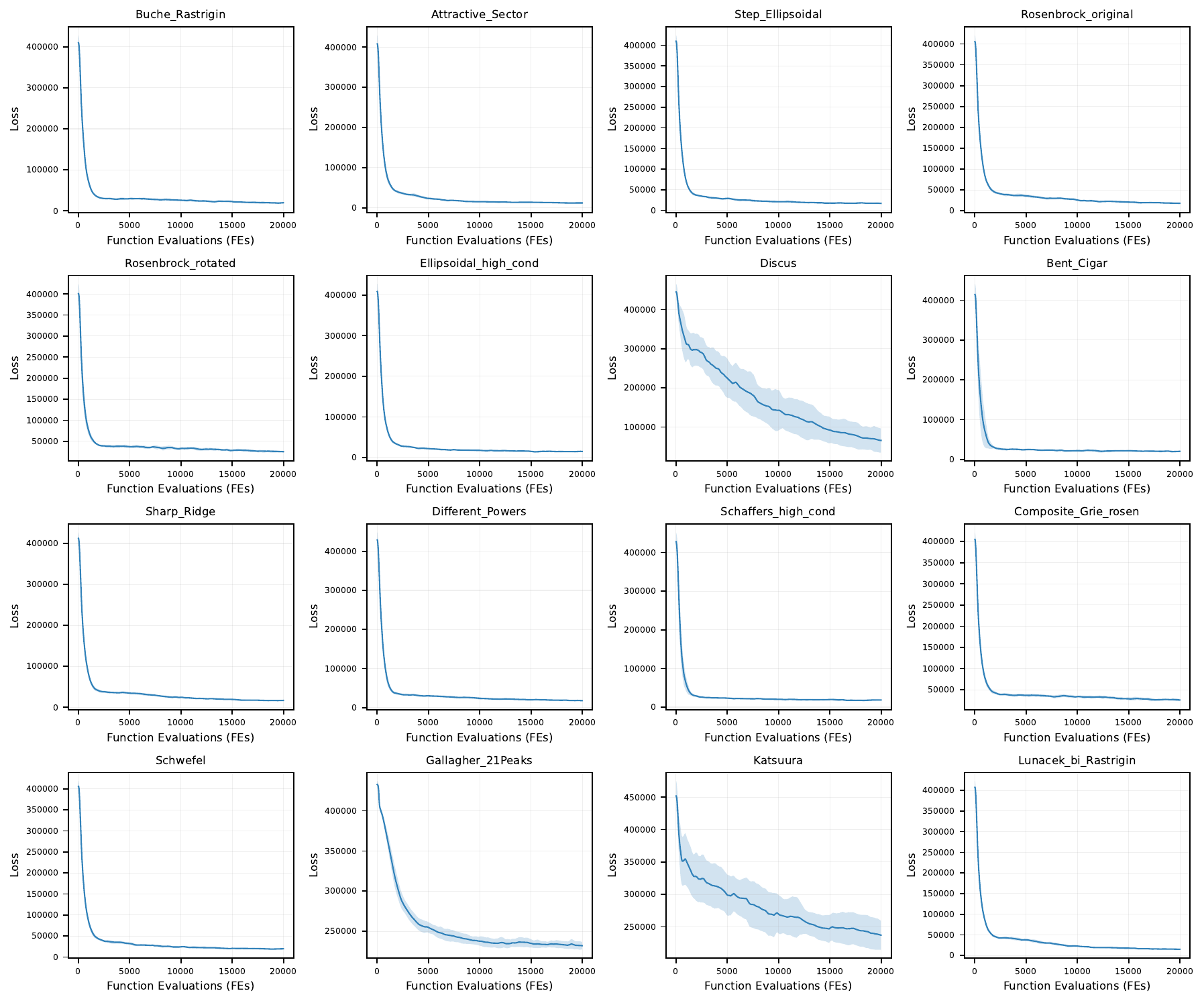}
\caption{\textcolor{black}{Loss curves of parameter adaptation on the BBOB suite with $d=500$. Each subplot depicts the mean loss across 30 independent runs, with shaded regions representing standard deviation.}}
\label{fig:abom_loss_convergence}
\end{figure}

\textcolor{black}{Fig. \ref{fig:abom_loss_convergence} shows the loss curves of parameter adaptation on the BBOB suite with $[-100, 100]^{500}$. Despite the large search space and high dimensionality, the loss of parameter adaptation consistently decreases and converges across all test functions, with minimal variance across 30 independent runs. This empirical evidence validates our theoretical assumption of local convergence for parameter adaptation (Eq. \ref{eq:continuity_bound_crossover}), demonstrating that the self-supervised learning paradigm with AdamW optimizer remains stable even in challenging high-dimensional optimization scenarios. The consistent convergence behavior aligns with standard machine learning practices for training attention-based MLP architectures using gradient-based optimization, confirming the practical viability of our adaptive parameter learning framework.}

\begin{wrapfigure}{r}{0.4\textwidth}
    \vspace{-1\baselineskip} % 顶部对齐修正
    \centering
    \includegraphics[width=\linewidth]{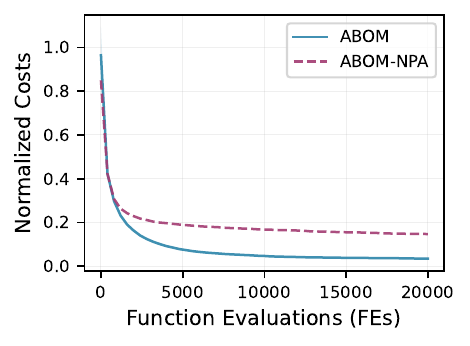}
    \caption{\textcolor{black}{Convergence comparison between ABOM and ABOM-NPA on BBOB suite with $d=500$, which shows normalized costs against function evaluations over 30 independent runs.} }
    \label{fig:convergence_comparison_npa}
\end{wrapfigure}

\textcolor{black}{ Fig. \ref{fig:convergence_comparison_npa} presents the convergence behavior of ABOM and ABOM-NPA (no parameter adaptation). Both methods demonstrate convergence in practice, which empirically confirms that the convergence guarantee stems from the core mechanisms of elite preservation and dropout-enabled exploration rather than solely depending on parameter adaptation. While our theoretical analysis (Theorem \ref{thm:convergence_app}) formally establishes global convergence for ABOM with parameter adaptation, the experimental comparison reveals that the fundamental convergence properties are maintained even without this component, suggesting that the theoretical framework could be extended to cover variants without parameter adaptation. This empirical observation aligns with the martingale convergence argument in Theorem \ref{thm:convergence_app}, where the supermartingale property $\mathbb{E}[f_{t+1}^* | \mathcal{F}_t] \leq f_t^*$ is primarily ensured by the elitism mechanism rather than parameter adaptation.}

\section{Task Configuration \label{section1}}

\begin{table}[ht!]
\centering
\caption{Overview of the BBOB suites.}
\label{tab:problem_instances}
\small
\begin{tabular}{c|l|c|c}
\toprule
\textbf{ID} & \textbf{Function} & \textbf{Characteristic} & \textbf{Usage} \\
\midrule

% ===== TRAIN FUNCTIONS =====
$f_1$  & Sphere & \multirow{4}{*}{Separable} & \textbf{Train} \\
$f_2$  & Ellipsoidal & & \textbf{Train} \\
$f_3$  & Rastrigin & & \textbf{Train} \\
$f_5$  & Linear Slope & & \textbf{Train} \\
\midrule
$f_{15}$ & Rastrigin (non-separable) & \multirow{3}{*}{\shortstack[l]{Multi-modal with adequate global structure}} & \textbf{Train} \\
$f_{16}$ & Weierstrass & & \textbf{Train} \\
$f_{17}$ & Schaffers F7 & & \textbf{Train} \\
\midrule
$f_{21}$ & Gallagher's Gaussian 101-me Peaks & \multirow{1}{*}{\shortstack[l]{Multi-modal with weak global structure}} & \textbf{Train} \\
\midrule

% ===== TEST FUNCTIONS =====
$f_4$  & Buche-Rastrigin & Separable & Test \\
\midrule
$f_6$  & Attractive Sector & \multirow{4}{*}{\shortstack[l]{Low/moderate conditioning}} & Test \\
$f_7$  & Step Ellipsoidal & & Test \\
$f_8$  & Rosenbrock, original & & Test \\
$f_9$  & Rosenbrock, rotated & & Test \\
\midrule
$f_{10}$ & Ellipsoidal & \multirow{5}{*}{\shortstack[l]{High conditioning, unimodal}} & Test \\
$f_{11}$ & Discus & & Test \\
$f_{12}$ & Bent Cigar & & Test \\
$f_{13}$ & Sharp Ridge & & Test \\
$f_{14}$ & Different Powers & & Test \\
\midrule
$f_{18}$ & Schaffers F7, ill-conditioned & \multirow{2}{*}{\shortstack[l]{Multi-modal with adequate global structure}} & Test \\
$f_{19}$ & Composite Griewank-Rosenbrock F8F2 & & Test \\
\midrule
$f_{20}$ & Schwefel & \multirow{4}{*}{\shortstack[l]{Multi-modal with weak global structure}} & Test \\
$f_{22}$ & Gallagher's Gaussian 21-hi Peaks & & Test \\
$f_{23}$ & Katsuura & & Test \\
$f_{24}$ & Lunacek bi-Rastrigin & & Test \\
\bottomrule
\end{tabular}
\end{table}

Table~\ref{tab:problem_instances} presents 24 instances of synthetic black-box optimization benchmarks (BBOB) with diverse characteristics and landscapes. Following the standard protocol of the benchmark platform \citet{ma2023metabox,ma2025metabox}, functions $f_1$, $f_2$, $f_3$, $f_5$, $f_{15}$, $f_{16}$, $f_{17}$, and $f_{21}$ are designated as training functions, while the remaining functions serve as test instances, ensuring a balanced distribution of optimization difficulty between the training and test sets. The maximum number of function evaluations is set to 20,000. All functions are defined over $[-100, 100]^d, d=30/100/500$.

The UAV benchmarks comprise 56 terrain-based scenarios that represent realistic unmanned aerial vehicle path planning problems, each with 30 dimensions. The scenarios are divided into training and test sets of equal size (28 instances each), with test instances corresponding to even-numbered indices (0, 2, 4, \ldots, 54). Following the standard protocol of the benchmark platform \citet{ma2023metabox,ma2025metabox}, the maximum number of function evaluations is set to 2,500.

\section{Baselines \label{section2}}

Since our ABOM is an evolution-based meta-black-box optimization (metaBBO) algorithm, we restrict comparisons exclusively to evolution-based methods, excluding non-evolution-based methods such as Bayesian optimization. Furthermore, we omit LLM-based metaBBO methods \citet{10.1007/978-981-96-3538-2_13,9,ICLR2024_3339f19c} from our baselines, as they are tailored for specific task types and are not directly comparable to evolution-based general-purpose frameworks. 

To ensure a fair and reproducible comparison, all baselines are implemented using the source code provided by the official MetaBox platform \citet{ma2023metabox,ma2025metabox}. Detailed hyperparameter configurations for baselines are provided in Table~\ref{tab:parameters}, while the configuration of our proposed ABOM is summarized in Table~\ref{tab:abom_parameters}. All results are reported as the mean and standard deviation over 30 independent runs, with a fixed population size of 20 across all trials. 

For traditional BBO methods, we adopt the hyperparameter settings recommended in the original paper, rather than performing a grid search or manual tuning. The design choice aligns with a core motivation of adaptive optimization and metaBBO methods: to reduce the reliance on labor-intensive hyperparameter tuning. By using default settings, we ensure a fair and meaningful comparison that highlights the intrinsic advantages of adaptive strategies.

\begin{table*}[htbp]
\centering
\caption{Detailed hyperparameter configurations of baselines. $ub$ and $lb$ denote the upper and lower bounds of the search space, respectively. $\text{randn}(d)$ denotes sampling a $d$-dimensional vector from a standard normal distribution. All MetaBBO methods are trained on the same synthetic problem distribution as RLDEAFL~\citet{10.1145/3712256.3726309}.}
\label{tab:parameters}
\small
\setlength{\tabcolsep}{4pt}
\begin{tabular}{
    @{}
    >{\RaggedRight\arraybackslash}c   % ← Algorithm: 垂直居中
    >{\RaggedRight\arraybackslash}c   % ← Parameter: 垂直居中
    >{\RaggedRight\arraybackslash}m{6cm}   % ← Setting: 垂直居中
    @{}
}
\toprule
\textbf{Baseline} & \textbf{Parameter} & \textbf{Setting} \\
\midrule

% ===== Traditional BBO methods =====
\multicolumn{3}{c}{\textbf{Traditional BBO methods}} \\
\midrule

RS
    & — 
    & Uniform sampling within $[lb, ub]^d$ \citet{JMLR:v13:bergstra12a}. \\
\addlinespace[3pt]
\midrule

PSO
    & Inertia weight $w$ 
    & Linearly decreased from 0.9 to 0.4 over iterations \citet{488968}. \\
    & Coefficients $c_1/c_2$ 
    & 2.0/2.0 \citet{488968} \\
\addlinespace[3pt]
\midrule

DE
    & Mutation factor $F$ 
    & 0.5 \citet{10.1023/A:1008202821328} \\
    & Crossover probability $CR$ 
    & 0.5 \citet{10.1023/A:1008202821328} \\
    & Strategy 
    & DE/rand/1/bin \citet{10.1023/A:1008202821328} \\
\midrule

% ===== Adaptive optimization variants =====
\multicolumn{3}{c}{\textbf{Adaptive optimization variants}} \\
\midrule

SAHLPSO 
    & Adaptive parameters 
    & Parameter ranges follow those specified in the original paper \citet{TAO2021457}. \\
\addlinespace[3pt]
\midrule

JDE21 
    & Adaptive parameters 
    & Parameter ranges follow those specified in the original paper \citet{9504782}. \\
\addlinespace[3pt]
\midrule

CMAES 
    & Initial step size $\sigma$ 
    & $0.3 \times (ub - lb)$ \citet{18} \\
    & Initial mean $\mu$ 
    & $\mu = lb + \text{randn}(d) \times (ub - lb)$ \citet{18} \\
\midrule

% ===== MetaBBO methods =====
\multicolumn{3}{c}{\textbf{MetaBBO methods}(Training on BBOB or UAV training set)} \\
% \midrule
% \multicolumn{3}{c}{} \\

\midrule
GLEET
    & Training parameters
    & Parameter configurations are consistent with the original paper \citet{10.1145/3638529.3653996}. \\
\addlinespace[3pt]

RLDEAFL 
    & Training parameters 
    & Parameter configurations are consistent with the original paper \citet{10.1145/3712256.3726309}. \\
\addlinespace[3pt]

LES 
    & Training parameters
    & Parameter configurations are consistent with the original paper \citet{lange2023discovering}. \\
\addlinespace[3pt]

GLHF
    & Training parameters
    & Parameter configurations are consistent with the original paper \citet{li2024pretrained}. \\

\bottomrule
\end{tabular}
\end{table*}

\begin{table}[htbp]
\centering
\caption{Hyperparameter configuration of ABOM.}
\label{tab:abom_parameters}
\small
\setlength{\tabcolsep}{6pt}
\begin{tabular}{@{} >{\RaggedRight\arraybackslash}l >{\RaggedRight\arraybackslash}c @{}}
\toprule
\textbf{Parameter} & \textbf{Setting} \\
\midrule
Crossover dropout rate $p_C$ & 0.95 \\
Mutation dropout rate $p_M$ & 0.95 \\
Learning rate $\eta$ of AdamW & $1 \times 10^{-3}$ \\
Attention dimension $d_A$ & $d$ \\
MLP hidden dimension $d_M$ & $ 2^{\left\lfloor \log_2 (d) \right\rfloor}$ \\
\bottomrule
\end{tabular}
\end{table}

\section{\textcolor{black}{Comparison with EPOM \label{sectionbw}}}

\begin{wrapfigure}{r}{0.45\textwidth}
    \vspace{-1\baselineskip} % 顶部对齐修正
    \centering
    \includegraphics[width=\linewidth]{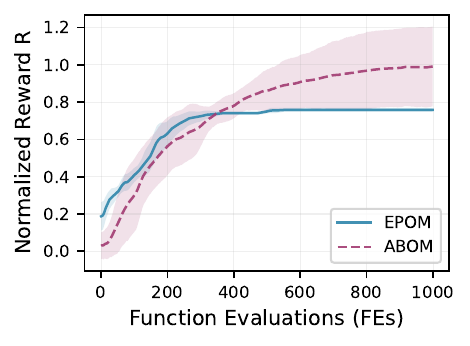}
    \caption{\textcolor{black}{Convergence comparison between ABOM and EPOM on Bipedal Walker task.} }
    \label{fig:epom_comparison}
\end{wrapfigure}

\textcolor{black}{This section conducts a performance comparison between our method ABOM and EPOM, a recently proposed meta black-box optimization method that represents the current state of the art in zero-shot optimization \citet{han2025enhancing}. EPOM operates as a pre-trained optimization model that learns a generalizable mapping from task-specific features to optimization strategies, thereby enabling zero-shot optimization capabilities on previously unseen black-box problems. We evaluate ABOM and EPOM on the Bipedal Walker task, which requires optimizing a fully-connected neural network policy with $d=874$ parameters over $k=800$ timesteps to enhance robotic locomotion control performance. To ensure a fair and reproducible comparison, we strictly adhere to the experimental protocol and parameter settings established in the original paper \citet{han2025enhancing}. ABOM utilizes the identical hyperparameters as those employed in our prior experiments (refer to Table~\ref{tab:abom_parameters} for details), maintaining consistency across evaluations. As demonstrated in Fig.~\ref{fig:epom_comparison}, ABOM achieves significantly faster convergence to high-quality solutions, while EPOM exhibits premature convergence, underscoring the robustness and effectiveness of our method in challenging optimization scenarios.}

\section{\textcolor{black}{Sensitivity Analysis of Learning Rate \label{sectionsalr}}}

\begin{figure}[ht]
    \centering
    % 左边：热力图
    \begin{subfigure}[t]{0.55\textwidth}
        \centering
        \includegraphics[width=\textwidth]{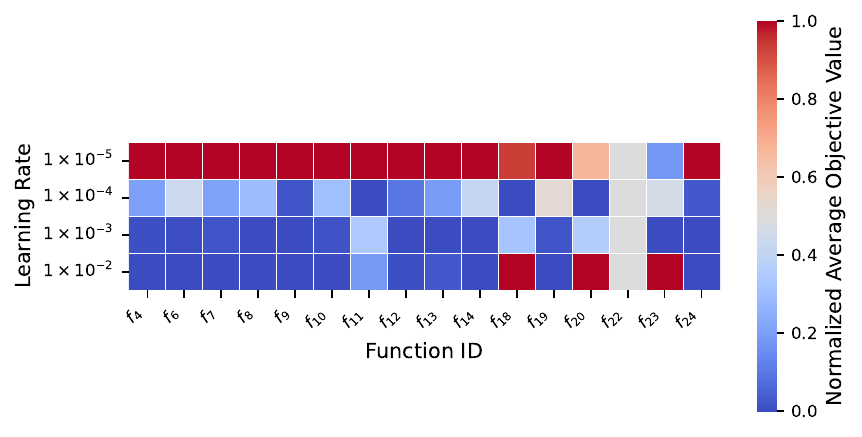}
        \caption{Normalized Average Objective Value}
        \label{fig:lr_heatmap}
    \end{subfigure}
    % 右边：损失曲线
    \begin{subfigure}[t]{0.4\textwidth}
        \centering
        \includegraphics[width=\textwidth]{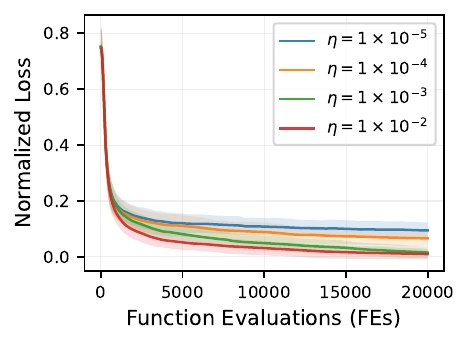}
        \caption{Normalized loss convergence curves}
        \label{fig:lr_loss}
    \end{subfigure}
    
    \caption{\textcolor{black}{Sensitivity analysis of learning rate $\eta$ on the BBOB suite with $d=30$.}}
    \label{fig:learning_rate_sensitivity}
\end{figure}

\textcolor{black}{Fig.~\ref{fig:learning_rate_sensitivity} presents the sensitivity analysis of the learning rate of AdamW for ABOM's parameter adaptation. The heatmap (Fig.~\ref{fig:lr_heatmap}) shows optimization performance across different learning rates, while the loss curves (Fig.~\ref{fig:lr_loss}) demonstrate the convergence behavior of the parameter adaptation.} 

\textcolor{black}{The loss exhibits stable convergence across the evaluated learning rate spectrum (Fig.~\ref{fig:lr_loss}), empirically validating our theoretical assumption of local convergence for parameter adaptation (Eq. \ref{eq:continuity_bound_crossover}). However, as shown in Fig.~\ref{fig:lr_heatmap}, optimization performance varies significantly across learning rates. $\eta = 1\times10^{-3}$ achieves the best balance between convergence speed and solution quality. This demonstrates that while convergent behavior of the loss is necessary for stable parameter adaptation, it does not guarantee optimal optimization performance. The choice of learning rate remains crucial for effective parameter adaptation.}

\section{\textcolor{black}{Preliminary Generalization Analysis of ABOM}}
\label{app:stop_analysis}

\textcolor{black}{This section preliminarily explores the generalization capability of ABOM through pre-training on the STOP benchmark suite~\citet{21}. The STOP suite comprises 12 sequence transfer optimization problems, where each problem contains a series of source optimization tasks and one target optimization task. Based on the similarity between source and target tasks (measured by fitness landscape overlap and optimal solution alignment), the 12 problems are categorized into three groups: \textit{high similarity} (STOP1--4), \textit{mixed similarity} (STOP5--8), and \textit{low similarity} (STOP9--12). Detailed properties of the optimization tasks are provided in~\cite{21}. For experimental evaluation, each problem is instantiated with 10 source tasks under a maximum evaluation budget of 5,000 per task. We treat these source tasks as the training set and the target task as the test set. This setup spans diverse similarity scenarios between training and test sets, enabling a more comprehensive evaluation of ABOM's generalization performance.}

\textcolor{black}{We introduce ABOM-PT, a pre-trained variant of ABOM, where meta-optimization knowledge is distilled from the training set. Specifically, ABOM is executed on 10 source tasks for $T=250$ iterations per task, generating $2500$ prior training samples $\mathcal{M} = \{(P_k^{(t)}, F_k^{(t)}) \mid P_k^{(t)} \in \mathbb{R}^{N \times d}, F_k^{(t)} \in \mathbb{R}^{N},k=1,...,K,t=1,...,T\}$. The pre-training objective minimizes the prediction error of population evolution:}
\begin{equation}
    \color{black}
    \min_W  \mathcal{L}_{\text{pt}} = \sum_{k=1}^K \sum_{t=1}^{T-1} \left\| P_k^{(t+1)} - \text{ABOM}_W(P_k^{(t)}, F_k^{(t)}) \right\|^2,
    \label{eq:pretrain_objective}
\end{equation}
\textcolor{black}{where $\text{ABOM}_W$ predicts the next-generation population $P_k^{(t+1)}$ from current population-fitness pairs. We optimized the Eq. (\ref{eq:pretrain_objective}) using AdamW with a learning rate of $1\times 10^{-3}$ and a batch size of 256. The hyperparameters for parameter adaptation are consistent with those in Table \ref{tab:abom_parameters}.}

\textcolor{black}{Table~\ref{tab:stop_results} presents the experimental results of ABOM and ABOM-PT on the STOP benchmark suite, revealing four key insights: 1) ABOM-PT outperforms ABOM in 9 of 12 problems, confirming the generalization capability of our method; 2) Under high-similarity conditions (STOP1-4), ABOM-PT achieves substantially better performance by effectively leveraging optimization knowledge from training tasks to the test task; 3) ABOM-PT underperforms on some mixed-similarity problems (such as STOP8), revealing limitations in handling complex task relationships; 4) Surprisingly, pre-training on low-similarity tasks (STOP9-12) consistently improves performance on the test task, demonstrating that even dissimilar training tasks contain valuable optimization knowledge that enhances generalization capability.}

\begin{table}[htbp]
\centering
\small
\caption{\textcolor{black}{Performance comparison of ABOM vs. ABOM-PT on the STOP suite over 30 independent runs, reported as the mean and standard deviation of objective values (lower is better).}}
\label{tab:stop_results}
\color{black}
\begin{tabular}{ccccc}
\toprule
Problem & Similarity & ABOM & ABOM-PT \\
        &            & (mean $\pm$ std) & (mean $\pm$ std) \\
\midrule
STOP1 & High & 1.08e+0 $\pm$ 4.42e-1 & \textbf{4.73e-1 $\pm$ 1.97e-1} \\
STOP2 & High & 1.92e-1 $\pm$ 7.26e-2 & \textbf{2.61e-2 $\pm$ 7.45e-4} \\
STOP3 & High & 1.20e+0 $\pm$ 4.87e+0 & \textbf{1.71e-1 $\pm$ 8.04e-3} \\
STOP4 & High & 2.52e-1 $\pm$ 6.50e-3 & \textbf{2.08e-1 $\pm$ 2.72e-3} \\
\midrule
STOP5 & Medium & 2.79e+0 $\pm$ 1.21e+1 & \textbf{1.28e-2 $\pm$ 9.88e-6} \\
STOP6 & Medium & 1.06e+2 $\pm$ 9.62e+2 & \textbf{1.01e+2 $\pm$ 3.74e+2} \\
STOP7 & Medium & 5.27e-2 $\pm$ 1.28e-2 & \textbf{6.84e-3 $\pm$ 7.83e-5} \\
STOP8 & Medium &\textbf{ 3.60e+0 $\pm$ 1.22e+1} & 5.39e+0 $\pm$ 2.90e+1 \\
\midrule
STOP9 & Low & 1.75e-2 $\pm$ 1.50e-4 & \textbf{3.84e-3 $\pm$ 4.32e-6} \\
STOP10 & Low & 3.87e+1 $\pm$ 5.49e+1 & \textbf{2.69e+1 $\pm$ 4.61e+1} \\
STOP11 & Low & 5.02e+0 $\pm$ 3.97e+1 & \textbf{6.20e-1 $\pm$ 1.20e-1} \\
STOP12 & Low & 9.52e+2 $\pm$ 1.38e+6 & \textbf{8.55e+1 $\pm$ 1.20e+4} \\
\midrule
$-$/$\approx$/$+$ & & 9/3/0 & - \\
\bottomrule
\end{tabular}
\end{table}

\section{Experimental Results \label{er}}

Tables \ref{tab:bbob30d_results} and \ref{tab:bbob100d_results} show the mean and standard deviation over 30 runs for each baseline on the BBOB suite with $d=30/100$. The convergence curves of the average normalized cost across all test functions for the BBOB suite, with dimensions $d = 30/100/500$, are presented in Fig.~\ref{fig:bbob_conv_part1}, based on 30 independent runs.

The convergence curves of cost (log scale) for the 28 UAV problems over 30 independent runs are shown in Fig.~\ref{fig:s1}, Fig.~\ref{fig:s2}, and Fig.~\ref{fig:s3}.

The boxplots of cost (log scale) over 30 independent runs for the 28 UAV problems are shown in Fig.~\ref{fig:s4}, Fig.~\ref{fig:s5}, and Fig.~\ref{fig:s6}.

\begin{table}[htbp]
\centering
\caption{The comparison results of the baselines on the BBOB suite with $d=30$. All results are reported as the mean and standard deviation (mean $\pm$ std) over 30 independent runs. Symbols ``$-$", ``$\approx$", and ``$+$" imply that the corresponding baseline is significantly worse, similar, and better than ABOM on the Wilcoxon rank-sum test with 95\% confidence level, respectively. The best results are indicated in \textbf{bold}, and the suboptimal results are \underline{underlined}.}
\label{tab:bbob30d_results}
\tiny
\setlength{\tabcolsep}{0pt}
\begin{tabular*}{\textwidth}{@{\extracolsep{\fill}} c *{11}{c} @{}}
\toprule
& \multicolumn{3}{c}{\textbf{Traditional BBO}} & \multicolumn{3}{c}{\textbf{Adaptive Variants}} & \multicolumn{4}{c}{\textbf{MetaBBO}} & \textbf{Ours} \\
\cmidrule(lr){2-4} \cmidrule(lr){5-7} \cmidrule(lr){8-11} \cmidrule(lr){12-12}
ID & RS & PSO & DE & SAHLPSO & JDE21 & CMAES & GLEET & RLDEAFL & LES & GLHF & ABOM \\
\midrule

$f_4$ &
\makecell{5.17e+5 \\ $\pm$1.25e+5} &
\makecell{1.28e+5 \\ $\pm$3.87e+4} &
\makecell{9.84e+3 \\ $\pm$1.80e+3} &
\makecell{2.04e+5 \\ $\pm$2.43e+5} &
\makecell{5.58e+3 \\ $\pm$4.39e+3} &
\textbf{\makecell{3.25e+1 \\ $\pm$6.05e+0}} &
\makecell{3.68e+4 \\ $\pm$3.44e+4} &
\makecell{6.18e+3 \\ $\pm$3.89e+3} &
\makecell{1.81e+6 \\ $\pm$3.30e+5} &
\makecell{6.96e+5 \\ $\pm$4.08e+5} &
\underline{\makecell{5.45e+2 \\ $\pm$2.95e+2}} \\

$f_6$ &
\makecell{5.42e+7 \\ $\pm$7.85e+6} &
\makecell{2.49e+5 \\ $\pm$2.45e+5} &
\makecell{6.12e+4 \\ $\pm$7.08e+3} &
\makecell{5.11e+6 \\ $\pm$1.05e+7} &
\makecell{2.80e+4 \\ $\pm$1.19e+4} &
\textbf{\makecell{5.33e+0 \\ $\pm$4.28e+0}} &
\makecell{3.76e+4 \\ $\pm$2.82e+4} &
\makecell{2.09e+4 \\ $\pm$1.24e+4} &
\makecell{8.01e+7 \\ $\pm$8.30e+6} &
\makecell{6.61e+7 \\ $\pm$1.58e+7} &
\underline{\makecell{2.60e+2 \\ $\pm$2.64e+2}} \\

$f_7$ &
\makecell{2.68e+5 \\ $\pm$4.35e+4} &
\makecell{6.70e+4 \\ $\pm$1.85e+4} &
\makecell{2.45e+4 \\ $\pm$4.99e+3} &
\makecell{5.67e+4 \\ $\pm$5.23e+4} &
\underline{\makecell{3.86e+3 \\ $\pm$1.59e+3}} &
\makecell{3.26e+5 \\ $\pm$7.09e+5} &
\makecell{8.11e+3 \\ $\pm$5.56e+3} &
\makecell{6.28e+3 \\ $\pm$4.18e+3} &
\makecell{3.54e+5 \\ $\pm$3.62e+4} &
\makecell{2.59e+5 \\ $\pm$4.79e+4} &
\textbf{\makecell{5.58e+2 \\ $\pm$2.77e+2}} \\

$f_8$ &
\makecell{1.45e+10 \\ $\pm$2.94e+9} &
\makecell{1.23e+9 \\ $\pm$5.32e+8} &
\makecell{1.27e+3 \\ $\pm$2.86e+3} &
\makecell{2.46e+8 \\ $\pm$2.53e+8} &
\makecell{7.85e+3 \\ $\pm$3.26e+4} &
\underline{\makecell{5.63e+2 \\ $\pm$1.57e+2}} &
\makecell{1.29e+7 \\ $\pm$6.97e+7} &
\makecell{1.04e+3 \\ $\pm$1.98e+3} &
\makecell{3.82e+9 \\ $\pm$4.23e+8} &
\makecell{4.02e+9 \\ $\pm$5.45e+8} &
\textbf{\makecell{1.15e+2 \\ $\pm$1.56e+2}} \\

$f_9$ &
\makecell{1.19e+10 \\ $\pm$2.13e+9} &
\makecell{8.86e+8 \\ $\pm$2.27e+8} &
\makecell{1.56e+4 \\ $\pm$1.53e+4} &
\makecell{3.82e+7 \\ $\pm$3.65e+7} &
\makecell{2.34e+4 \\ $\pm$5.40e+4} &
\textbf{\makecell{2.48e+1 \\ $\pm$1.05e+0}} &
\makecell{2.03e+4 \\ $\pm$5.28e+4} &
\makecell{4.36e+4 \\ $\pm$1.42e+5} &
\makecell{1.49e+3 \\ $\pm$2.12e+0} &
\underline{\makecell{1.85e+2 \\ $\pm$1.69e+0}} &
\makecell{2.35e+3 \\ $\pm$5.30e+3} \\

$f_{10}$ &
\makecell{7.47e+8 \\ $\pm$1.56e+8} &
\makecell{1.43e+8 \\ $\pm$6.28e+7} &
\makecell{1.11e+8 \\ $\pm$2.29e+7} &
\makecell{2.51e+8 \\ $\pm$3.13e+8} &
\makecell{1.73e+7 \\ $\pm$1.32e+7} &
\makecell{8.99e+6 \\ $\pm$4.14e+6} &
\makecell{1.54e+7 \\ $\pm$2.64e+7} &
\underline{\makecell{7.42e+6 \\ $\pm$3.12e+6}} &
\makecell{1.81e+9 \\ $\pm$4.37e+8} &
\makecell{6.69e+8 \\ $\pm$3.01e+8} &
\textbf{\makecell{9.72e+5 \\ $\pm$7.38e+5}} \\

$f_{11}$ &
\makecell{1.12e+5 \\ $\pm$1.23e+4} &
\makecell{8.82e+4 \\ $\pm$2.84e+4} &
\makecell{1.04e+5 \\ $\pm$1.57e+4} &
\makecell{9.90e+4 \\ $\pm$2.55e+4} &
\makecell{7.32e+4 \\ $\pm$2.54e+4} &
\makecell{3.55e+4 \\ $\pm$2.96e+4} &
\underline{\makecell{3.23e+4 \\ $\pm$1.04e+4}} &
\makecell{8.57e+4 \\ $\pm$2.88e+4} &
\makecell{8.99e+5 \\ $\pm$1.71e+6} &
\makecell{6.72e+4 \\ $\pm$7.71e+3} &
\textbf{\makecell{2.61e+4 \\ $\pm$1.01e+4}} \\

$f_{12}$ &
\makecell{1.68e+15 \\ $\pm$3.30e+15} &
\makecell{1.38e+11 \\ $\pm$2.12e+11} &
\makecell{1.05e+9 \\ $\pm$4.85e+8} &
\makecell{3.46e+18 \\ $\pm$1.40e+19} &
\makecell{1.52e+9 \\ $\pm$2.24e+9} &
\textbf{\makecell{1.03e+0 \\ $\pm$2.14e+0}} &
\makecell{3.96e+10 \\ $\pm$3.27e+10} &
\makecell{6.45e+8 \\ $\pm$1.79e+9} &
\makecell{8.23e+19 \\ $\pm$7.42e+19} &
\makecell{2.56e+17 \\ $\pm$5.16e+17} &
\underline{\makecell{5.28e+7 \\ $\pm$1.45e+8}} \\

$f_{13}$ &
\makecell{4.34e+4 \\ $\pm$2.36e+3} &
\makecell{2.17e+4 \\ $\pm$2.49e+3} &
\makecell{6.96e+2 \\ $\pm$1.15e+2} &
\makecell{1.51e+4 \\ $\pm$5.51e+3} &
\makecell{5.18e+2 \\ $\pm$4.89e+2} &
\textbf{\makecell{1.09e+0 \\ $\pm$1.45e+0}} &
\makecell{3.57e+3 \\ $\pm$3.05e+3} &
\makecell{7.71e+2 \\ $\pm$1.81e+3} &
\makecell{3.97e+4 \\ $\pm$1.40e+3} &
\makecell{3.78e+4 \\ $\pm$2.41e+3} &
\underline{\makecell{7.28e+1 \\ $\pm$3.07e+1}} \\

$f_{14}$ &
\makecell{3.52e+4 \\ $\pm$5.49e+3} &
\makecell{5.77e+3 \\ $\pm$1.86e+3} &
\makecell{3.76e+3 \\ $\pm$7.40e+2} &
\makecell{5.07e+3 \\ $\pm$6.19e+3} &
\makecell{3.72e+2 \\ $\pm$1.91e+2} &
\underline{\makecell{3.99e+0 \\ $\pm$6.73e+0}} &
\makecell{5.69e+2 \\ $\pm$4.04e+2} &
\makecell{1.94e+2 \\ $\pm$8.73e+1} &
\makecell{9.99e+4 \\ $\pm$2.17e+4} &
\makecell{3.30e+4 \\ $\pm$1.75e+4} &
\textbf{\makecell{3.46e-2 \\ $\pm$3.39e-2}} \\

$f_{18}$ &
\makecell{1.42e+5 \\ $\pm$1.06e+5} &
\makecell{9.43e+2 \\ $\pm$2.09e+2} &
\makecell{6.03e+2 \\ $\pm$6.10e+1} &
\makecell{8.89e+5 \\ $\pm$1.19e+6} &
\makecell{5.61e+2 \\ $\pm$1.26e+2} &
\makecell{3.36e+12 \\ $\pm$2.84e+11} &
\makecell{5.79e+2 \\ $\pm$1.62e+2} &
\underline{\makecell{5.21e+2 \\ $\pm$1.28e+2}} &
\makecell{2.48e+6 \\ $\pm$1.74e+6} &
\makecell{5.18e+5 \\ $\pm$4.19e+5} &
\textbf{\makecell{5.12e+2 \\ $\pm$1.37e+2}} \\

$f_{19}$ &
\makecell{1.01e+6 \\ $\pm$1.89e+5} &
\makecell{7.82e+4 \\ $\pm$1.98e+4} &
\makecell{1.22e+1 \\ $\pm$3.78e+0} &
\makecell{2.92e+3 \\ $\pm$2.84e+3} &
\makecell{2.01e+1 \\ $\pm$1.97e+1} &
\underline{\makecell{6.76e+0 \\ $\pm$7.40e-1}} &
\makecell{1.37e+1 \\ $\pm$8.35e+0} &
\makecell{2.87e+1 \\ $\pm$3.03e+1} &
\makecell{1.30e+3 \\ $\pm$1.27e+0} &
\makecell{9.19e+0 \\ $\pm$5.91e+0} &
\textbf{\makecell{2.48e-1 \\ $\pm$1.11e-3}} \\

$f_{20}$ &
\makecell{1.24e+7 \\ $\pm$2.78e+6} &
\makecell{9.79e+5 \\ $\pm$6.89e+5} &
\makecell{-3.88e+1 \\ $\pm$2.75e+0} &
\makecell{4.04e+3 \\ $\pm$1.62e+4} &
\underline{\makecell{-6.29e+1 \\ $\pm$2.67e+0}} &
\makecell{-1.53e+1 \\ $\pm$9.73e+0} &
\makecell{-3.97e+1 \\ $\pm$4.98e+0} &
\makecell{-5.20e+1 \\ $\pm$5.06e+0} &
\makecell{1.49e+3 \\ $\pm$2.37e+0} &
\makecell{-4.10e+0 \\ $\pm$1.77e+0} &
\textbf{\makecell{-6.57e+1 \\ $\pm$3.80e+0}} \\

$f_{22}$ &
\makecell{8.66e+1 \\ $\pm$0.00e+0} &
\makecell{8.66e+1 \\ $\pm$0.00e+0} &
\makecell{8.66e+1 \\ $\pm$0.00e+0} &
\makecell{8.66e+1 \\ $\pm$0.00e+0} &
\makecell{8.66e+1 \\ $\pm$0.00e+0} &
\makecell{8.66e+1 \\ $\pm$0.00e+0} &
\makecell{8.66e+1 \\ $\pm$0.00e+0} &
\makecell{8.66e+1 \\ $\pm$0.00e+0} &
\underline{\makecell{1.19e+3 \\ $\pm$4.55e-13}} &
\makecell{8.66e+1 \\ $\pm$0.00e+0} &
\textbf{\makecell{8.66e+1 \\ $\pm$0.00e+0}} \\

$f_{23}$ &
\makecell{3.29e+0 \\ $\pm$3.96e-1} &
\makecell{3.23e+0 \\ $\pm$3.82e-1} &
\makecell{3.18e+0 \\ $\pm$4.56e-1} &
\makecell{3.41e+0 \\ $\pm$4.94e-1} &
\makecell{3.32e+0 \\ $\pm$3.92e-1} &
\makecell{3.30e+0 \\ $\pm$3.68e-1} &
\underline{\makecell{2.98e+0 \\ $\pm$3.34e-1}} &
\makecell{3.31e+0 \\ $\pm$5.37e-1} &
\makecell{1.30e+3 \\ $\pm$3.11e-1} &
\makecell{3.34e+0 \\ $\pm$3.70e-1} &
\textbf{\makecell{3.01e-1 \\ $\pm$2.19e-1}} \\

$f_{24}$ &
\makecell{1.46e+5 \\ $\pm$1.37e+4} &
\makecell{3.84e+4 \\ $\pm$7.79e+3} &
\makecell{3.23e+2 \\ $\pm$5.65e+1} &
\makecell{1.22e+4 \\ $\pm$5.97e+3} &
\makecell{5.70e+2 \\ $\pm$8.02e+2} &
\underline{\makecell{2.58e+2 \\ $\pm$1.64e+1}} &
\makecell{3.98e+2 \\ $\pm$7.61e+1} &
\makecell{7.08e+2 \\ $\pm$5.00e+2} &
\makecell{5.35e+4 \\ $\pm$2.34e+3} &
\makecell{6.11e+4 \\ $\pm$2.70e+3} &
\textbf{\makecell{2.44e+2 \\ $\pm$2.37e+1}} \\

\midrule
$-$/$\approx$/$+$ & 15/1/0 & 15/1/0 & 15/1/0 & 15/1/0 & 15/1/0 & 10/1/5 & 15/1/0 & 15/1/0 & 15/0/1 & 14/1/1 & - \\
\bottomrule
\end{tabular*}
\end{table}

\begin{table}[htbp]
\centering
\caption{The comparison results of the baselines on the BBOB suite with $d=100$. All results are reported as the mean and standard deviation (mean $\pm$ std) over 30 independent runs. Symbols ``$-$", ``$\approx$", and ``$+$" imply that the corresponding baseline is significantly worse, similar, and better than ABOM on the Wilcoxon rank-sum test with 95\% confidence level, respectively. The best results are indicated in \textbf{bold}, and the suboptimal results are \underline{underlined}.}
\label{tab:bbob100d_results}
\tiny
\setlength{\tabcolsep}{0pt}
\begin{tabular*}{\textwidth}{@{\extracolsep{\fill}} c *{11}{c} @{}}
\toprule
& \multicolumn{3}{c}{\textbf{Traditional BBO}} & \multicolumn{3}{c}{\textbf{Adaptive Variants}} & \multicolumn{4}{c}{\textbf{MetaBBO}} & \textbf{Ours} \\
\cmidrule(lr){2-4} \cmidrule(lr){5-7} \cmidrule(lr){8-11} \cmidrule(lr){12-12}
ID & RS & PSO & DE & SAHLPSO & JDE21 & CMAES & GLEET & RLDEAFL & LES & GLHF & ABOM \\
\midrule

$f_4$ &
\makecell{9.64e+6 \\ $\pm$1.21e+6} &
\makecell{1.83e+6 \\ $\pm$4.72e+5} &
\makecell{7.23e+5 \\ $\pm$7.94e+4} &
\makecell{2.49e+6 \\ $\pm$2.14e+6} &
\makecell{2.82e+5 \\ $\pm$5.83e+4} &
\textbf{\makecell{3.72e+3 \\ $\pm$9.84e+2}} &
\makecell{3.77e+5 \\ $\pm$1.74e+5} &
\makecell{2.89e+5 \\ $\pm$8.20e+4} &
\makecell{1.14e+7 \\ $\pm$8.00e+5} &
\makecell{9.66e+6 \\ $\pm$1.47e+6} &
\underline{\makecell{9.72e+3 \\ $\pm$5.28e+3}} \\

$f_6$ &
\makecell{6.11e+8 \\ $\pm$3.81e+7} &
\makecell{4.76e+7 \\ $\pm$2.94e+7} &
\makecell{1.03e+6 \\ $\pm$4.92e+5} &
\makecell{4.50e+7 \\ $\pm$3.58e+7} &
\makecell{7.16e+5 \\ $\pm$1.07e+6} &
\textbf{\makecell{2.15e+4 \\ $\pm$4.87e+3}} &
\makecell{4.27e+5 \\ $\pm$9.26e+5} &
\makecell{1.96e+5 \\ $\pm$9.09e+4} &
\makecell{5.71e+8 \\ $\pm$1.63e+7} &
\makecell{5.45e+8 \\ $\pm$2.78e+7} &
\underline{\makecell{7.92e+4 \\ $\pm$3.49e+4}} \\

$f_7$ &
\makecell{1.75e+6 \\ $\pm$9.77e+4} &
\makecell{6.23e+5 \\ $\pm$6.59e+4} &
\makecell{4.94e+5 \\ $\pm$3.66e+4} &
\makecell{4.76e+5 \\ $\pm$2.10e+5} &
\makecell{9.32e+4 \\ $\pm$2.85e+4} &
\underline{\makecell{8.81e+3 \\ $\pm$2.97e+3}} &
\makecell{7.92e+4 \\ $\pm$2.07e+4} &
\makecell{1.17e+5 \\ $\pm$2.87e+4} &
\makecell{1.13e+6 \\ $\pm$6.30e+4} &
\makecell{1.08e+6 \\ $\pm$8.55e+4} &
\textbf{\makecell{6.23e+3 \\ $\pm$1.50e+3}} \\

$f_8$ &
\makecell{4.33e+11 \\ $\pm$3.31e+10} &
\makecell{8.71e+10 \\ $\pm$1.59e+10} &
\makecell{2.67e+8 \\ $\pm$5.05e+7} &
\makecell{2.65e+10 \\ $\pm$1.53e+10} &
\makecell{4.60e+9 \\ $\pm$4.18e+9} &
\underline{\makecell{5.16e+3 \\ $\pm$5.73e+3}} &
\makecell{1.41e+8 \\ $\pm$1.39e+8} &
\makecell{3.79e+9 \\ $\pm$3.66e+9} &
\makecell{5.24e+10 \\ $\pm$2.10e+9} &
\makecell{5.30e+10 \\ $\pm$2.68e+9} &
\textbf{\makecell{3.84e+3 \\ $\pm$5.03e+3}} \\

$f_9$ &
\makecell{2.87e+11 \\ $\pm$2.82e+10} &
\makecell{7.21e+10 \\ $\pm$1.21e+10} &
\makecell{1.53e+9 \\ $\pm$4.45e+8} &
\makecell{1.34e+9 \\ $\pm$5.55e+8} &
\makecell{4.97e+8 \\ $\pm$4.42e+8} &
\makecell{5.83e+4 \\ $\pm$1.89e+4} &
\makecell{6.37e+7 \\ $\pm$2.21e+7} &
\makecell{3.41e+8 \\ $\pm$2.71e+8} &
\underline{\makecell{2.96e+3 \\ $\pm$2.16e+1}} &
\textbf{\makecell{6.43e+2 \\ $\pm$4.10e-1}} &
\makecell{1.13e+5 \\ $\pm$2.79e+5} \\

$f_{10}$ &
\makecell{7.88e+9 \\ $\pm$7.69e+8} &
\makecell{1.97e+9 \\ $\pm$3.14e+8} &
\makecell{2.30e+9 \\ $\pm$2.58e+8} &
\makecell{1.88e+9 \\ $\pm$9.32e+8} &
\makecell{4.37e+8 \\ $\pm$1.37e+8} &
\underline{\makecell{1.20e+8 \\ $\pm$3.66e+7}} &
\makecell{2.55e+8 \\ $\pm$8.80e+7} &
\makecell{2.07e+8 \\ $\pm$8.93e+7} &
\makecell{6.53e+9 \\ $\pm$5.19e+8} &
\makecell{5.63e+9 \\ $\pm$8.08e+8} &
\textbf{\makecell{6.65e+7 \\ $\pm$1.48e+7}} \\

$f_{11}$ &
\makecell{4.30e+5 \\ $\pm$2.62e+4} &
\makecell{4.12e+5 \\ $\pm$5.51e+4} &
\makecell{4.35e+5 \\ $\pm$3.31e+4} &
\makecell{2.99e+5 \\ $\pm$5.28e+4} &
\makecell{3.78e+5 \\ $\pm$4.03e+4} &
\makecell{9.10e+5 \\ $\pm$6.26e+5} &
\textbf{\makecell{1.97e+5 \\ $\pm$3.31e+4}} &
\underline{\makecell{2.02e+5 \\ $\pm$4.91e+4}} &
\makecell{1.15e+6 \\ $\pm$1.40e+6} &
\makecell{2.07e+5 \\ $\pm$8.84e+3} &
\makecell{3.48e+5 \\ $\pm$6.24e+4} \\

$f_{12}$ &
\makecell{1.84e+22 \\ $\pm$2.12e+22} &
\makecell{7.09e+15 \\ $\pm$3.37e+16} &
\makecell{6.78e+13 \\ $\pm$6.93e+13} &
\makecell{2.77e+22 \\ $\pm$9.39e+22} &
\makecell{5.29e+12 \\ $\pm$8.04e+12} &
\textbf{\makecell{8.76e+8 \\ $\pm$5.66e+8}} &
\makecell{2.84e+11 \\ $\pm$1.10e+11} &
\makecell{4.77e+11 \\ $\pm$6.81e+11} &
\makecell{6.99e+22 \\ $\pm$5.86e+22} &
\makecell{3.75e+21 \\ $\pm$9.78e+21} &
\underline{\makecell{3.97e+9 \\ $\pm$1.31e+10}} \\

$f_{13}$ &
\makecell{1.08e+5 \\ $\pm$2.50e+3} &
\makecell{6.70e+4 \\ $\pm$4.61e+3} &
\makecell{2.86e+4 \\ $\pm$1.90e+3} &
\makecell{5.28e+4 \\ $\pm$1.10e+4} &
\makecell{2.32e+4 \\ $\pm$6.09e+3} &
\textbf{\makecell{1.09e+2 \\ $\pm$2.35e+1}} &
\makecell{1.69e+4 \\ $\pm$3.85e+3} &
\makecell{2.31e+4 \\ $\pm$4.78e+3} &
\makecell{8.37e+4 \\ $\pm$1.15e+3} &
\makecell{8.14e+4 \\ $\pm$1.57e+3} &
\underline{\makecell{3.65e+2 \\ $\pm$1.14e+2}} \\

$f_{14}$ &
\makecell{3.10e+5 \\ $\pm$3.56e+4} &
\makecell{4.73e+4 \\ $\pm$9.85e+3} &
\makecell{6.79e+4 \\ $\pm$6.41e+3} &
\makecell{5.84e+4 \\ $\pm$5.22e+4} &
\makecell{8.14e+3 \\ $\pm$2.57e+3} &
\underline{\makecell{1.53e+3 \\ $\pm$3.75e+2}} &
\makecell{4.62e+3 \\ $\pm$1.97e+3} &
\makecell{4.89e+3 \\ $\pm$1.58e+3} &
\makecell{2.60e+5 \\ $\pm$2.62e+4} &
\makecell{2.16e+5 \\ $\pm$3.70e+4} &
\textbf{\makecell{3.33e+2 \\ $\pm$1.35e+2}} \\

$f_{18}$ &
\makecell{5.81e+7 \\ $\pm$2.84e+7} &
\makecell{6.42e+3 \\ $\pm$8.93e+3} &
\makecell{1.59e+4 \\ $\pm$8.45e+3} &
\makecell{7.27e+7 \\ $\pm$3.12e+8} &
\makecell{1.04e+4 \\ $\pm$6.34e+3} &
\underline{\makecell{2.82e+3 \\ $\pm$3.60e+3}} &
\makecell{4.63e+3 \\ $\pm$6.79e+3} &
\makecell{2.68e+4 \\ $\pm$4.78e+4} &
\makecell{2.49e+7 \\ $\pm$1.01e+7} &
\makecell{1.12e+7 \\ $\pm$6.30e+6} &
\textbf{\makecell{2.46e+2 \\ $\pm$4.41e+1}} \\

$f_{19}$ &
\makecell{7.48e+6 \\ $\pm$5.96e+5} &
\makecell{1.77e+6 \\ $\pm$2.80e+5} &
\makecell{3.42e+4 \\ $\pm$7.96e+3} &
\makecell{2.87e+4 \\ $\pm$1.42e+4} &
\makecell{1.06e+4 \\ $\pm$8.60e+3} &
\makecell{2.93e+1 \\ $\pm$2.54e+0} &
\makecell{1.69e+3 \\ $\pm$7.41e+2} &
\makecell{6.34e+3 \\ $\pm$6.13e+3} &
\makecell{5.06e+2 \\ $\pm$1.65e+0} &
\underline{\makecell{1.54e+1 \\ $\pm$1.16e+1}} &
\textbf{\makecell{2.50e-1 \\ $\pm$7.63e-5}} \\

$f_{20}$ &
\makecell{1.16e+8 \\ $\pm$5.88e+6} &
\makecell{3.44e+7 \\ $\pm$6.40e+6} &
\makecell{3.05e+4 \\ $\pm$3.04e+4} &
\makecell{2.33e+5 \\ $\pm$2.88e+5} &
\makecell{7.12e+4 \\ $\pm$1.44e+5} &
\makecell{-3.04e+0 \\ $\pm$1.72e+0} &
\underline{\makecell{-1.92e+1 \\ $\pm$5.28e+0}} &
\makecell{7.93e+4 \\ $\pm$1.51e+5} &
\makecell{2.10e+3 \\ $\pm$8.78e-1} &
\makecell{9.08e-1 \\ $\pm$5.31e-1} &
\textbf{\makecell{-5.60e+1 \\ $\pm$3.36e+0}} \\

$f_{22}$ &
\makecell{8.66e+1 \\ $\pm$0.00e+0} &
\makecell{8.66e+1 \\ $\pm$0.00e+0} &
\makecell{8.66e+1 \\ $\pm$0.00e+0} &
\makecell{8.66e+1 \\ $\pm$0.00e+0} &
\makecell{8.66e+1 \\ $\pm$0.00e+0} &
\makecell{8.66e+1 \\ $\pm$0.00e+0} &
\makecell{8.66e+1 \\ $\pm$0.00e+0} &
\makecell{8.66e+1 \\ $\pm$0.00e+0} &
\underline{\makecell{1.29e+3 \\ $\pm$4.55e-13}} &
\makecell{8.66e+1 \\ $\pm$0.00e+0} &
\textbf{\makecell{8.66e+1 \\ $\pm$0.00e+0}} \\

$f_{23}$ &
\makecell{4.85e+0 \\ $\pm$3.86e-1} &
\makecell{4.87e+0 \\ $\pm$2.39e-1} &
\makecell{4.90e+0 \\ $\pm$3.87e-1} &
\makecell{4.80e+0 \\ $\pm$4.32e-1} &
\makecell{4.85e+0 \\ $\pm$2.27e-1} &
\makecell{7.11e+0 \\ $\pm$6.07e-1} &
\makecell{4.84e+0 \\ $\pm$3.44e-1} &
\underline{\makecell{4.66e+0 \\ $\pm$6.00e-1}} &
\makecell{2.11e+3 \\ $\pm$2.98e-1} &
\makecell{4.88e+0 \\ $\pm$3.51e-1} &
\textbf{\makecell{4.61e+0 \\ $\pm$4.11e-1}} \\

$f_{24}$ &
\makecell{9.01e+5 \\ $\pm$3.01e+4} &
\makecell{4.31e+5 \\ $\pm$3.66e+4} &
\makecell{2.03e+4 \\ $\pm$1.26e+4} &
\makecell{1.58e+5 \\ $\pm$5.12e+4} &
\makecell{3.36e+4 \\ $\pm$1.53e+4} &
\underline{\makecell{1.21e+3 \\ $\pm$7.02e+1}} &
\makecell{9.89e+3 \\ $\pm$1.84e+3} &
\makecell{3.66e+4 \\ $\pm$1.64e+4} &
\makecell{2.26e+5 \\ $\pm$3.33e+3} &
\makecell{2.35e+5 \\ $\pm$3.69e+3} &
\textbf{\makecell{1.05e+3 \\ $\pm$3.56e+1}} \\

\midrule
% $-$/$\approx$/$+$ & 45/2/0 & 45/3/0 & 45/3/0 & 45/3/0 & 45/3/0 & 36/3/9 & 44/3/1 & 44/3/1 & 46/0/2 & 42/3/3 & - \\

$-$/$\approx$/$+$ & 14/2/0 & 14/2/0 & 14/2/0 & 13/2/1 & 14/2/0 & 9/2/5 & 13/2/1 & 13/2/1 & 15/0/1 & 12/2/2 & - \\

% $-$/$\approx$/$+$ & 15/1/0 & 15/1/0 & 15/1/0 & 15/1/0 & 15/1/0 & 10/1/5 & 15/1/0 & 15/1/0 & 15/0/1 & 14/1/1 & - \\
\bottomrule
\end{tabular*}
\end{table}

\begin{figure}[t]
    \centering
    % 子图1: d=30
    \begin{subfigure}[t]{0.8\textwidth}
        \centering
        \includegraphics[width=\linewidth]{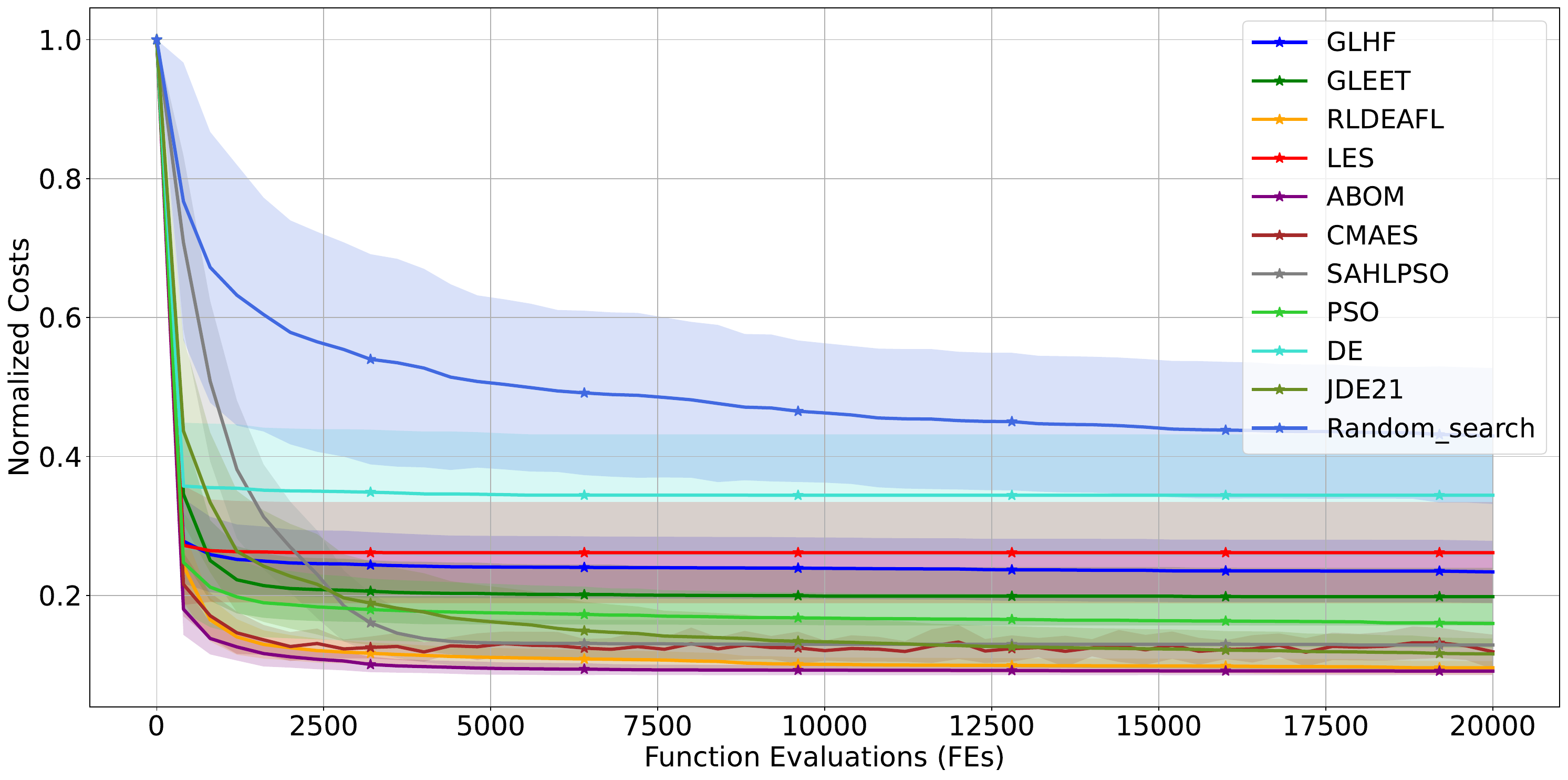}
        \caption{$d = 30$}
        \label{fig:bbob_conv_30}
    \end{subfigure}
    % \hfill
    % 子图2: d=100
    \begin{subfigure}[t]{0.8\textwidth}
        \centering
        \includegraphics[width=\linewidth]{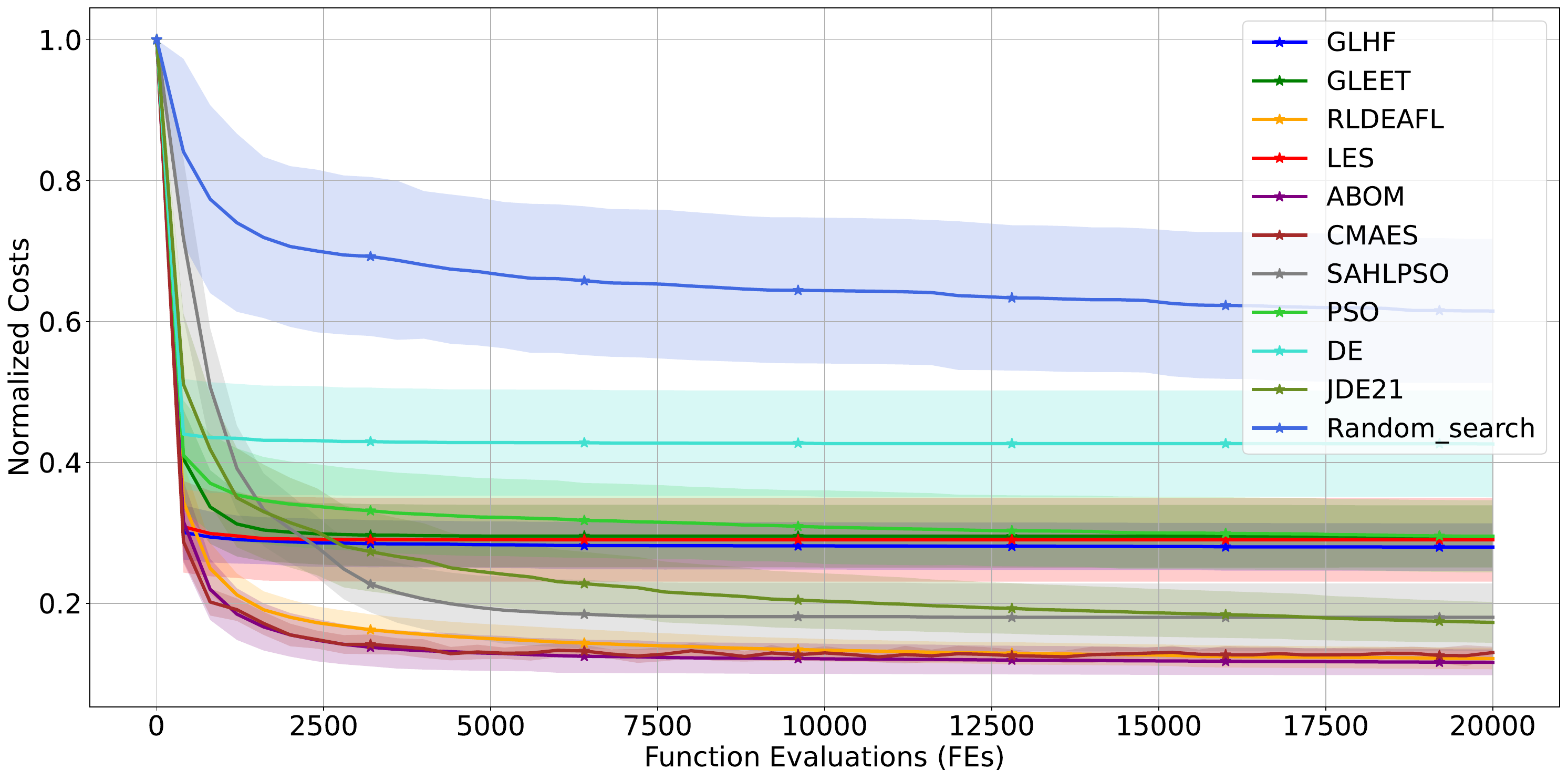}
        \caption{$d = 100$}
        \label{fig:bbob_conv_100}
    \end{subfigure}
    % \hfill
    % 子图3: d=500
    \begin{subfigure}[t]{0.8\textwidth}
        \centering
        \includegraphics[width=\linewidth]{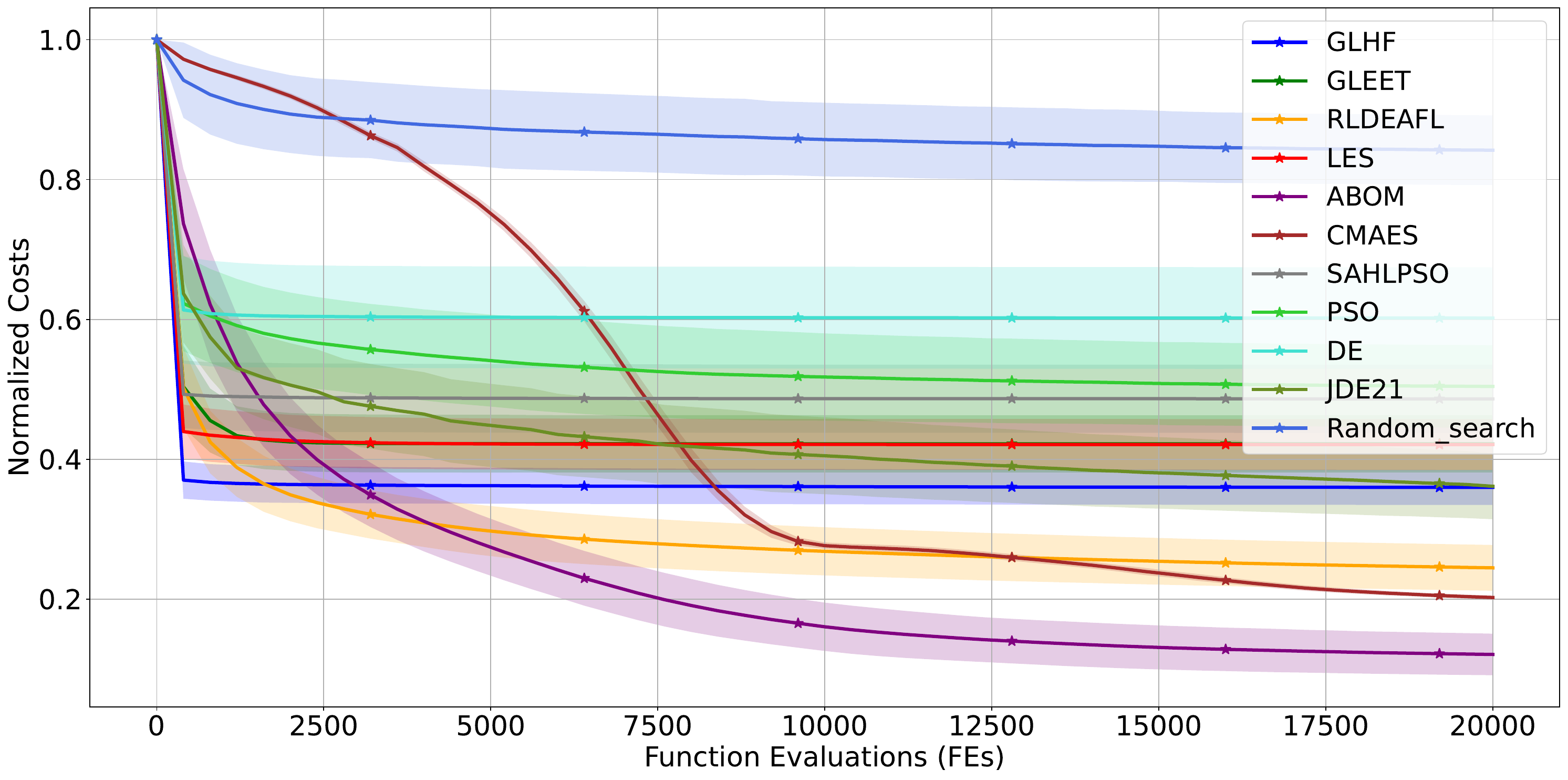}
        \caption{$d = 500$}
        \label{fig:bbob_conv_500}
    \end{subfigure}
    \caption{
        Convergence curves of the average normalized cost across all test functions in the BBOB suite, with $d = 30/100/500$, over 30 independent runs. 
        The cost values are min-max normalized per function to ensure comparability. 
        Each subplot displays the performance trend, highlighting the algorithm's scalability as the dimensionality increases.
    }
        \label{fig:bbob_conv_part1}
\end{figure}

\begin{figure*}[htbp]
\centering
\begin{subfigure}{0.47\textwidth}
  \centering
  \includegraphics[width=\linewidth]{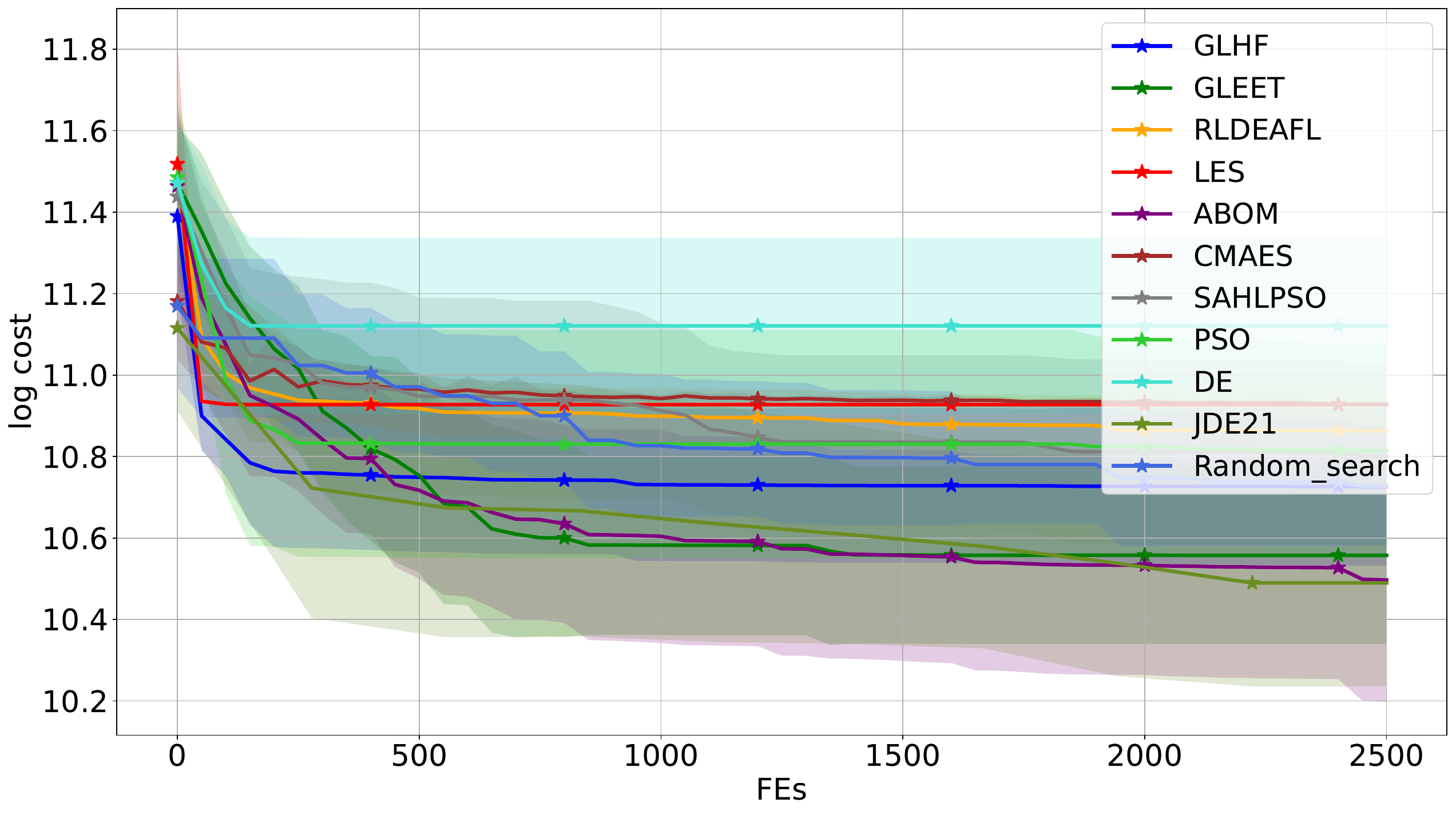}
  \caption{Terrain 2}
\end{subfigure}
\hfill
\begin{subfigure}{0.47\textwidth}
  \centering
  \includegraphics[width=\linewidth]{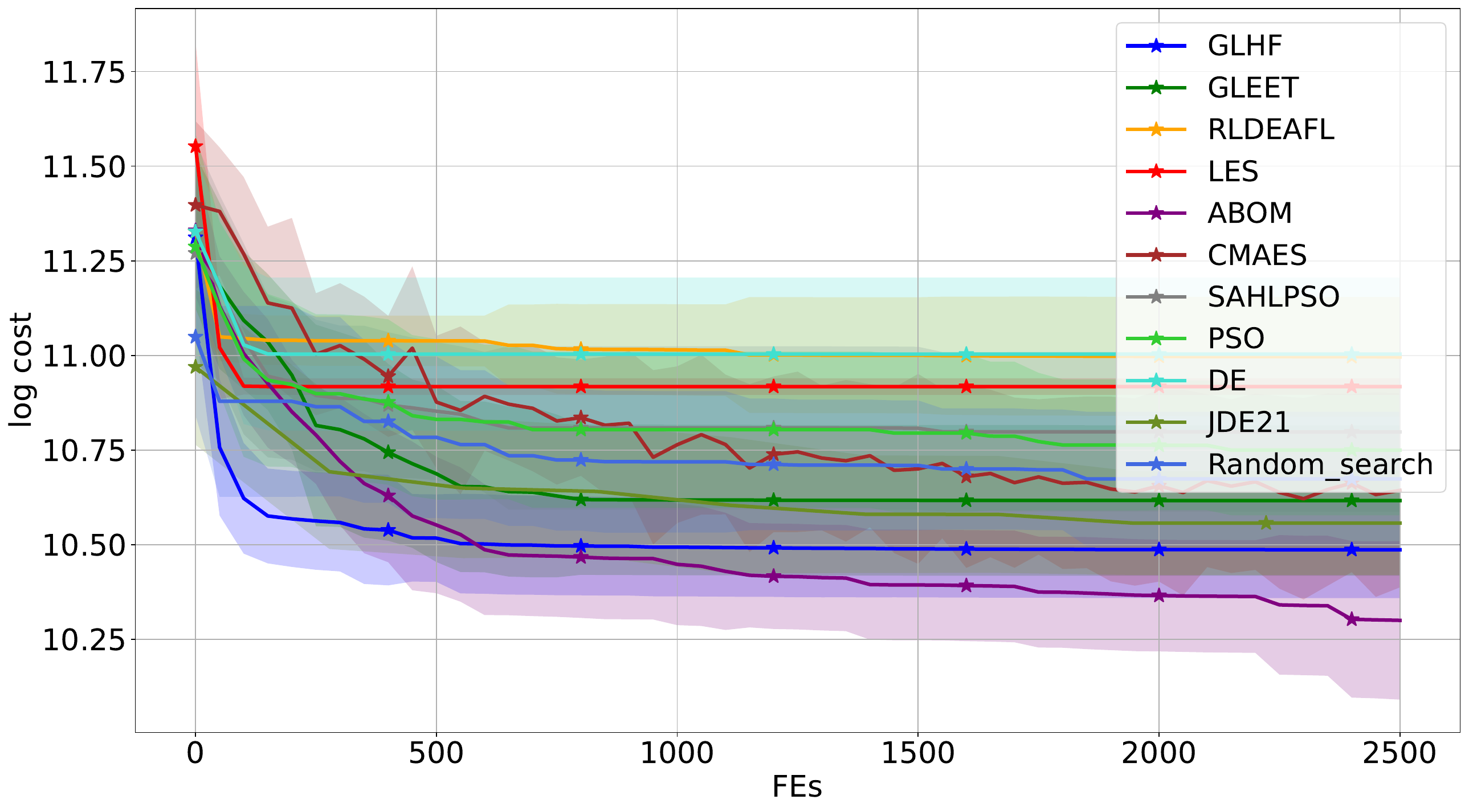}
  \caption{Terrain 4}
\end{subfigure}

\begin{subfigure}{0.47\textwidth}
  \centering
  \includegraphics[width=\linewidth]{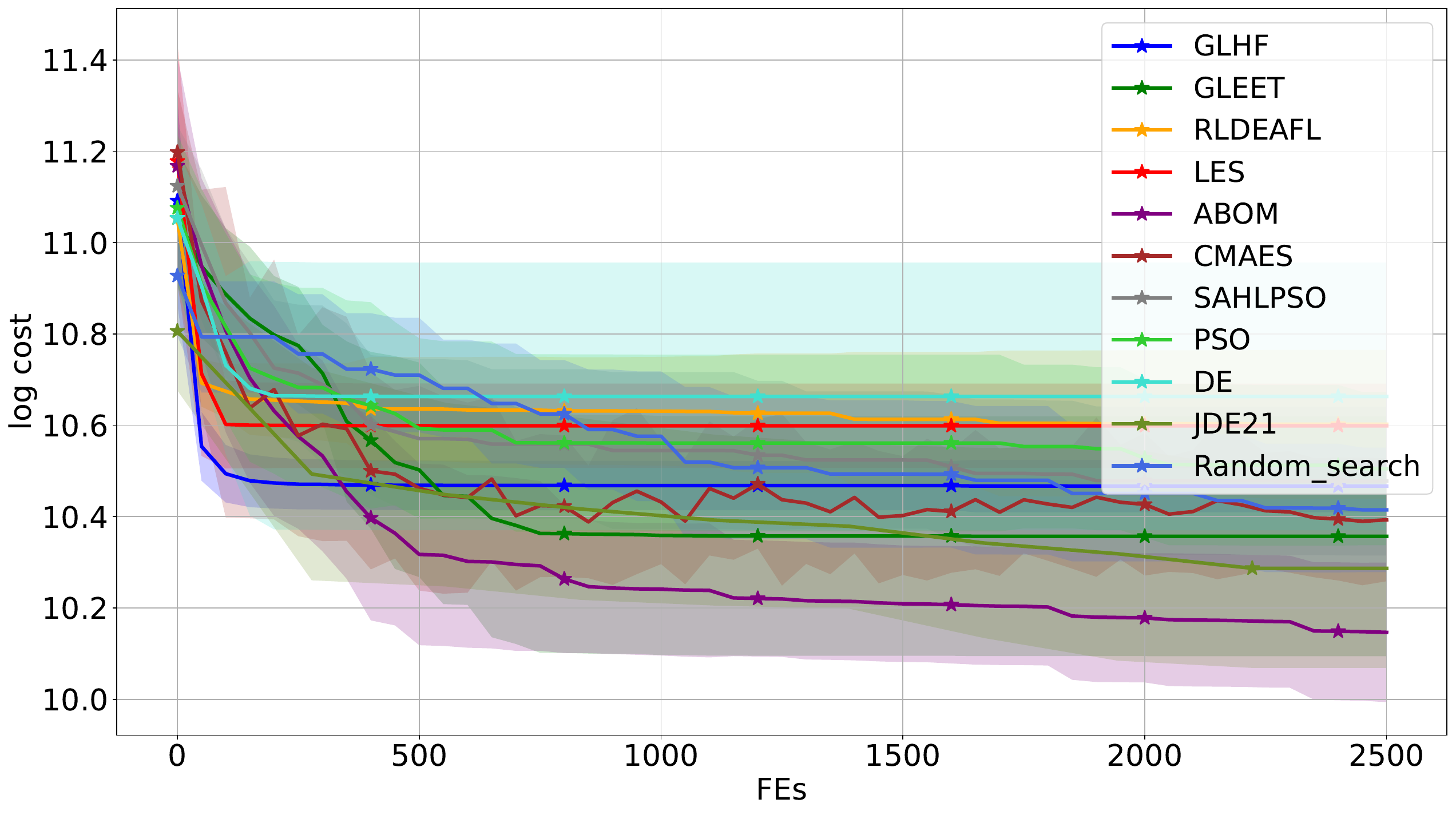}
  \caption{Terrain 6}
\end{subfigure}
\hfill
\begin{subfigure}{0.47\textwidth}
  \centering
  \includegraphics[width=\linewidth]{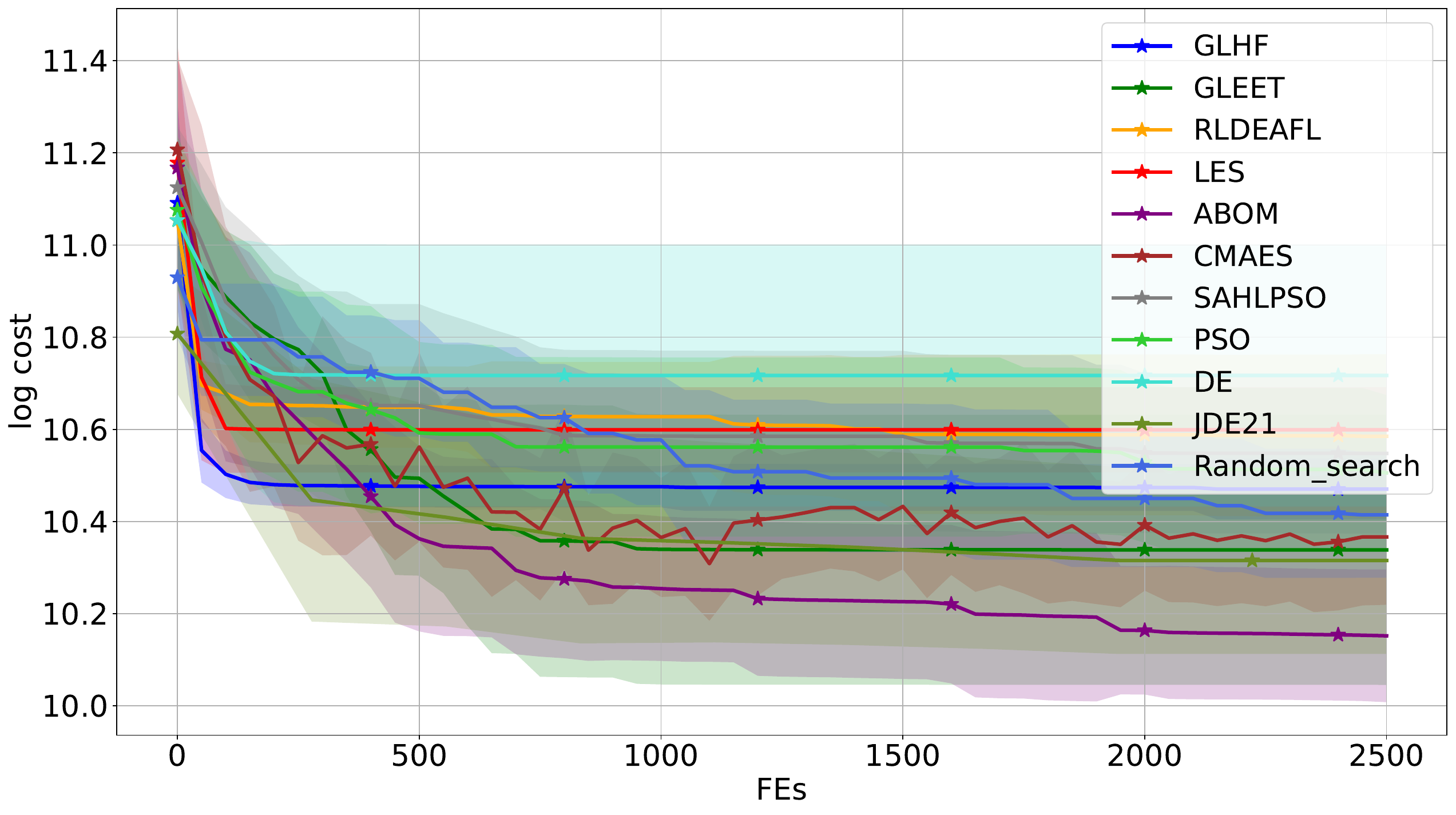}
  \caption{Terrain 8}
\end{subfigure}

\begin{subfigure}{0.47\textwidth}
  \centering
  \includegraphics[width=\linewidth]{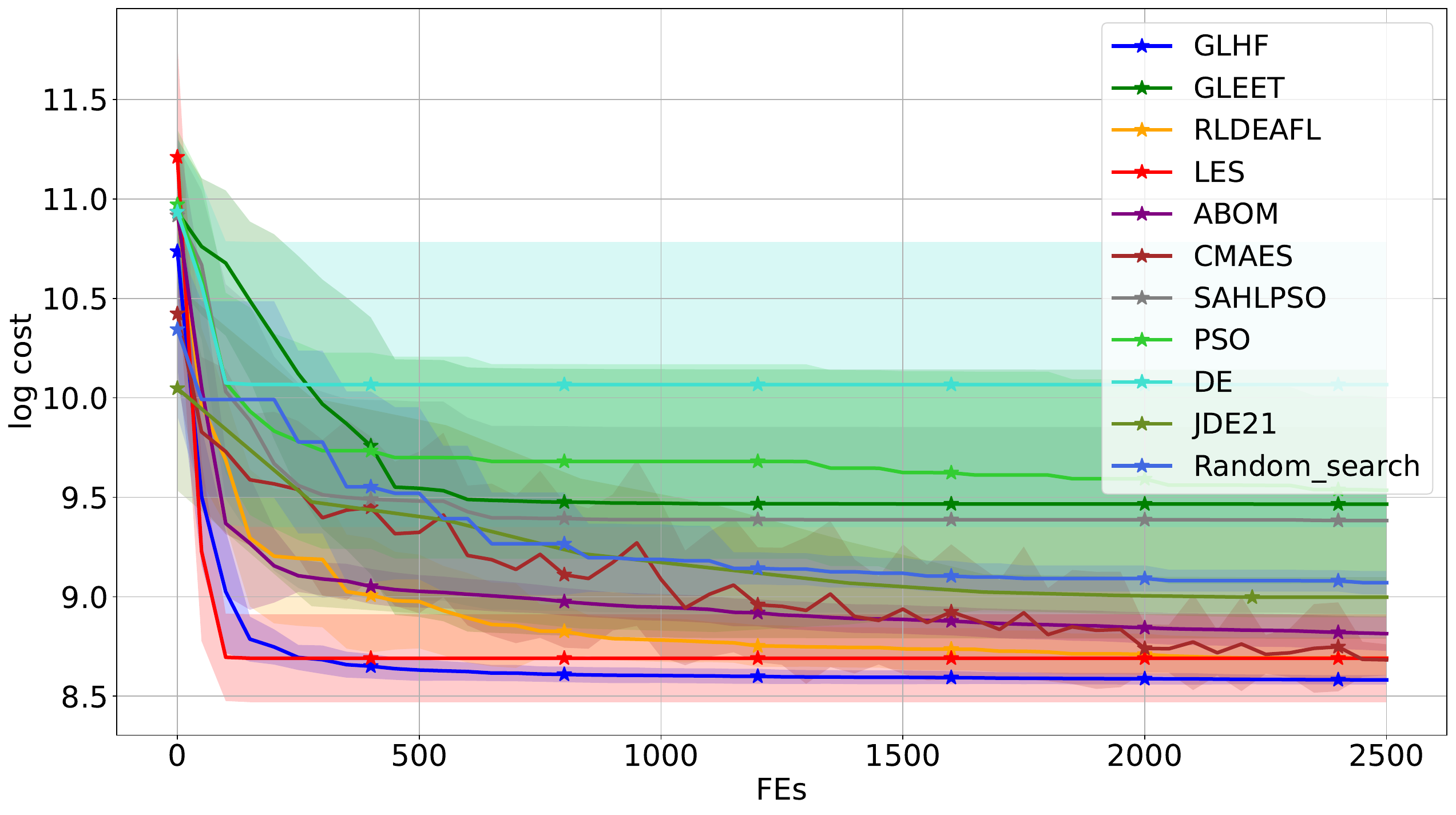}
  \caption{Terrain 10}
\end{subfigure}
\hfill
\begin{subfigure}{0.47\textwidth}
  \centering
  \includegraphics[width=\linewidth]{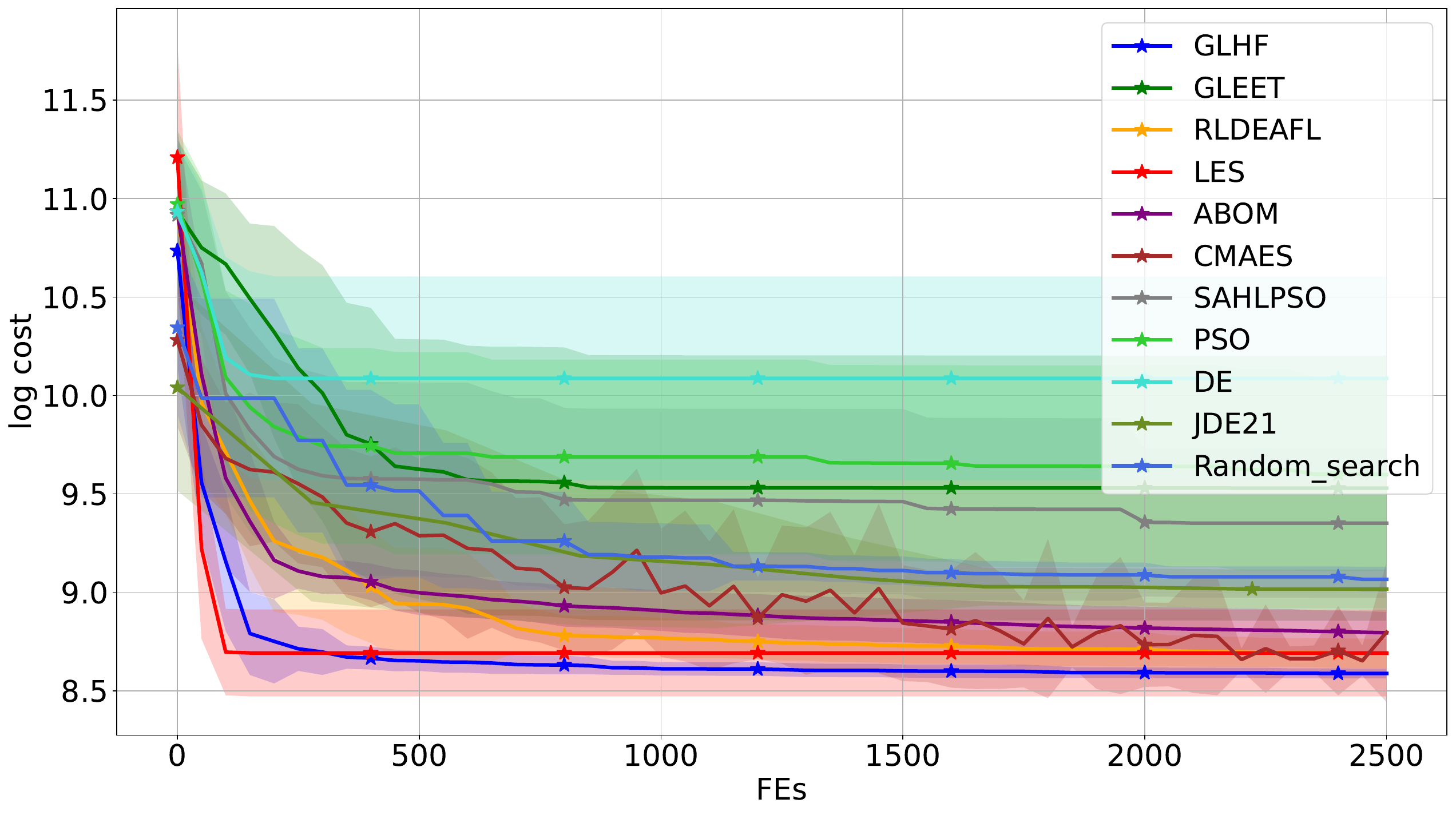}
  \caption{Terrain 12}
\end{subfigure}

\begin{subfigure}{0.47\textwidth}
  \centering
  \includegraphics[width=\linewidth]{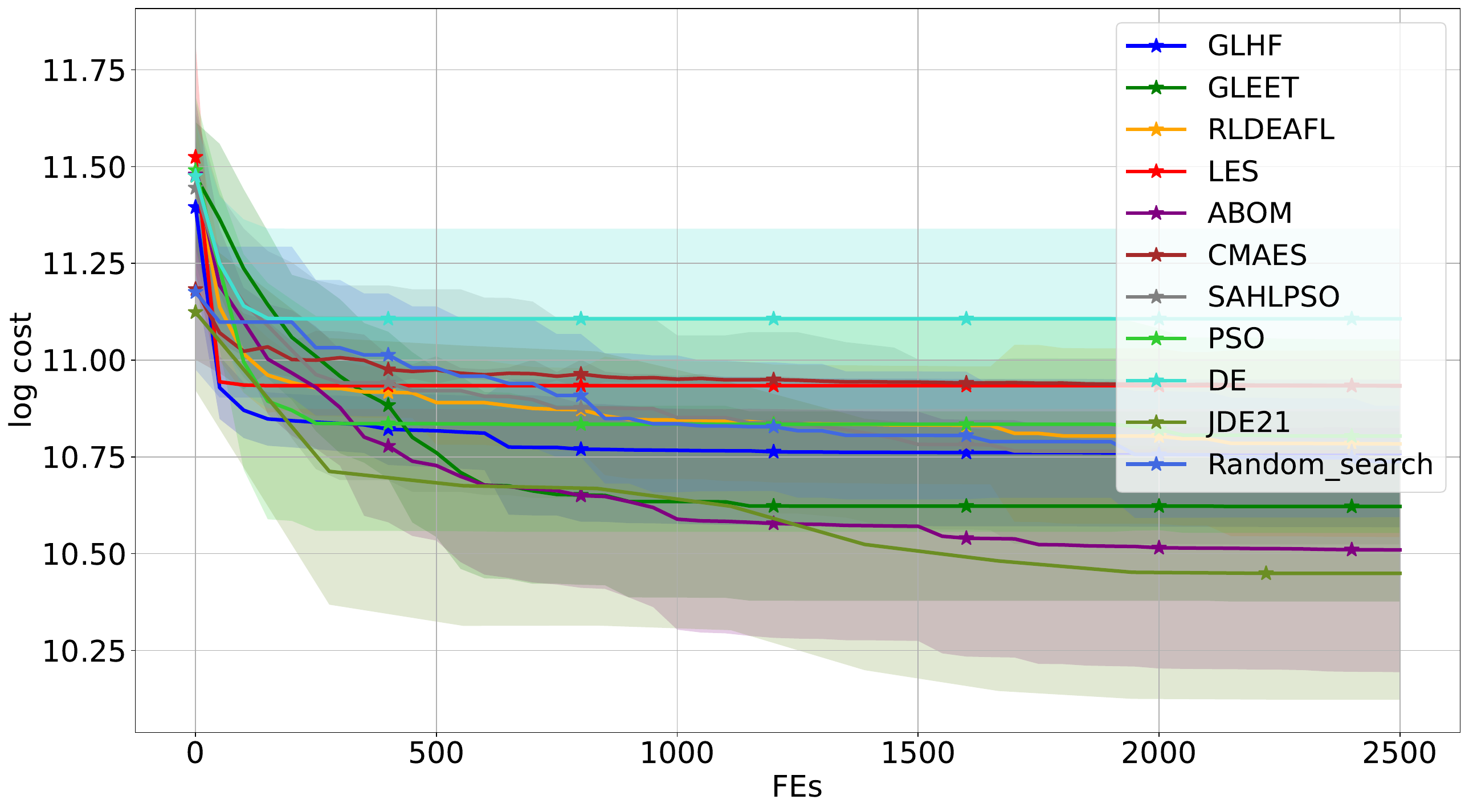}
  \caption{Terrain 14}
\end{subfigure}
\hfill
\begin{subfigure}{0.47\textwidth}
  \centering
  \includegraphics[width=\linewidth]{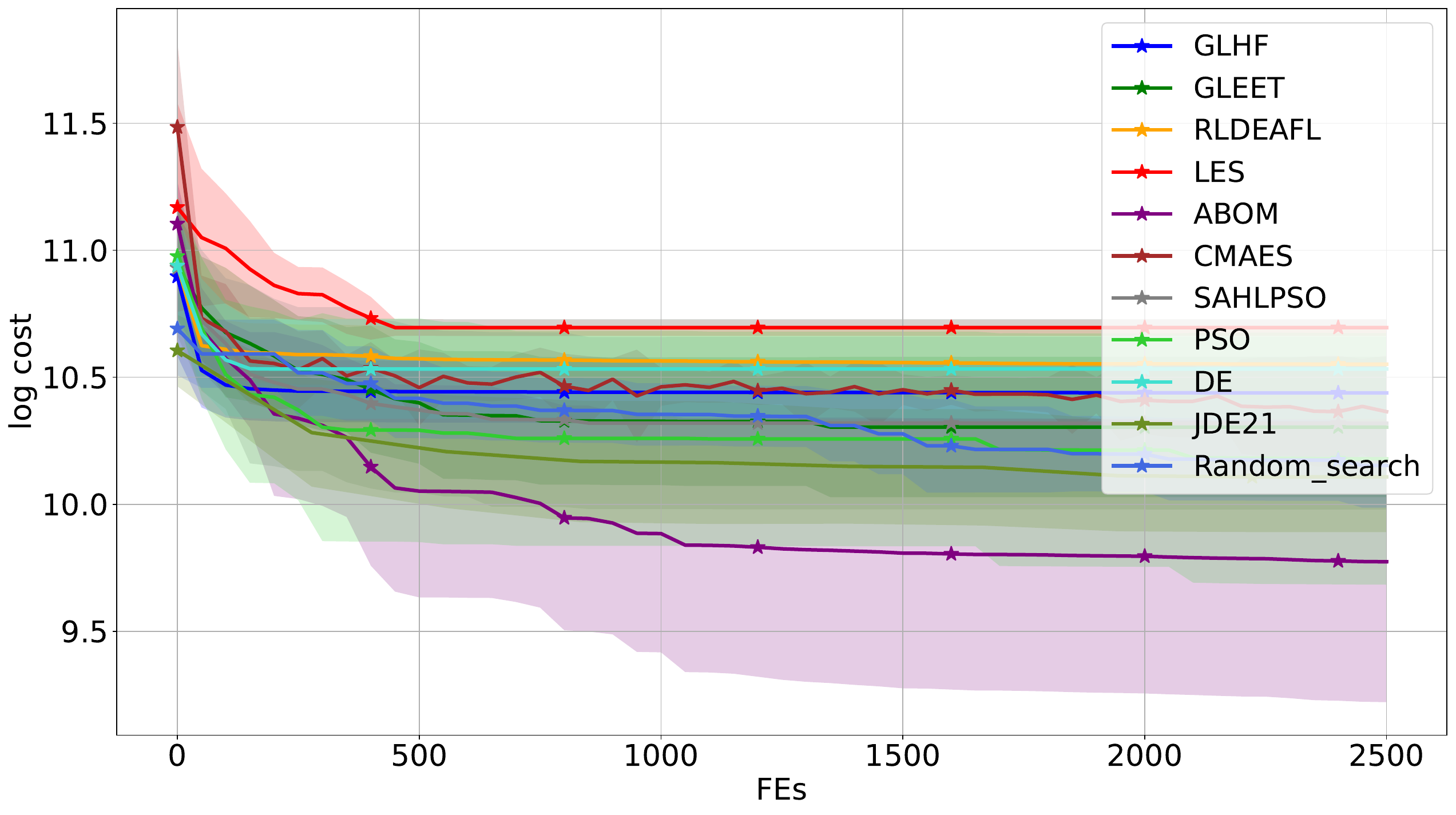}
  \caption{Terrain 16}
\end{subfigure}
\caption{Convergence curves of cost (log scale) for UAV problems (Terrain 2 to 16).}
\label{fig:s1}
\end{figure*}

\begin{figure*}[htbp]
\centering
\begin{subfigure}{0.47\textwidth}
  \centering
  \includegraphics[width=\linewidth]{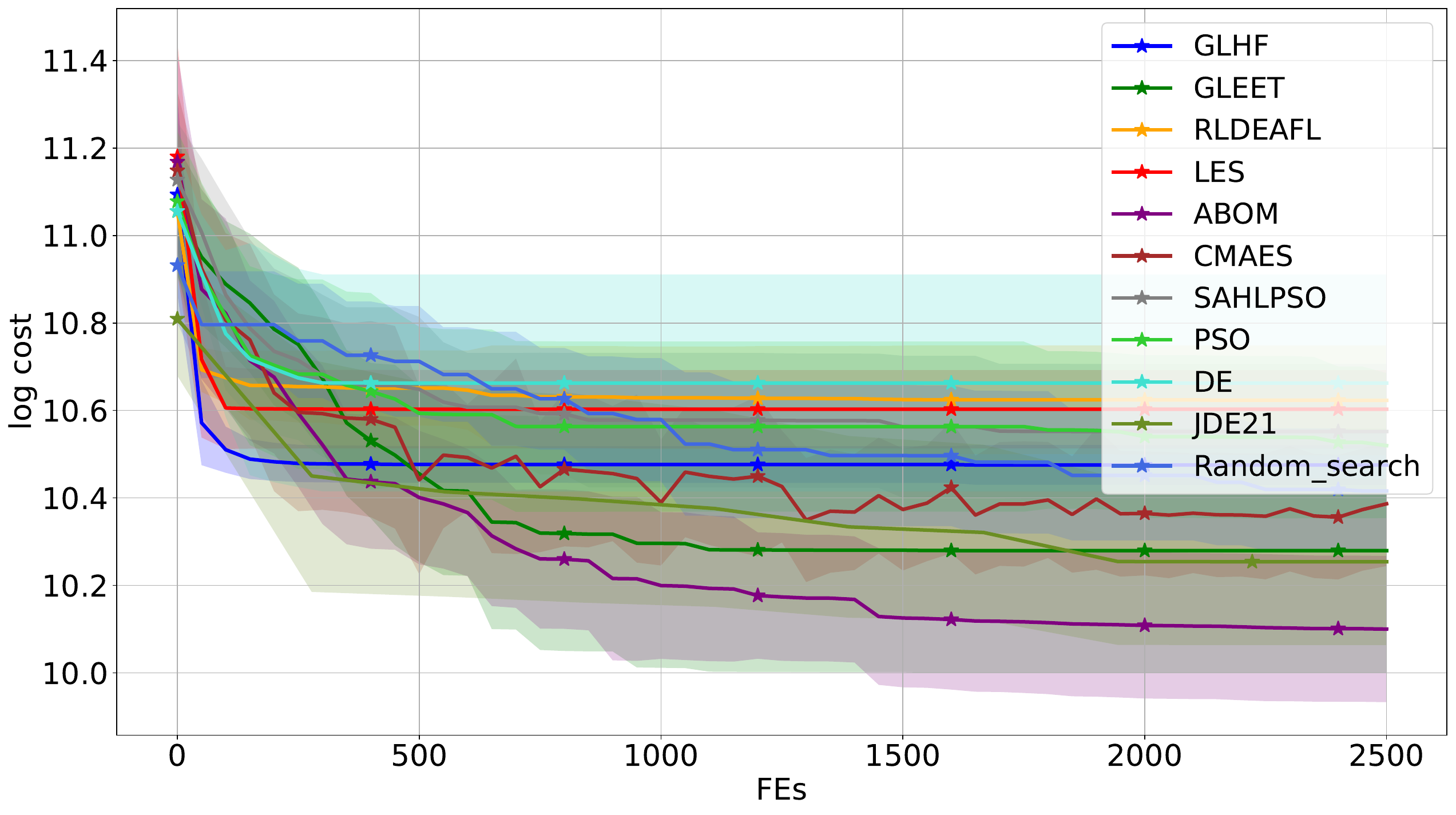}
  \caption{Terrain 18}
\end{subfigure}
\hfill
\begin{subfigure}{0.47\textwidth}
  \centering
  \includegraphics[width=\linewidth]{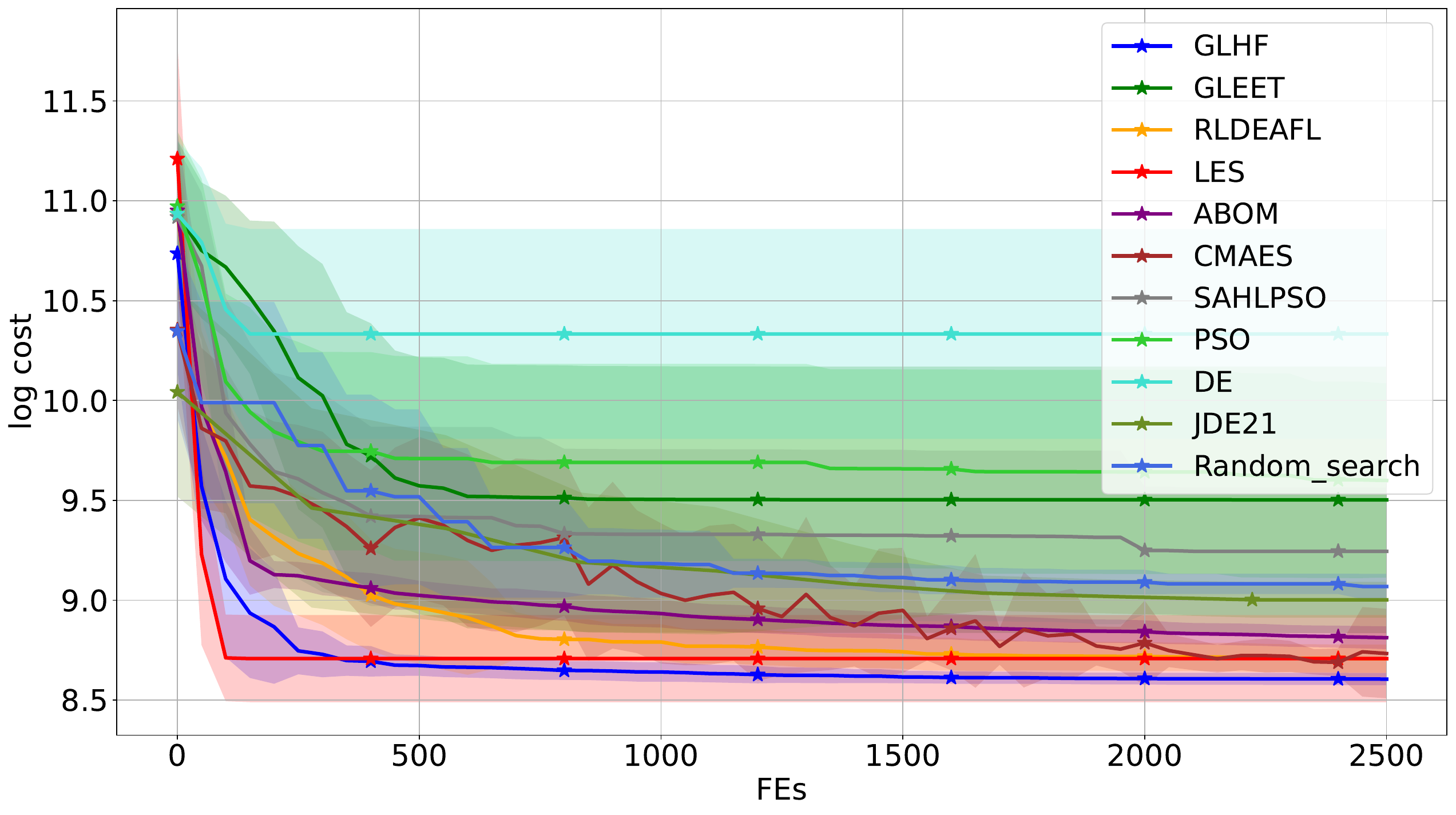}
  \caption{Terrain 20}
\end{subfigure}

\begin{subfigure}{0.47\textwidth}
  \centering
  \includegraphics[width=\linewidth]{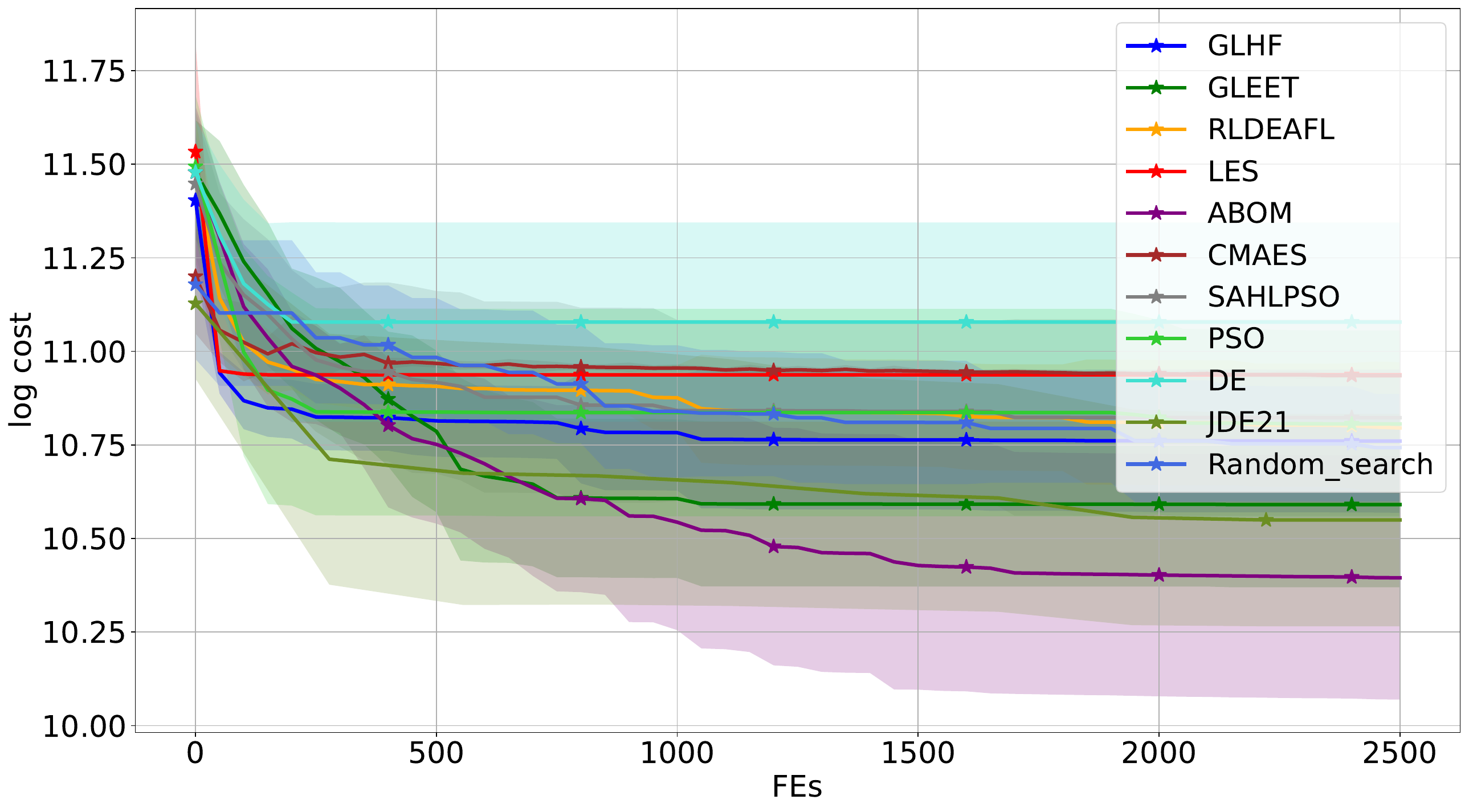}
  \caption{Terrain 22}
\end{subfigure}
\hfill
\begin{subfigure}{0.47\textwidth}
  \centering
  \includegraphics[width=\linewidth]{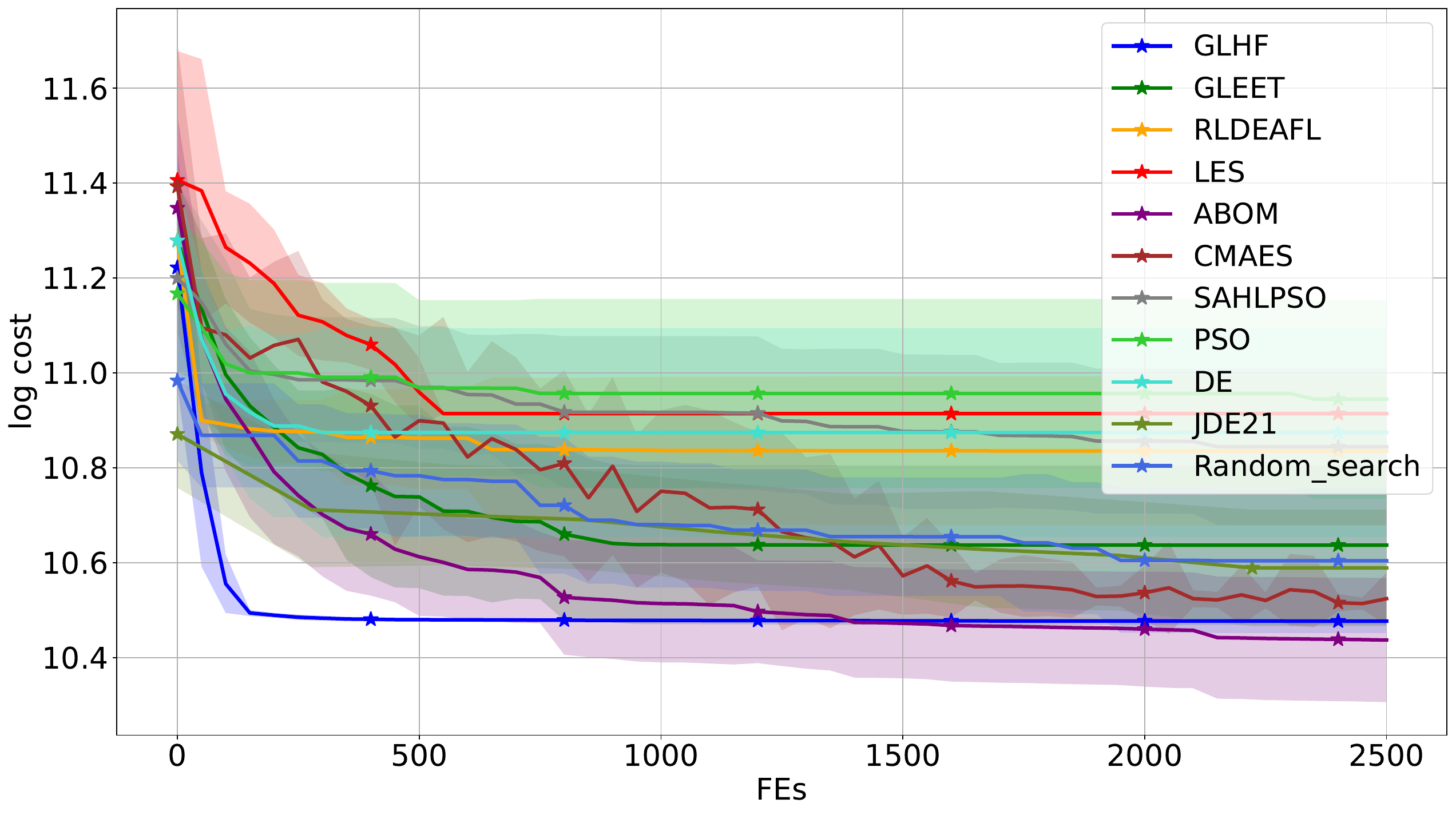}
  \caption{Terrain 24}
\end{subfigure}

\begin{subfigure}{0.47\textwidth}
  \centering
  \includegraphics[width=\linewidth]{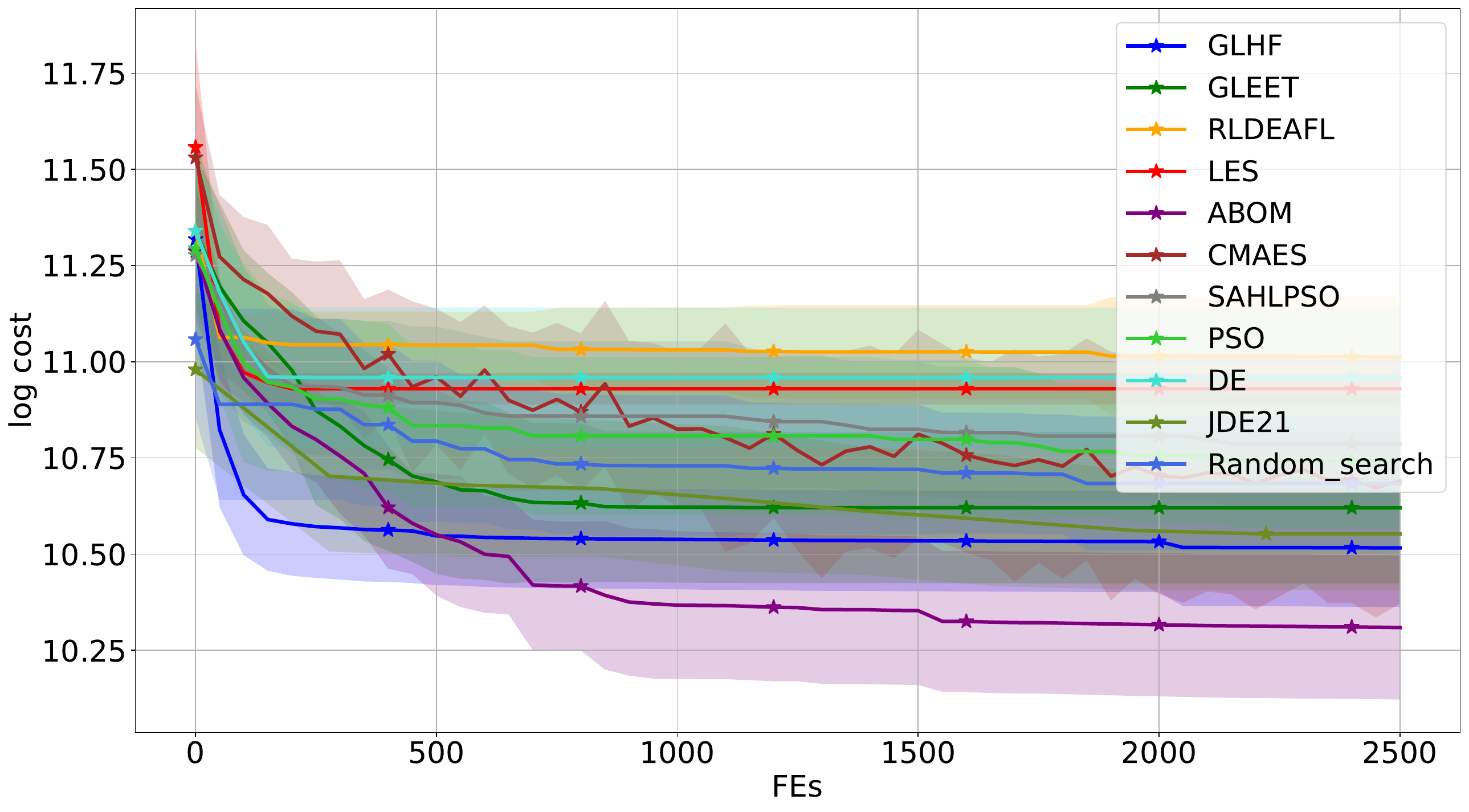}
  \caption{Terrain 26}
\end{subfigure}
\hfill
\begin{subfigure}{0.47\textwidth}
  \centering
  \includegraphics[width=\linewidth]{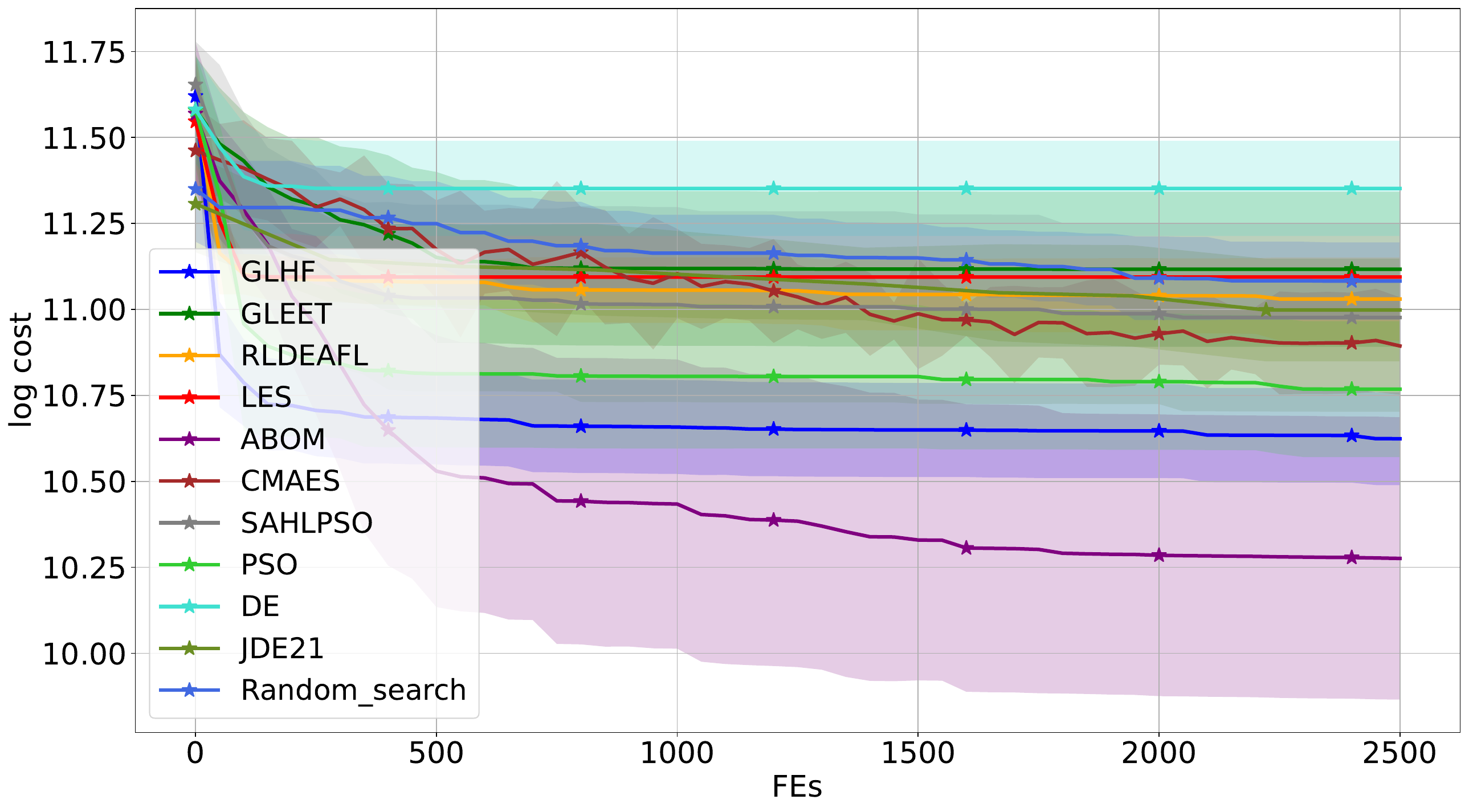}
  \caption{Terrain 28}
\end{subfigure}

\begin{subfigure}{0.47\textwidth}
  \centering
  \includegraphics[width=\linewidth]{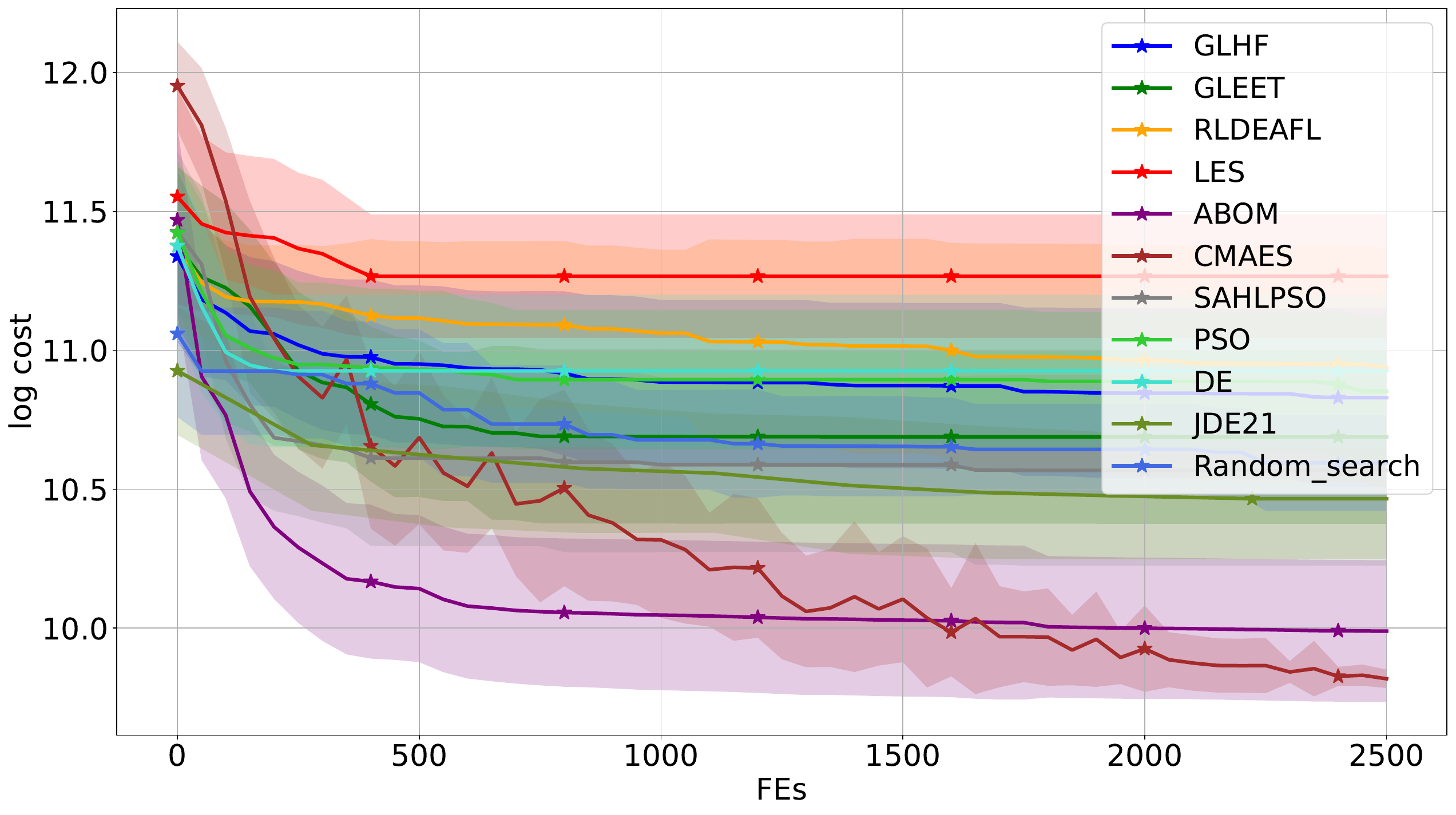}
  \caption{Terrain 30}
\end{subfigure}
\hfill
\begin{subfigure}{0.47\textwidth}
  \centering
  \includegraphics[width=\linewidth]{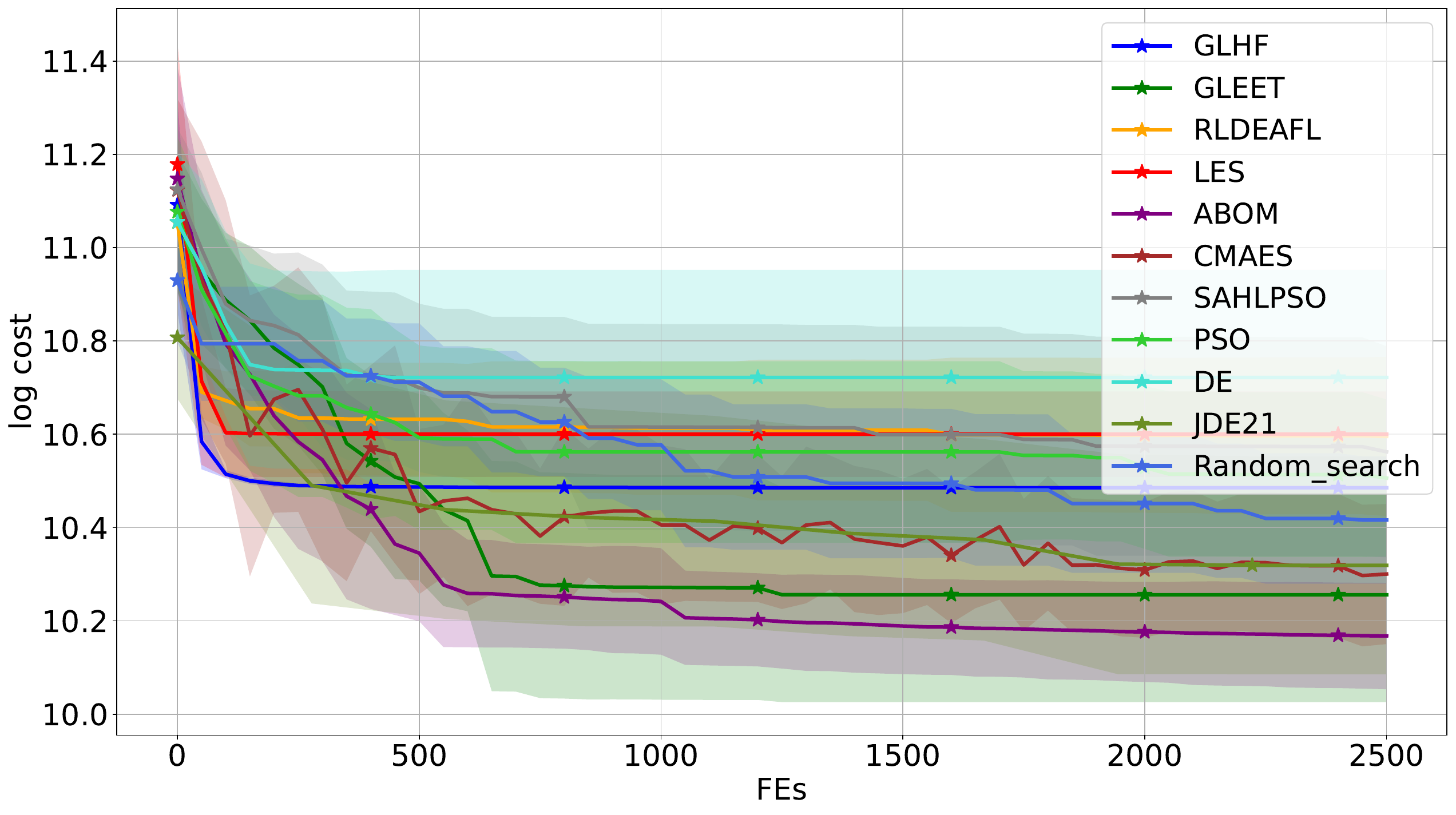}
  \caption{Terrain 32}
\end{subfigure}

\begin{subfigure}{0.47\textwidth}
  \centering
  \includegraphics[width=\linewidth]{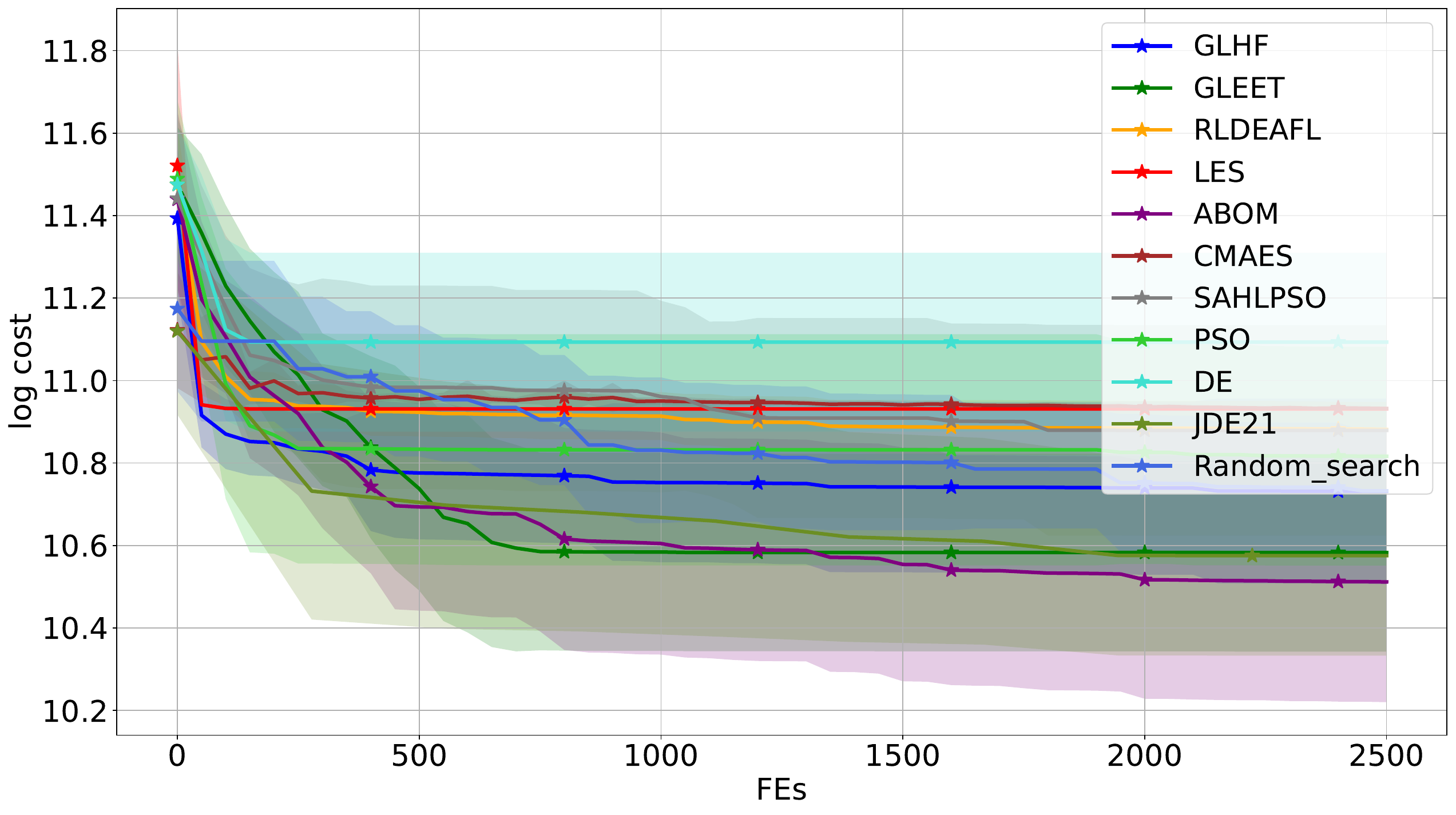}
  \caption{Terrain 34}
\end{subfigure}
\hfill
\begin{subfigure}{0.47\textwidth}
  \centering
  \includegraphics[width=\linewidth]{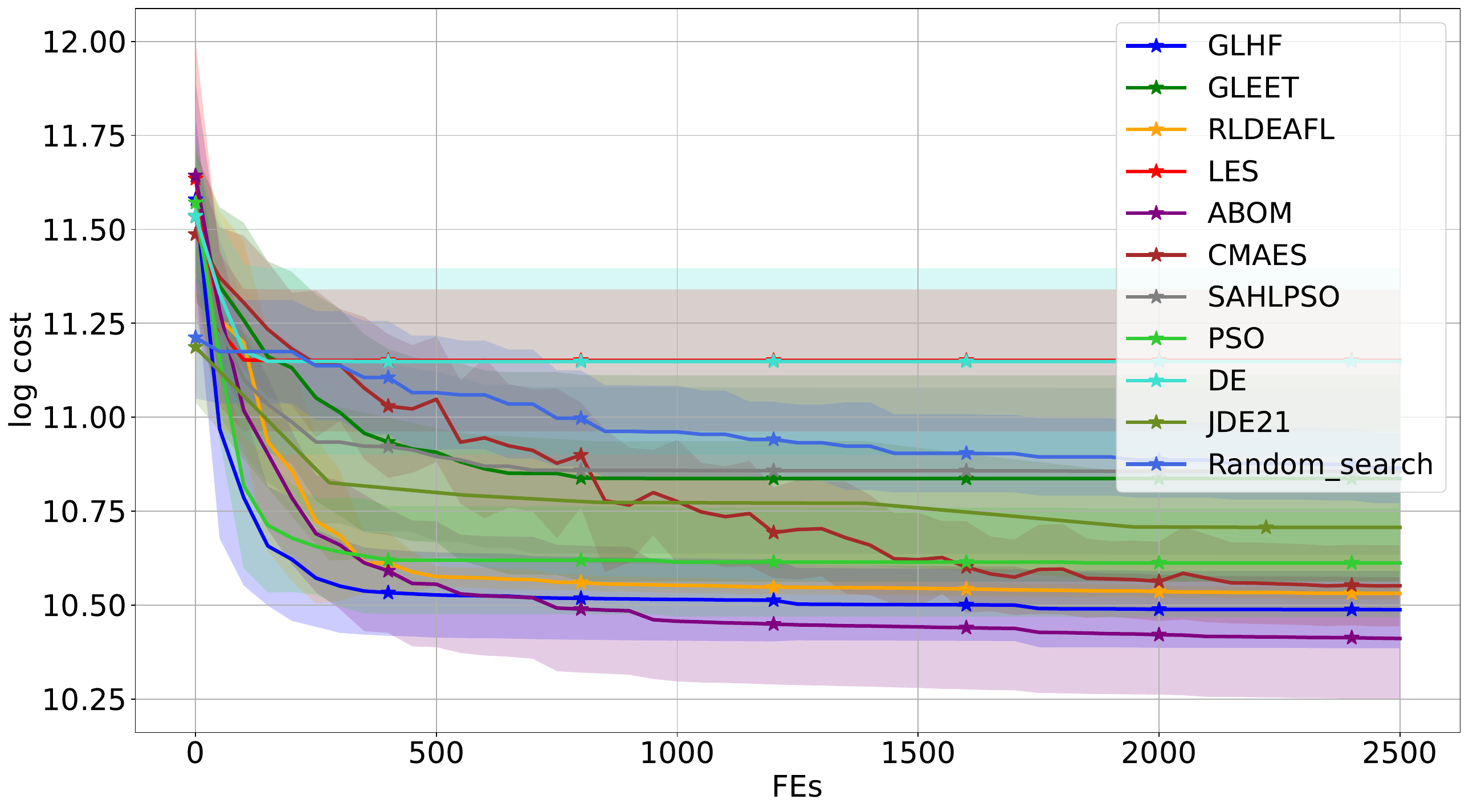}
  \caption{Terrain 36}
\end{subfigure}
\caption{Convergence curves of cost (log scale) for UAV problems (Terrain 18 to 36).}
\label{fig:s2}
\end{figure*}

\begin{figure*}[htbp]
\centering
\begin{subfigure}{0.47\textwidth}
  \centering
  \includegraphics[width=\linewidth]{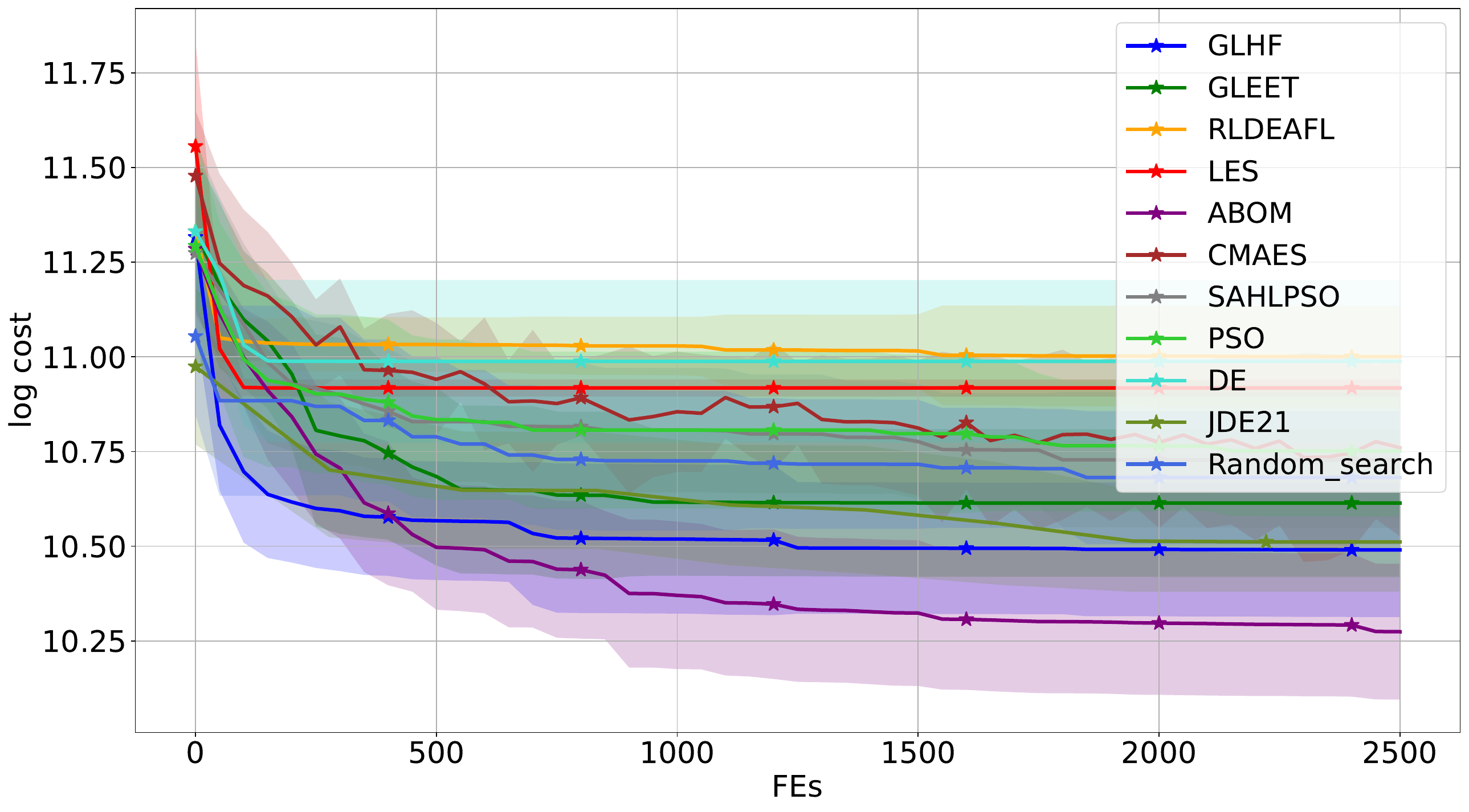}
  \caption{Terrain 38}
\end{subfigure}
\hfill
\begin{subfigure}{0.47\textwidth}
  \centering
  \includegraphics[width=\linewidth]{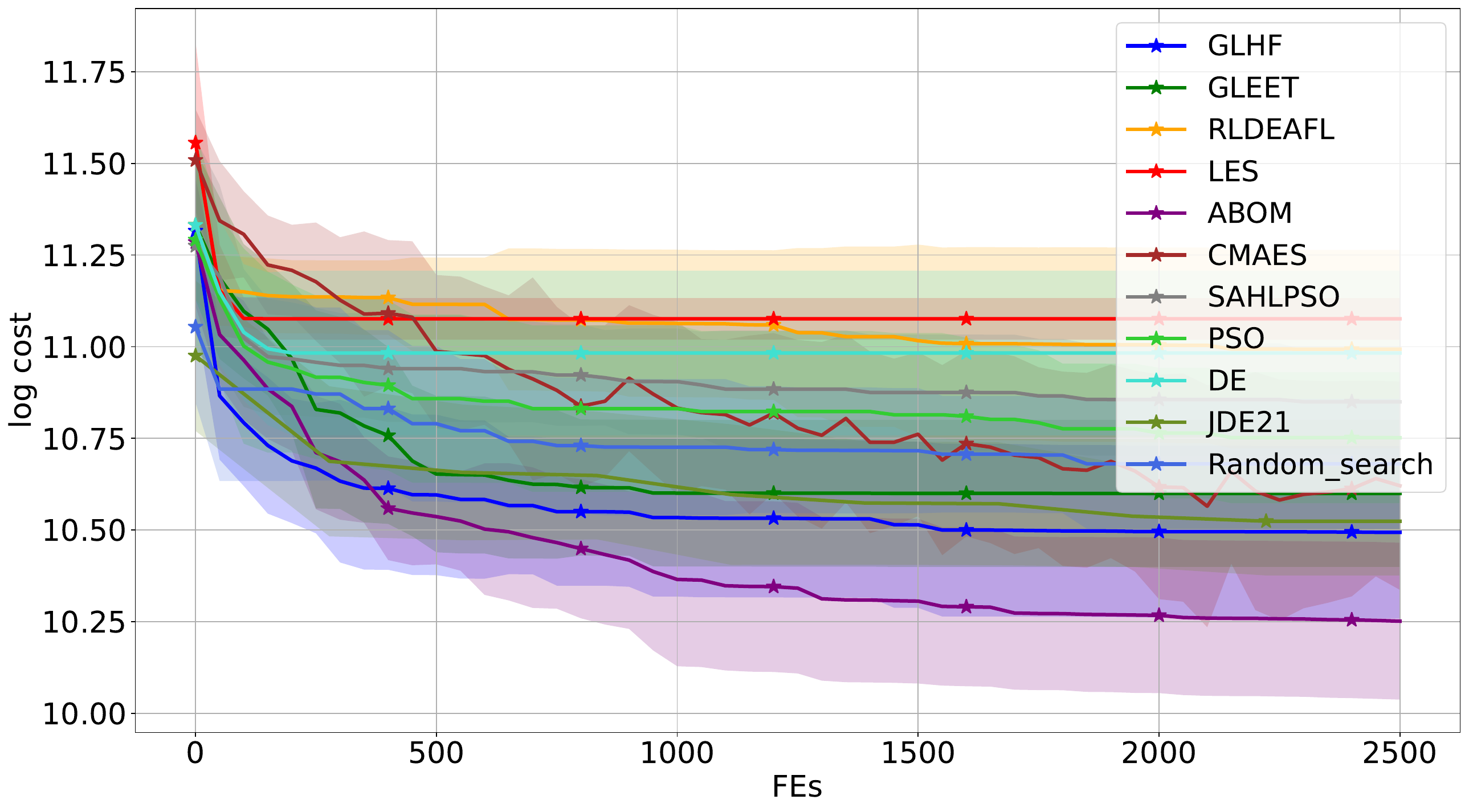}
  \caption{Terrain 40}
\end{subfigure}

\begin{subfigure}{0.47\textwidth}
  \centering
  \includegraphics[width=\linewidth]{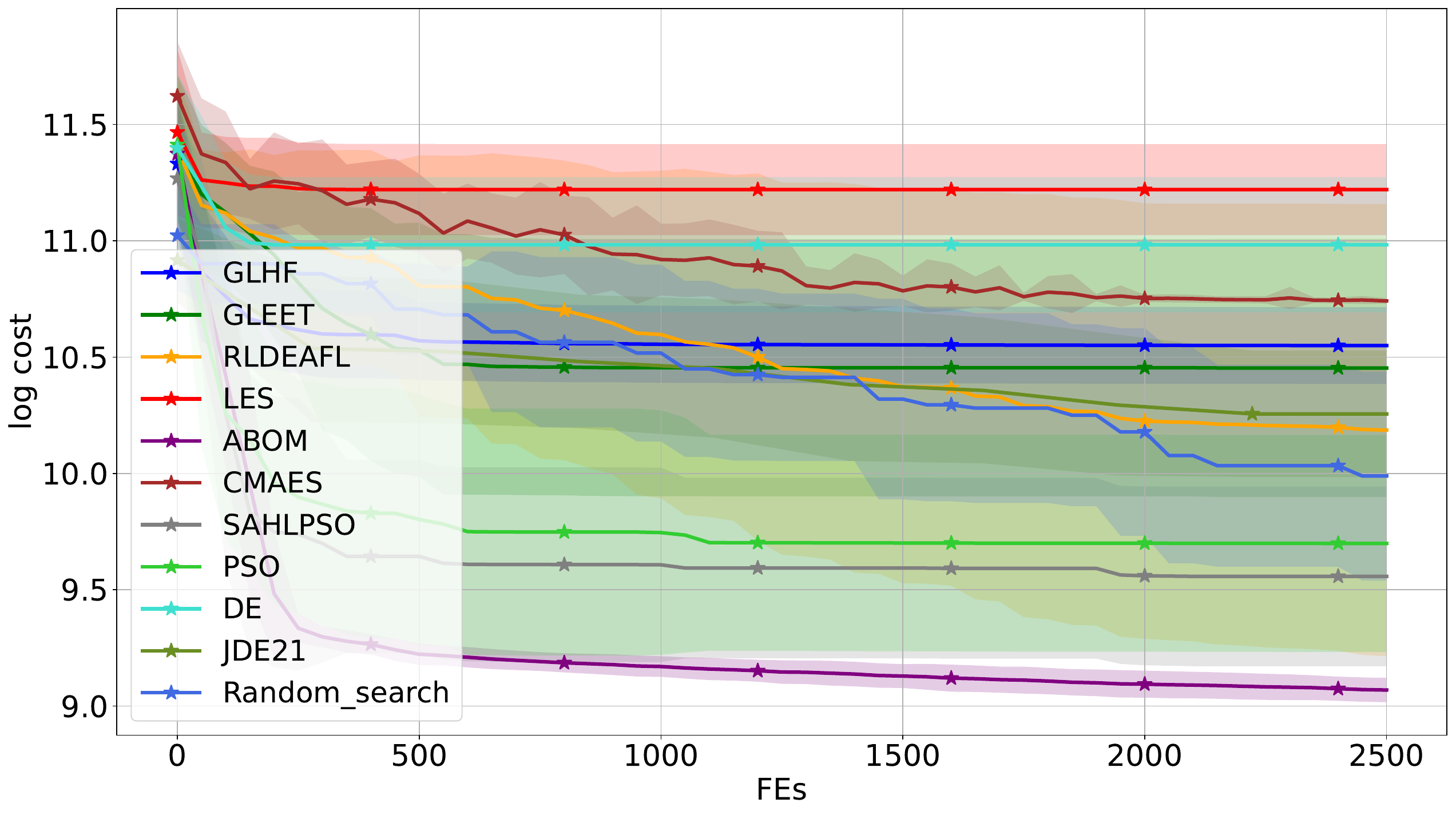}
  \caption{Terrain 42}
\end{subfigure}
\hfill
\begin{subfigure}{0.47\textwidth}
  \centering
  \includegraphics[width=\linewidth]{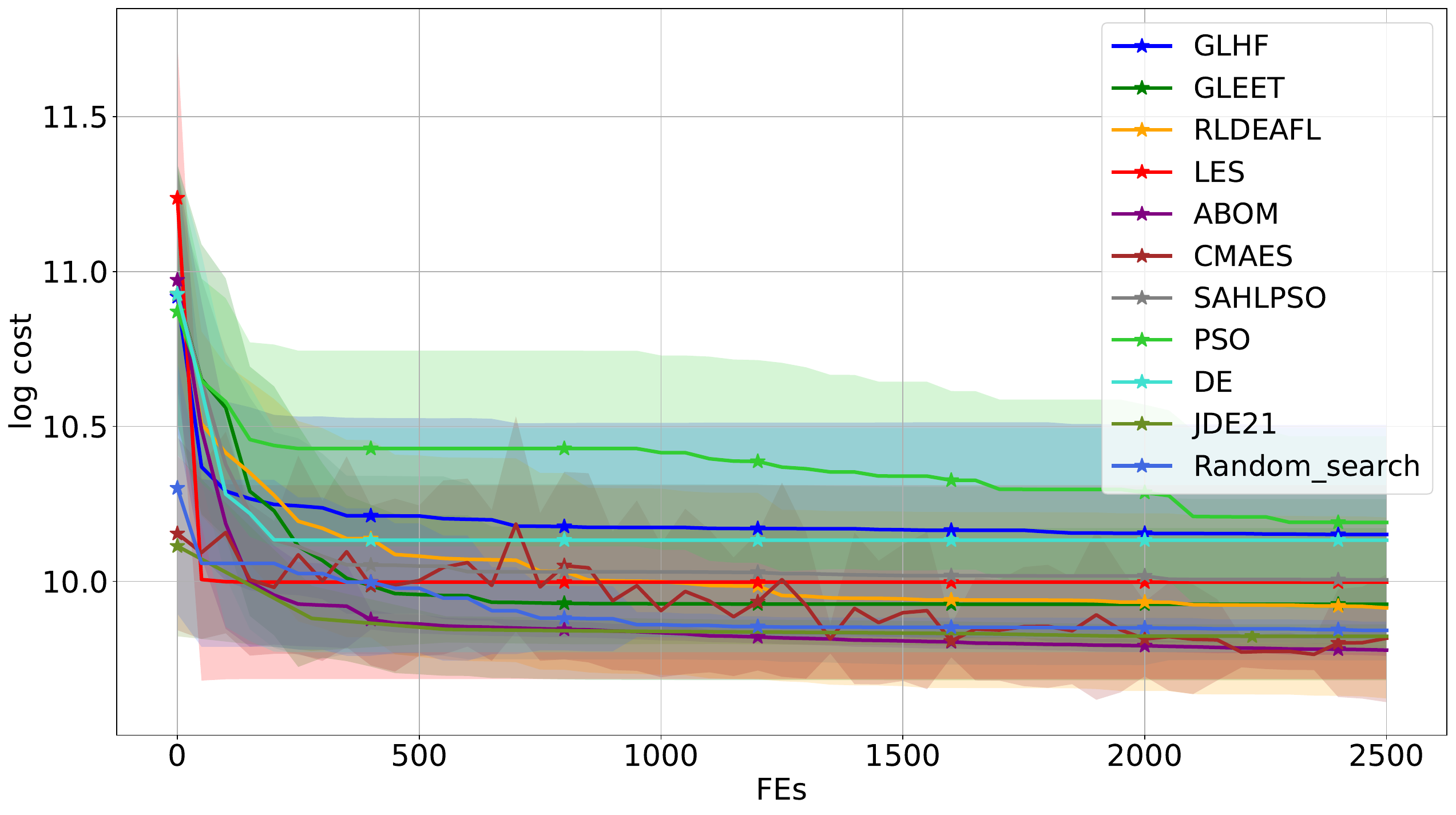}
  \caption{Terrain 44}
\end{subfigure}

\begin{subfigure}{0.47\textwidth}
  \centering
  \includegraphics[width=\linewidth]{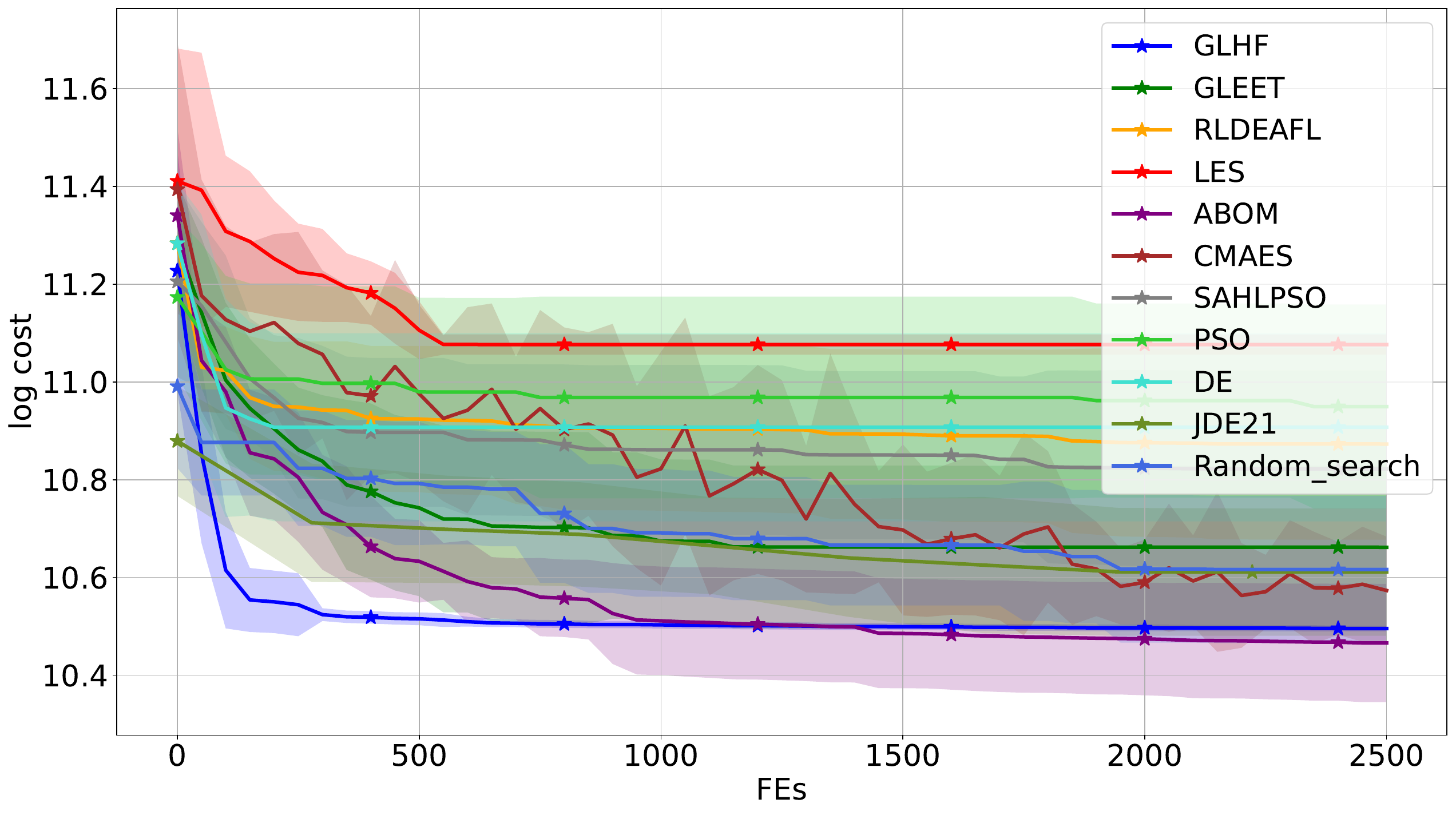}
  \caption{Terrain 46}
\end{subfigure}
\hfill
\begin{subfigure}{0.47\textwidth}
  \centering
  \includegraphics[width=\linewidth]{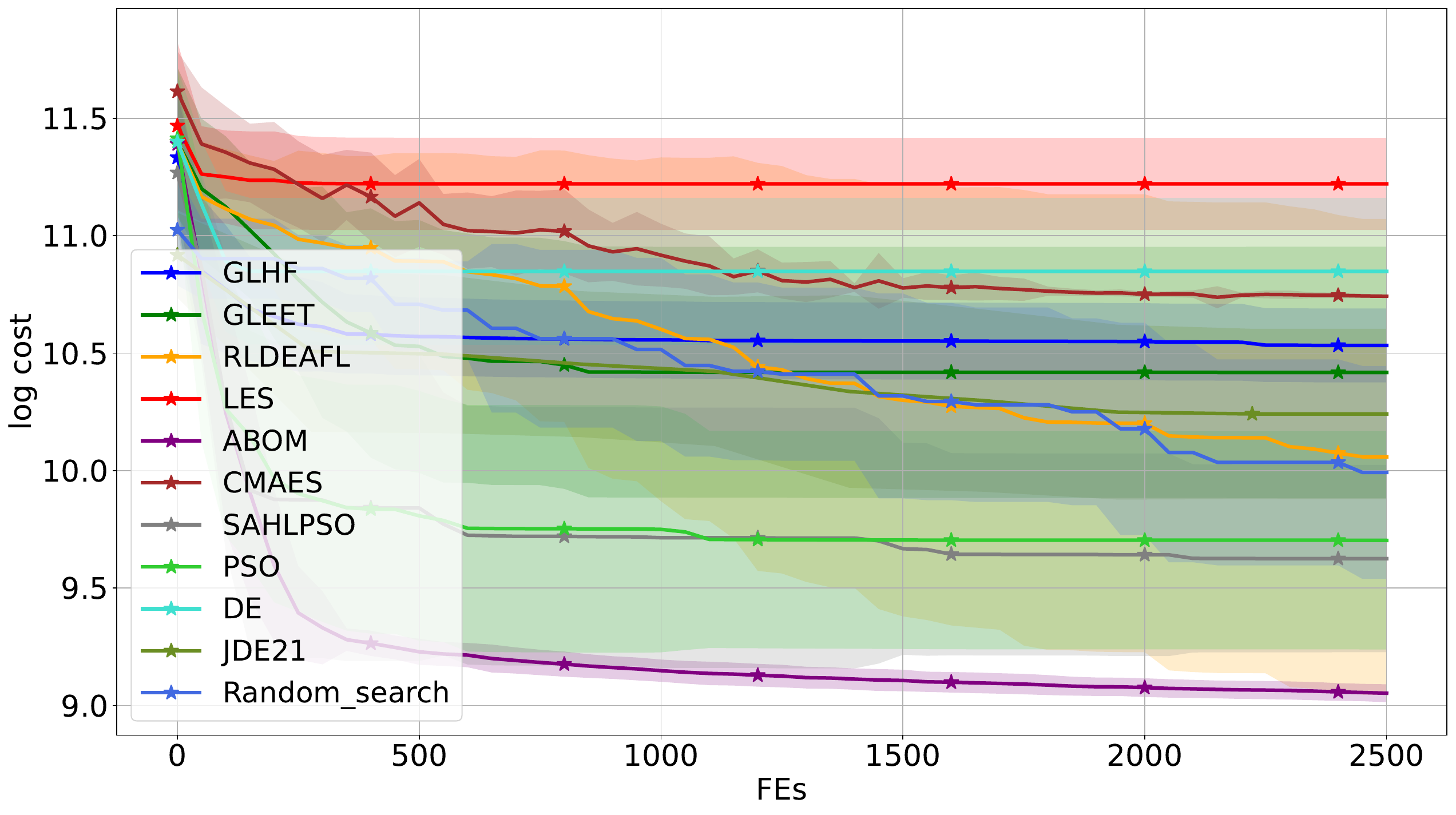}
  \caption{Terrain 48}
\end{subfigure}

\begin{subfigure}{0.47\textwidth}
  \centering
  \includegraphics[width=\linewidth]{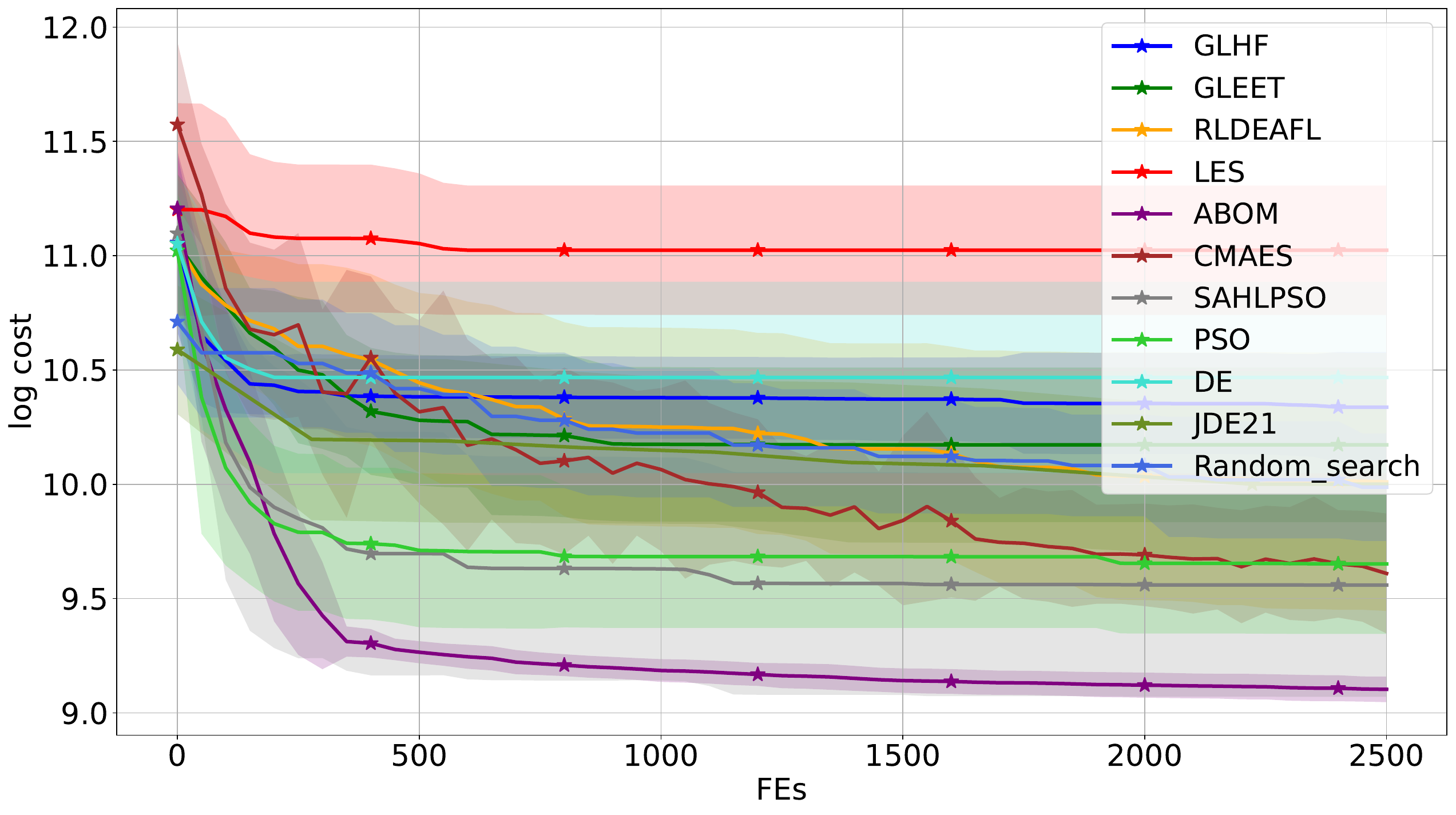}
  \caption{Terrain 50}
\end{subfigure}
\hfill
\begin{subfigure}{0.47\textwidth}
  \centering
  \includegraphics[width=\linewidth]{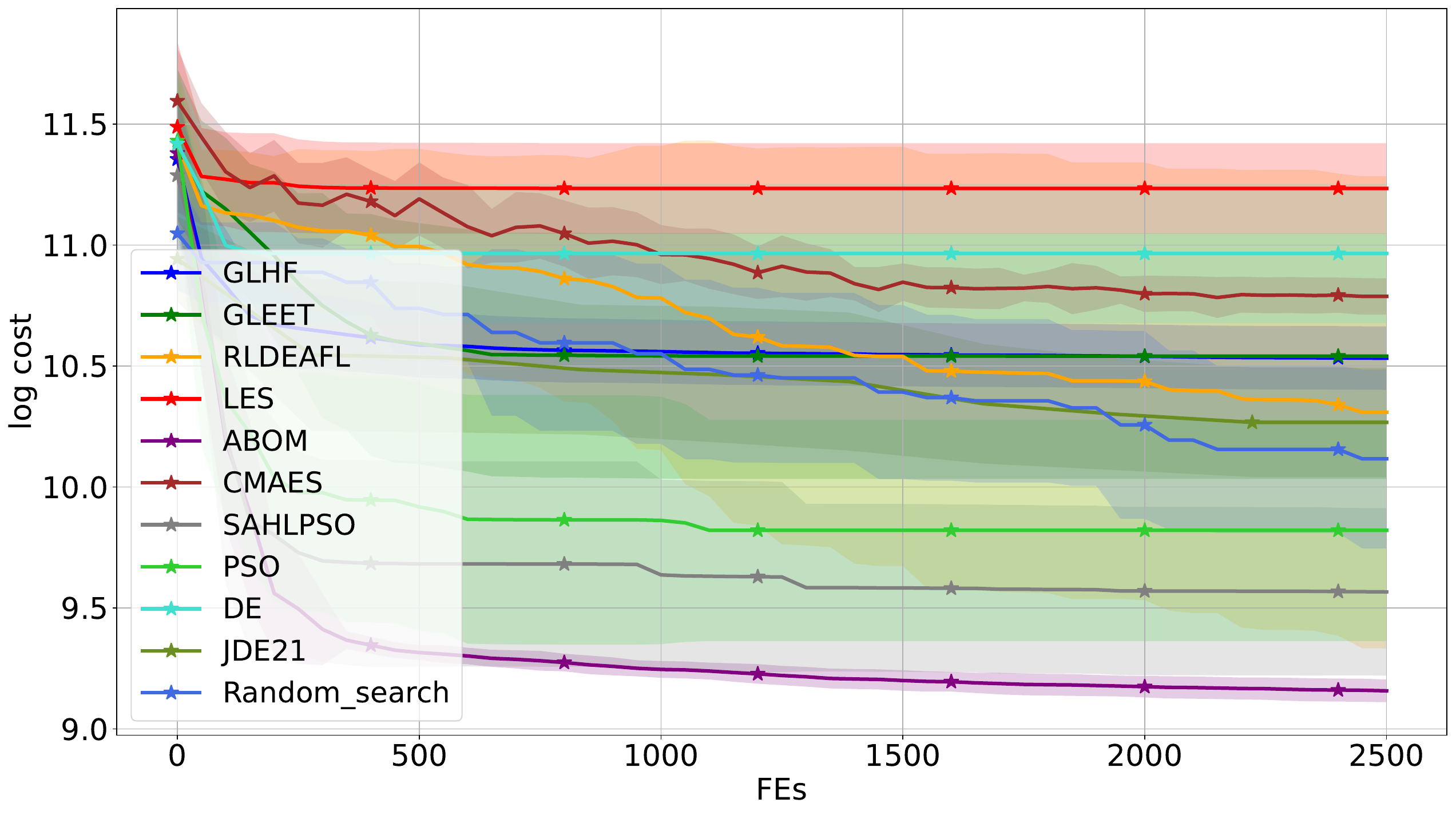}
  \caption{Terrain 52}
\end{subfigure}

\begin{subfigure}{0.47\textwidth}
  \centering
  \includegraphics[width=\linewidth]{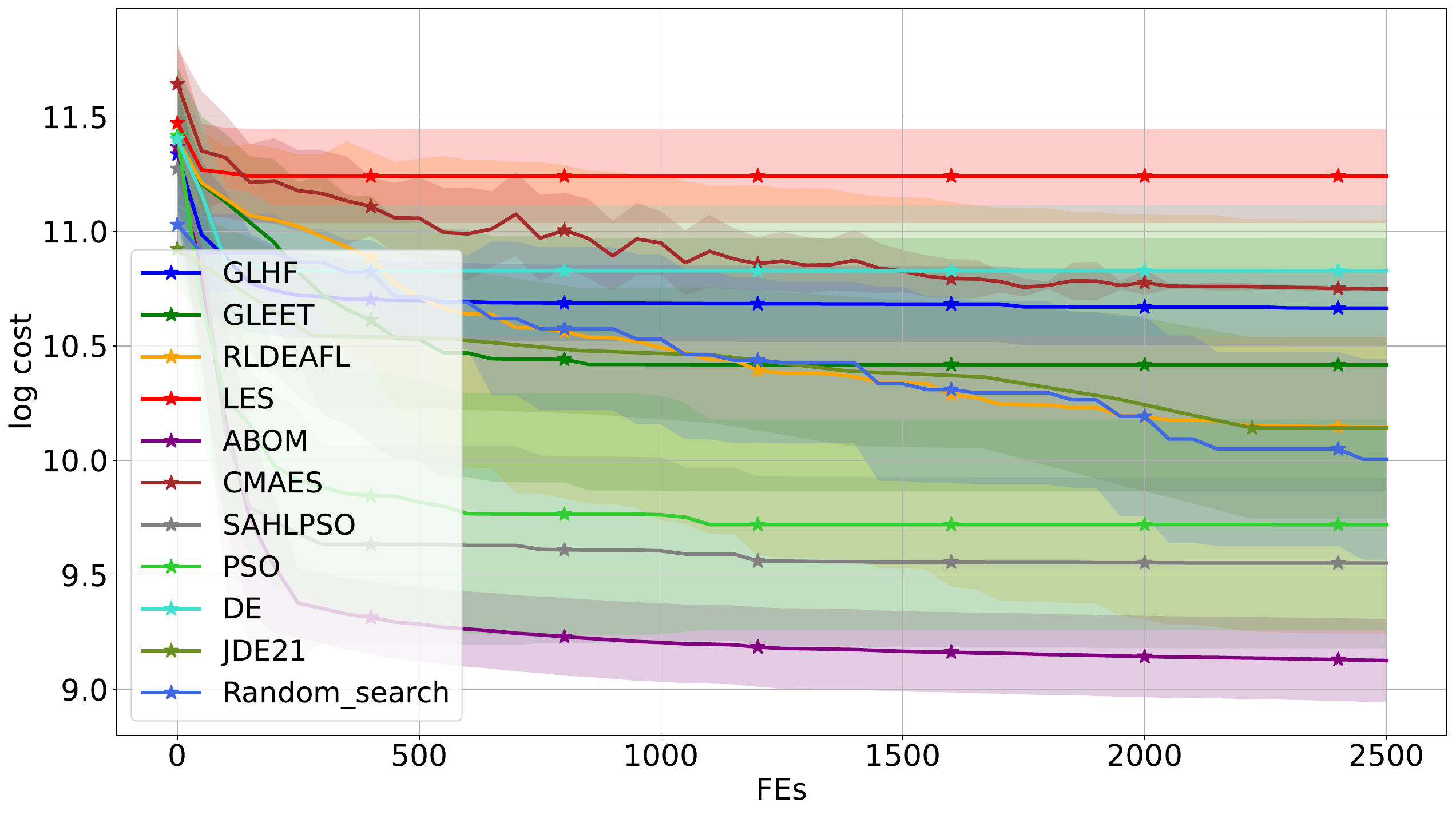}
  \caption{Terrain 54}
\end{subfigure}
\hfill
\begin{subfigure}{0.47\textwidth}
  \centering
  \includegraphics[width=\linewidth]{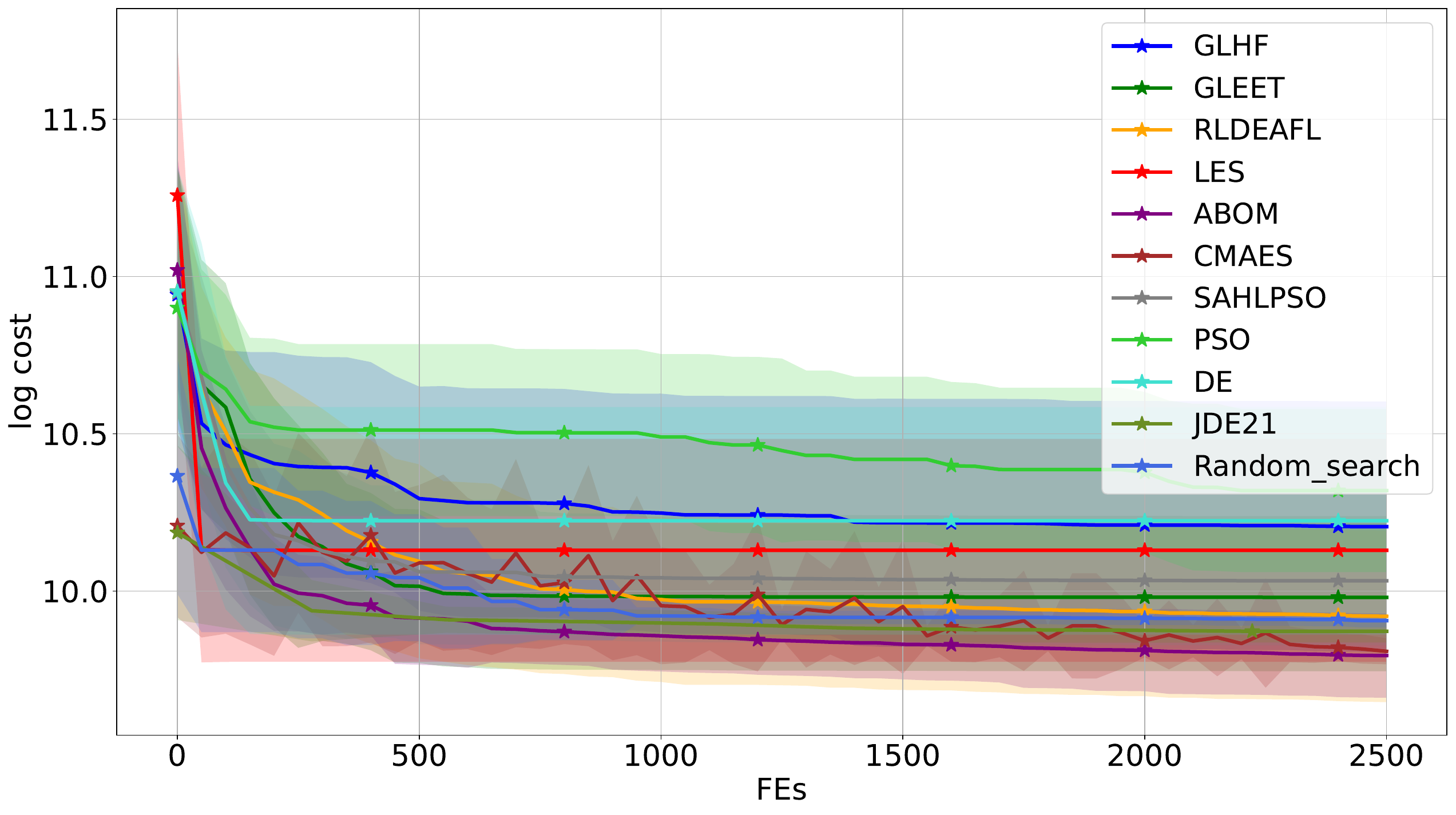}
  \caption{Terrain 56}
\end{subfigure}
\caption{Convergence curves of cost (log scale) for UAV problems (Terrain 38 to 56).}
\label{fig:s3}
\end{figure*}

\begin{figure*}[htbp]
\centering
\begin{subfigure}{0.47\textwidth}
  \centering
  \includegraphics[width=\linewidth]{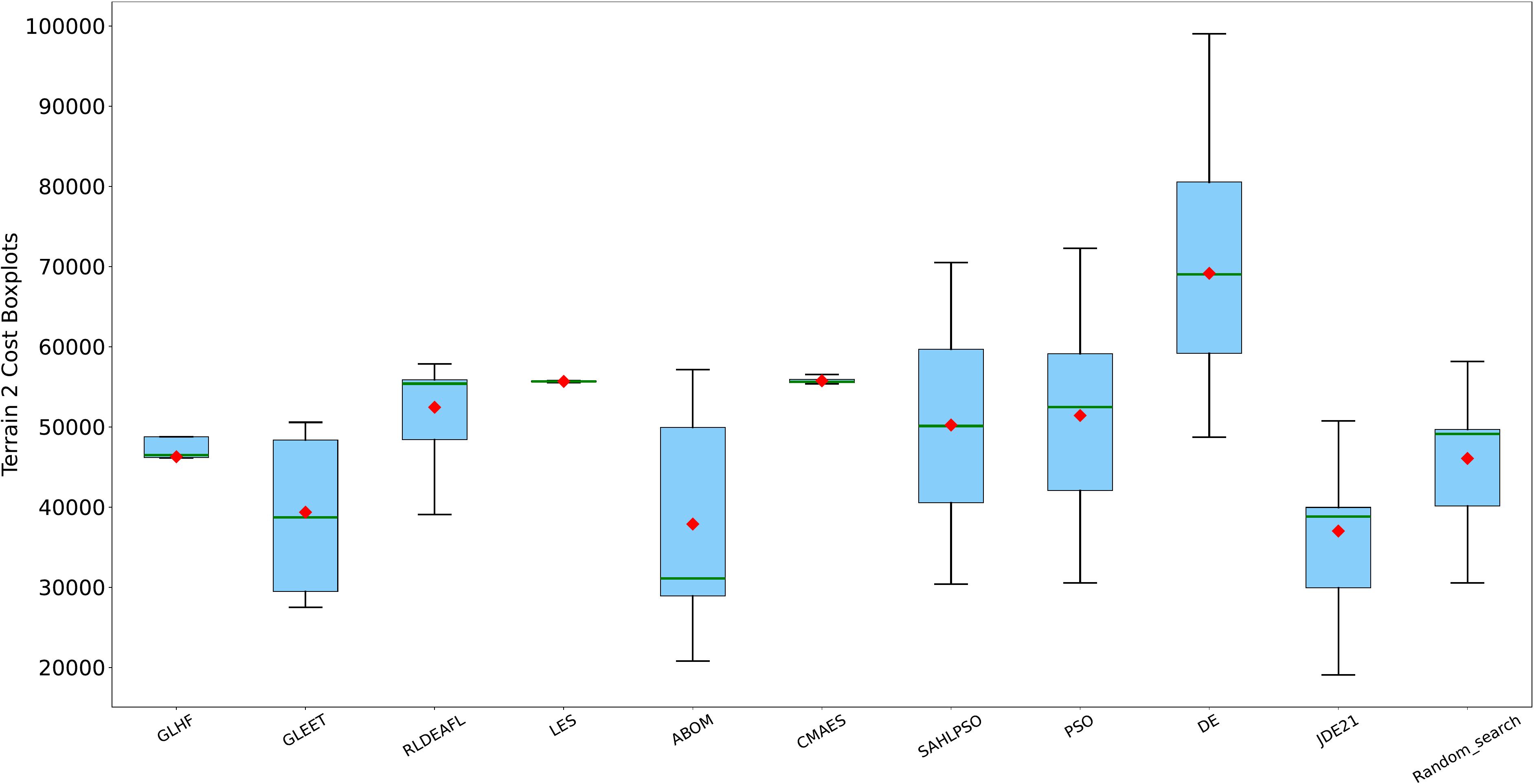}
  \caption{Terrain 2}
\end{subfigure}
\hfill
\begin{subfigure}{0.47\textwidth}
  \centering
  \includegraphics[width=\linewidth]{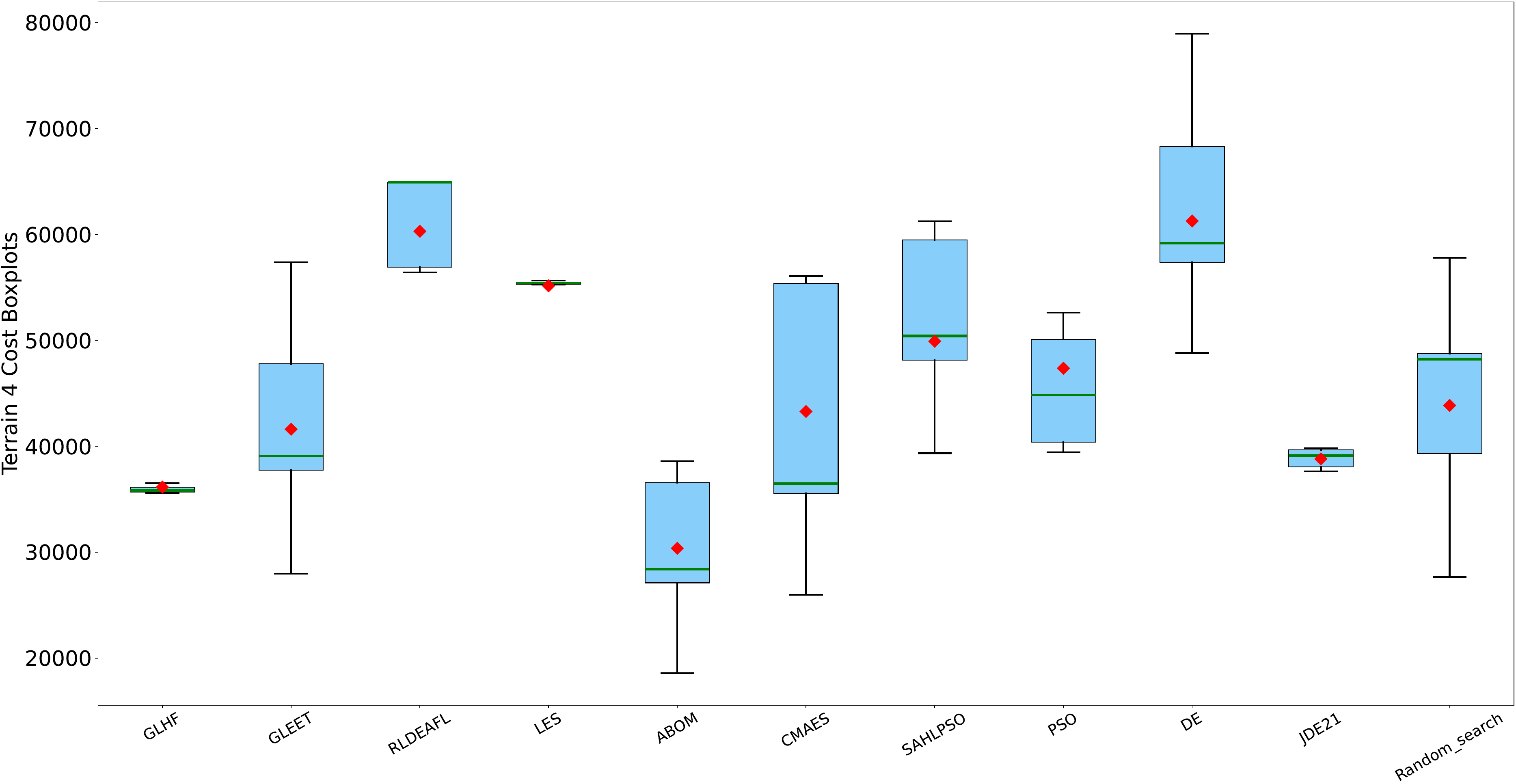}
  \caption{Terrain 4}
\end{subfigure}

\begin{subfigure}{0.47\textwidth}
  \centering
  \includegraphics[width=\linewidth]{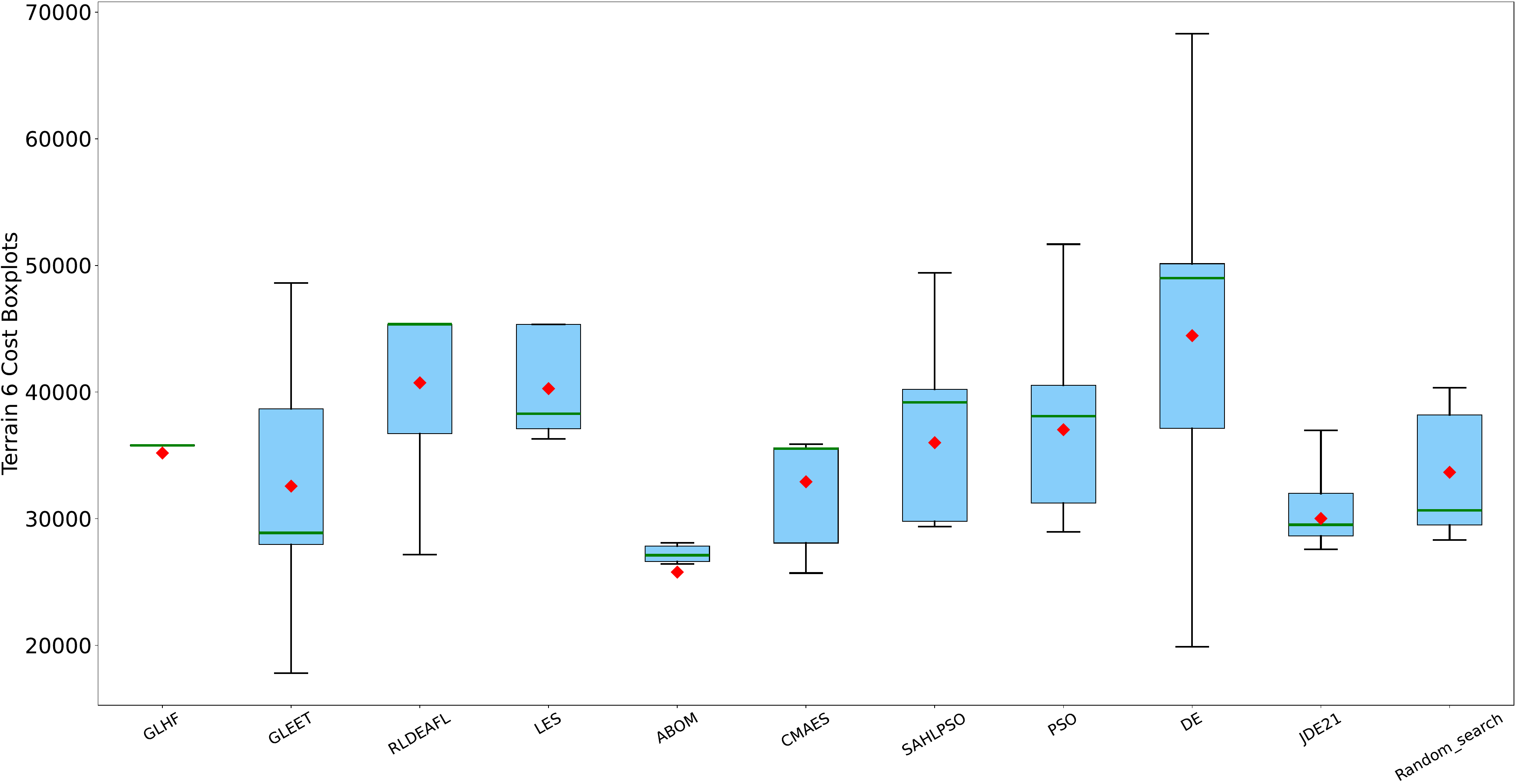}
  \caption{Terrain 6}
\end{subfigure}
\hfill
\begin{subfigure}{0.47\textwidth}
  \centering
  \includegraphics[width=\linewidth]{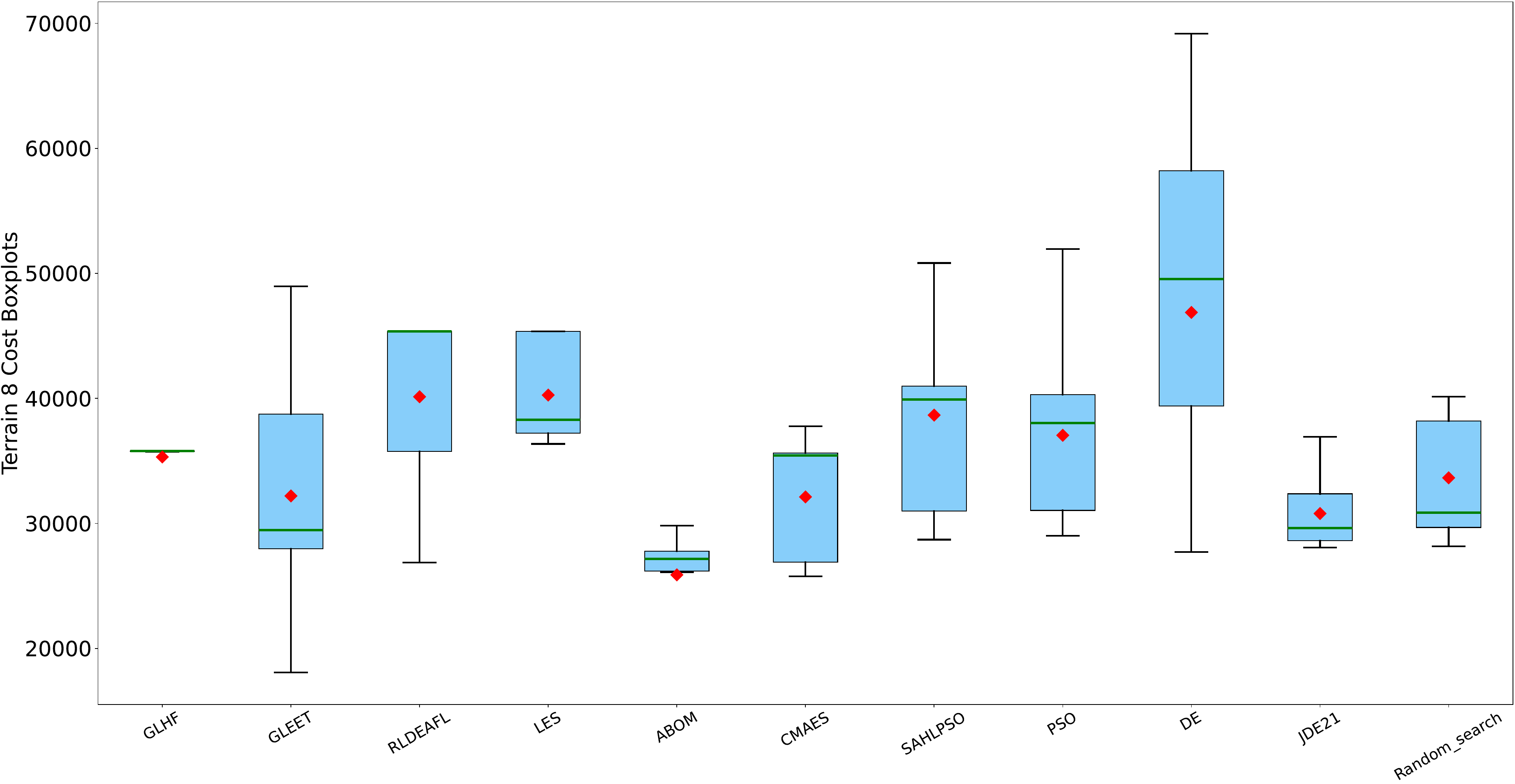}
  \caption{Terrain 8}
\end{subfigure}

\begin{subfigure}{0.47\textwidth}
  \centering
  \includegraphics[width=\linewidth]{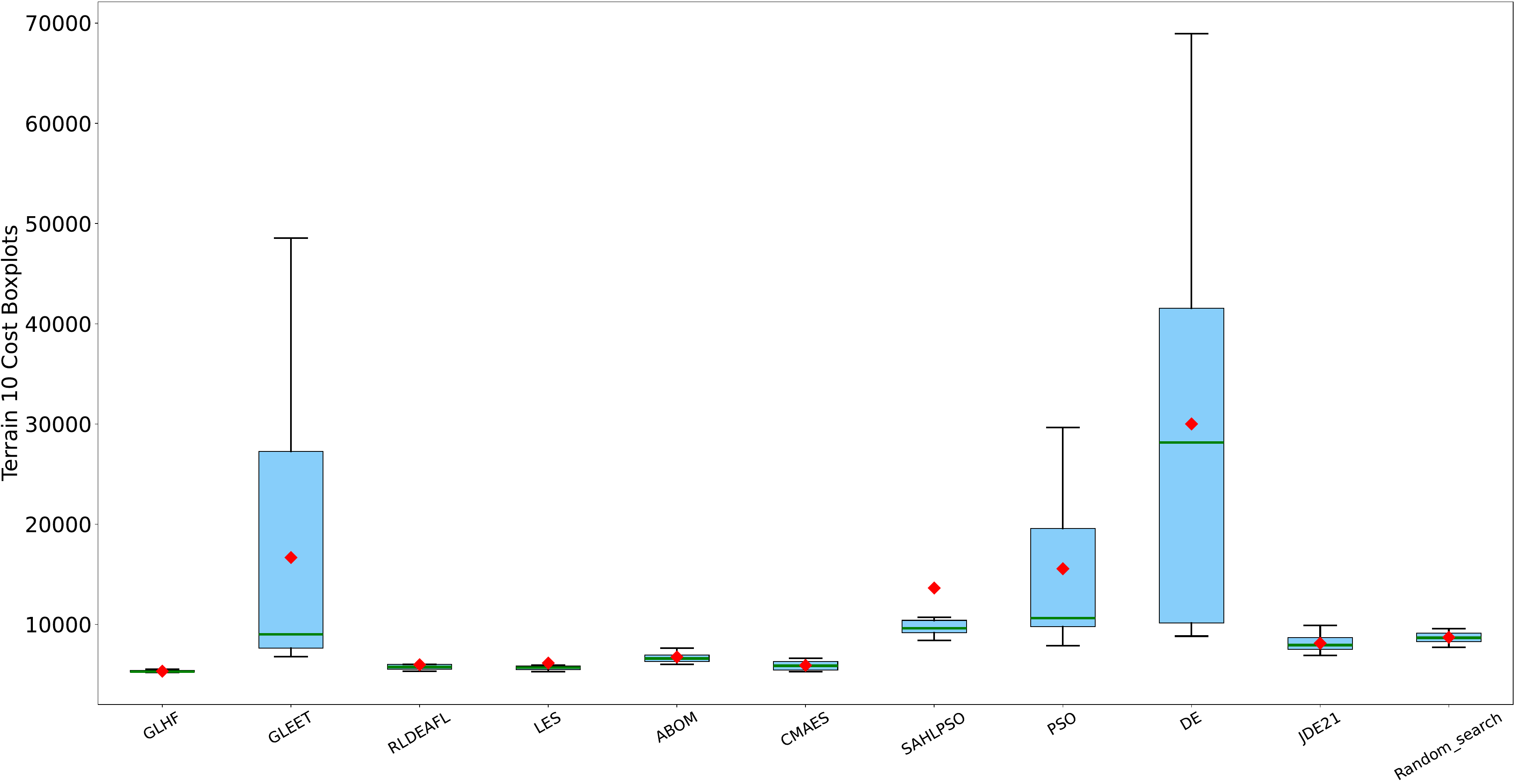}
  \caption{Terrain 10}
\end{subfigure}
\hfill
\begin{subfigure}{0.47\textwidth}
  \centering
  \includegraphics[width=\linewidth]{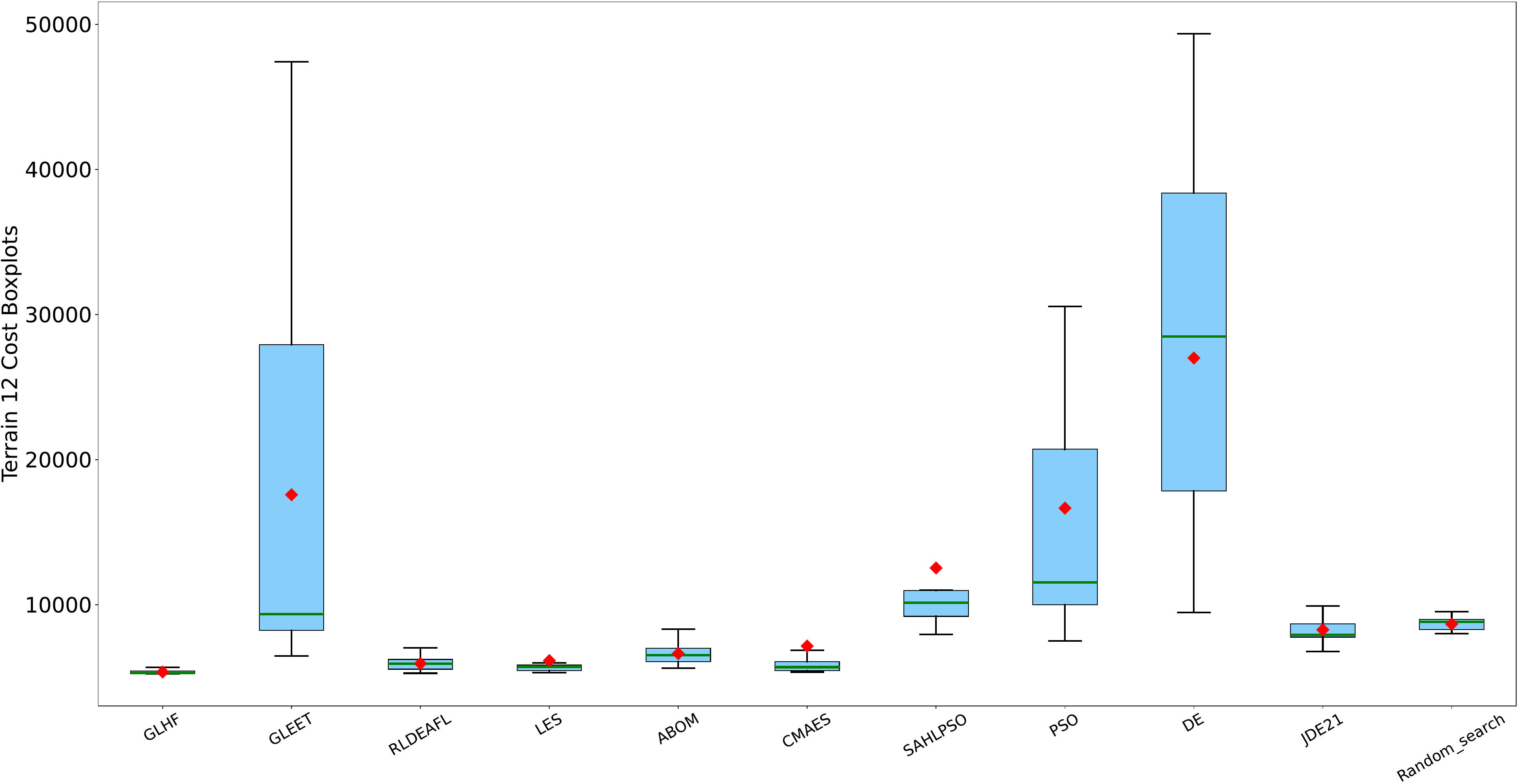}
  \caption{Terrain 12}
\end{subfigure}

\begin{subfigure}{0.47\textwidth}
  \centering
  \includegraphics[width=\linewidth]{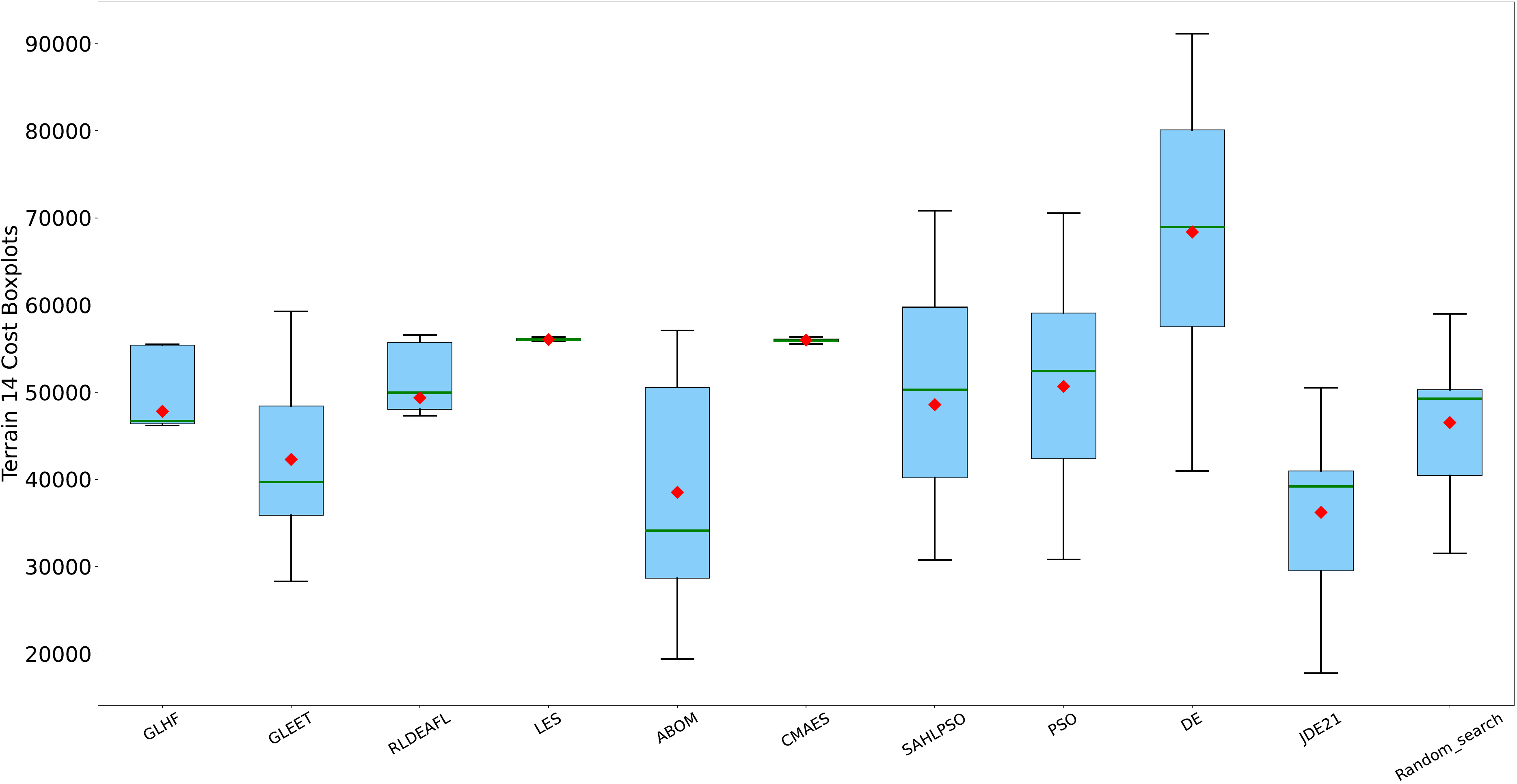}
  \caption{Terrain 14}
\end{subfigure}
\hfill
\begin{subfigure}{0.47\textwidth}
  \centering
  \includegraphics[width=\linewidth]{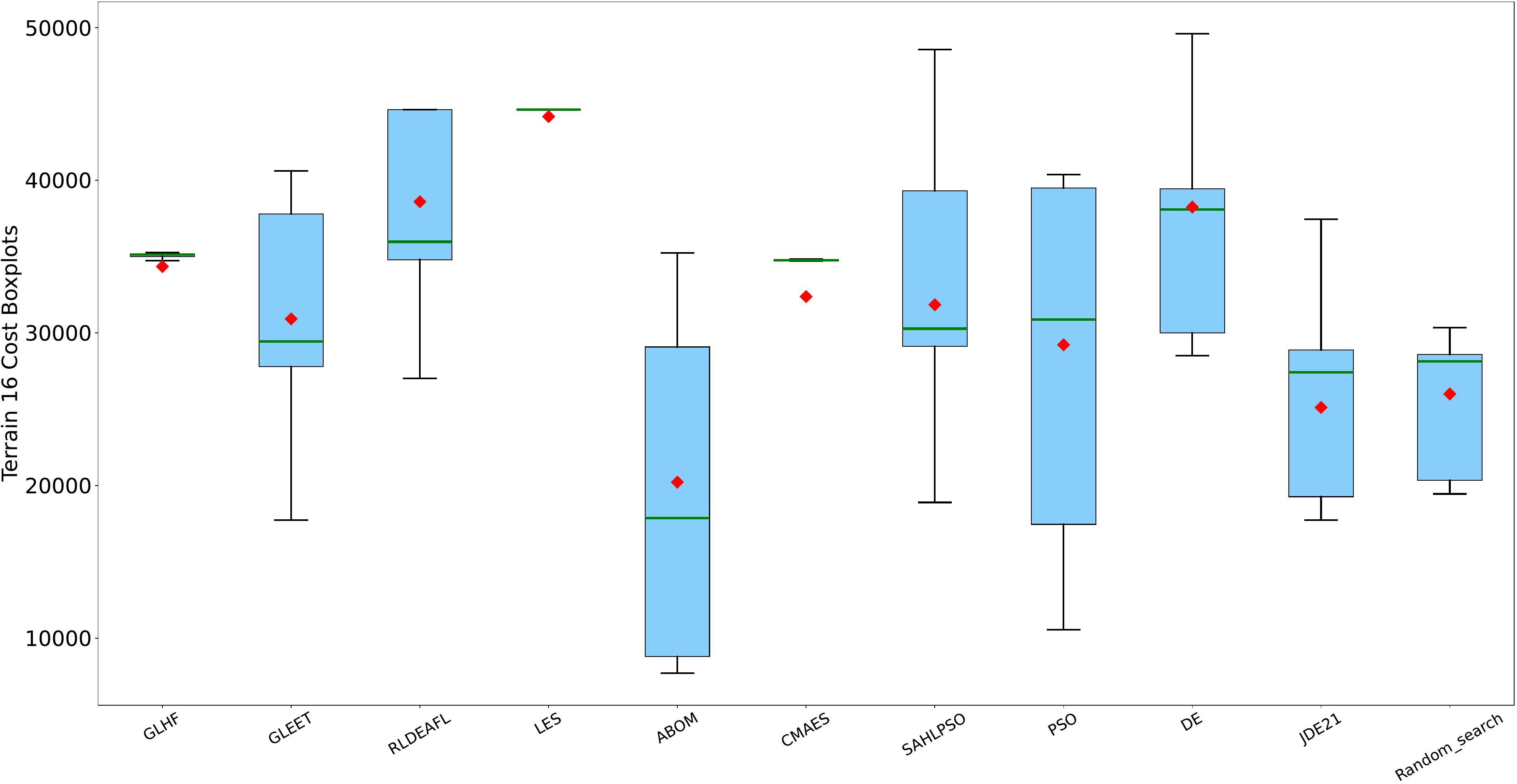}
  \caption{Terrain 16}
\end{subfigure}

\begin{subfigure}{0.47\textwidth}
  \centering
  \includegraphics[width=\linewidth]{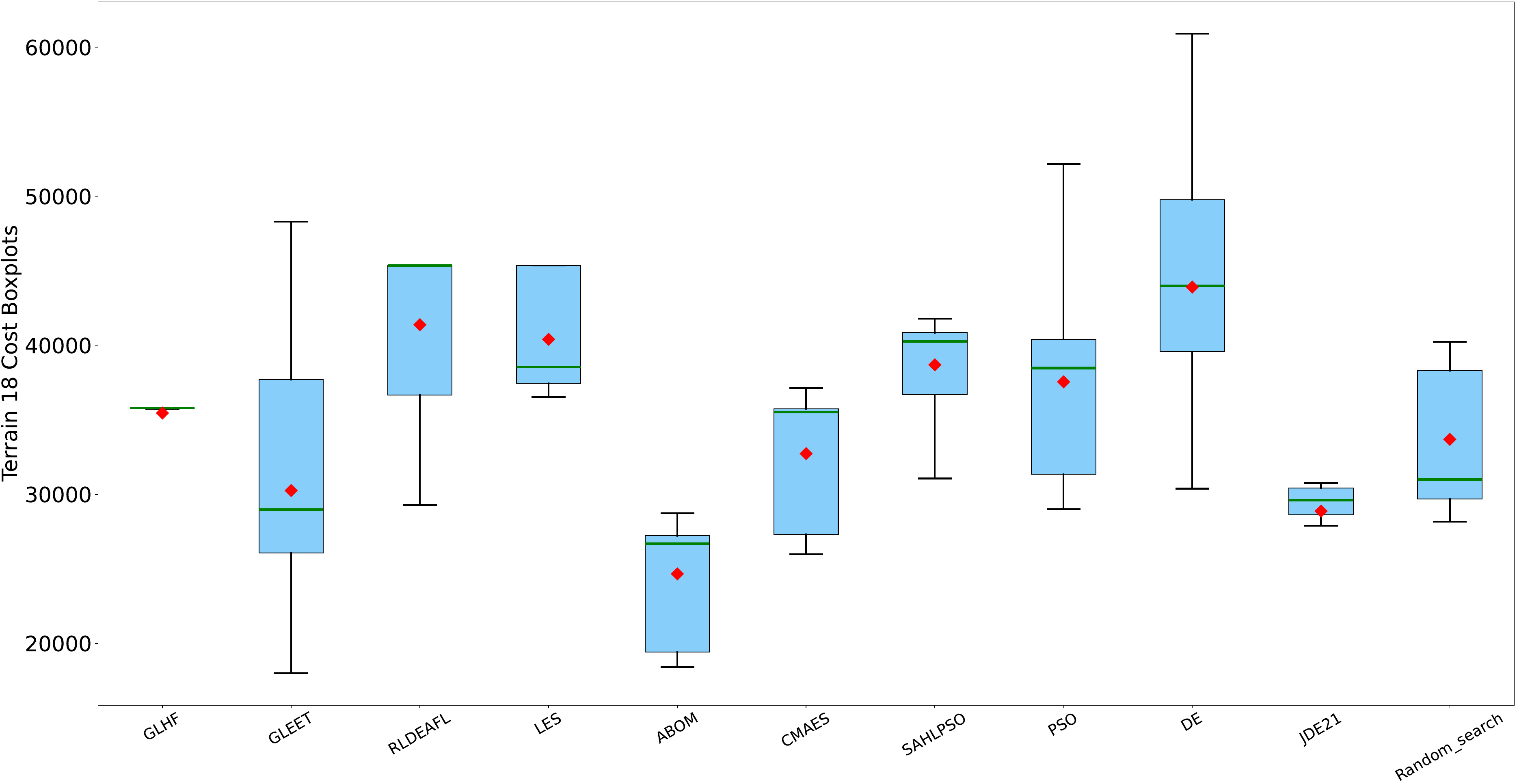}
  \caption{Terrain 18}
\end{subfigure}
\hfill
\begin{subfigure}{0.47\textwidth}
  \centering
  \includegraphics[width=\linewidth]{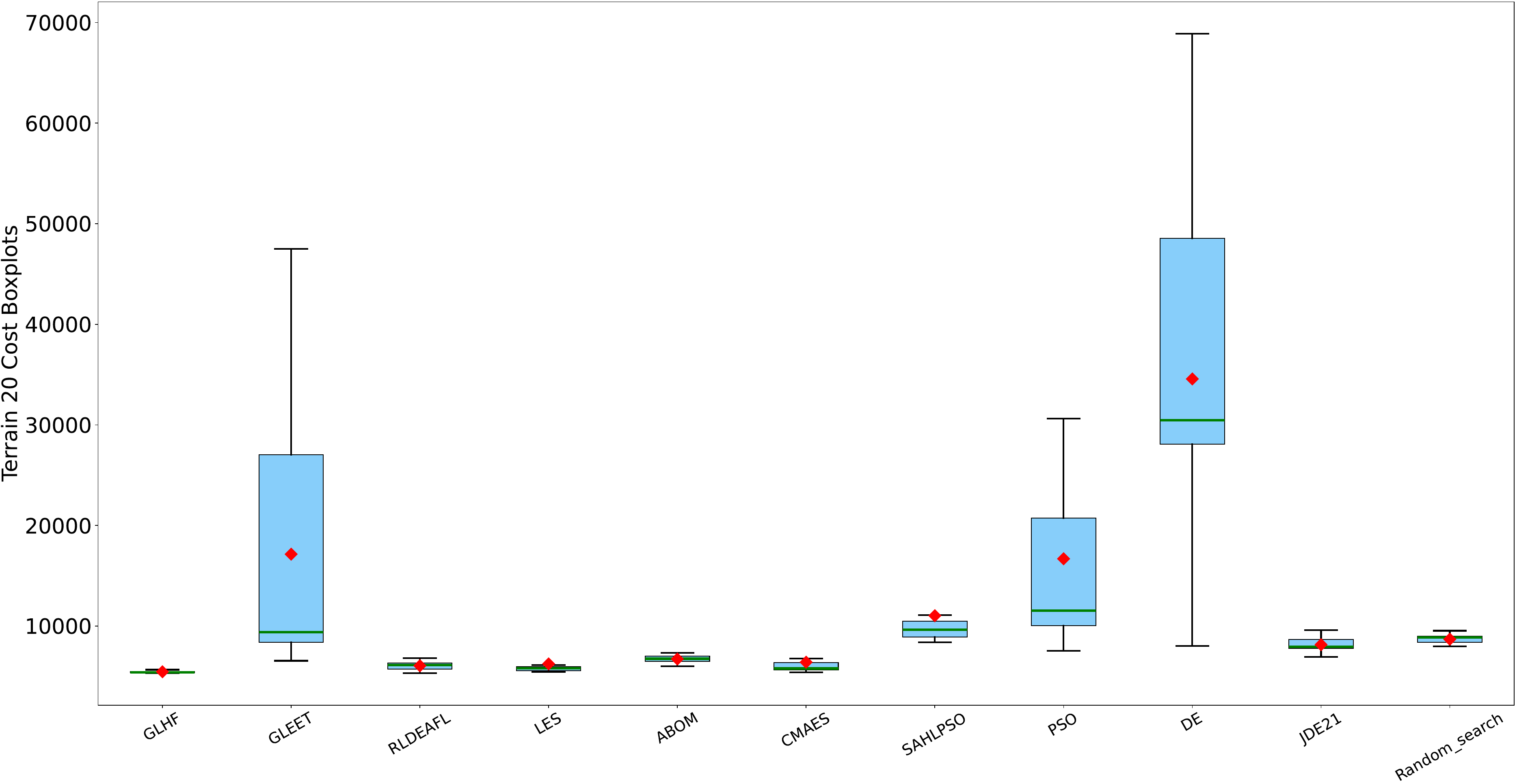}
  \caption{Terrain 20}
\end{subfigure}
\caption{Boxplots of cost (log scale) over 30 runs for UAV problems (Terrain 2 to 20).}
\label{fig:s4}
\end{figure*}

\begin{figure*}[htbp]
\centering
\begin{subfigure}{0.47\textwidth}
  \centering
  \includegraphics[width=\linewidth]{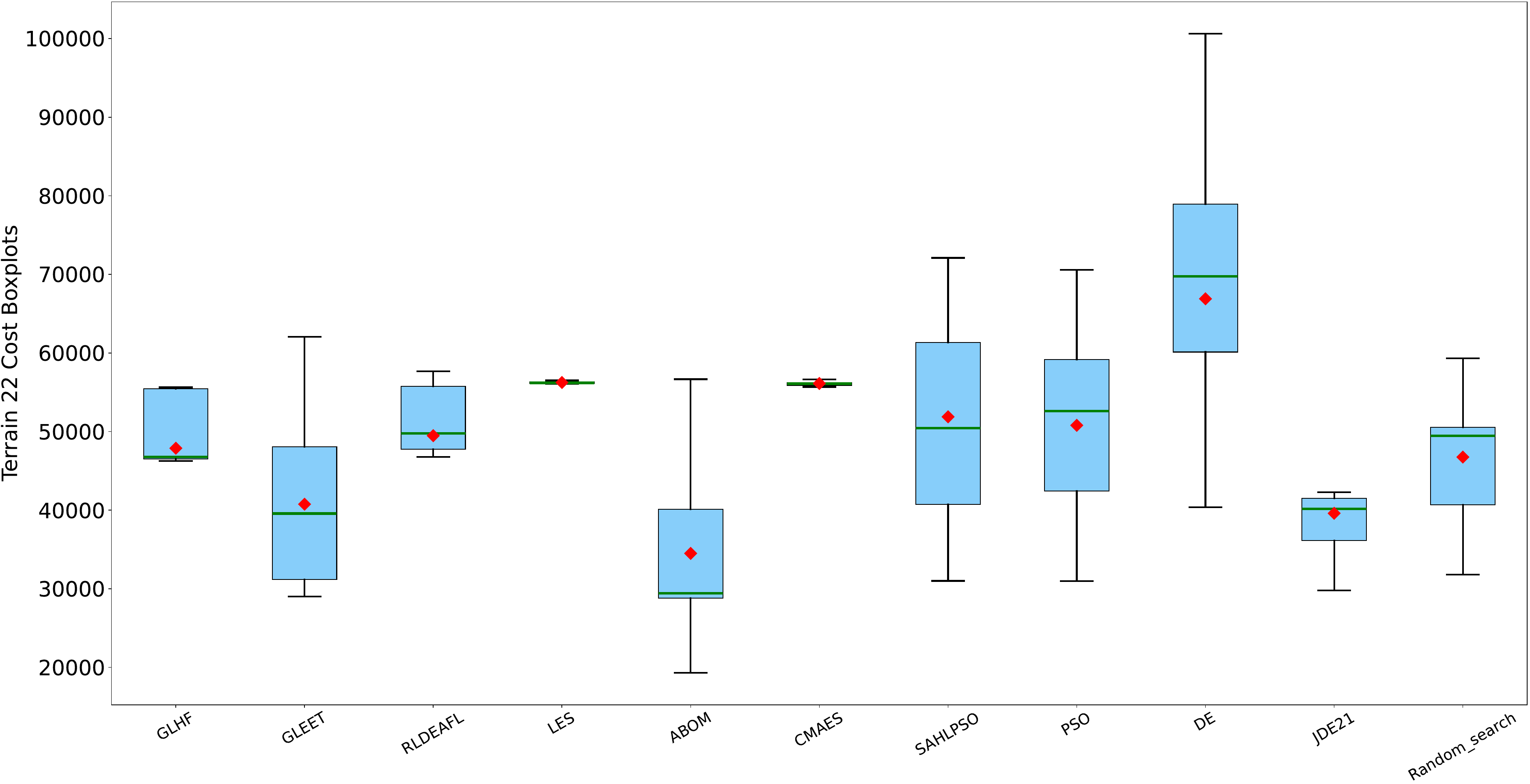}
  \caption{Terrain 22}
\end{subfigure}
\hfill
\begin{subfigure}{0.47\textwidth}
  \centering
  \includegraphics[width=\linewidth]{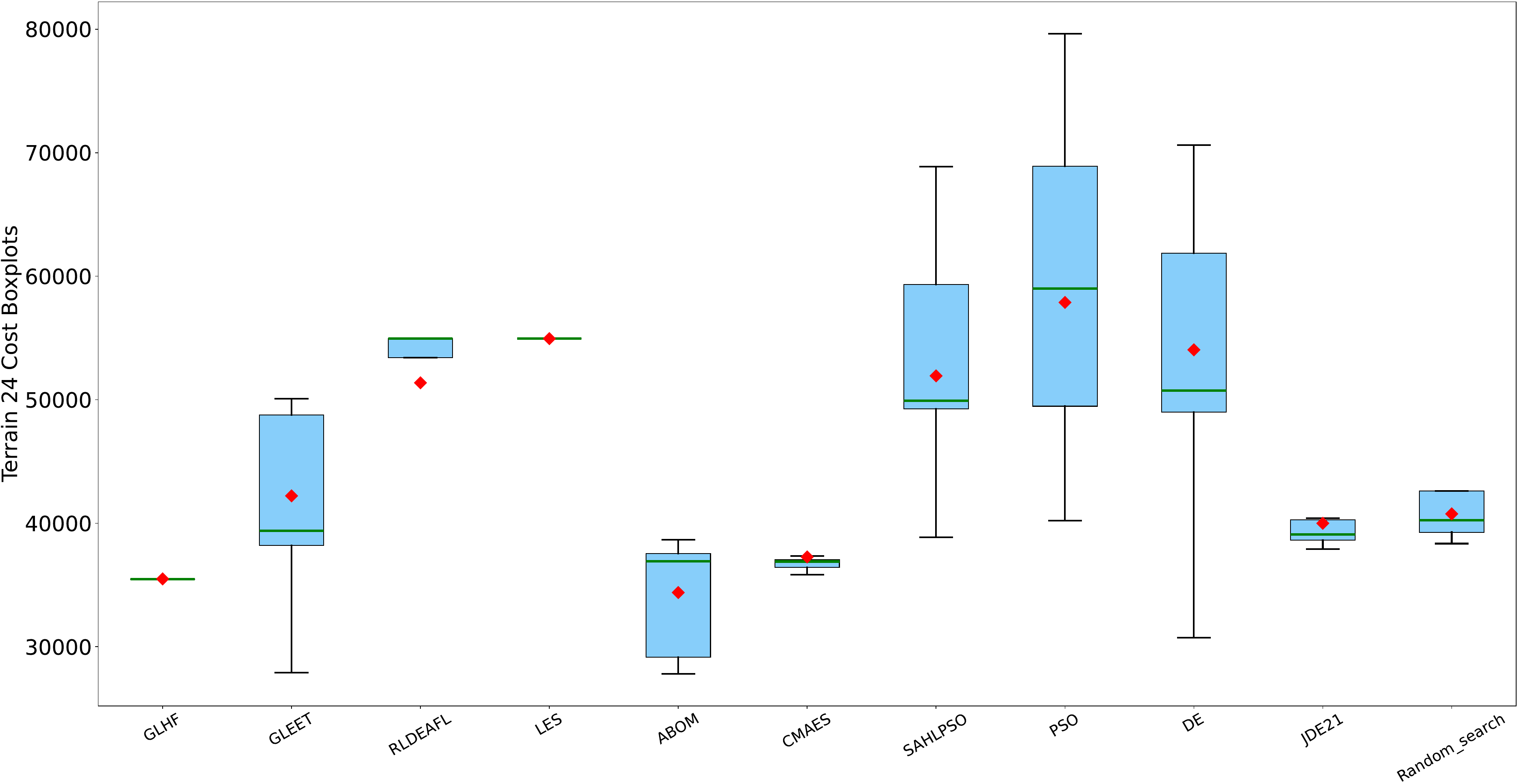}
  \caption{Terrain 24}
\end{subfigure}

\begin{subfigure}{0.47\textwidth}
  \centering
  \includegraphics[width=\linewidth]{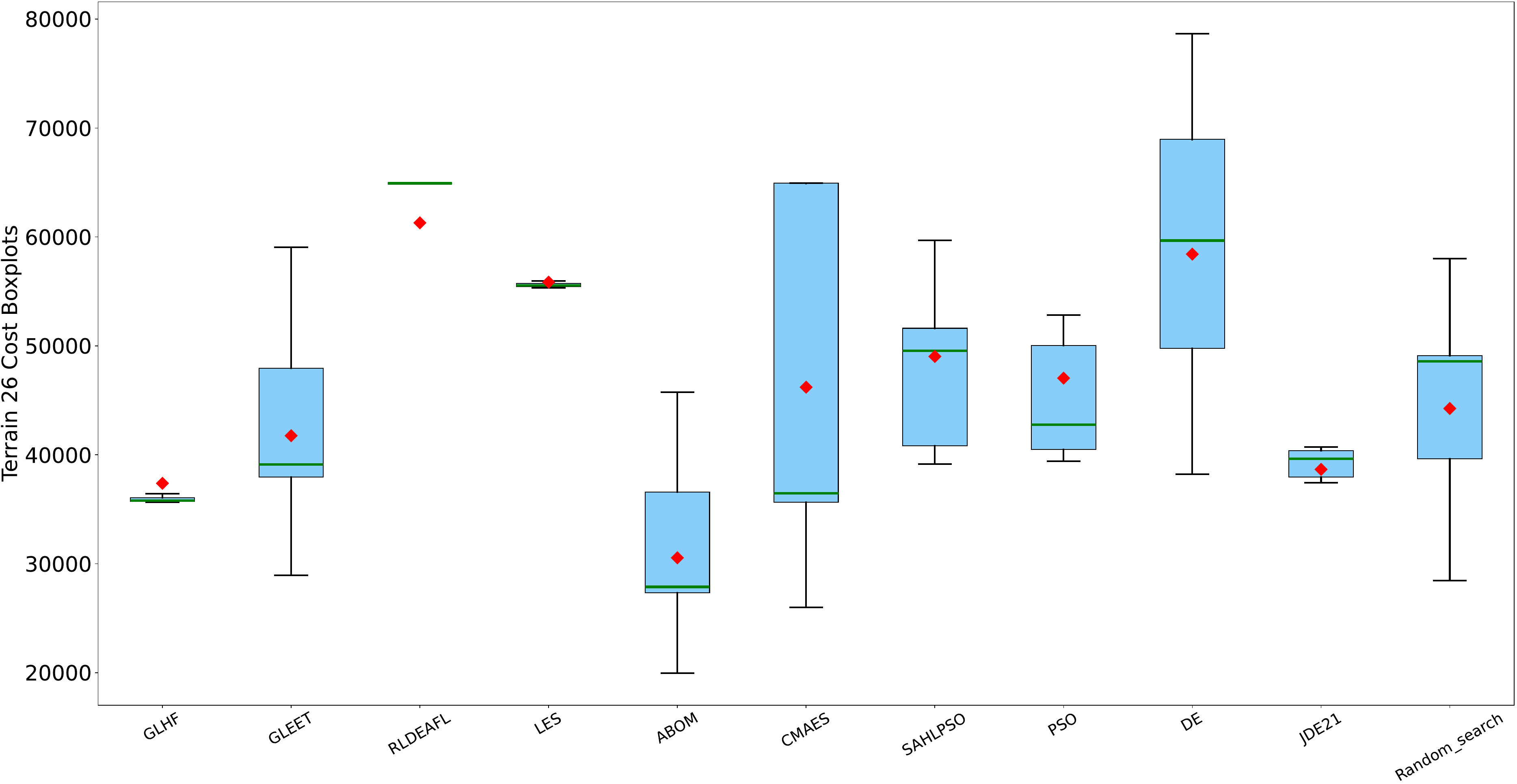}
  \caption{Terrain 26}
\end{subfigure}
\hfill
\begin{subfigure}{0.47\textwidth}
  \centering
  \includegraphics[width=\linewidth]{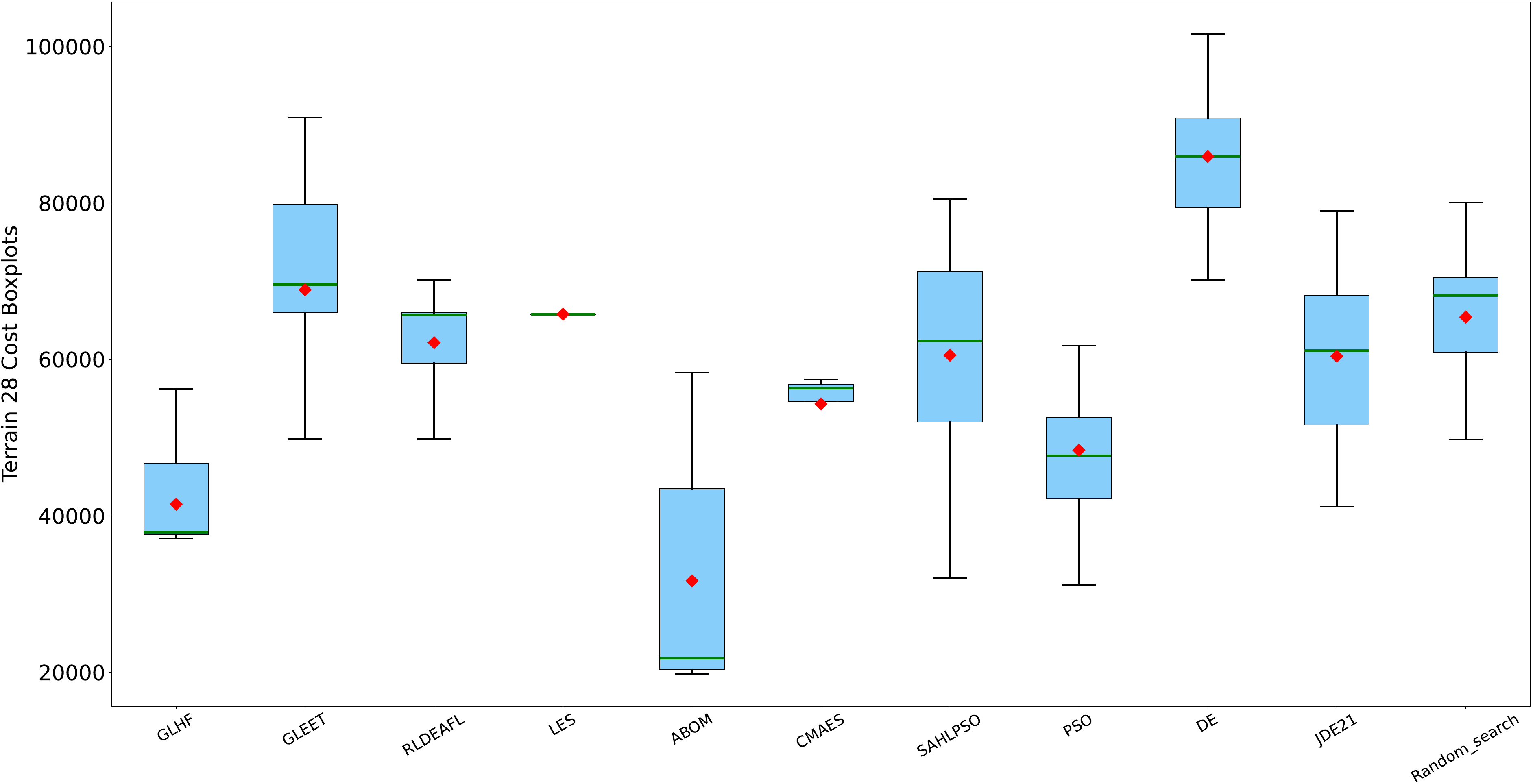}
  \caption{Terrain 28}
\end{subfigure}

\begin{subfigure}{0.47\textwidth}
  \centering
  \includegraphics[width=\linewidth]{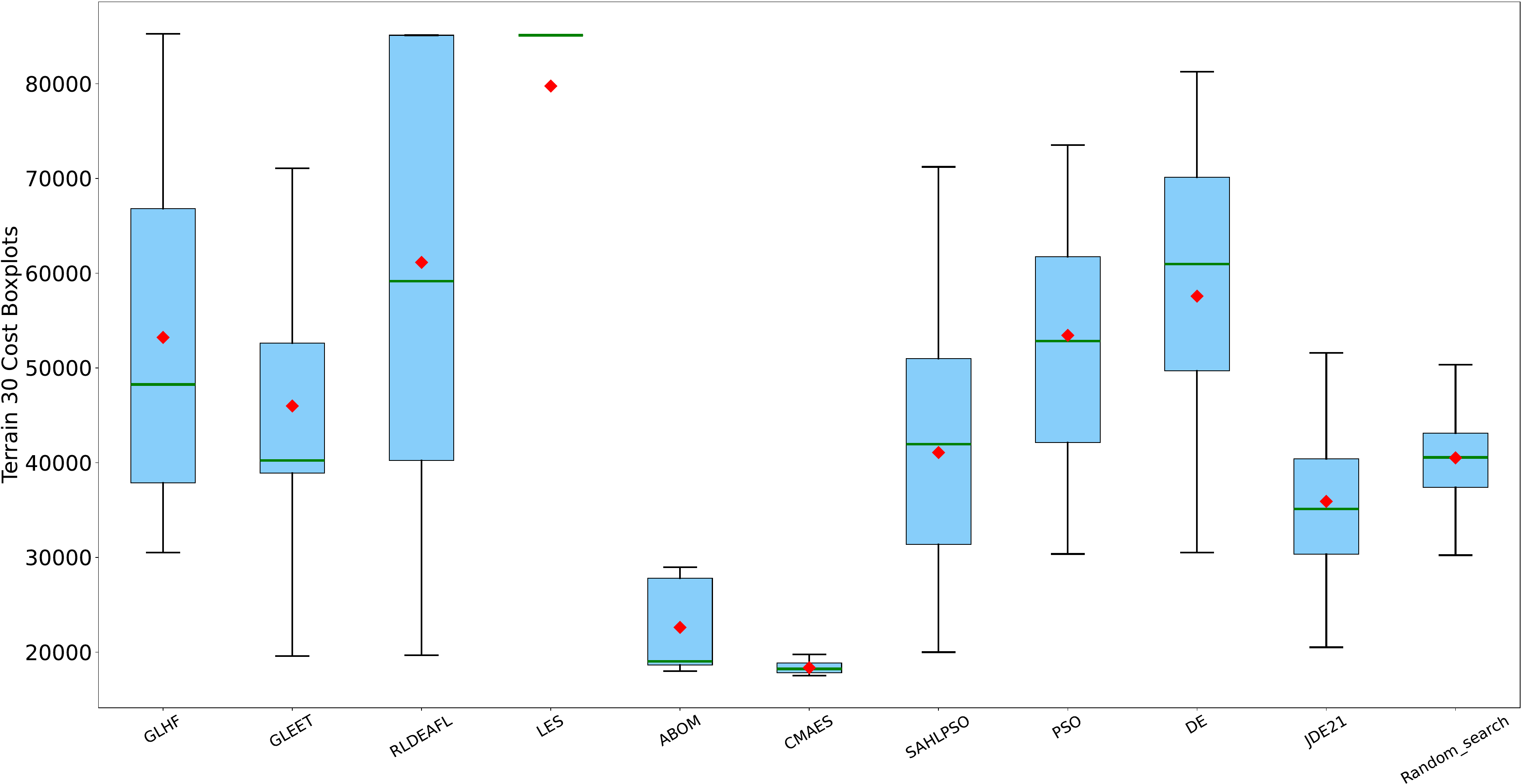}
  \caption{Terrain 30}
\end{subfigure}
\hfill
\begin{subfigure}{0.47\textwidth}
  \centering
  \includegraphics[width=\linewidth]{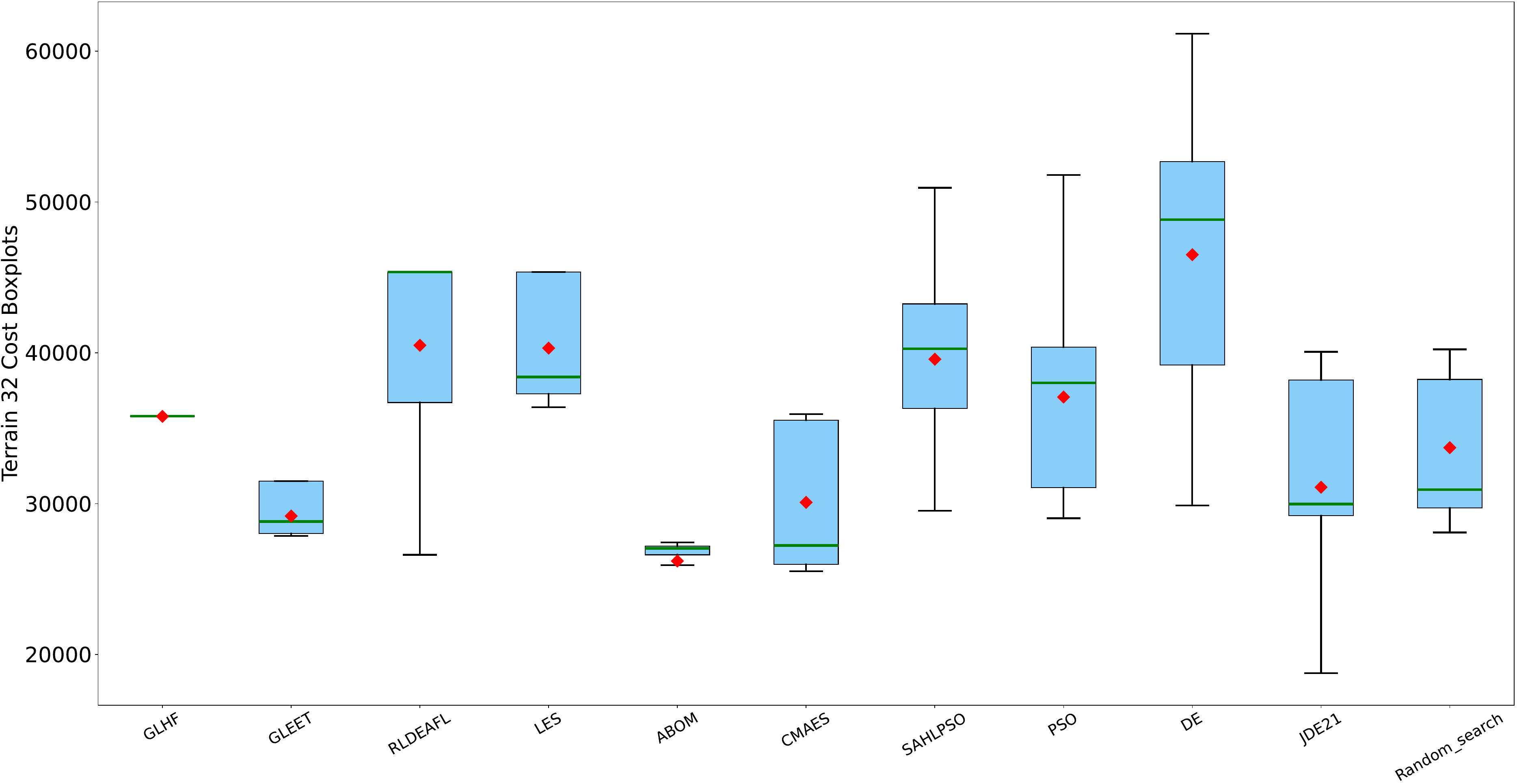}
  \caption{Terrain 32}
\end{subfigure}

\begin{subfigure}{0.47\textwidth}
  \centering
  \includegraphics[width=\linewidth]{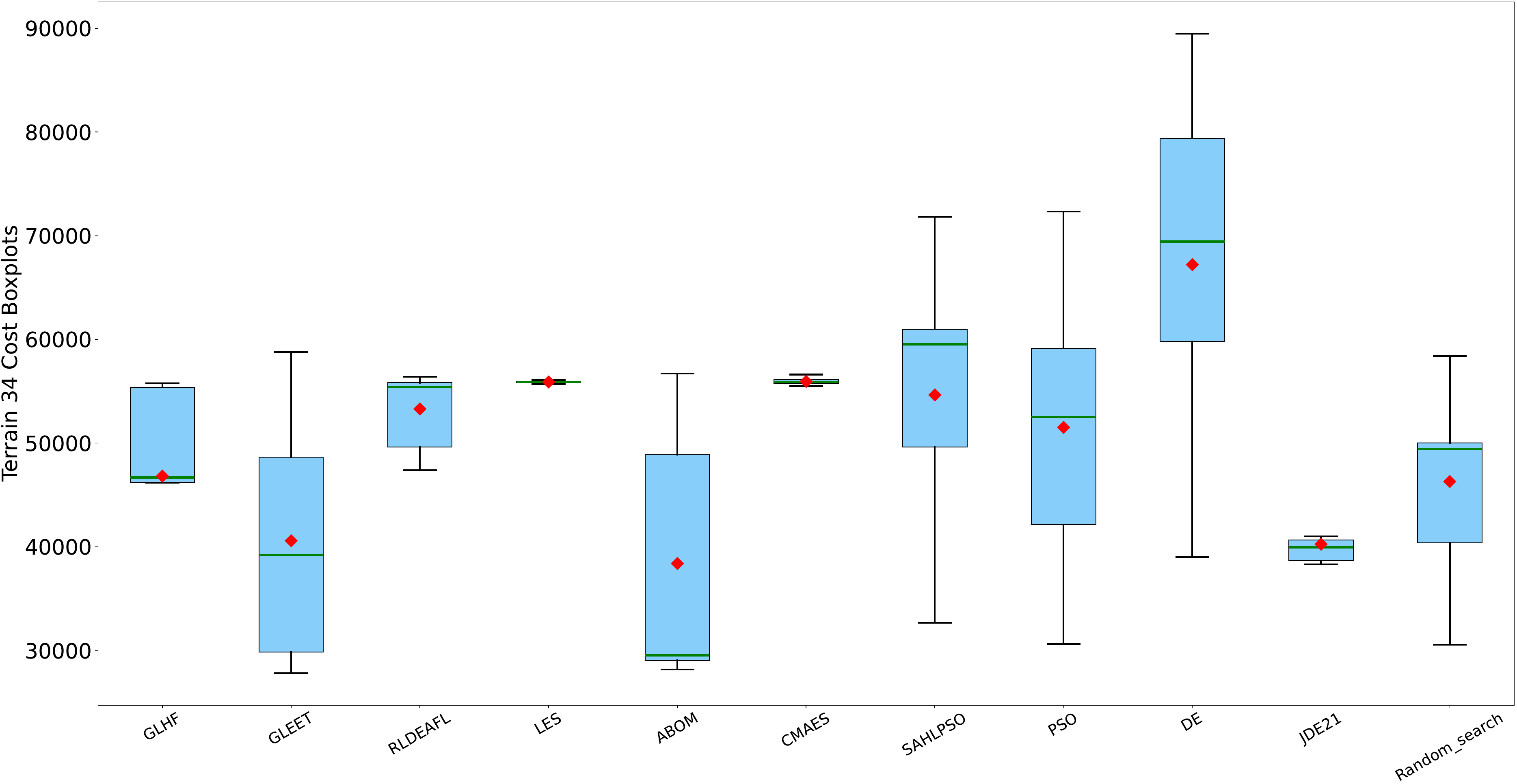}
  \caption{Terrain 34}
\end{subfigure}
\hfill
\begin{subfigure}{0.47\textwidth}
  \centering
  \includegraphics[width=\linewidth]{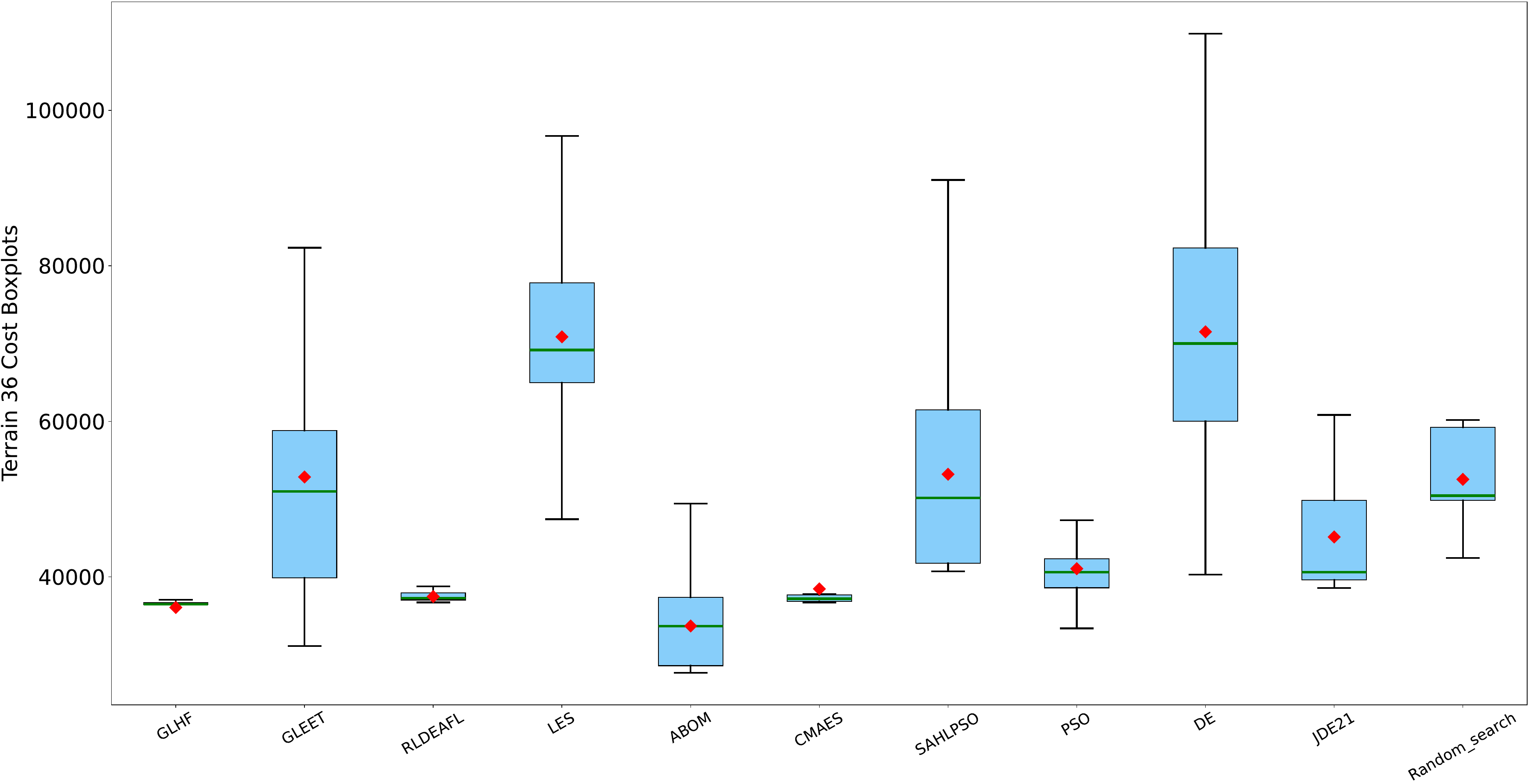}
  \caption{Terrain 36}
\end{subfigure}

\begin{subfigure}{0.47\textwidth}
  \centering
  \includegraphics[width=\linewidth]{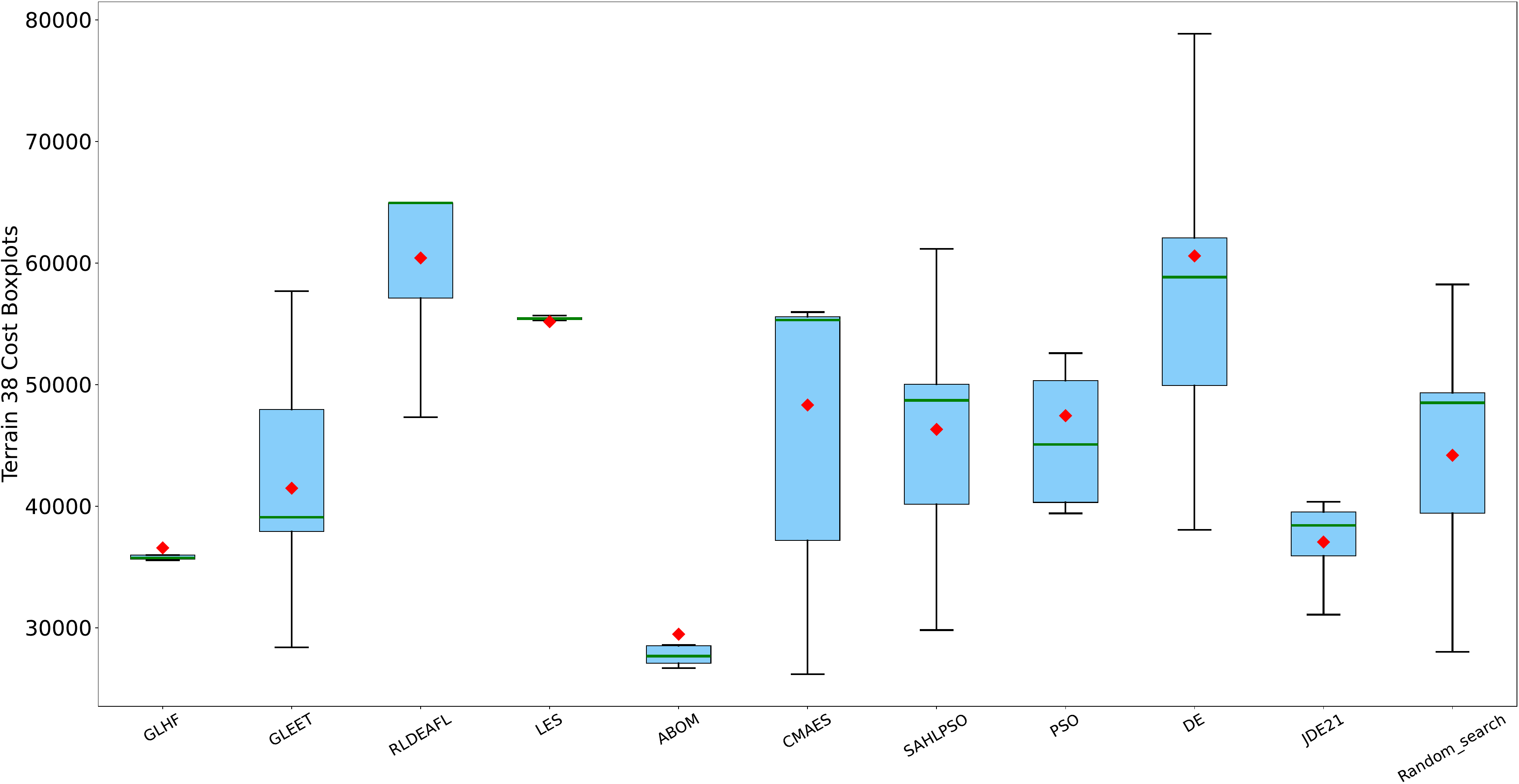}
  \caption{Terrain 38}
\end{subfigure}
\hfill
\begin{subfigure}{0.47\textwidth}
  \centering
  \includegraphics[width=\linewidth]{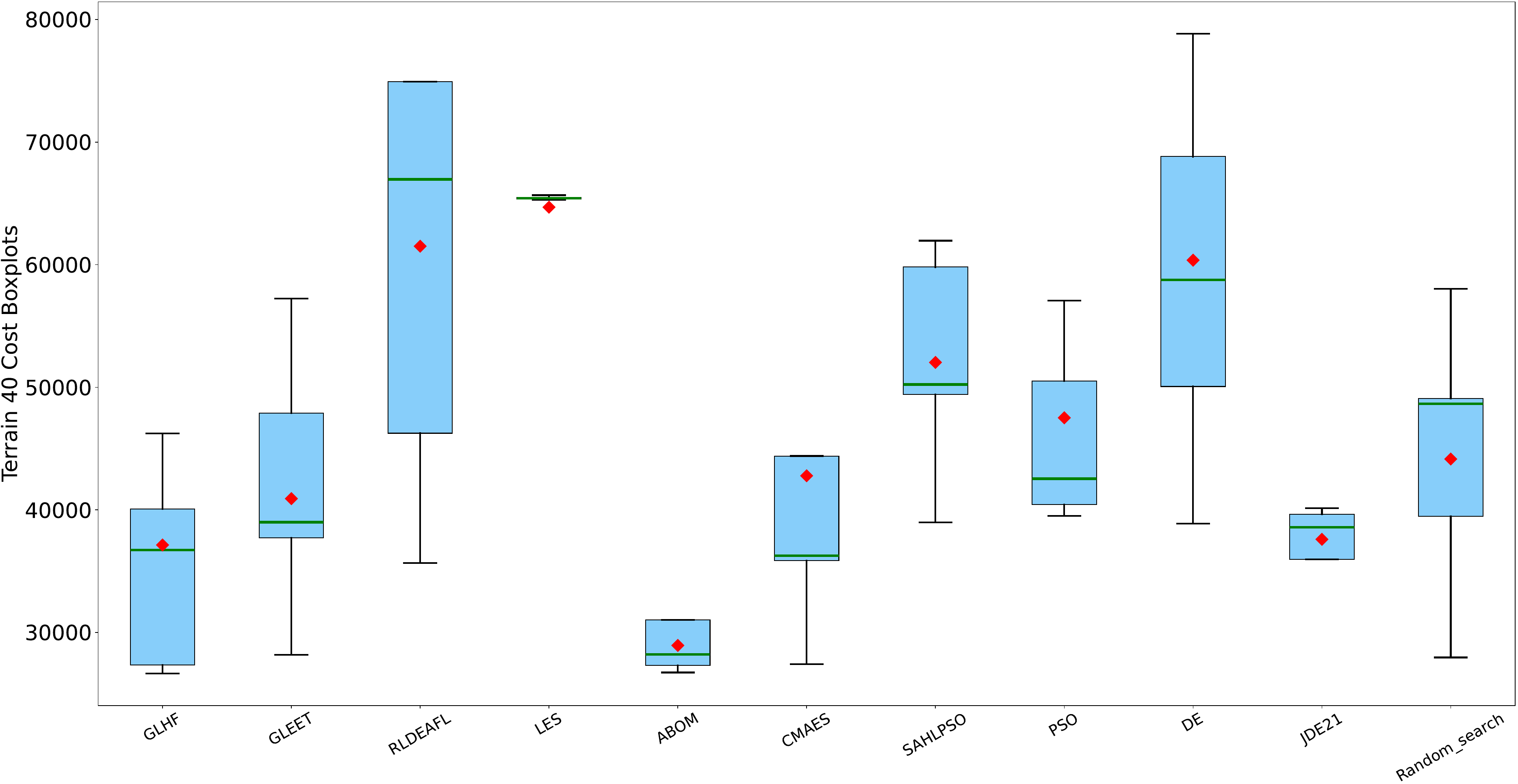}
  \caption{Terrain 40}
\end{subfigure}
\caption{Boxplots of cost (log scale) over 30 runs for UAV problems (Terrain 22 to 40).}
\label{fig:s5}
\end{figure*}

\begin{figure*}[htbp]
\centering
\begin{subfigure}{0.47\textwidth}
  \centering
  \includegraphics[width=\linewidth]{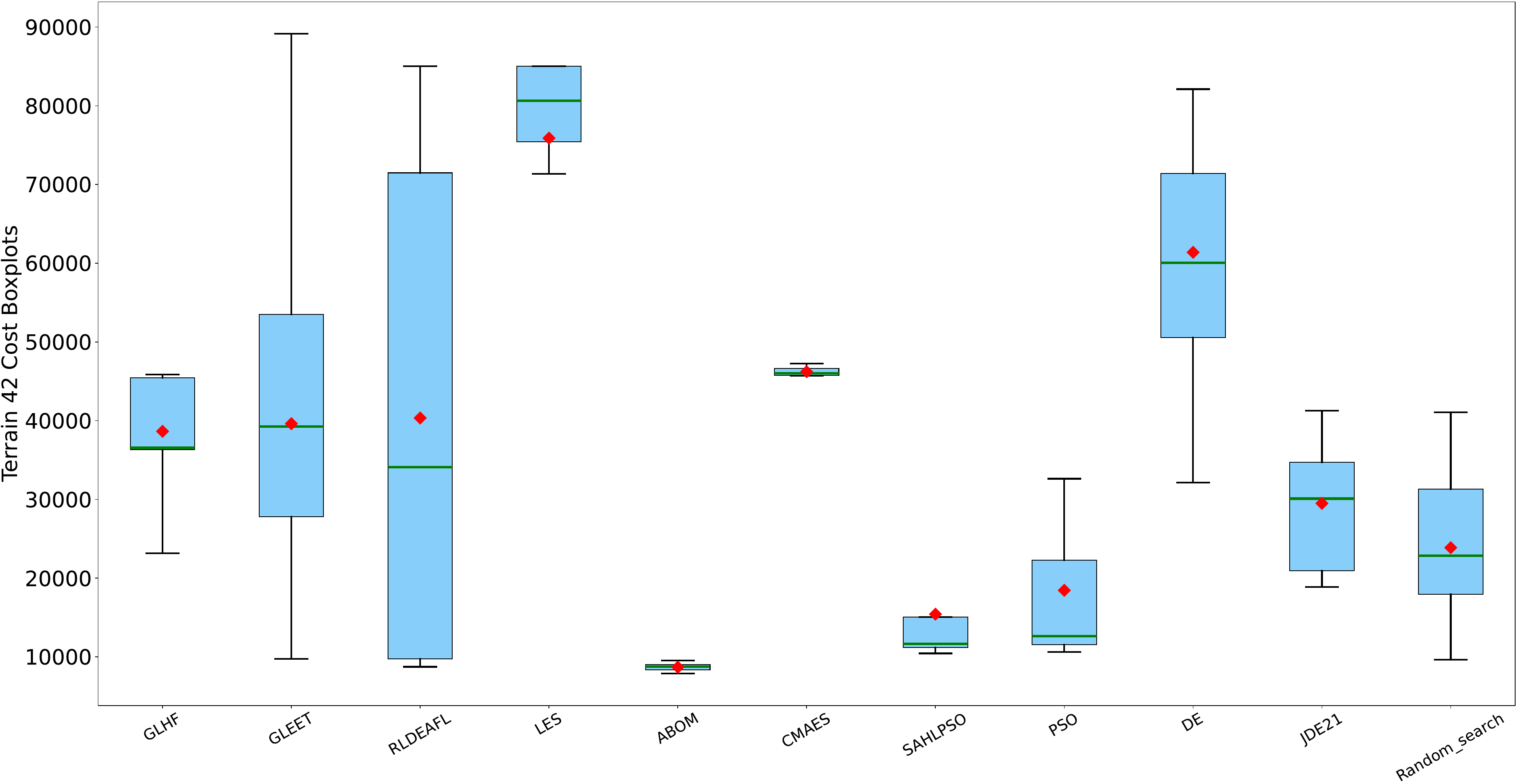}
  \caption{Terrain 42}
\end{subfigure}
\hfill
\begin{subfigure}{0.47\textwidth}
  \centering
  \includegraphics[width=\linewidth]{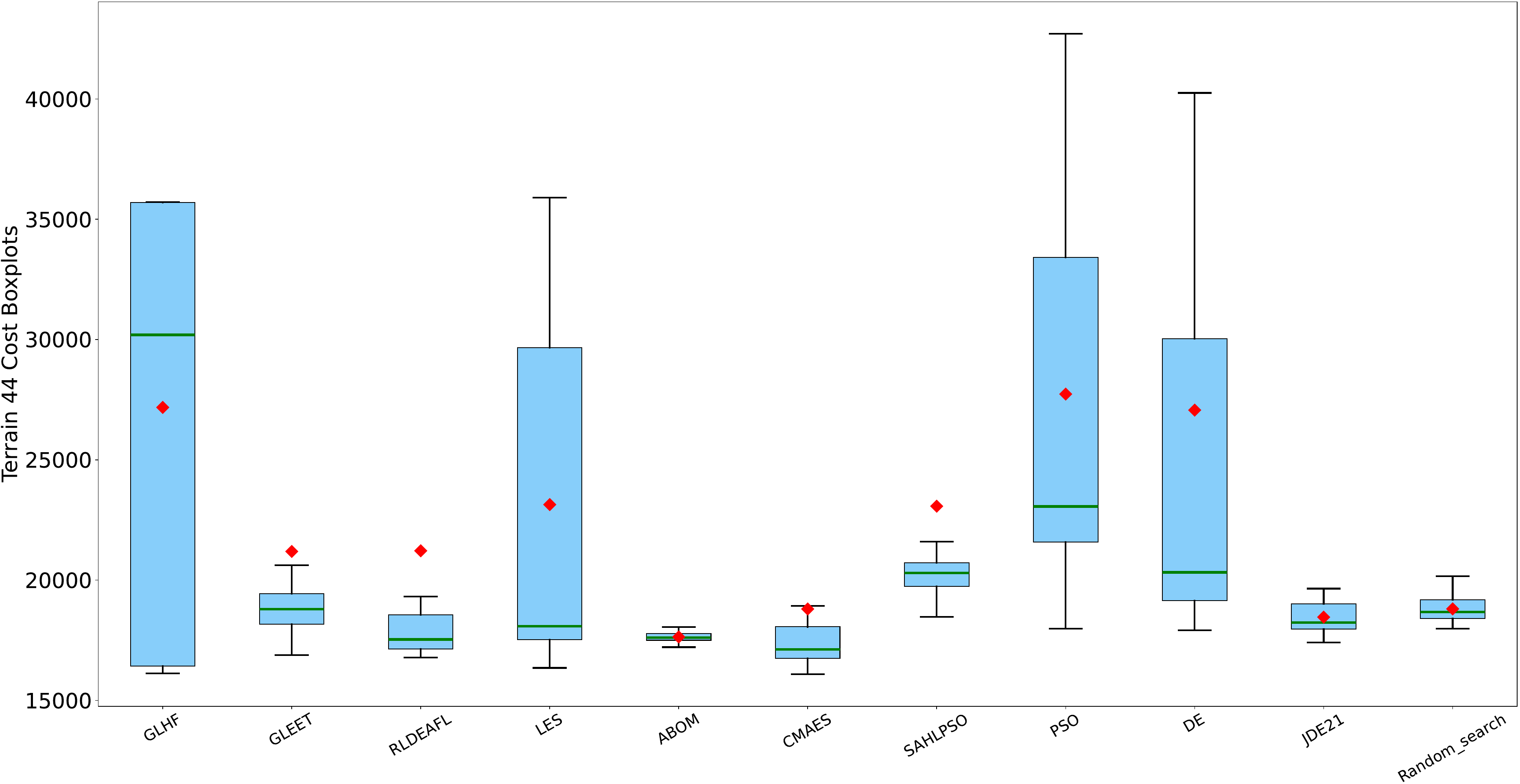}
  \caption{Terrain 44}
\end{subfigure}

\begin{subfigure}{0.47\textwidth}
  \centering
  \includegraphics[width=\linewidth]{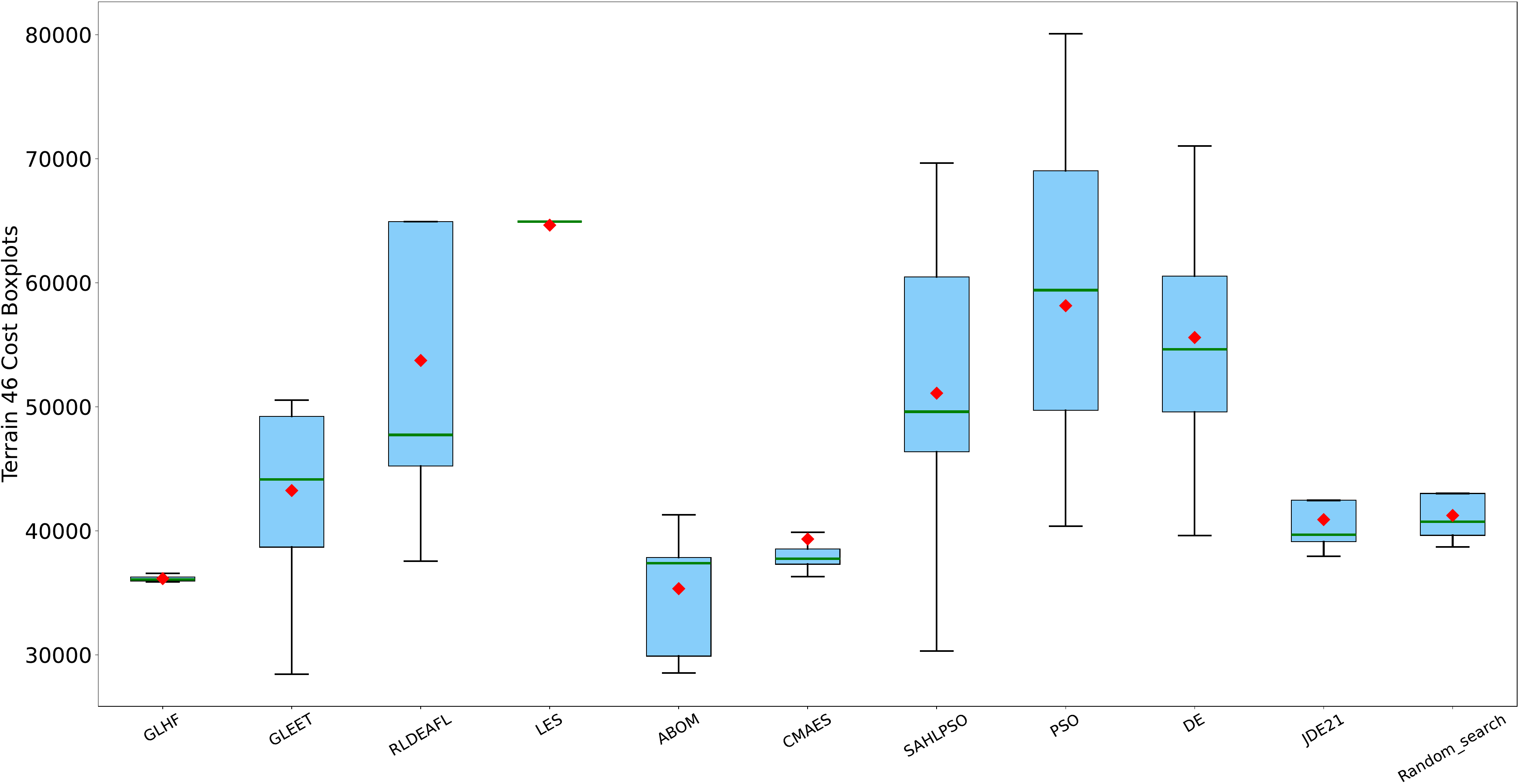}
  \caption{Terrain 46}
\end{subfigure}
\hfill
\begin{subfigure}{0.47\textwidth}
  \centering
  \includegraphics[width=\linewidth]{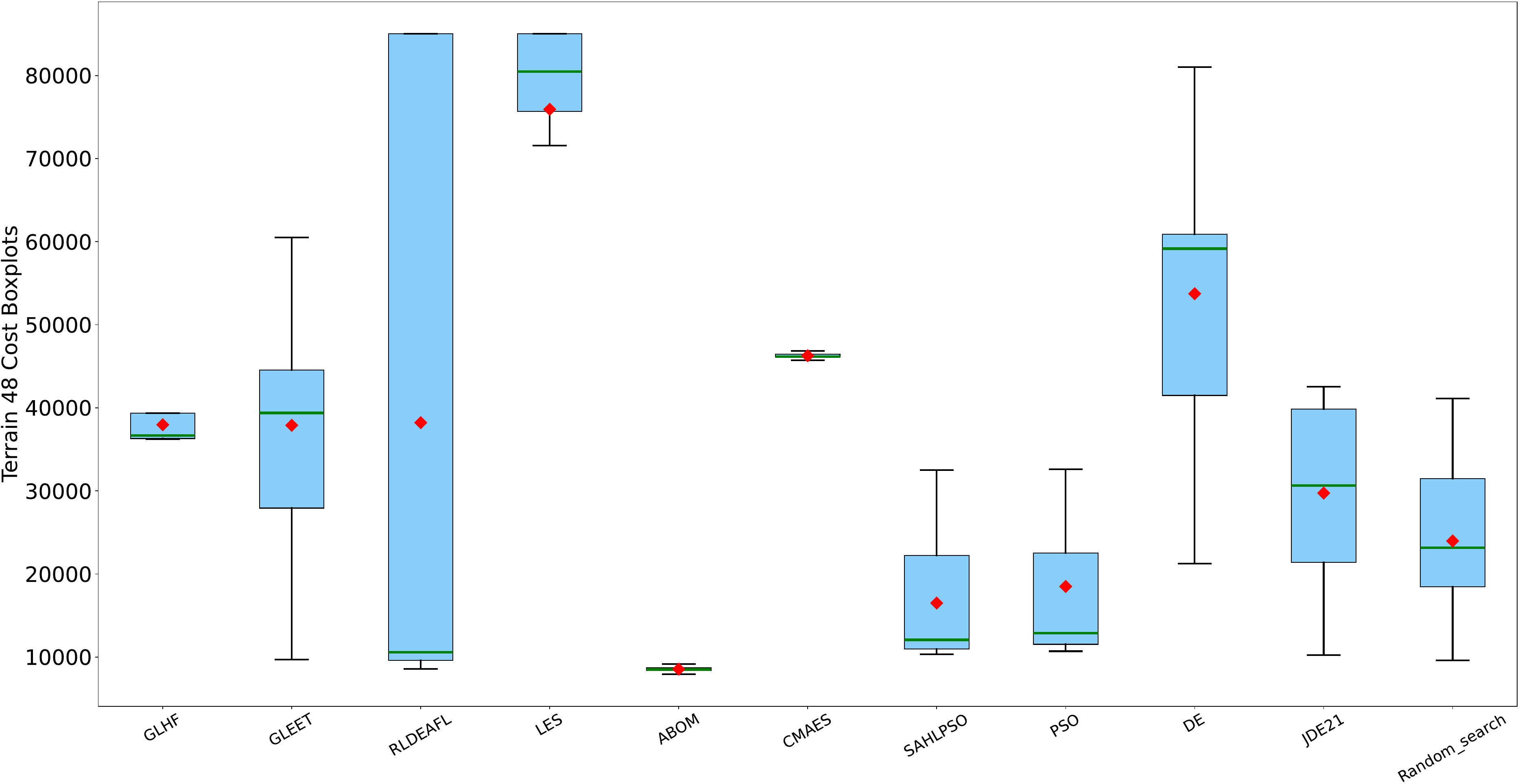}
  \caption{Terrain 48}
\end{subfigure}

\begin{subfigure}{0.47\textwidth}
  \centering
  \includegraphics[width=\linewidth]{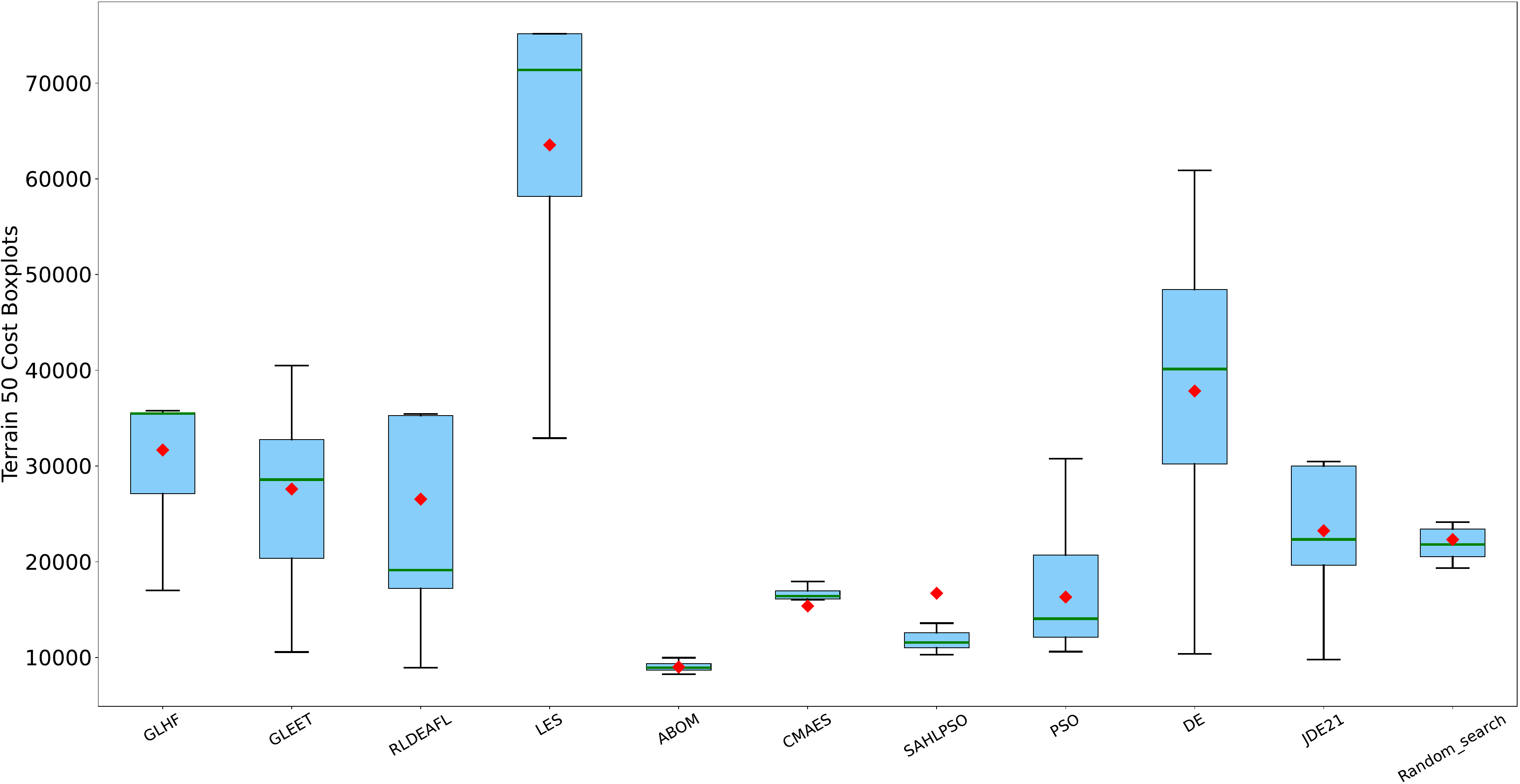}
  \caption{Terrain 50}
\end{subfigure}
\hfill
\begin{subfigure}{0.47\textwidth}
  \centering
  \includegraphics[width=\linewidth]{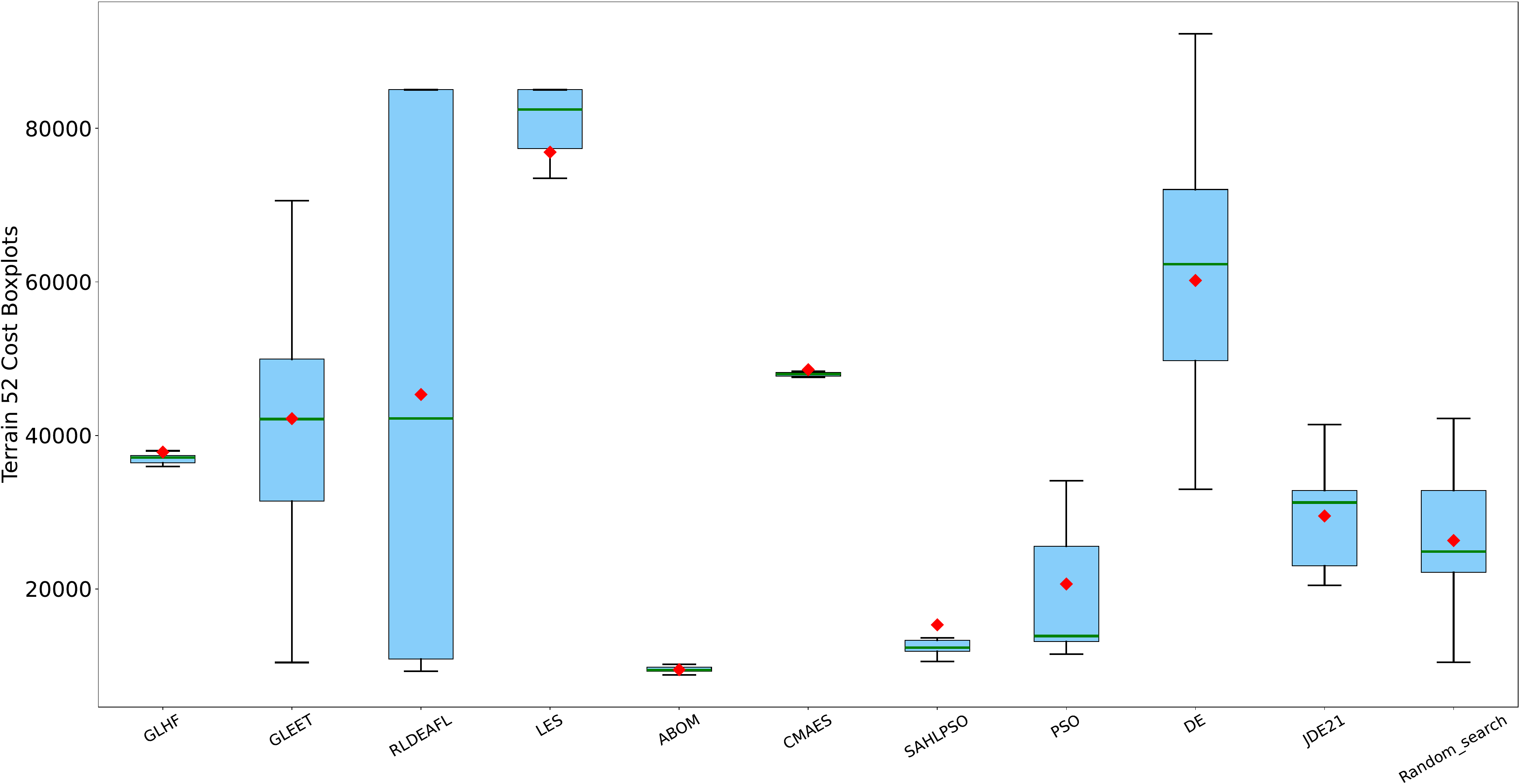}
  \caption{Terrain 52}
\end{subfigure}

\begin{subfigure}{0.47\textwidth}
  \centering
  \includegraphics[width=\linewidth]{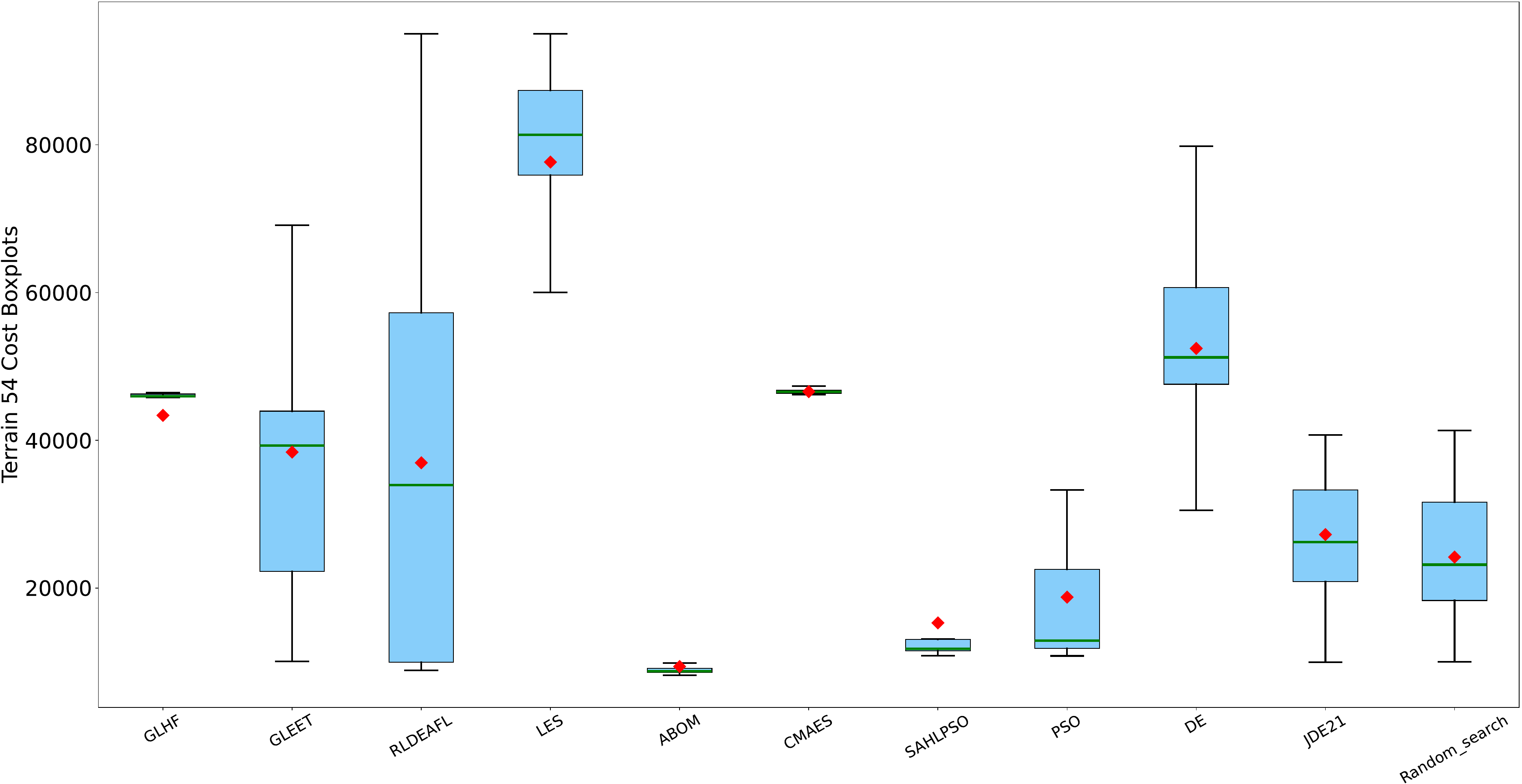}
  \caption{Terrain 54}
\end{subfigure}
\hfill
\begin{subfigure}{0.47\textwidth}
  \centering
  \includegraphics[width=\linewidth]{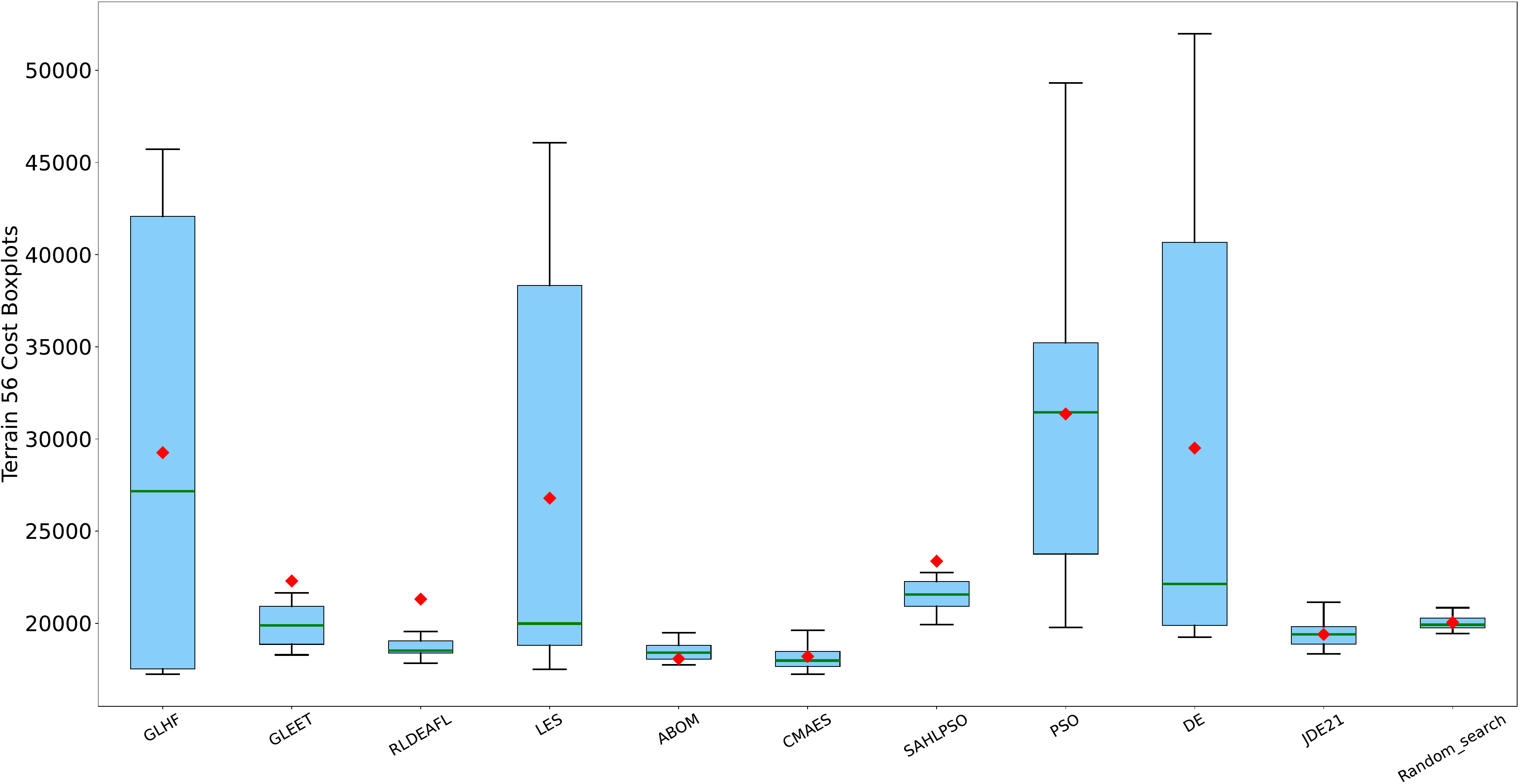}
  \caption{Terrain 56}
\end{subfigure}
\caption{Boxplots of cost (log scale) over 30 runs for UAV problems (Terrain 42 to 56).}
\label{fig:s6}
\end{figure*}

\end{document}